\documentclass[11pt]{article}
\usepackage[margin=1in]{geometry}

\usepackage{natbib}
\usepackage{booktabs} 

\usepackage{authblk}
\usepackage{graphicx}
\usepackage{subcaption}
\usepackage[utf8]{inputenc} 
\usepackage[T1]{fontenc}    
\usepackage{hyperref}       
\usepackage{url}            
\usepackage{booktabs}       
\usepackage{amsfonts}       
\usepackage{nicefrac}       
\usepackage{microtype}      
\usepackage{xcolor}  
\usepackage{amsmath}

\usepackage{enumitem}

\newcommand{\Utility}[1]{U^{#1}}

\newcommand{\Quality}{Q}
\newcommand{\Detector}{F}

\newcommand{\ContentNoArg}{z}

\newcommand{\TypeNoArg}{\mathbf{t}}
\newcommand{\Combination}[1]{m^{#1}}
\newcommand{\HumanContent}[1]{h^{#1}}
\newcommand{\LLMContent}[1]{\ell^{#1}}
\newcommand{\FinalContent}[1]{z^{#1}}
\newcommand{\FinalContentNoArg}{z}
\newcommand{\CostFn}{C}

\newcommand{\Penalty}{\beta}

\newcommand{\Threshold}{\nu}
\newcommand{\Fraction}[1]{\alpha^{#1}}
\newcommand{\FractionNoArg}{\alpha}
\newcommand{\ProductionCost}[1]{\gamma^{#1}}

\usepackage{amsthm}
\newtheorem{theorem}{Theorem}
\newtheorem{lemma}{Lemma}
\newtheorem{proposition}{Proposition}
\newtheorem{corollary}{Corollary}
\newtheorem{example}{Example}

\title{LLM Detection as an Intervention: Downstream Impact under Strategic User Behavior}

\author[1,3]{Meena Jagadeesan}
\author[1]{Tatsunori Hashimoto}
\author[2]{Jon Kleinberg}

\affil[1]{\textit{Stanford University}}
\affil[2]{\textit{Cornell University}}
\affil[3]{\textit{University of Pennsylvania}}

\begin{document}

\maketitle

\begin{abstract}
As LLM adoption becomes more widespread, there is a growing interest in detecting LLM-generated content, for example through LLM detection tools and through heuristics based on language patterns. Detectors operate as an intervention that steers not only the detected attribute itself, but also downstream metrics such as LLM usage and output quality. In this work, we demonstrate how imperfect LLM detectors lead to counterintuitive impacts on these downstream metrics, by distorting how users are incentivized to use LLMs in their workflow. We develop a stylized model which captures how users strategically choose how much to use the LLM and how to post-process content to reduce the detected attribute. Using this model, we show that LLM detection can counterintuitively lead humans to increase their LLM usage. Moreover, even when reducing the detected attribute improves output quality, we find that introducing an LLM detector can lead users to produce lower quality outputs. In contrast, we show that detectors result in a clean ``rise-then-fall'' pattern for the detected attribute, which we empirically reproduce for word frequencies on arXiv abstracts. Altogether, our work illustrates how LLM detection can distort LLM usage and output quality, uncovering failure modes when LLM detectors operate as an intervention on these downstream metrics.  
\end{abstract}

\section{Introduction}

As large language models (LLMs) get deployed at larger scale, content is increasingly scrutinized for whether it is LLM-generated or human-generated. This has led to the deployment of LLM detection tools \citep{DBLP:conf/icml/Mitchell0KMF23, DBLP:conf/naacl/VermaFTK24, DBLP:conf/icml/HansSCKSGGG24, DBLP:journals/corr/abs-2602-13042}  such as GPTZero and Pangram\footnote{See \url{https://gptzero.me} and \url{https://www.pangram.com/}.} in academic conferences \citep{ICLR} and in education \citep{prothero2024more}, among other domains (Example \ref{example:auditor}). Beyond these automated tools, LLM detection is performed implicitly by society using heuristics based on language patterns (e.g.,  increased usage of specific words such as ``delve'' \citep{liang2025quantifying, JW25} and stylistic patterns such as em-dashes \citep{emdash}). Society-based detection can harm authors, for example by leading to negative perceptions of content quality \citep{knight2023generative} (Example \ref{example:societal}).

LLM detection, whether performed by institutions or by society, operates as an intervention on several downstream metrics. One such metric is LLM usage: for example, educators may deploy LLM detectors in an attempt to \textit{reduce} LLM usage. Another such metric is output quality: academic research institutions may deploy LLM detectors in order to \textit{increase} the quality of academic publications. Yet another metric is the attribute that the detector directly penalizes. For example, the frequency of specific word patterns on arXiv \citep{liang2025quantifying, GT24} empirically exhibits a ``rise-then-fall'' pattern,   dramatically increasing after the release of ChatGPT, but then subsequently dropping in frequency in the following months \citep{GT25, L24}, perhaps due to society becoming aware of LLM-sounding words. 

However, the link between LLM detection and these downstream metrics is complicated by user incentives: imperfect LLM detectors fundamentally distort how users are incentivized to integrate LLMs into their workflow. Automated and heuristic LLM detectors routinely misclassify content \citep{DBLP:journals/patterns/LiangYMWZ23, weber2023testing} and are non-robust to small perturbations \citep{DBLP:conf/nips/KrishnaSKWI23, DBLP:journals/tmlr/SadasivanKB0F25}, and correctness is inherently ambiguous under human-LLM collaboration \citep{zhang2024llm}. To avoid detection, users  may be incentivized to post-process LLM-generated content\footnote{\url{https://www.quetext.com/blog/top-prompts-humanize-ai-generated-text}} or even their own human-generated content \citep{emdash}, for example by removing instances of LLM-sounding words (Example \ref{example:societal}).
Users may also be incentivized to change how much work they delegate to the LLM, for example by deciding to perform some tasks (e.g., reference generation) manually to avoid detection (Example \ref{example:auditor}). 

In this work, we study this distortion of user incentives, and show how imperfect LLM detection can lead 
to counterintuitive impacts on downstream metrics.   In our stylized model (Section \ref{sec:model}), outputs are embedded in $D$-dimensions: the LLM detector flags outputs using a threshold-based classifier on the first dimension (detected attribute), but quality depends on all $D$ dimensions. If an output is flagged by the LLM detector, the user who authored it incurs a penalty.  Each user chooses how much work to allocate to an LLM, which determines the convex combination between their own natural human-generated output and their fully LLM-generated output. The user also decides how much to post-process content by reducing the detected attribute (which may help them avoid being flagged by the detector). They optimize for their utility which is a combination of output quality, the detection penalty, production costs, and post-processing costs.  

Using this model, we prove that LLM detection can distort both LLM usage and output quality, even though it has a clean impact on the metric that it optimizes. 
\begin{itemize}[leftmargin=*]
\item \textbf{LLM usage:} Introducing an LLM detector can lead some users to \textit{increase} their LLM usage relative to the no-detector baseline (Section \ref{subsec:llmusage}; Figure \ref{fig:llmusage}).
Specifically, we show sufficient conditions under which there exist users who increase LLM usage under detection (Theorem \ref{thm:usageincrease}), as well as necessary conditions (Theorem \ref{thm:usagedecrease}) that match in the case where quality is linear in LLM usage. While one may expect that detection would lead users to reduce LLM usage to lower the detected attribute, our result shows the opposite can occur. The intuition is that post-processing can remove some quality-reducing characteristics produced by the LLM, which may incentivize users to selectively take greater advantage of those characteristics where LLM-generated content may be higher quality than human-generated content (Example \ref{ex:conceptualllmusage}).

\item \textbf{Output quality:} Introducing an LLM detector can lead users to produce \textit{lower-quality} outputs relative to the baseline, even when the detected attribute is quality-reducing (Section \ref{subsec:quality}; Figure \ref{fig:quality}). We prove sufficient conditions under which there exist users who produce lower-quality outputs under detection (Theorem \ref{thm:qualitydecrease}). While one may expect the LLM detector to improve quality when detection penalizes a quality-reducing characteristic, our result shows that detection can counterintuitively reduce quality. The intuition is that detection can lead users to under-use the LLM, and thereby lose the (non-detected) dimensions in which LLM use can increase quality (Example \ref{ex:conceptualqualitydecrease}).\footnote{The conditions in Theorem \ref{thm:qualitydecrease} do not satisfy the necessary conditions in Theorem \ref{thm:usagedecrease}, meaning that LLM detection weakly decreases LLM usage.}   

\item \textbf{Detected attribute:} Our model reproduces the empirically observed ``rise-then-fall'' pattern for the detected attribute \citep{GT25, L24} (Section \ref{sec:inverseU}; Figure \ref{fig:detectedattribute}). We prove the detected attribute always weakly increases when the user is given access to an LLM and then weakly decreases with the detector (Theorem \ref{thm:detectedattributeUshaped}). From an empirical perspective, we provide more systematic empirical validation of this pattern. Moving beyond the specific word lists studied in \citet{GT25} and \citet{L24}, we show the ``rise-then-fall'' pattern has increased in prevalence in the LLM era relative to prior time windows over the past decade (Figure \ref{fig:empirical}). 
\end{itemize}

Altogether, our work illustrates the unintended impacts of LLM detection, when accounting for how it shapes user workflow decisions. More broadly, these distortion results uncover key failure modes when institutions deploy detectors to intervene on LLM usage and output quality. 

\subsection{Related Work}

Our model builds on the literature on \textit{strategic classification} and \textit{human-AI collaboration}. 

\paragraph{Strategic classification.}   
A vast theoretical literature on strategic classification studies how users strategically change their features to improve their classification outcomes (see \citep{Rosenfeld24, P25} for surveys). One line of work focuses on how users can ``game'' the classifier to shift the features of their input without improving their true outcome (e.g., \citep{HardtMPW16, DongRSWW18, HuIV19, MilliMDH19, GhalmeNETR21, BechavodPWZ22}). The typical focus is on classifiers that perform well under gaming. Another line of work focuses on how users can take actions that not only change their features but also ``improve'' their true outcome (e.g., \citep{KleinbergR20, AlonDPTT20}), investigating how to account for and disentangle gaming and improvement (e.g., \citep{MillerMH20, DBLP:conf/icml/ShavitEA20, HaghtalabILW20, DBLP:conf/forc/AhmadiBBN22}).  Our model of post-processing directly builds on the model for gaming in \cite{HardtMPW16}. Similar to the improvement models, we capture how changing the detected attribute affects quality. A distinguishing feature of our work is that we combine post-processing with human-AI collaboration, showing how this joint decision fundamentally shapes user incentives and detector performance along downstream metrics. 

More broadly, our work contributes to a growing literature on the strategic behaviors in the context of LLMs. This includes the strategic LLM usage on online platforms \citep{DBLP:conf/icml/YaoLNWX24, DBLP:journals/corr/abs-2502-20783, DBLP:conf/www/KeinanB26}, the strategic impacts of LLMs in markets \citep{DBLP:journals/corr/abs-2603-25893, DBLP:journals/corr/abs-2404-00806}, and strategic LLM manipulations in hiring \citep{DBLP:conf/icml/0003HHS25}.

Our work also fits into a broader research perspective of viewing predictions as interventions in social systems \citep{L25}.   

\paragraph{Human-AI collaboration.} A vast theoretical literature on human-AI collaboration studies how to design and evaluate human-AI teams. These works consider many models: for example, models based on presenting subsets (e.g., \citep{DBLP:conf/icml/StraitouriWOR23, DBLP:conf/aaai/DonahueGK24, DBLP:journals/corr/abs-2503-11709}), delegation (e.g., \citep{DBLP:conf/chi/LaiCBLZT22, DBLP:conf/sigecom/GreenwoodLBHK25}), and learning over time (e.g., \citep{DBLP:conf/stoc/CollinaG0025, DBLP:journals/corr/abs-2602-17646}). A key desideratum is complementarity (e.g., \citep{DBLP:conf/chi/BansalWZFNKRW21, DonahueCK22, DBLP:journals/corr/abs-2204-10806}), which captures that the human-LLM team must perform better than either agent can perform in isolation. Our model takes a simplified view of human-AI collaboration as interpolation; a distinguishing feature is that we combine human-AI collaboration with post-processing. Our model exhibits complementarity in idealized cases, but we show LLM detection can compromise complementarity (Figure \ref{fig:quality}; Theorem \ref{thm:qualitydecrease}).

\section{Model}\label{sec:model}

We develop a stylized model which captures the following key features of user incentives under LLM detection. First, the detector is correlated with some, but not all, aspects of output quality. Second, users can assign fractional parts of the task to the LLM. Third, users can post-process their outputs to try to avoid detection. Finally, users are heterogeneous in terms of ability level and production costs.

\subsection{Output Detection and Quality}\label{subsec:prelims}

We embed outputs $\ContentNoArg$ into $\mathbb{R}^D$. Outputs are detected as being LLM-generated on the basis of one of the dimensions, which we take without loss of generality to be the \textit{first dimension}. We consider threshold-based detectors of the form $\Detector(\ContentNoArg) = 1[\ContentNoArg_1 > \Threshold]$, which predict whether a given $\ContentNoArg \in \mathbb{R}^D$ was generated by an LLM. The other $D-1$ dimensions thus capture attributes of the content that contribute to content quality (and may be affected by LLM use) but are independent of detection. The user faces a penalty $\Penalty > 0$ from being detected. 
The case of no LLM detector can be embedded into this model as either having zero penalty ($\Penalty = 0$) or having an infinite threshold ($\Threshold = \infty$), or both. 

Quality is measured by a separable function $\Quality(\ContentNoArg) := \sum_{i=1}^D \Quality_i(\ContentNoArg_i)$. Each $\Quality_i$ is twice continuously differentiable and concave. Moreover, $\Quality_1$ is weakly increasing or weakly decreasing, capturing whether the detected attribute is positively or negatively correlated with quality.

\subsection{User Types}\label{subsec:usertypes}

Each user is specified by a type $\TypeNoArg = (\HumanContent{\TypeNoArg}, \LLMContent{\TypeNoArg}, \ProductionCost{\TypeNoArg})$ where\footnote{Quality $\Quality(\HumanContent{\TypeNoArg})$ is affected by the user's ability, and $\Quality(\LLMContent{\TypeNoArg}) - \Quality(\HumanContent{\TypeNoArg})$ captures the quality change from using the LLM.}: 
\begin{itemize}[leftmargin=*, nosep]
    \item The \textit{human-generated output} $\HumanContent{\TypeNoArg} \in \mathbb{R}^D$ captures what the user would generate without the LLM.
    \item The \textit{LLM-generated output} $\LLMContent{\TypeNoArg} \in \mathbb{R}^D$ captures what the user would generate if they fully automate output generation using the LLM. 
    \item The \textit{human-generated production cost} $\ProductionCost{\TypeNoArg} \ge 0$ captures how much more expensive it is to produce human-generated content $\HumanContent{\TypeNoArg}$ than LLM-generated content $\LLMContent{\TypeNoArg}$.
\end{itemize}
The type space is $\mathcal{T} = \left\{ (\HumanContent{\TypeNoArg}, \LLMContent{\TypeNoArg}, \ProductionCost{\TypeNoArg}) \in \mathbb{R}^D \times \mathbb{R}^D \times \mathbb{R}_{\ge 0} \mid \LLMContent{\TypeNoArg}_1 > \HumanContent{\TypeNoArg}_1 \right\}$, which assumes that the detected attribute  is higher for the LLM-generated output than for the human-generated output. 

\subsection{User Workflow Decisions}\label{subsec:userdecisions}

Each user decides how to allocate work to the LLM vs. to themselves, and how to post-process their output to avoid detection. 
They derive value from quality, but face penalty $\Penalty > 0$ from detection. 

\paragraph{Work allocation.} The user chooses a fraction $\FractionNoArg \in [0,1]$ of the task to outsource to the LLM, which produces outputs according to the convex combination $\Combination{\TypeNoArg}(\FractionNoArg) = \FractionNoArg \cdot \LLMContent{\TypeNoArg} + (1 - \FractionNoArg) \cdot \HumanContent{\TypeNoArg}$. The user pays production cost $\ProductionCost{\TypeNoArg} \cdot (1-\FractionNoArg)$ that scales with the fraction of the task that they allocate to themselves. For notational convenience, let $\Combination{\TypeNoArg}_i(\FractionNoArg) =(\Combination{\TypeNoArg}(\FractionNoArg))_i$ for $i \in [D]$. 

\paragraph{Post-processing.} For post-processing, users can decide to decrease any of the coordinates of their output $\Combination{\TypeNoArg}(\FractionNoArg)$ to produce a final output $\FinalContentNoArg' \in \mathbb{R}^D$, facing a cost for these adjustments. This post-processing produces \textit{gaming costs} that are specified by a function $\CostFn: \left\{\ContentNoArg,\ContentNoArg' \in \mathbb{R}^D \mid \ContentNoArg'_i \le \ContentNoArg_i \;\;\; \forall i \in [D] \right\} \rightarrow \mathbb{R}_{\ge 0}$, where $\CostFn(\ContentNoArg, \ContentNoArg')$ captures the cost of post-processing output $\ContentNoArg$ into an output $\ContentNoArg'$ with weakly lower coordinates. Gaming costs are separable ($\CostFn(\ContentNoArg, \ContentNoArg') = \sum_{i=1}^D \CostFn_i (\ContentNoArg_i, \ContentNoArg'_i)$). We consider two cases: (a) \textit{finite costs} $\CostFn$ where each $\CostFn_i$ is finite and twice-continuously differentiable up to the boundary, and (b) \textit{infinite costs} $\CostFn^{\infty}$ where $\CostFn_i(\ContentNoArg_i, \ContentNoArg'_i) = \infty$ when $\ContentNoArg_i \neq \ContentNoArg'_i$. In the case of finite costs, we place the following additional assumptions on  each component:
\begin{enumerate}[label=(A\arabic*), leftmargin=*]
\item For any $\ContentNoArg_i \neq \ContentNoArg'_i$, costs exceed quality improvements (i.e., $\CostFn_i(\ContentNoArg_i, \ContentNoArg'_i) > \Quality_i(\ContentNoArg'_i) - \Quality_i(\ContentNoArg_i)$). This means that post-processing is not incentivized if detection penalties are zero (Lemma \ref{lemma:nodetectionnopostprocessing}). 
    \item $\CostFn_i(\ContentNoArg_i, \ContentNoArg'_i)$ is strictly decreasing in $\ContentNoArg'_i$ and strictly increasing in $\ContentNoArg_i$. 
    \item $\CostFn_i(\ContentNoArg_i, \ContentNoArg'_i)$ is nonnegative, and is $0$ if and only if $\ContentNoArg'_i  = \ContentNoArg_i$. 
    \item For any $\ContentNoArg'_i$, the costs $\CostFn_i(\ContentNoArg_i, \ContentNoArg'_i)$ approaches $\infty$ as $\ContentNoArg_i \rightarrow \infty$.
    \item For any $\ContentNoArg''_i$, it holds that the utility $\sup_{\ContentNoArg_i \ge \ContentNoArg''_i} (\Quality_i(\ContentNoArg'_i) -\CostFn_i(\ContentNoArg_i, \ContentNoArg'_i))$ approaches $-\infty$ as $\ContentNoArg'_i \rightarrow -\infty$. 
\end{enumerate}

\paragraph{Utility-optimization.} 
Users derive utility from quality, pay costs for production and post-processing, and face a penalty if they are detected. 
Given a work allocation $\FractionNoArg$ and final output $\FinalContentNoArg$ such that  $\ContentNoArg_i \le \Combination{\TypeNoArg}_i(\FractionNoArg)$ for all $i \in [D]$, the utility of a user with type $\TypeNoArg$ is: 
\[\Utility{\TypeNoArg}(\FractionNoArg,\FinalContentNoArg; \Penalty, \Threshold, \Quality, \CostFn) := \underbrace{\Quality(\FinalContentNoArg)}_{\text{output quality}} - \underbrace{\Penalty \cdot 1[\FinalContentNoArg_1 > \Threshold]}_{\text{detection penalty}} - \underbrace{\ProductionCost{\TypeNoArg} \cdot (1-\FractionNoArg)}_{\text{production costs}} - \underbrace{\CostFn(\Combination{\TypeNoArg}(\FractionNoArg), \FinalContentNoArg)}_{\text{post-processing costs}}.   \]
The user chooses a work allocation and post-processed output to maximize their utility:
\[(\Fraction{\TypeNoArg}(\Penalty, \Threshold; \Quality, \CostFn), \FinalContent{\TypeNoArg}(\Penalty, \Threshold; \Quality, \CostFn)) := \text{argmax}_{\FractionNoArg \in [0,1], \ContentNoArg \in \mathbb{R}^{D} \mid  \forall i \in [D]: \ContentNoArg_i \le \Combination{\TypeNoArg}_i(\FractionNoArg)} \Utility{\TypeNoArg}(\FractionNoArg,\FinalContentNoArg; \Penalty, \Threshold, \Quality, \CostFn).  \]
We show this optimization program is well-defined in Appendix \ref{appendix:characerization}. We assume users first tiebreak in favor of lower values of $\Fraction{\TypeNoArg}$, and then tiebreak in favor of lower values of $\FinalContent{\TypeNoArg}_1$.  

Let $(\Fraction{\TypeNoArg}(\emptyset; \Quality, \CostFn), \FinalContent{\TypeNoArg}(\emptyset; \Quality, \CostFn)) := (\Fraction{\TypeNoArg}(0, \infty; \Quality, \CostFn), \FinalContent{\TypeNoArg}(0, \infty; \Quality, \CostFn)) $ capture the case of no LLM detector.

\subsection{Illustrative Examples}\label{subsec:examples}

We discuss how to instantiate our model in the context of two illustrative examples.

\begin{example}
\label{example:auditor}
Institutions such as schools \citep{prothero2024more}, academic conference committees \citep{ICLR}, and grant agencies\footnote{See \url{https://grants.nih.gov/grants/guide/notice-files/NOT-OD-25-132.html}.} increasingly use tools such as GPTZero or Pangram to detect if authors are producing LLM-generated content. 
Let $z_1$ denote characteristics used for detection: for example, if the detection process focuses on hallucinated references, then first attribute $z_1$ may capture the estimated number of hallucinated references. Such detectors are often non-robust to superficial changes such as paraphrasing \citep{DBLP:conf/nips/KrishnaSKWI23}, and may misclassify content written by non-native English writers \citep{DBLP:journals/patterns/LiangYMWZ23}. Authors may thus post-process their LLM-generated content or human-generated content, facing a time-based cost $\CostFn$ for doing so. Authors may also opt to manually perform  steps in the research process, such as reference generation or writing, which affects their production cost $\ProductionCost{\TypeNoArg} \cdot (1-\FractionNoArg)$. 
\end{example}

\begin{example}
\label{example:societal}
Society also implicitly forms opinions about whether content is LLM-generated by using  heuristics. Detection may be performed based on stylistic quirks: let $z_1$ denote the frequency of stylistically LLM-sounding words such as ``delve'' \citep{JW25, liang2025quantifying} or stylistic patterns such as em-dashes \citep{emdash}. Alternatively, detection may be done based on polish: let $z_1$ denote the level of polish of the content (e.g., lack of typos). The penalty $\Penalty$ captures the implicit reputational cost to the authors, coming from perceived correlations between LLM usage and author identity \citep{lepp2025you}, content quality \citep{knight2023generative}, or authenticity. If detection is based on stylistic quirks, authors may manually remove some of these quirks in LLM-generated content, or even in fully human-generated content if they happen to naturally adopt the same patterns \citep{emdash2}. They may also post-process content by prompting an LLM to avoid these patterns.\footnote{See \url{https://www.quetext.com/blog/top-prompts-humanize-ai-generated-text}} If detection is based on polish, then authors may introduce typos into outputs using automated tools.\footnote{See \url{https://sinceerly.com}} 
Authors may also opt to use models for brainstorming rather than for writing, which affects their production cost $\ProductionCost{\TypeNoArg} \cdot (1-\FractionNoArg)$. 
\end{example}

\begin{figure}[t]
\centering
\begin{subfigure}[c]{0.32\linewidth}
    \centering
    \includegraphics[width=\linewidth]{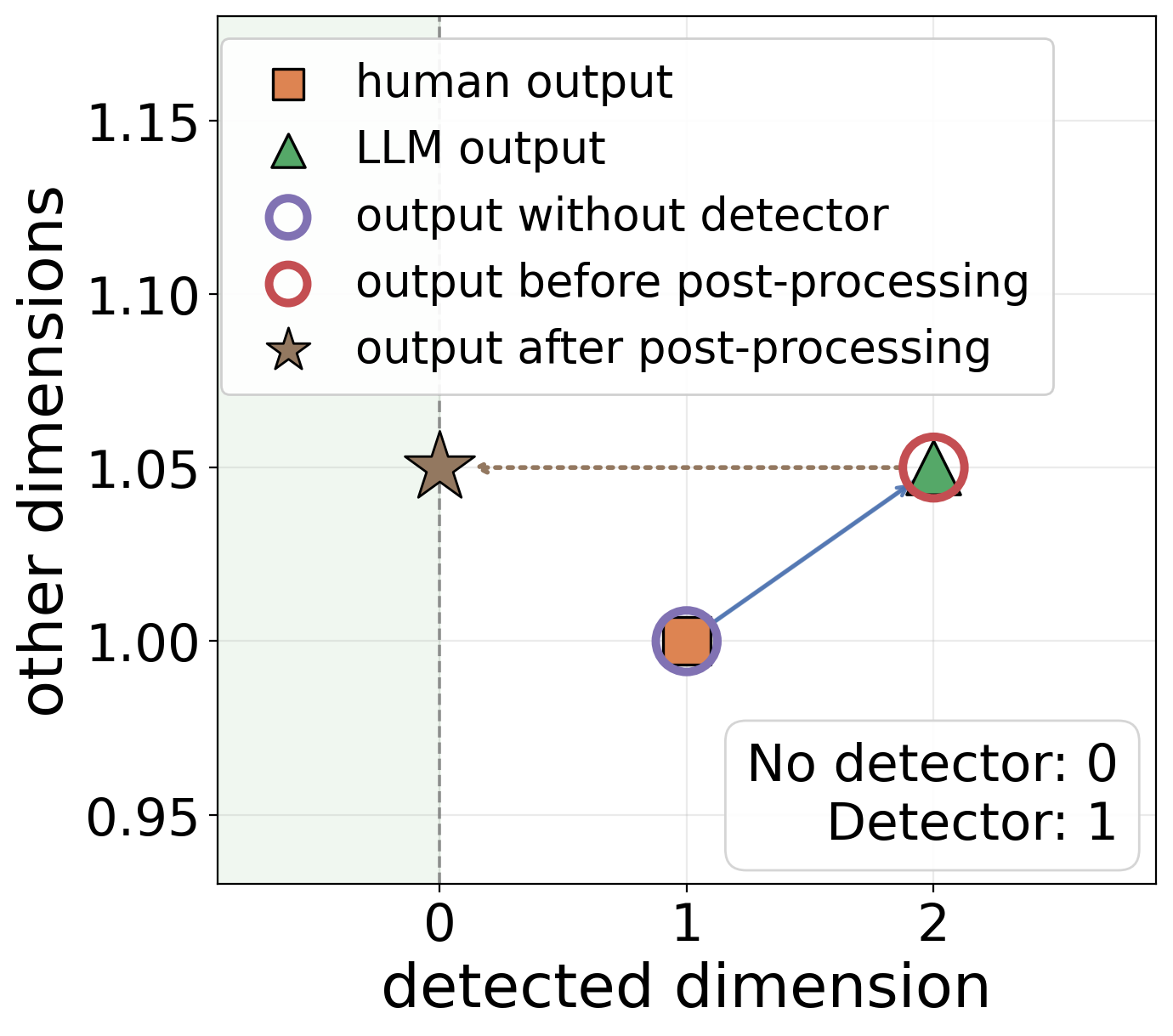}
    \caption{LLM usage}
    \label{fig:llmusage}
\end{subfigure}
\begin{subfigure}[c]{0.32\linewidth}
    \centering
    \includegraphics[width=\linewidth]{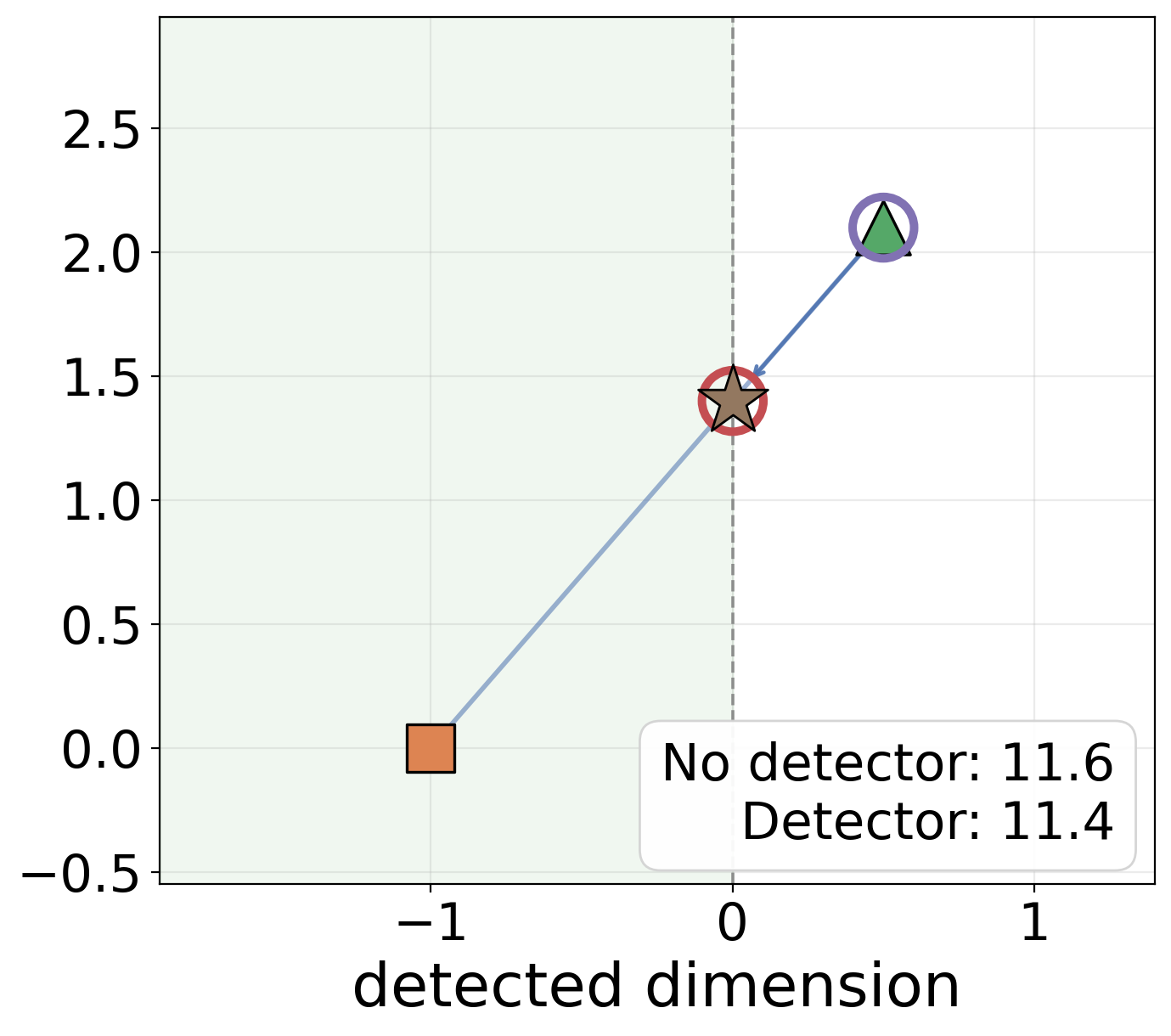}
    \caption{Output quality}
    \label{fig:quality}
\end{subfigure}
\begin{subfigure}[c]{0.32\linewidth}
    \centering
    \includegraphics[width=\linewidth]{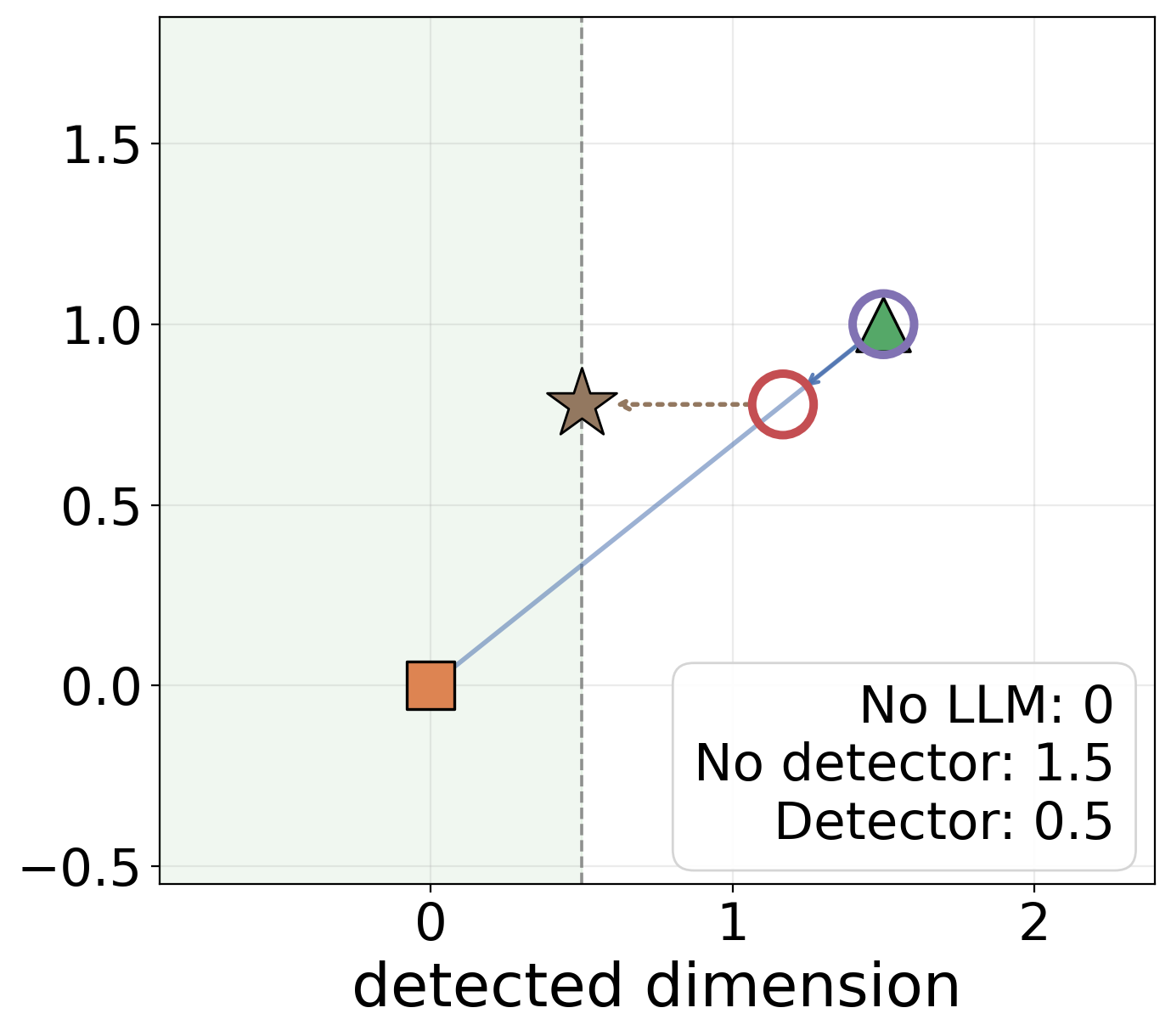}
    \caption{Detected attribute}
    \label{fig:detectedattribute}
\end{subfigure}
\caption{Impact of LLM detector on LLM usage (left), output quality (middle), and the detected attribute (right). Quality is linearly decreasing in the detected dimension (i.e., the detector targets quality-reducing characteristics), and linearly increasing in the other dimension. The plot shows how the presence of the LLM detector can lead users to \textit{increase their LLM usage} relative to the no-detector baseline (Theorem \ref{thm:usageincrease}; Example \ref{ex:conceptualllmusage}), can lead users to produce \textit{lower-quality content} than the no-detector baseline (Theorem \ref{thm:qualitydecrease}; Example \ref{ex:conceptualqualitydecrease}), and leads to a clean ``rise-then-fall'' pattern for the detected attribute (Theorem \ref{thm:detectedattributeUshaped}; Figure \ref{fig:empirical}). }
\label{fig:fig1}
\end{figure}

\section{Impact on LLM Usage}\label{subsec:llmusage}

At first glance, it may seem that introducing an LLM detector would weakly decrease LLM usage $\Fraction{\TypeNoArg}(\Penalty, \Threshold; \Quality, \CostFn)$ relative to the no-detector baseline $\Fraction{\TypeNoArg}(\emptyset; \Quality, \CostFn)$. Since we assume that the detected attribute for a user's LLM-generated content is strictly larger than the detected attribute for the user's human-generated content ($\LLMContent{\TypeNoArg}_1 > \HumanContent{\TypeNoArg}_1$), LLM detection may seem to lead users to reduce their LLM usage in order to lower the detected attribute. 

Our results in this section illustrate how this argument breaks down: we find that introducing an LLM detector can counterintuitively lead some users to increase their LLM usage. 

\subsection{Sufficient condition for increase in LLM usage}

The following result  shows sufficient conditions under which there exists some user type $\TypeNoArg$ that increases their LLM usage relative to the no-detector baseline. 
\begin{theorem}
\label{thm:usageincrease}
Fix $\Threshold < \infty$, $\Quality$, $\CostFn$. If the costs $\CostFn$ are finite and there exists $z^u_1 > z^l_1  > \Threshold$ such that:
\begin{equation}
\label{eq:conditionincrease}
 \max_{z'_1 \le \Threshold} \left(\Quality_1(z'_1) - \CostFn_1(z_1^u, z'_1) \right) - \max_{z'_1 \le \Threshold} \left(\Quality_1(z'_1) - \CostFn_1(z^l_1, z'_1) \right) > \Quality'_1(z_1^l) \cdot (z_1^u - z_1^l),
\end{equation}
then there exists a type $\TypeNoArg \in \mathcal{T}$ and a penalty threshold $\bar{\Penalty} > 0$ such that the presence of the detector strictly increases LLM usage: $\Fraction{\TypeNoArg}(\Penalty, \Threshold; \Quality, \CostFn) > \Fraction{\TypeNoArg}(\emptyset; \Quality, \CostFn) \text{ for all } \Penalty > \bar{\Penalty}$. 
\end{theorem}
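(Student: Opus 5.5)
The plan is to build a type whose non-detected coordinates are inert, so that everything happens in the first coordinate. Let $\Delta := z_1^u - z_1^l > 0$ and define the below-threshold value
\[
G(x) := \max_{z'_1 \le \Threshold} \left( \Quality_1(z'_1) - \CostFn_1(x, z'_1) \right),
\]
which is well defined by (A5) and continuity, as in Appendix \ref{appendix:characerization}. Condition \eqref{eq:conditionincrease} reads $G(z_1^u) - G(z_1^l) > \Quality'_1(z_1^l)\,\Delta$. By (A2), $\CostFn_1$ is increasing in its first argument, so $G$ is nonincreasing and the left side is $\le 0$. Hence \eqref{eq:conditionincrease} forces $\Quality'_1(z_1^l) < 0$.

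\textbf{The type.} Take $\HumanContent{\TypeNoArg}_1 = z_1^l$ and $\LLMContent{\TypeNoArg}_1 = z_1^u$, so $\TypeNoArg \in \mathcal{T}$. For $i \ge 2$, set $\HumanContent{\TypeNoArg}_i = \LLMContent{\TypeNoArg}_i$ equal to a common value $c_i$. Set the production cost to
\[
\ProductionCost{\TypeNoArg} := -\Quality'_1(z_1^l)\,\Delta > 0 .
\]
For $i \ge 2$, the coordinate $\Combination{\TypeNoArg}_i(\FractionNoArg) = c_i$ does not depend on $\FractionNoArg$. By (A1) and (A3), leaving it at $c_i$ is optimal regardless of $\FractionNoArg$ or detection, so these coordinates contribute a constant.

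\textbf{Baseline.} With no detector, Lemma \ref{lemma:nodetectionnopostprocessing} says there is no post-processing. The user therefore maximizes $\phi(\FractionNoArg) = \Quality_1(z_1^l + \FractionNoArg\Delta) - \ProductionCost{\TypeNoArg}(1-\FractionNoArg)$, which is concave in $\FractionNoArg$. Its derivative at $0$ is $\Quality'_1(z_1^l)\Delta + \ProductionCost{\TypeNoArg} = 0$. So $\FractionNoArg = 0$ maximizes $\phi$, and the tiebreak toward lower $\FractionNoArg$ gives $\Fraction{\TypeNoArg}(\emptyset;\Quality,\CostFn) = 0$.

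\textbf{With the detector, restricted to evading outputs.} For every $\FractionNoArg$, we have $\Combination{\TypeNoArg}_1(\FractionNoArg) \in [z_1^l, z_1^u]$, and this interval lies strictly above $\Threshold$. The best utility among outputs with $\FinalContentNoArg_1 \le \Threshold$ is $\psi(\FractionNoArg) := G(z_1^l + \FractionNoArg\Delta) - \ProductionCost{\TypeNoArg}(1-\FractionNoArg)$, up to the constant. The hypothesis and the choice of $\ProductionCost{\TypeNoArg}$ give
\[
\psi(1) - \psi(0) = G(z_1^u) - G(z_1^l) + \ProductionCost{\TypeNoArg} > \Quality'_1(z_1^l)\Delta + \ProductionCost{\TypeNoArg} = 0 .
\]
So $\FractionNoArg = 0$ is strictly beaten by $\FractionNoArg = 1$. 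The maximizer of $\psi$ over $[0,1]$ exists by continuity of $G$ and satisfies $\FractionNoArg^* > 0$.

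\textbf{Main obstacle: large penalties force evasion.} The remaining step is to show that for large $\Penalty$ every optimum has $\FinalContentNoArg_1 \le \Threshold$. Any option with $\FinalContentNoArg_1 > \Threshold$ has $\FinalContentNoArg_1 \in (\Threshold, z_1^u]$. Its utility is therefore at most $\max_{[\Threshold, z_1^u]} \Quality_1 - \Penalty$ plus the constant, since production and post-processing costs are nonnegative. Choose $\bar\Penalty := \max_{[\Threshold, z_1^u]} \Quality_1 - \min_{\FractionNoArg} \psi(\FractionNoArg)$, which is finite because $\psi$ is continuous on $[0,1]$. Enlarge $\bar\Penalty$ if needed so that $\bar\Penalty > 0$.

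For $\Penalty > \bar\Penalty$, every detected option is strictly worse than the evading option at $\FractionNoArg^*$, so the optimal $\FractionNoArg$ is a maximizer of $\psi$. Every maximizer of $\psi$ is positive, because $\psi(0) < \psi(1) \le \max \psi$. Hence $\Fraction{\TypeNoArg}(\Penalty,\Threshold;\Quality,\CostFn) > 0 = \Fraction{\TypeNoArg}(\emptyset;\Quality,\CostFn)$ for all $\Penalty > \bar\Penalty$.

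The care needed is in two places: confirming via (A5) and (A2) that $G$ is attained and continuous, and checking that the tiebreaking rule cannot select $\FractionNoArg = 0$ in the detector case. The strict inequality $\psi(0) < \psi(1)$ handles the latter.
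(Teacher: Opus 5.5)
Your proof is correct and is essentially the paper's proof. It uses the same type ($\HumanContent{\TypeNoArg}_1 = z_1^l$, $\LLMContent{\TypeNoArg}_1 = z_1^u$, remaining coordinates equal), the same production cost $\ProductionCost{\TypeNoArg} = -\Quality'_1(z_1^l)(z_1^u - z_1^l)$, the same zero-derivative argument giving $\FractionNoArg^* = 0$ in the baseline, and the same comparison of $\FractionNoArg = 1$ against $\FractionNoArg = 0$ once post-processing below $\Threshold$ is forced.

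The differences are minor. You read off $\Quality'_1(z_1^l) < 0$ directly from the monotonicity in Lemma~\ref{lemma:maxweaklydecreasing}, whereas the paper argues by contradiction via Lemma~\ref{lemma:gthreshold}. You also force evasion with a crude direct bound, which gives a larger $\bar{\Penalty}$, instead of the paper's threshold $\bar{\Penalty} = \sup_{z_1 \in [z_1^l, z_1^u]} \bigl(\Quality_1(z_1) - \max_{z'_1 \le \Threshold}(\Quality_1(z'_1) - \CostFn_1(z_1, z'_1))\bigr)$ used together with Lemma~\ref{lemma:detectorfinitecostspostprocessing}.
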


Theorem \ref{thm:usageincrease} (Figure \ref{fig:llmusage}) shows conditions under which the presence of the detector strictly increases LLM usage for some user type. The condition \eqref{eq:conditionincrease} is based on the \textit{after-post-processing utility} $\max_{z'_1 \le \Threshold} \left(\Quality_1(z'_1) - \CostFn_1(z_1, z'_1) \right)$. This captures a user's utility from dimension 1 if they start with content $z_1$, decide to post-process their content to fall below the boundary, and then optimally post-process their content to maximize their utility. The after-post-processing utility is always weakly decreasing in $z_1$, because costs increase from being further away from the boundary (Lemma \ref{lemma:maxweaklydecreasing}). This means that the left-hand-side is always negative, so the condition is violated when the quality $\Quality_1$ is weakly increasing in $z_1$. When $\Quality_1$ is weakly decreasing, the condition can be restated as $|\max_{z'_1 \le \Threshold} \left(\Quality_1(z'_1) - \CostFn_1(z_1^u, z'_1) \right) - \max_{z'_1 \le \Threshold} \left(\Quality_1(z'_1) - \CostFn_1(z^l_1, z'_1) \right)| < |\Quality'_1(z_1^l) \cdot (z_1^u - z_1^l)|$: this requires the after-post-processing utility decreases at a slower rate than quality on the interval $[z_1^l, z_1^u]$. 

To see that \eqref{eq:conditionincrease} is achievable, 
let $\Quality_1(z_1) = k_1 - k_2 z_1$, let $\CostFn_1(z_1, z'_1) = k_2(z_1 - z'_1) + \frac{k_3}{1 + e^{z_1}} (1 - e^{-(z_1 - z'_1)}) + k_4 (z_1 - z'_1)^2$, 
and let $\Threshold = 0$. As long as $k_2$ is sufficiently large, $k_3$ is sufficiently small, and $k_4$ is sufficiently small, this setup satisfies \eqref{eq:conditionincrease} (Proposition \ref{proposition:nonmonotoneusage} in Appendix \ref{appendix:propconstruction}). Observe that the quality is linear and weakly decreasing in the detected attribute $z_1$, and the costs have a linear structure with exponential and quadratic perturbations. 

\paragraph{Proof sketch of Theorem \ref{thm:usageincrease}.} The intuition is as follows and is depicted in Figure \ref{fig:llmusage}.\footnote{We focus on an intuitive 2-d construction from Theorem \ref{thm:alternateusageincrease}, rather than the 1-d construction from the proof of Theorem \ref{thm:usageincrease}.} When the penalty $\Penalty$ is sufficiently high, users post-process their content to avoid detection. Users choose their work allocation $\FractionNoArg$ taking into account that they will post-process their content. Since \eqref{eq:conditionincrease} ensures that $\Quality_1$ is weakly decreasing, post-processing removes quality-reducing characteristics arising from LLM usage. When the LLM-generated content is higher quality than human-generated content along other characteristics (as captured by the second dimension), the user can take greater advantage of the LLM performing well along these characteristics under detection, without suffering from the quality-reducing characteristics. The condition \eqref{eq:conditionincrease} guarantees that the after-post-processing utility decreases sufficiently slowly that assigning more work to the LLM is worth it.

The proofs in Appendix \ref{appendix:proofthmincrease} formalize this intuition. We construct a user type where $\LLMContent{\TypeNoArg}_1  = z_1^u$ and  $\HumanContent{\TypeNoArg}_1 = z_1^l$. We can set the second dimension so that the quality of the LLM-generated output slightly exceeds the quality of the human-generated output (i.e., $\Quality_2(\LLMContent{\TypeNoArg}_2) > \Quality_2(\HumanContent{\TypeNoArg}_2)$ with a small gap). We set the production costs ($\ProductionCost{\TypeNoArg} \approx -\Quality'_1(z_1^l) (z_1^u - z_1^l)$) to ensure that the user will not use the LLM in the no-detector baseline (i.e., $\Fraction{\TypeNoArg}(\emptyset; \Quality; \CostFn) = 0$). The condition in \eqref{eq:conditionincrease} ensures that when there is a detector, the user achieves greater utility from fully allocating their work to the LLM (i.e., $\FractionNoArg = 1$) than from not using the LLM (i.e., $\FractionNoArg = 0$), so the user will at least partially use the LLM. 

\paragraph{Illustrative example.} To help distill the intuition from the proof, we provide an illustrative example. 
\begin{example}
\label{ex:conceptualllmusage}
While LLMs exhibit stylistic quirks (e.g., overusage of the word ``delve'' and em-dash) or make errors in terms of references or facts, human authors also exhibit some of these characteristics to a lesser degree. Without a detector, even when the LLM is slightly more skilled at producing interesting arguments than many authors, authors may decide to produce outputs themselves to reduce stylistic quirks and errors. When the detector catches not only the LLM-generated content but also the human-generated content produced by the author \citep{emdash, emdash2, DBLP:journals/patterns/LiangYMWZ23}, authors may post-process all content to remove stylistic quirks and to correct errors. Given that they have to do this post-processing anyway, authors may be incentivized to use an LLM to take advantage of its interesting arguments.   
\end{example}

\paragraph{Implications for institutions.} Theorem \ref{thm:usageincrease} highlights that while institutions (e.g., educators) may deploy LLM detectors in an effort to reduce LLM usage, this can backfire and lead some users to use the LLM more. Due to the availability of post-processing as an option for users, this result holds even in the extreme case where penalty becomes arbitrarily severe ($\Penalty \rightarrow \infty$), so institutions cannot even rely on severe penalties to reduce incentives for LLM usage.

\subsection{Necessary condition for increase in LLM usage}

To complement Theorem \ref{thm:usageincrease}, we derive necessary conditions under which there exists a user type that increases their LLM usage relative to the no-detector baseline.\footnote{Note that the necessary condition is more general, since it provides a penalty-independent condition that applies to every value of $\Penalty$, rather than only for sufficiently large $\Penalty$.}
\begin{theorem}
\label{thm:usagedecrease}
Fix $\Penalty > 0$, $\Threshold < \infty$, $\Quality$, $\CostFn$. If there exists a type $\TypeNoArg \in \mathcal{T}$ such that the presence of the detector strictly increases LLM usage $\Fraction{\TypeNoArg}(\Penalty, \Threshold; \Quality, \CostFn) > \Fraction{\TypeNoArg}(\emptyset; \Quality, \CostFn)$, then gaming costs $\CostFn$ are finite and there exist $z^u_1 > z^l_1  > \Threshold$ such that:
\begin{equation}
\label{eq:condition}
 \max_{z'_1 \le \Threshold} \left(\Quality_1(z'_1) - \CostFn_1(z_1^u, z'_1) \right) - \max_{z'_1 \le \Threshold} \left(\Quality_1(z'_1) - \CostFn_1(z^l_1, z'_1) \right) > \Quality_1(z_1^u) - \Quality_1(z_1^l).
\end{equation}
\end{theorem}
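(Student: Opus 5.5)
We prove the contrapositive. Suppose either the costs are infinite, or the costs are finite and every pair $z^u_1 > z^l_1 > \Threshold$ satisfies the reverse of \eqref{eq:condition}. Writing $G(z_1) := \max_{z'_1 \le \Threshold}(\Quality_1(z'_1) - \CostFn_1(z_1, z'_1))$, the latter means
\[
G(z^u_1) - \Quality_1(z^u_1) \le G(z^l_1) - \Quality_1(z^l_1).
\]
So the function $H(z_1) := G(z_1) - \Quality_1(z_1)$ is weakly decreasing on $(\Threshold,\infty)$. We will show that under either hypothesis $\Fraction{\TypeNoArg}(\Penalty,\Threshold;\Quality,\CostFn) \le \Fraction{\TypeNoArg}(\emptyset;\Quality,\CostFn)$ for every type.

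\textbf{Step 1: reduce to one-dimensional value functions.} By separability, and by Lemma \ref{lemma:nodetectionnopostprocessing} (assumption (A1)), optimal post-processing in coordinates $i\ge 2$ is trivial. The same holds in coordinate $1$ without a detector. Hence the no-detector objective is
\[
V_0(\FractionNoArg) = \sum_i \Quality_i(\Combination{\TypeNoArg}_i(\FractionNoArg)) - \ProductionCost{\TypeNoArg}(1-\FractionNoArg).
\]
With the detector, the objective is $V_\Penalty(\FractionNoArg) = V_0(\FractionNoArg) + \Delta(\Combination{\TypeNoArg}_1(\FractionNoArg))$, where $\Delta(z_1)$ is the gain in coordinate $1$ from facing the detector relative to leaving $z_1$ untouched. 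For $z_1 \le \Threshold$ there is no detection and no incentive to post-process (again by (A1)), so $\Delta(z_1)=0$. For $z_1 > \Threshold$, the user either stays put and pays the penalty, or post-processes below $\Threshold$; post-processing but remaining above $\Threshold$ is dominated by (A1). Therefore
\[
\Delta(z_1) = \max\bigl(-\Penalty,\; G(z_1) - \Quality_1(z_1)\bigr) = \max(-\Penalty, H(z_1)) \quad \text{for } z_1 > \Threshold.
\]
With infinite costs, $G \equiv -\infty$ for $z_1>\Threshold$, so $\Delta = -\Penalty$ there. The well-definedness in Appendix \ref{appendix:characerization} is used to make these maxima legitimate.

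\textbf{Step 2: show $\Delta\circ \Combination{\TypeNoArg}_1$ is weakly decreasing in $\FractionNoArg$.} Since $\LLMContent{\TypeNoArg}_1 > \HumanContent{\TypeNoArg}_1$, the map $\FractionNoArg \mapsto \Combination{\TypeNoArg}_1(\FractionNoArg)$ is strictly increasing. It therefore suffices to show that $\Delta$ is weakly decreasing on $\mathbb{R}$.

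On $(\Threshold,\infty)$ this follows from $H$ weakly decreasing, or from $\Delta$ being constant in the infinite-cost case. Across the threshold we need $\Delta(z_1) \le 0$ for $z_1 > \Threshold$. This holds because $-\Penalty < 0$ and, by (A1), $G(z_1) < \Quality_1(z_1)$, since post-processing strictly changes $z_1$ when the target is at most $\Threshold < z_1$. So $\Delta$ equals $0$ on $(-\infty,\Threshold]$ and is nonpositive and weakly decreasing afterward, which makes it weakly decreasing overall.

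\textbf{Step 3: monotone comparative statics.} Let $\alpha_0 = \Fraction{\TypeNoArg}(\emptyset)$, the smallest maximizer of $V_0$ under the tiebreaking rule, and take any $\FractionNoArg > \alpha_0$. Then $V_0(\FractionNoArg) \le V_0(\alpha_0)$ by optimality, and $\Delta(\Combination{\TypeNoArg}_1(\FractionNoArg)) \le \Delta(\Combination{\TypeNoArg}_1(\alpha_0))$ by Step 2. Adding these gives $V_\Penalty(\FractionNoArg) \le V_\Penalty(\alpha_0)$. Because users tiebreak toward lower $\FractionNoArg$, no $\FractionNoArg > \alpha_0$ is selected under the detector, so $\Fraction{\TypeNoArg}(\Penalty,\Threshold) \le \alpha_0$. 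This contradicts the hypothesis of the theorem, which proves the contrapositive.

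\textbf{Main obstacle.} The hardest step is Step 1's reduction to $\Delta$. It requires justifying that the joint optimum over $(\FractionNoArg, z)$ decomposes this way: the optimal post-processing in coordinate $1$ with the detector is exactly "no change" or "the best option below $\Threshold$", and the maxima exist. This is where (A1) and the appendix characterization lemmas are invoked. The boundary case $z_1 = \Threshold$ also needs a check, which the non-strict detector inequality handles.
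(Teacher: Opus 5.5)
Your argument is correct, and it takes a genuinely different and lighter route than the paper. Both proofs argue the contrapositive, but the paper goes through its structural characterization of the detector optimum: Theorems \ref{thm:infinitecostdetector} and \ref{thm:detectorfinitecosts} place the chosen allocation in $\{\alpha^*, \alpha^{\text{game}}\} \cup A^{\mathbf{t}}(\beta)$. It then splits on whether $m^{\mathbf{t}}_1(\alpha^*) \le \nu$. In the finite-cost case (Lemma \ref{lemma:increasingllmusage}) it runs a contradiction through $\min A'$ with $A' = A^{\mathbf{t}}(\beta) \cap [\alpha^*, 1]$. That step needs closedness from Lemma \ref{lemma:apenaltyset}, concavity of $\alpha \mapsto Q(m^{\mathbf{t}}(\alpha)) - \gamma^{\mathbf{t}}(1-\alpha)$ to compare the detector choice with $\min A'$, and a continuity argument showing that $\min A'$ sits exactly where the post-processing gain equals $-\beta$.

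You instead write the detector objective as $V_0 + \Delta \circ m^{\mathbf{t}}_1$; your Step 1 is exactly the content of Lemmas \ref{lemma:infinitecostspostprocessing} and \ref{lemma:detectorfinitecostspostprocessing}. You then observe that the hypothesis together with (A1) makes $\Delta$ weakly decreasing on all of $\mathbb{R}$. You finish with a one-line comparative-statics step: adding a weakly decreasing function to $V_0$ cannot push the smallest maximizer above $\alpha_0$.

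What your route buys:
\begin{itemize}
\item It uses only the optimality of $\alpha_0$, so it never invokes concavity of $Q$.
\item It needs no case split on $m^{\mathbf{t}}_1(\alpha^*)$ and treats finite and infinite costs uniformly.
\item It never has to locate the detector optimum, so the closure lemma and the concern that $A^{\mathbf{t}}(\beta)$ need not be an interval simply disappear.
\end{itemize}

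The one external input you still rely on is that the outer maximization over $\alpha$ is attained. This holds by continuity in the finite-cost case (the gain tends to $0$ as $z_1 \downarrow \nu$) and by upper semicontinuity in the infinite-cost case, and the paper's well-definedness results supply it. Attainment is what lets the tiebreaking rule turn ``$V_\beta(\alpha) \le V_\beta(\alpha_0)$ for all $\alpha > \alpha_0$'' into $\alpha^{\mathbf{t}}(\beta,\nu) \le \alpha_0$.

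The paper's heavier route pays off elsewhere, since the same characterization is reused for Theorem \ref{thm:detectedattributeUshaped} and the quality constructions. For this theorem alone, your argument is shorter and rests on fewer assumptions. One minor point: your $G$ denotes the value of the post-processing problem, whereas the paper's $G$ is its argmax, so rename it to avoid the clash.
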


The condition \eqref{eq:condition} in Theorem \ref{thm:usagedecrease} has a similar structure to the condition \eqref{eq:conditionincrease} from the sufficient condition, but the right-hand side is changed from the first-order approximation $\Quality'_1(z_1^l) (z_1^u - z_1^l)$ to the difference $\Quality_1(z_1^u) - \Quality_1(z_1^l)$. These conditions are equivalent when $\Quality_1$ is linear, meaning that Theorems \ref{thm:usageincrease} and \ref{thm:usagedecrease} fully characterize when there exists a user type where detection leads to greater LLM usage. However, these conditions can differ slightly in the nonlinear case.\footnote{As a sanity check, recall that since $\Quality_1$ is assumed to be concave, we know that $\Quality'_1(z_1^l) (z_1^u - z_1^l) \ge \Quality_1(z_1^u) - \Quality_1(z_1^l)$, meaning that the sufficient condition implies the necessary condition as expected.} We defer the proof to  Appendix \ref{appendix:thmdecrease}. 

\paragraph{Proof sketch of Theorem \ref{thm:usagedecrease}.} The intuition for Theorem \ref{thm:usagedecrease} is as follows. It is easier to show the contrapositive: if gaming costs are infinite or if \eqref{eq:condition} is not satisfied, then the presence of the detector weakly decreases LLM usage for all user types. Intuitively, when gaming costs are infinite, then the users do not post-process their content, so the only way that they can avoid detection is through reducing LLM usage, and LLM detection thus leads to reduced LLM usage. When gaming costs are finite, if \eqref{eq:condition} is not satisfied, the after-post-processing utility $\max_{z'_1 \le \Threshold} \left(\Quality_1(z'_1) - \CostFn_1(z_1, z'_1) \right)$ decreases sufficiently quickly in $z_1$ that it is not worth it to take greater advantage of any characteristics where the LLM-generated content outperforms human-generated content. 

The proof, which is formalized in Appendix \ref{appendix:thmdecrease}, leverages a partial characterization of the optimal user workflow decisions for general user types (Appendix \ref{appendix:characerization}). Two key technical complexities arise: (1) users may be incentivized to post-process beyond the threshold, and (2) the set of work allocations where post-processing is incentivized may not be an interval. We nonetheless are able to show the optimal work allocation falls within a restricted set of possible values (Theorem \ref{thm:detectorfinitecosts}). In contrast, the case where gaming costs are infinite ($\CostFn = \CostFn^{\infty}$) is cleaner because  there is no post-processing (Theorem \ref{thm:infinitecostdetector}). 

\paragraph{Clean cases where LLM detection weakly decreases LLM usage.} As a consequence of Theorem \ref{thm:usagedecrease}, we can derive several cases where LLM detection does not increase LLM usage for any user. 
\begin{corollary}
\label{cor:monotoneusage}
Fix $\Quality$, $\Penalty$, $\Threshold$, and $\CostFn$. Suppose that at least one of the following conditions holds:
\begin{enumerate}[leftmargin=*, nosep]
    \item Gaming costs $\CostFn = \CostFn^{\infty}$ are infinite.
    \item Quality $\Quality_1$ is weakly increasing.
    \item Gaming costs $\CostFn$ satisfy the mixed partial condition $\frac{\partial^2 \CostFn_1 (z_1, z'_1)}{\partial z_1 \partial z'_1} \le 0$ for all $z'_1 \le z_1$.
\end{enumerate}
For any type $\TypeNoArg$, the detector weakly decreases LLM usage: $\Fraction{\TypeNoArg}(\Penalty, \Threshold; \Quality, \CostFn) \le  \Fraction{\TypeNoArg}(\emptyset; \Quality, \CostFn)$. 
\end{corollary}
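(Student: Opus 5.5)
The plan is to prove Corollary \ref{cor:monotoneusage} as the contrapositive of Theorem \ref{thm:usagedecrease}. By that theorem, if some type strictly increases LLM usage, then the costs are finite and condition \eqref{eq:condition} holds for some $z^u_1 > z^l_1 > \Threshold$. So in each of the three cases it suffices to show that either the costs are infinite or \eqref{eq:condition} fails for every such pair. To keep notation short, write $G(z_1) := \max_{z'_1 \le \Threshold} \left(\Quality_1(z'_1) - \CostFn_1(z_1, z'_1)\right)$ for the after-post-processing utility, defined for $z_1 > \Threshold$.

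\textbf{Case 1.} This is immediate: Theorem \ref{thm:usagedecrease} concludes that the costs are finite, which contradicts $\CostFn = \CostFn^{\infty}$.

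\textbf{Case 2.} Here I invoke Lemma \ref{lemma:maxweaklydecreasing}, which says $G$ is weakly decreasing. Hence the left-hand side $G(z^u_1) - G(z^l_1)$ of \eqref{eq:condition} is $\le 0$. Since $\Quality_1$ is weakly increasing, the right-hand side $\Quality_1(z^u_1) - \Quality_1(z^l_1)$ is $\ge 0$. So the strict inequality in \eqref{eq:condition} cannot hold.

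\textbf{Case 3.} We may assume finite costs, otherwise Case 1 applies. The goal is to show $G(z^u_1) - G(z^l_1) \le \Quality_1(z^u_1) - \Quality_1(z^l_1)$ for all $z^u_1 > z^l_1 > \Threshold$.

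\begin{itemize}
\item Let $z^*$ be a maximizer for $G(z^l_1)$; it exists by (A5) and continuity. Then $G(z^u_1) - G(z^l_1) \ge -[\CostFn_1(z^u_1, z^*) - \CostFn_1(z^l_1, z^*)]$, which is a lower bound and the wrong direction.
\item Instead, take $w$ to be a maximizer for $G(z^u_1)$, with $w \le \Threshold < z^l_1$. Then $w$ is feasible for $G(z^l_1)$, so
\[ G(z^u_1) - G(z^l_1) \le \CostFn_1(z^l_1, w) - \CostFn_1(z^u_1, w) = -\int_{z^l_1}^{z^u_1} \partial_1 \CostFn_1(s, w)\, ds. \]
\item The mixed-partial condition says $\partial_1 \CostFn_1(s, z')$ is weakly decreasing in $z'$. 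Since $w \le s$, this gives $\partial_1 \CostFn_1(s, w) \ge \partial_1 \CostFn_1(s, s)$.
\item Differentiating $\CostFn_1(s,s) = 0$ (from (A3)) gives $\partial_1 \CostFn_1(s,s) = -\partial_2 \CostFn_1(s,s)$, with both partials taken at the diagonal boundary.
\item By (A1), $\CostFn_1(s, s-\epsilon) > \Quality_1(s-\epsilon) - \Quality_1(s)$ for small $\epsilon > 0$. Dividing by $\epsilon$ and letting $\epsilon \to 0$ yields $-\partial_2 \CostFn_1(s,s) \ge -\Quality'_1(s)$, that is, $\partial_1 \CostFn_1(s,s) \ge -\Quality'_1(s)$.
\item Combining these, $-\partial_1 \CostFn_1(s, w) \le \Quality'_1(s)$. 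Integrating over $[z^l_1, z^u_1]$ gives $G(z^u_1) - G(z^l_1) \le \Quality_1(z^u_1) - \Quality_1(z^l_1)$, so \eqref{eq:condition} fails.
\end{itemize}

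The main obstacle is Case 3. It needs the boundary derivative inequality extracted from (A1), and it needs differentiability of $\CostFn_1$ up to the diagonal, which is why the finite-cost assumption includes twice continuous differentiability up to the boundary. One must also check the sign conventions in the one-sided limit carefully. Existence of the maximizer $w$ follows from (A5) and continuity, as in the well-definedness argument of Appendix \ref{appendix:characerization}.
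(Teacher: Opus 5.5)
Your proof is correct. Its overall structure matches the paper: take the contrapositive of Theorem~\ref{thm:usagedecrease}, then rule out \eqref{eq:condition} case by case. The paper packages Cases 2 and 3 into Lemma~\ref{lemma:maxshifteddecreasing}, which says $\max_{z'_1\le\Threshold}(\Quality_1(z'_1)-\CostFn_1(z_1,z'_1))-\Quality_1(z_1)$ is weakly decreasing; this is equivalent to \eqref{eq:condition} failing for every pair.

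Your Case 2 is the same argument as the paper's. Case 3 takes a genuinely different route.

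\begin{itemize}
\item \textbf{The paper's route.} It first proves Lemma~\ref{lemma:gthresholdmixed}: under the mixed-partial condition, the optimal post-processing target is exactly $\Threshold$. So the after-post-processing utility is the explicit function $\Quality_1(\Threshold)-\CostFn_1(z_1,\Threshold)$, and the paper differentiates it in $z_1$.
\item \textbf{Your route.} You take an arbitrary maximizer $w$ at $z^u_1$ and reuse it as a feasible, suboptimal choice at $z^l_1$. This envelope-type bound never needs to locate the argmax. Your remaining chain $-\partial_1\CostFn_1(s,w)\le-\partial_1\CostFn_1(s,s)\le\Quality_1'(s)$ is the paper's chain with $w=\Threshold$.
\item \textbf{What each buys.} Your argument is more self-contained and would still work if the optimal post-processing point were not on the boundary. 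The paper's detour gives the structural fact that the argmax $G$ is identically $\Threshold$, which it reuses in Lemmas~\ref{lemma:qualityfiniteincrease} and~\ref{lemma:qualityfinitedecrease}.
\end{itemize}

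A small simplification for the boundary inequality: you perturb the second argument, which then forces the diagonal identity $\partial_1\CostFn_1(s,s)=-\partial_2\CostFn_1(s,s)$. Perturbing the first argument instead, $\CostFn_1(s+\epsilon,s)>\Quality_1(s)-\Quality_1(s+\epsilon)$, gives $\partial_1\CostFn_1(s,s)\ge-\Quality_1'(s)$ directly, as the paper does.

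Finally, note that your $G$ denotes the optimal value, whereas the paper's $G$ denotes the argmax.
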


\paragraph{Implications for institutions.} 
Corollary \ref{cor:monotoneusage} provides conditions under which institutions do not need to worry about whether LLM detection will backfire in terms of LLM usage. The first condition illustrates the benefits of robust LLM detectors, but does not reflect current detectors which are vulnerable to post-processing (e.g., \citep{DBLP:conf/nips/KrishnaSKWI23}). The second condition illustrates potential benefits of detecting using quality-improving characteristics. While this condition is unlikely to be met when detecting on the basis of stylistic quirks or factual errors, it may be met by detection on the basis of document polish.\footnote{That being said, there may be practical barriers to adopting detectors which penalize LLM characteristics which improve quality.}  
The third condition is more technical, and may be difficult to control via detector design.

\section{Impact on Quality}\label{subsec:quality}
 
When the detected attribute is negatively correlated with output quality (i.e., $\Quality_1$ is weakly decreasing in $z_1$), one may expect the LLM detector to \textit{improve} quality since it penalizes quality-reducing characteristics. However, the results in this section show how this argument breaks down. Specifically, we show that even when the clean cases from Corollary \ref{cor:monotoneusage} do apply, LLM detection can still distort output quality, leading users to produce lower-quality content than the no-detector baseline.

\paragraph{Decrease in output quality.} The following result constructs sufficient conditions under which there exist user types who produce strictly lower-quality outputs than in the no-detector baseline.  

\begin{theorem}
\label{thm:qualitydecrease}
Fix $\Penalty > 0$, $\Threshold < \infty$, $\Quality$, and $\CostFn$, and suppose that at least one of the three conditions in Corollary \ref{cor:monotoneusage} holds. If gaming costs $\CostFn$ are finite, then suppose also that $\nabla_1 (\CostFn_1(\Threshold, \Threshold)) > \max(0, -\Quality'_1(\Threshold))$. Suppose also  $D \ge 2$, that there exists a dimension $2 \le i' \le D$ such that $\Quality_{i'}$ is non-constant. Then, the presence of the detector decreases quality for some user type $\TypeNoArg_D$: that is $\Quality( \FinalContent{\TypeNoArg_D}(\Penalty, \Threshold; \Quality, \CostFn)) < \Quality( \FinalContent{\TypeNoArg_D}(\emptyset; \Quality, \CostFn))$. 
\end{theorem}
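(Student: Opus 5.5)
We build an explicit type $\TypeNoArg_D$ that fully uses the LLM without a detector and whose LLM output sits just above the threshold. We then show that under detection its cheapest response is to cut LLM usage slightly rather than post-process, and that this cut lowers quality because the LLM helps on dimension $i'$. Fix $i'$ with $\Quality_{i'}$ non-constant and set $\ProductionCost{\TypeNoArg_D} = 0$. For all $i \notin \{1, i'\}$, set $\HumanContent{\TypeNoArg_D}_i = \LLMContent{\TypeNoArg_D}_i$, so these dimensions are inert.

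\emph{Choice of coordinates.} Choose $\HumanContent{\TypeNoArg_D}_1 < \Threshold$ at a fixed distance $L>0$ below $\Threshold$, and set $\LLMContent{\TypeNoArg_D}_1 = \Threshold + \epsilon$ for a small $\epsilon>0$. Since $\Quality_{i'}$ is concave and non-constant, there is a point $x$ with $\Quality'_{i'}(x) \neq 0$. Place $\HumanContent{\TypeNoArg_D}_{i'}$ and $\LLMContent{\TypeNoArg_D}_{i'}$ near $x$, oriented so that $\Quality_{i'}$ increases along $\FractionNoArg$. Choose the separation so that the per-unit-of-$z_1$ slope of total quality along the segment near $\FractionNoArg=1$,
\[ s := \frac{\tfrac{d}{d\FractionNoArg}\Quality_{i'}(\Combination{\TypeNoArg_D}_{i'}(\FractionNoArg))}{L+\epsilon} + \Quality'_1(\Threshold), \]
satisfies $0 < s < \nabla_1 \CostFn_1(\Threshold,\Threshold) + \Quality'_1(\Threshold)$. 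This window is nonempty precisely by the hypothesis $\nabla_1(\CostFn_1(\Threshold,\Threshold)) > \max(0,-\Quality'_1(\Threshold))$, and it can be hit by tuning the $i'$-separation. In the infinite-cost case only $s>0$ is needed.

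\emph{Baseline.} With no detector, Lemma \ref{lemma:nodetectionnopostprocessing} (via (A1)) rules out post-processing. The user then maximizes $\Quality(\Combination{\TypeNoArg_D}(\FractionNoArg))$ over $\FractionNoArg\in[0,1]$. This is a concave function of $\FractionNoArg$ with positive slope at $\FractionNoArg=1$ (because $s>0$), so its maximum is at $\FractionNoArg=1$, and the baseline quality is $\Quality(\LLMContent{\TypeNoArg_D})$. If $s>0$ only near $\FractionNoArg=1$ and not globally, I would shrink $L$ and the $i'$-separation so that the whole segment lies in a neighbourhood where the first-order picture holds.

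\emph{With the detector.} Since $\Penalty>0$ is fixed and every deviation discussed below costs only $O(\epsilon)$ in utility, for $\epsilon$ small enough the user will avoid detection: $z_1 \le \Threshold$. The candidate responses are:
\begin{enumerate}[label=(\roman*), leftmargin=*, nosep]
\item reduce $\FractionNoArg$ to $\FractionNoArg^* = L/(L+\epsilon)$ with no post-processing, at a utility loss of about $s\epsilon$;
\item keep $\FractionNoArg = 1$ and post-process $z_1$ down to $\Threshold$, at a net utility loss of about $(\nabla_1\CostFn_1(\Threshold,\Threshold) + \Quality'_1(\Threshold))\epsilon$;
\item mixtures of (i) and (ii), post-processing past the threshold, or post-processing of other coordinates.
\end{enumerate}
The window on $s$ makes (i) strictly better than (ii) to first order. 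For (iii), I would use the restricted-set characterization of optimal allocations in Theorem \ref{thm:detectorfinitecosts}, together with (A1) for the other coordinates and (A2)--(A3) for overshooting, to show it is dominated. The one of the three conditions of Corollary \ref{cor:monotoneusage} assumed in the statement gives $\Fraction{\TypeNoArg_D}(\Penalty,\Threshold;\Quality,\CostFn)\le 1$. Once (i) is shown optimal, the resulting output is $\Combination{\TypeNoArg_D}(\FractionNoArg^*)$. Its quality is below $\Quality(\LLMContent{\TypeNoArg_D})$ by roughly $s\epsilon>0$, which gives the theorem. In the infinite-cost case (i) is the only feasible way to avoid detection, so the argument is immediate.

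\emph{Main obstacle.} The hard step is making the first-order comparison of (i) against (ii)--(iii) rigorous uniformly over mixed strategies. The losses from partial $\FractionNoArg$-reduction and from partial post-processing add, and each is locally linear with the slopes above, so any mixture is worse than pure (i) up to $o(\epsilon)$. I would handle this with Taylor expansions, using twice-continuous differentiability of $\CostFn_1$ up to the boundary and of $\Quality$, and then take $\epsilon$ small enough that the strict first-order gap dominates the second-order remainders.
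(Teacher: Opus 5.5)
Your argument is correct in substance and runs on the same mechanism as the paper's proof (Lemmas \ref{lemma:qualityinfinitedecrease} and \ref{lemma:qualityfinitedecrease}). You put $\HumanContent{\TypeNoArg_D}_1$ a fixed distance below $\Threshold$, set $\LLMContent{\TypeNoArg_D}_1=\Threshold+\epsilon$, and give the LLM an advantage on dimension $i'$. That advantage, per unit of $z_1$, lies strictly between $\max(0,-\Quality'_1(\Threshold))$ and $\kappa:=\nabla_1\CostFn_1(\Threshold,\Threshold)$. The paper's $\rho$ (with $p(\epsilon_2)=\rho\epsilon_1$) plays the role of your $s-\Quality'_1(\Threshold)$.

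The difference is in execution. The paper tunes a positive production cost and never pins down the optimal allocation. It shows that allocations in $[\bar{\alpha}_3,1]$ are beaten by $\alpha^{\text{game}}$. It then shows that every remaining candidate in $\{\alpha^{\text{game}}\}\cup([\alpha^{\text{game}},\bar{\alpha}_3]\cap A^{\TypeNoArg}(\Penalty))$ already has lower quality, which uses $G\equiv\Threshold$. In the infinite-cost case it fixes the $i'$ gain below $\Penalty/2$ and balances it with $\ProductionCost{\TypeNoArg_D}$. Your route, with $\ProductionCost{\TypeNoArg_D}=0$ and exact optimality of $\alpha^{\text{game}}$, is shorter. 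Since no post-processing happens at the optimum, the quality conclusion follows from strict monotonicity of $\Quality(\Combination{\TypeNoArg_D}(\FractionNoArg))$. The price is that you must beat every mixture uniformly, including those arbitrarily close to $\alpha^{\text{game}}$.

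Two of your justifications need repair.

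\emph{Overshooting.} (A2)--(A3) do not rule out post-processing past $\Threshold$. Starting from $z_1>\Threshold$ with $\Quality_1$ decreasing, the extra cost of pushing below $\Threshold$ can be smaller than the quality gain; a positive mixed partial allows this. What rules it out is the Corollary condition, which you only used vacuously (``$\FractionNoArg\le 1$''). With finite costs it must be condition 2 or 3, and Lemmas \ref{lemma:gthreshold} and \ref{lemma:gthresholdmixed} then give $G\equiv\Threshold$. So the post-processed value is the smooth function $\Quality_1(\Threshold)-\CostFn_1(z_1,\Threshold)$. Alternatively, $G(z_1)\to\Threshold$ as $z_1\downarrow\Threshold$, so the after-post-processing utility still has right derivative $-\kappa$ at $\Threshold$, and overshooting enters only at second order.

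\emph{Uniformity.} Saying mixtures are ``worse than (i) up to $o(\epsilon)$'' does not give strict domination for mixtures that post-process only $\delta\ll\epsilon$. Instead, compare each $\FractionNoArg\in(\alpha^{\text{game}},1]$ with $\Combination{\TypeNoArg_D}_1(\FractionNoArg)=\Threshold+\delta$ directly to $\alpha^{\text{game}}$:
\begin{itemize}
\item concavity bounds the $i'$ gain by the slope at $\alpha^{\text{game}}$ times $\delta$;
\item $\CostFn_1(\Threshold+\delta,\Threshold)\ge(\kappa-\eta)\delta$ with $\eta\to 0$ as $\epsilon\to 0$.
\end{itemize}
Hence the utility difference is at most $(s-\Quality'_1(\Threshold)-\kappa+\eta)\delta<0$ for every $\delta\in(0,\epsilon]$. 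This is the paper's $(\kappa-\epsilon_4)$ device.

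\emph{Minor points.} Orienting dimension $i'$ so the LLM helps also requires $s>\Quality'_1(\Threshold)$. The window is therefore $\max(0,\Quality'_1(\Threshold))<s<\kappa+\Quality'_1(\Threshold)$, and this is where $\kappa>0$ enters. Also, $L/(L+\epsilon)$ is $\alpha^{\text{game}}$, not $\FractionNoArg^*$, which equals $1$ here.
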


Theorem \ref{thm:qualitydecrease} (Figure \ref{fig:quality}) illustrates how introducing an LLM detector can hurt output quality, even when $\Quality_1$ is weakly decreasing. The conditions in Theorem \ref{thm:qualitydecrease} build on the clean cases from Corollary \ref{cor:monotoneusage} where LLM detection weakly reduces LLM usage. When gaming costs are finite, we place the additional condition that the marginal costs of post-processing are sufficiently large close to the boundary. As a concrete example, note that for any linear quality function $\Quality_1$ which is weakly decreasing in the detected attribute, it is easy to see the conditions in Theorem \ref{thm:qualitydecrease} are satisfied for linear costs $\CostFn_1(z_1, z'_1) = (-Q'_1(\Threshold) + \epsilon) \cdot (z_1 - z'_1)$ for any $\epsilon > 0$ as well as for  infinite costs $\CostFn = \CostFn^{\infty}$.

\paragraph{Proof sketch of Theorem \ref{thm:qualitydecrease}.} The intuition  is as follows and is depicted in Figure \ref{fig:quality}. The idea is that  quality increase from post-processing can be offset by decrease in LLM usage. Specifically, suppose that the LLM-generated content is higher quality than human-generated content on non-detected characteristics (as captured by the second dimension). We take this quality gap to be small enough that users strictly reduce LLM usage under detection. The quality gap needs to be large enough that users create LLM-generated content in the no-detector baseline, and the gap also needs to be large enough that reducing LLM usage reduces output quality enough to offset any quality increase from post-processing. The proof is in Appendix \ref{appendix:quality}.

\paragraph{Illustrative examples.} To distill the intuition underpinning Theorem \ref{thm:qualitydecrease}, we provide illustrative examples for how the detector can lead to decreased quality. First, we consider the case where the quality is negatively correlated with the detected attribute, where the intuition for the quality decrease is more subtle. 
\begin{example}
\label{ex:conceptualqualitydecrease}
Like in Example \ref{ex:conceptualllmusage}, consider detectors based on stylistic quirks (e.g., overusage of the word ``delve'') or errors in terms of references or facts; however, suppose that the detector is much more conservative and does not flag content produced with a sufficient level of human collaboration (even though it flags fully LLM-generated content). Without a detector, if the LLM is sufficiently more skilled at producing interesting arguments than the human author, the author may produce LLM-generated content. With detection, if post-processing is costly, authors may instead be incentivized to use the LLM less, leading to content with less interesting arguments but also a lesser degree of stylistic quirks. If the quality reduction from losing interesting arguments is sufficiently large relative to the quality gain from mitigating stylistic quirks, detection would reduce content quality. 
\end{example}
Next, we turn to the case where quality is positively correlated with the detected attribute. In this case, the intuition is simpler: the quality decrease can come directly from post-processing. 
\begin{example}
\label{ex:conceptualquality}
Content generated by LLMs tends to be highly polished relative to human-generated content. When polish is used for detection, users may be incentivized to reduce the polish of their LLM-generated outputs, for example using automated tools to introduce typos.\footnote{See \url{https://sinceerly.com/}.} This type of post-processing can lead users to produce lower-quality outputs than if there had not been a detector.   
\end{example}

\paragraph{Implications for institutions.} Theorem \ref{thm:qualitydecrease} illustrates that institutions cannot assume that detection will translate to improvements in downstream output quality, regardless of whether the detector penalizes quality-reducing or quality-increasing attributes. In Appendix \ref{appendix:quality}, we also show that there exist types where LLM detection increases quality and where LLM detection leads to no change in quality relative to the no-detector baseline. This illustrates that detection has a heterogeneous impact on quality across users.

\begin{figure}[t]
\centering
\begin{subfigure}{0.48\linewidth}
\centering 
\includegraphics[height=5.5cm]{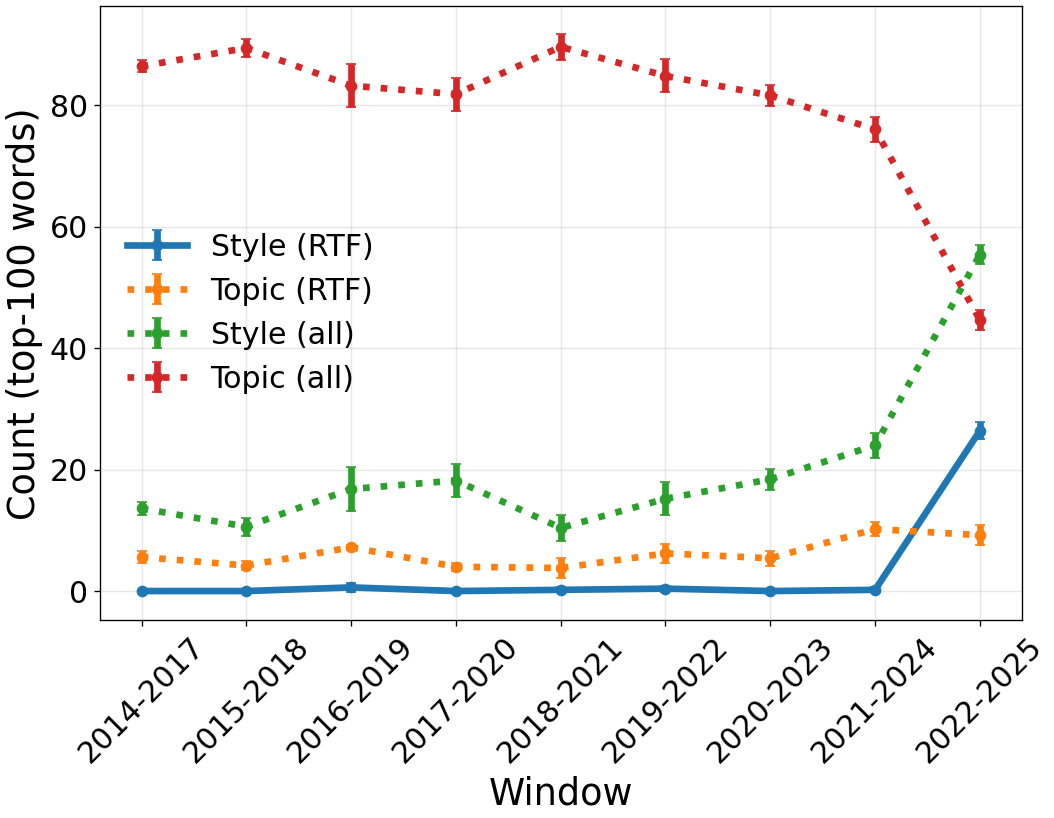}
\caption{Counts of rise-then-fall (RTF) patterns}
\label{tab:empirical}
\end{subfigure}
\begin{subfigure}{0.48\linewidth}
    \centering
    \includegraphics[height=5.5cm]{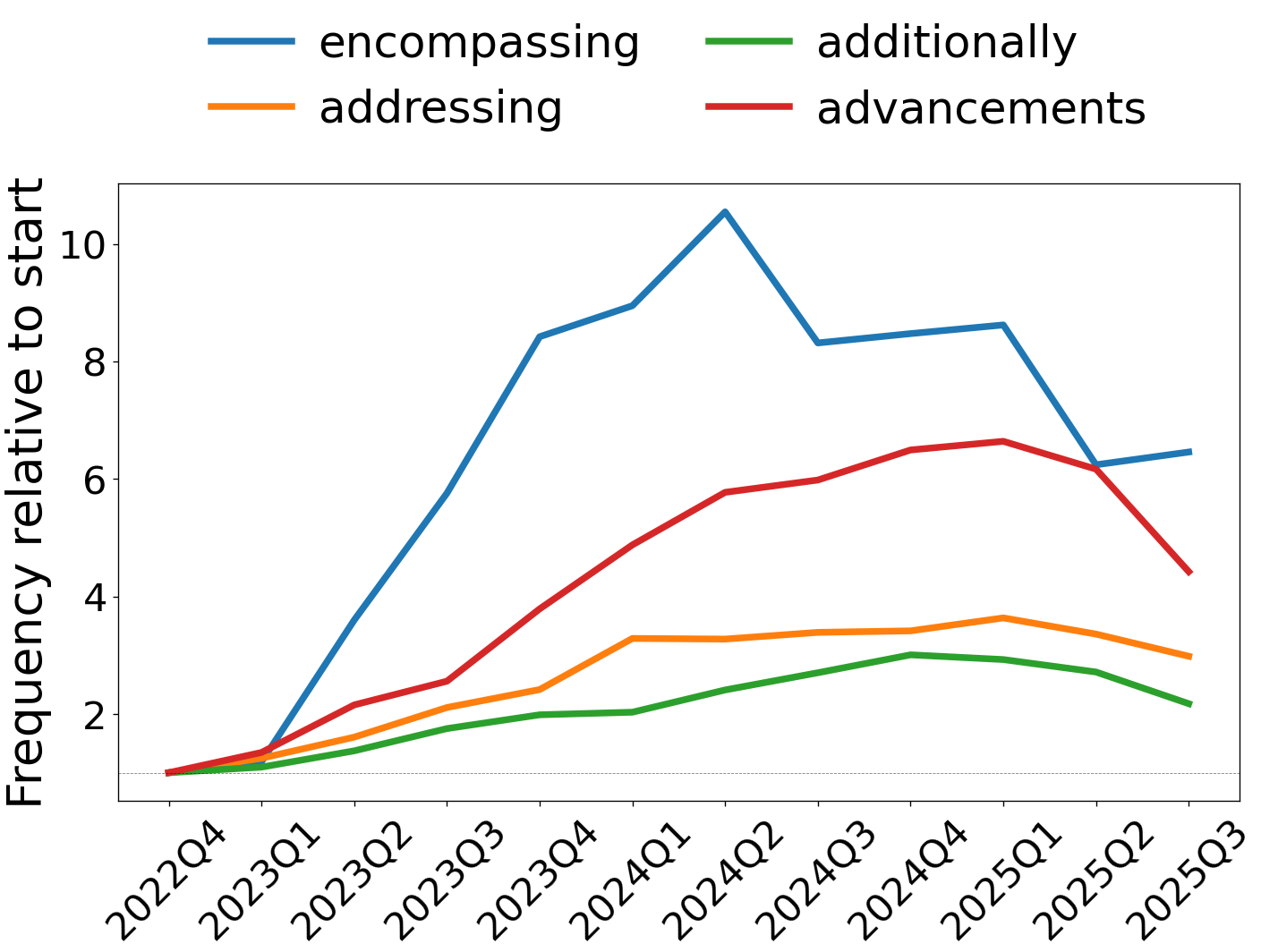}
        \caption{Word frequency curve} 
    \label{fig:figempirical}
\end{subfigure}
\caption{Empirical analysis of word frequencies on arXiv abstracts (Section \ref{subsec:empirical}). The top 100 words with the greatest change in each 3-year time window are computed. The left figure counts the number of style words and topic words exhibiting a ``rise-then-fall'' (RTF) pattern, and the total number of style words, over five trials with 2 standard errors. The right figure illustrates 4 example style words from the 2022-2025 window exhibiting the ``rise-then-fall pattern''. The full word list is deferred to Appendix \ref{appendix:empirical}. These results show how the ``rise-then-fall'' pattern became significantly more prevalent in the LLM era (2022-2025), relative to prior time windows. }
\label{fig:empirical}
\end{figure}

\section{Impact on Detected Attribute}\label{sec:inverseU}

In contrast with the distortion from Section \ref{subsec:llmusage} and Section \ref{subsec:quality}, we show that LLM detection has a clean impact on the detected attribute, reproducing an empirical ``rise-then-fall'' pattern. 

\subsection{Theoretical Analysis}

We prove that the detected attribute always weakly increases when LLMs are introduced into the world, and then always weakly decreases when an LLM detector is deployed. We also construct a user type for which both of these inequalities are strict.  
\begin{theorem}
\label{thm:detectedattributeUshaped}
Fix $\Penalty > 0$, $\Threshold < \infty$, $\Quality$, and $\CostFn$.  
For any type $\TypeNoArg$, the presence of the LLM weakly increases the detected attribute (i.e., $\HumanContent{\TypeNoArg}_1 \le \FinalContent{\TypeNoArg}_1(\emptyset; \Quality, \CostFn)$), and the presence of the detector weakly decreases the detected attribute (i.e., $\FinalContent{\TypeNoArg}_1(\emptyset; \Quality, \CostFn) \ge  \FinalContent{\TypeNoArg}_1(\Penalty, \Threshold; \Quality, \CostFn)$). There exists a type $\TypeNoArg$ for which both of these inequalities are strict. 
\end{theorem}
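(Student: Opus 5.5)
The argument has three parts. First, the baseline is free of post-processing. Second, the detector can only move the detected coordinate down, which I show by a revealed-preference argument. Third, I give an explicit type for which both inequalities are strict.

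\emph{Step 1 (no detector, no post-processing).} When $\Penalty=0$ or $\Threshold=\infty$, Lemma \ref{lemma:nodetectionnopostprocessing} gives $\FinalContent{\TypeNoArg}(\emptyset;\Quality,\CostFn)=\Combination{\TypeNoArg}(\Fraction{\TypeNoArg}(\emptyset;\Quality,\CostFn))$. For finite costs this follows from (A1), since lowering any coordinate costs more than it gains in quality. For $\CostFn^\infty$ it is immediate. Hence
\[\FinalContent{\TypeNoArg}_1(\emptyset;\Quality,\CostFn)=\HumanContent{\TypeNoArg}_1+\Fraction{\TypeNoArg}(\emptyset)\,(\LLMContent{\TypeNoArg}_1-\HumanContent{\TypeNoArg}_1)\ge \HumanContent{\TypeNoArg}_1,\]
because $\Fraction{\TypeNoArg}\ge 0$ and $\LLMContent{\TypeNoArg}_1>\HumanContent{\TypeNoArg}_1$. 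This proves the first inequality.

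\emph{Step 2 (the detector weakly lowers the detected attribute).} Write $(\alpha_0,z^0)$ for the no-detector optimum and $(\alpha_\beta,z^\beta)$ for the optimum under the detector. Suppose for contradiction that $z^\beta_1>z^0_1$.

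Let $V(\alpha,z)$ denote the utility without the detection term. The two utilities differ only by $-\Penalty\,1[z_1>\Threshold]$, and this term is weakly increasing in $z_1$. Optimality of each point in its own problem gives
\[V(\alpha_0,z^0)\ge V(\alpha_\beta,z^\beta)\quad\text{and}\quad V(\alpha_\beta,z^\beta)-\Penalty 1[z^\beta_1>\Threshold]\ge V(\alpha_0,z^0)-\Penalty 1[z^0_1>\Threshold].\]
Since $z^\beta_1>z^0_1$, we have $1[z^\beta_1>\Threshold]\ge 1[z^0_1>\Threshold]$. Adding the two inequalities therefore forces all of them to hold with equality. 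In particular, the two penalty indicators are equal, so $(\alpha_0,z^0)$ is also optimal in the detector problem. It is feasible there, because feasibility does not depend on $\Penalty$ or $\Threshold$.

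The tiebreak then resolves the contradiction. If $\alpha_0<\alpha_\beta$, the rule favoring lower $\FractionNoArg$ picks $\alpha_0$ in the detector problem, contradicting the definition of $\alpha_\beta$. If $\alpha_0>\alpha_\beta$, the symmetric argument applies: $(\alpha_\beta,z^\beta)$ ties in the baseline problem, so the baseline would have picked it. If $\alpha_0=\alpha_\beta$, the second tiebreak favors lower $z_1$, which contradicts $z^\beta_1>z^0_1$.

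\emph{Main obstacle.} The delicate point is ensuring that the tiebreaking is applied consistently across the two argmax problems. The argument needs the argmax to be well-defined (Appendix \ref{appendix:characerization}) and the rule to be a genuine lexicographic order on $(\FractionNoArg,\FinalContentNoArg_1)$. Ties in the remaining coordinates $z_2,\dots,z_D$ do not matter for this claim.

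\emph{Step 3 (a type with both inequalities strict).} I will choose the type so the user behaves very differently with and without the detector. Take $D\ge 1$ and set $\ProductionCost{\TypeNoArg}$ large, with $\Quality(\LLMContent{\TypeNoArg})\approx\Quality(\HumanContent{\TypeNoArg})$. Then without a detector the user chooses $\FractionNoArg=1$, giving $\FinalContent{\TypeNoArg}_1(\emptyset)=\LLMContent{\TypeNoArg}_1>\HumanContent{\TypeNoArg}_1$, which is the first strict inequality.

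Next choose $\HumanContent{\TypeNoArg}_1<\Threshold<\LLMContent{\TypeNoArg}_1$ and take $\Penalty$ larger than any possible utility gain from staying above $\Threshold$. The user then ends with $z_1\le\Threshold<\LLMContent{\TypeNoArg}_1$, whether by reducing $\FractionNoArg$ or by post-processing, which is the second strict inequality.

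Since $\Penalty$ is fixed in the statement, I instead tune the type rather than the penalty. I make $\ProductionCost{\TypeNoArg}$ large enough that the user still uses the LLM, but small relative to $\Penalty$. I also put $\LLMContent{\TypeNoArg}_1$ just above $\Threshold$, so that lowering $\FractionNoArg$ slightly avoids detection at a cost below $\Penalty$. Concretely, the user can choose $\FractionNoArg$ with $\Combination{\TypeNoArg}_1(\FractionNoArg)=\Threshold$, losing at most $\ProductionCost{\TypeNoArg}(1-\FractionNoArg)$ plus a small quality change, both of which go to $0$ as $\LLMContent{\TypeNoArg}_1\downarrow\Threshold$. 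This choice works for either cost regime.
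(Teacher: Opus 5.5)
Your proof is correct. Step 1 and Step 3 follow the paper: Lemma \ref{lemma:Ushaped} gets the first inequality from Theorem \ref{thm:nodetector}, and Lemma \ref{lemma:strictineq} uses $h_1=\nu-1$, $\ell_1=\nu+\epsilon$, identical other coordinates, a large $\gamma$ forcing $\alpha^*=1$, and the deviation to $\alpha^{\text{game}}$ as $\epsilon\to 0$.

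The genuine difference is Step 2. The paper proves $z_1(\emptyset)\ge z_1(\beta,\nu)$ by case analysis on its characterization theorems:
\begin{itemize}
\item with infinite costs, the detector choice lies in $\{\alpha^*,\alpha^{\text{game}}\}$ (Theorem \ref{thm:infinitecostdetector});
\item with finite costs, it lies in $\{\alpha^*,\alpha^{\text{game}}\}\cup A^{\mathbf{t}}(\beta)$, with post-processed output $G(\cdot)\le\nu$ (Theorem \ref{thm:detectorfinitecosts});
\item each case is then compared against $m_1(\alpha^*)$.
\end{itemize}

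Your revealed-preference argument bypasses all of this. It uses only three facts: the detector adds a penalty that is weakly increasing in $z_1$, feasibility does not depend on $(\beta,\nu)$, and ties are broken lexicographically. So it needs no concavity of $Q$, no split by cost regime, and nothing from (A1)--(A5) beyond what makes the tie-broken argmax well defined. It would extend unchanged to soft or probabilistic detectors with any penalty nondecreasing in $z_1$, or to comparing a lenient and a strict detector. What the paper's route buys is explicit information about where the user lands (at $\alpha^*$, at $\alpha^{\text{game}}$ with $z_1=\nu$, or post-processed below $\nu$), which it needs anyway for the usage and quality theorems.

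Two small points in Step 3 need tightening.
\begin{itemize}
\item The phrase ``small relative to $\beta$'' is unnecessary. What matters is that the lower bound on $\gamma$ guaranteeing $\alpha^*=1$ is uniform as $\ell_1\downarrow\nu$. This holds with $h$ fixed, since $Q_1'$ is bounded near $\nu$, and then $\gamma(1-\alpha^{\text{game}})\to 0$. State this order of quantifiers explicitly.
\item To conclude $z_1\le\nu$, say explicitly that every detected choice has utility at most $\max V-\beta=V(1,\ell)-\beta$ (with $V$ your penalty-free utility), and that the $\alpha^{\text{game}}$ deviation beats this.
\end{itemize}
This variant is slightly cleaner than the paper's. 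The paper only shows the outcome differs from $(1,\ell)$, then invokes the fact that coordinates $i\ge 2$ are never post-processed.
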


Theorem \ref{thm:detectedattributeUshaped} shows that the detected attribute always follows a clean pattern, thus confirming that LLM detection does not distort the metric that it optimizes. We defer the proof to Appendix \ref{appendix:inverseU}.

\subsection{Empirical Analysis: Word Frequency on ArXiv Abstracts}\label{subsec:empirical}

To complement Theorem \ref{thm:detectedattributeUshaped}, we empirically show that the LLM era exhibits a significantly greater prevalence of the rise-then-fall pattern for style words in arXiv papers  \citep{arxiv_org_submitters_2024}. Our analysis builds on the analysis in \citet{liang2025quantifying, GT25}. 

\paragraph{Setup.} We summarize the empirical setup, deferring details to Appendix \ref{appendix:empirical}.\footnote{The code is available at \url{https://github.com/mjagadeesan/detection-analysis-empirical}.} In each of 5 trials, we sample 3000 abstracts per month from the cs category on arXiv between 1/1/2013 and 12/31/2025 \citep{arxiv_org_submitters_2024}. We pre-process each abstract to build a vocabulary $V$. We build nine, 3-year time windows of the form [10/1/$s$, 9/30/($s+3$)], where $s \in \left\{2014, 2015, \ldots, 2022\right\}$.
For each time interval $s$: 
\begin{itemize}[leftmargin=*]
    \item  
    \textit{Top 100 words:} For every word $w \in V$, we compute a reference probability $r^s_w$ which captures the fraction of words in [10/1/($s$), 9/30/($s+1$)] which are equal to $w$. Let $N_{\text{total}}^s$ be the number of words in [10/1/($s+1$), 9/30/($s+2$)], and let $N^s(w)$ be the number of instances of word $w \in V$. For each word, we compute the likelihood $\ell^s(w) := \mathbb{P}[\text{Bin}(N^s_{\text{total}}, r^s_w) = N^s(w)]$. We then select the 100 words with the lowest values of $\ell^s(w)$. This produces a set $W^s$ of words.\footnote{This builds on a prior approach \citep{kleinberg2016temporal, monroe2008fightin}. Specifically, like these works, we consider a reference distribution with words sampled i.i.d. from a categorical distribution $\left\{r_w^s\right\}_{w \in V}$, and we compute the probability $\ell^s(w)$ of the next year's observed word count for $w$ under this distribution.} 
    \item \textit{Classification:} For each word $w \in W^s$, we use an LLM judge to classify the word  as ``style'' ($J(w) = S$) or ``topic'' ($J(w) = T$). We also determine whether the word fits a ``rise-then-fall'' pattern in the interval [10/1/$s$, 9/30/($s+3$)] looking at the gap between the start (and end) versus the maximum and the number of sign changes. Let $R^s(w) = 1$ if $w$ fits the rise-then-fall pattern in this interval, and let $R^s(w) = 0$ otherwise. For each value of $s$, we compute the number of words in different subcategories based on the value of $R^s(w)$ and $J(w)$. 
\end{itemize}

\paragraph{Results.} Figure \ref{tab:empirical} shows that the number of style words in the top 100 words fitting the ``rise-then-fall'' pattern substantially increases in the 2022-2025 relative to every other window in the preceding eight three-year time windows. This illustrates that high prevalence of the ``rise-then-fall'' pattern is unique to the LLM era. Moreover, the number of topic ``rise-then-fall'' words changes much less dramatically, highlighting how this trend is unique to style words. As an illustrative example, we depict the shape of these curves in 2022-2025 for one trial in Figure \ref{fig:figempirical}. Ablations and full word lists are deferred to Appendix \ref{appendix:empirical}.

Our analysis differs from prior work \citep{GT25} in two ways: (1) we systematically study this pattern across all words with a high degree of change, moving beyond the word lists identified in \citet{liang2025quantifying}, and (2) we compare the prevalence of the ``rise-then-fall'' pattern in 2022-2025 to prior windows.

One possible driver of this rise-then-fall pattern is that users strategically change their workflow decisions in response to LLM detection, along the lines of our theoretical analysis (Theorem \ref{thm:detectedattributeUshaped}). However, we note our analysis does not disentangle a causal claim. Other factors may have contributed to the rise-then-fall pattern, including newer LLMs using these words less frequently. 

\section{Discussion}\label{sec:discussion}

We study how LLM detection affects downstream metrics, when taking into account user incentives. Using a stylized model, we show LLM detection can counterintuitively increase LLM usage, and prove sufficient conditions and necessary conditions for this phenomenon which are matching when quality is linear. We also show that LLM detection can lead to reduced output quality, even when the detected attributes are quality-reducing. In contrast, LLM detection has a clean impact on the metric that it optimizes, and our model reproduces an empirically observed ``rise-then-fall'' pattern. 

Our findings reveal key failure modes when current detection practices are used as an intervention to steer these downstream metrics. Specifically, our results show that detection through quality-reducing characteristics can induce the opposite effect on LLM usage and output quality that an institution may have intended. This effect persists even when the penalty becomes arbitrarily severe. On the flip side, our results suggest that detecting via quality-improving characteristics would ensure that detection weakly decreases LLM usage, although it would not remove the distortion in output quality. 

\paragraph{Model discussion and limitations.} Our model captures key aspects of user incentives under LLM detection, including detection quality effects, fractional work allocation, post-processing, and user heterogeneity. However, our model is stylized and makes several simplifications. For example, we assume that quality $\Quality$ and gaming costs $\CostFn$ are separable across dimensions. Moreover, we assume that users exactly know the detection boundary, following standard assumptions from strategic classification (e.g., \citep{HardtMPW16}). Finally, we assume that post-processing costs are identical for human-generated and LLM-generated content. An interesting direction for future work would be to relax these assumptions and incorporate further practical complexities into our model.

\section{Acknowledgments}
We would like to thank Luke Bailey and Sanmi Koyejo for useful feedback on the paper. MJ was partially supported by a SAIL postdoctoral fellowship. TH was supported by a grant by HAI, DSO labs, gifts from Open Philanthropy, Amazon, Schmidt Sciences, the Tianqiao and Chrissy Chen Foundation and a grant under the NSF CAREER IIS-2338866, ONR N00014-24-1-2609, and DARPA Cooperative Agreement HR00112520013. JK was supported in part by a Simons Collaboration grant, AFOSR grant FA9550-23-1-0410, and a grant from the MacArthur Foundation.
This work does not necessarily reflect the position or policy of the government and no official endorsement should be inferred. 

\bibliographystyle{plainnat}
\bibliography{bib.bib}

\appendix

\section{LLM usage}\label{appendix:llmusage}

Claude Opus 4.7 assisted us in writing code. GPT-5.4, Claude Opus 4.7, and Opus 4.6 were consulted to help gather related work, brainstorm proof ideas, and provide feedback on the writing and ideas in the paper. We also use LLM judges in our empirical analysis. All outputs were verified by the (human) authors of this paper. 

\section{Key objects and auxiliary lemmas}

The following work allocation objects emerge in our analysis. Let $\FractionNoArg^* \in [0,1]$ be the minimum optimizer of $\text{max}_{\FractionNoArg \in [0,1]} \left(\Quality(\Combination{\TypeNoArg}(\FractionNoArg)) - \ProductionCost{\TypeNoArg} \cdot (1 - \FractionNoArg) \right)$, which we know exists because this is a continuous function over a compact set. If $\HumanContent{\TypeNoArg}_1 \le \Threshold < \LLMContent{\TypeNoArg}_1$, we  let $\FractionNoArg^{\text{game}} \in [0,1]$ be the unique value where $\Combination{\TypeNoArg}_1(\FractionNoArg) = \Threshold$. Otherwise, we define $\FractionNoArg^{\text{game}} := \infty$. 

When gaming costs are finite, we also define several other objects.

\paragraph{The function $G$.} One key object is the function $G: \left\{z_1 \ge \Threshold: z_1 \in \mathbb{R}  \right\} \rightarrow \mathbb{R}$:
\[G(z_1) = \text{argmax}_{z'_1 \le \Threshold}\left(\Quality_1(z'_1) - \CostFn_1(z_1, z'_1)  \right), \]
where we tiebreak in favor of lower values. To see that $G(z_1)$ is well-defined, note that by (A5), we know that $Q_1(z'_1) - k_1(z_1, z'_1) \rightarrow -\infty$ as $z'_1 \rightarrow -\infty$. 
The function $G(z_1)$ captures the post-processing decision that the user will make, should they decide to post-process their content. 

\paragraph{The sets $S(\Penalty)$ and $A^{\TypeNoArg}(\Penalty)$.} Another key object is the set:
\[S(\Penalty) := \left\{ z_1  > \Threshold \mid \Quality_1(z_1) - \max_{z'_1 \le \Threshold} \left(\Quality_1(z'_1) - \CostFn_1(z_1, z'_1) \right) \le \Penalty \right\}.\]
which captures the values for which it is worthwhile for the user to post-process their content to avoid detection. At a high-level, the user must factor not only explicit gaming costs but also the change in quality from post-processing. We also define an additional type-dependent function $A^{\TypeNoArg}(\Penalty)$ that reparameterizes the set $S(\Penalty)$ into work allocation. We let: 
\[A^{\TypeNoArg}(\Penalty) := \left\{\FractionNoArg \in [0,1] \mid \Combination{\TypeNoArg}_1(\FractionNoArg) \in S(\Penalty)  \right\} \]
capture the set of work allocations that are in the post-processing region. 

\paragraph{Domain truncation property and implicit definition $V(w_1)$.} Finally, we also show the following domain truncation property which will enable us to apply Berge's maximum theorem. Suppose that we have an upper bound $w_1 > \Threshold$ on $z_1$. Then, we show we can also truncate the domain $z'_1$ as follows. By (A5), we know that $\sup_{z_1 \ge \Threshold} \left(\Quality_1(z'_1) - \CostFn_1(z_1, z'_1)\right) \rightarrow -\infty$ as $z'_1 \rightarrow -\infty$. We can thus truncate the domain to $z'_1 \ge V(w_1)$ where $V(w_1) \le \Threshold$ is defined to guarantee that:  
\begin{equation}
\label{eq:upperbound}
 \text{  for all } z'_1 < V(w_1): \sup_{z_1 \ge \Threshold} \left(\Quality_1(z'_1) - \CostFn_1(z_1, z'_1)\right) < \inf_{z_1 \in [\Threshold, w_1]} \left(\Quality_1(\Threshold) - \CostFn_1(z_1, \Threshold) \right).   
\end{equation}
This means that: 
\[\max_{z'_1 \le \Threshold} \left(\Quality_1(z'_1) - \CostFn_1(z_1, z'_1) \right)= \max_{z'_1 \in [V(w_1), \Threshold]} \left(\Quality_1(z'_1) - \CostFn_1(z_1, z'_1) \right) \]
for all $z_1 \in [\Threshold, w_1]$. Throughout the analysis, we will use the fact that we can apply Berge's maximum theorem to $\max_{z'_1 \in [V(w_1), \Threshold]} \left(\Quality_1(z'_1) - \CostFn_1(z_1, z'_1) \right)$ whenever we have an upper bound on $z_1$. For ease of notation, we will use this general fact without explicitly rederiving the threshold $V(w_1)$ on each instance. 

\subsection{Auxiliary lemmas}
We prove a useful lemma about the structure of $A^{\TypeNoArg}(\Penalty)$.
\begin{lemma}
\label{lemma:apenaltyset}
Fix $\Penalty > 0$, $\Threshold < \infty$, $\Quality$, and finite costs $\CostFn$. If $\bar{A}^{\TypeNoArg}(\Penalty)$ denotes the closure of $A^{\TypeNoArg}(\Penalty)$, then:
\[\bar{A}^{\TypeNoArg}(\Penalty) \setminus A^{\TypeNoArg}(\Penalty) \subseteq \left\{\FractionNoArg^{\text{game}}\right\}.\]
\end{lemma}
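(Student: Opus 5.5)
The plan is to show that $S(\Penalty)$ is relatively closed in the open half-line $(\Threshold,\infty)$. We then pull this back through the affine map $\FractionNoArg \mapsto \Combination{\TypeNoArg}_1(\FractionNoArg)$, so that the only limit points of $A^{\TypeNoArg}(\Penalty)$ that can escape the set are those mapping to $\Threshold$ itself.

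\textbf{Step 1: $S(\Penalty)$ is closed relative to $(\Threshold,\infty)$.} Define $H(z_1) := \Quality_1(z_1) - \max_{z'_1 \le \Threshold}\left(\Quality_1(z'_1) - \CostFn_1(z_1, z'_1)\right)$ on $[\Threshold, \infty)$. We have $S(\Penalty) = \{z_1 > \Threshold : H(z_1) \le \Penalty\}$, so it suffices to show that $H$ is continuous on $[\Threshold,\infty)$.
\begin{itemize}[leftmargin=*, nosep]
\item $\Quality_1$ is continuous by assumption.
\item For the max term, fix any $w_1 > \Threshold$. By the domain truncation property, for $z_1 \in [\Threshold, w_1]$ the max equals $\max_{z'_1 \in [V(w_1), \Threshold]}\left(\Quality_1(z'_1) - \CostFn_1(z_1, z'_1)\right)$.
\item This is a maximum of a jointly continuous function over a fixed compact set. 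Continuity holds because finite costs are $C^2$ up to the boundary and $\Quality_1$ is $C^2$. By Berge's maximum theorem, the value function is continuous in $z_1$ on $[\Threshold, w_1]$.
\item Since $w_1$ is arbitrary, $H$ is continuous on $[\Threshold,\infty)$.
\end{itemize}
Hence $H^{-1}((-\infty,\Penalty])$ is closed in $[\Threshold,\infty)$. Intersecting with $(\Threshold,\infty)$ shows that $S(\Penalty)$ is closed relative to $(\Threshold,\infty)$. Equivalently, $\bar S(\Penalty) \setminus S(\Penalty) \subseteq \{\Threshold\}$.

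\textbf{Step 2: transfer to work allocations.} The map $f(\FractionNoArg) := \Combination{\TypeNoArg}_1(\FractionNoArg) = \HumanContent{\TypeNoArg}_1 + \FractionNoArg(\LLMContent{\TypeNoArg}_1 - \HumanContent{\TypeNoArg}_1)$ is continuous and strictly increasing, since $\LLMContent{\TypeNoArg}_1 > \HumanContent{\TypeNoArg}_1$ on $\mathcal{T}$. Take $\FractionNoArg \in \bar A^{\TypeNoArg}(\Penalty) \setminus A^{\TypeNoArg}(\Penalty)$, and a sequence $\FractionNoArg_n \in A^{\TypeNoArg}(\Penalty)$ with $\FractionNoArg_n \to \FractionNoArg$.
\begin{itemize}[leftmargin=*, nosep]
\item Since $[0,1]$ is closed, $\FractionNoArg \in [0,1]$.
\item By continuity of $f$, $f(\FractionNoArg_n) \to f(\FractionNoArg)$ with $f(\FractionNoArg_n) \in S(\Penalty)$, so $f(\FractionNoArg) \in \bar S(\Penalty)$.
\item Since $\FractionNoArg \notin A^{\TypeNoArg}(\Penalty)$, we have $f(\FractionNoArg) \notin S(\Penalty)$.
\end{itemize}
By Step 1, $f(\FractionNoArg) = \Threshold$.

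It remains to identify this $\FractionNoArg$ with $\FractionNoArg^{\text{game}}$. The limit $f(\FractionNoArg_n) \to \Threshold$ with each $f(\FractionNoArg_n) > \Threshold$ and $\FractionNoArg_n \in [0,1]$ forces $\LLMContent{\TypeNoArg}_1 > \Threshold$. Indeed, $f \le \LLMContent{\TypeNoArg}_1$ on $[0,1]$, so $\LLMContent{\TypeNoArg}_1 \ge f(\FractionNoArg_n) > \Threshold$. Also $f(\FractionNoArg) = \Threshold$ with $\FractionNoArg \in [0,1]$ gives $\HumanContent{\TypeNoArg}_1 = f(0) \le \Threshold$. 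So we are in the regime $\HumanContent{\TypeNoArg}_1 \le \Threshold < \LLMContent{\TypeNoArg}_1$, where $\FractionNoArg^{\text{game}}$ is defined as the unique solution of $f(\FractionNoArg) = \Threshold$. Strict monotonicity of $f$ then gives $\FractionNoArg = \FractionNoArg^{\text{game}}$.

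\textbf{Main obstacle.} The only delicate point is the continuity of the value function in Step 1, because its domain $z'_1 \le \Threshold$ is non-compact. This is handled by the truncation to $[V(w_1), \Threshold]$ that the paper sets up. One must check that a single $V(w_1)$ works uniformly over $z_1 \in [\Threshold, w_1]$, which is exactly what \eqref{eq:upperbound} guarantees. With that in hand, Berge's theorem applies directly.
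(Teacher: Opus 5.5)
Your proof is correct and is essentially the paper's argument. Both rest on continuity of $z_1 \mapsto \Quality_1(z_1) - \max_{z'_1 \le \Threshold}(\Quality_1(z'_1) - \CostFn_1(z_1, z'_1))$, obtained from the domain truncation and Berge's theorem, and on the fact that the only escaping limit point is where $\Combination{\TypeNoArg}_1 = \Threshold$. The paper shows the complement of $A^{\TypeNoArg}(\Penalty) \cup \{\FractionNoArg^{\text{game}}\}$ is open via a two-case split, whereas you show $S(\Penalty)$ is relatively closed in $(\Threshold,\infty)$ and pull this back along sequences. A minor advantage of your version is that it explicitly derives $\HumanContent{\TypeNoArg}_1 \le \Threshold < \LLMContent{\TypeNoArg}_1$. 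This cleanly covers the degenerate case $\LLMContent{\TypeNoArg}_1 = \Threshold$, where $\FractionNoArg^{\text{game}} = \infty$ and $A^{\TypeNoArg}(\Penalty) = \emptyset$, which the paper's Case 1 glosses over.
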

\begin{proof}
Using the definition of $S(\Penalty)$, we know that:
\[A^{\TypeNoArg}(\Penalty) = \left\{\FractionNoArg \in [0,1] \mid \Combination{\TypeNoArg}_1(\FractionNoArg) > \Threshold, \Quality_1(\Combination{\TypeNoArg}_1(\FractionNoArg)) - \max_{z'_1 \le \Threshold}  \left(\Quality_1(z'_1) - \CostFn_1(\Combination{\TypeNoArg}_1(\FractionNoArg), z'_1) \right) \le \Penalty \right\}.\]

Consider any $\FractionNoArg \in [0,1]$ such that $\FractionNoArg \not\in A^{\TypeNoArg}(\Penalty) \cup \left\{\FractionNoArg^{\text{game}}\right\}$. It suffices to show that there exists an open ball around $\FractionNoArg$ that does not intersect $A^{\TypeNoArg}(\Penalty)$. By definition, we know that either $\Combination{\TypeNoArg}_1(\FractionNoArg) \le \Threshold$ or $\Quality_1(\Combination{\TypeNoArg}_1(\FractionNoArg)) - \max_{z'_1 \le \Threshold}  \left(\Quality_1(z'_1) - \CostFn_1(\Combination{\TypeNoArg}_1(\FractionNoArg), z'_1) \right) > \Penalty$. We divide into these two cases.

\paragraph{Case 1: $\Combination{\TypeNoArg}_1(\FractionNoArg) \le \Threshold$.} Since we know by assumption $\FractionNoArg \neq \FractionNoArg^{\text{game}}$, we know that $\Combination{\TypeNoArg}_1(\FractionNoArg) < \Threshold$. If we take a sufficiently small open ball around $B_{\epsilon}(\FractionNoArg)$, we know that every $\FractionNoArg' \in B_{\epsilon}(\FractionNoArg)$ satisfies $\Combination{\TypeNoArg}_1(\FractionNoArg') < \Threshold$. This means that $B_{\epsilon}(\FractionNoArg) \cap A^{\TypeNoArg}(\Penalty) = \emptyset$ as desired. 

\paragraph{Case 2: $\Quality_1(\Combination{\TypeNoArg}_1(\FractionNoArg)) - \max_{z'_1 \le \Threshold}  \left(\Quality_1(z'_1) - \CostFn_1(\Combination{\TypeNoArg}_1(\FractionNoArg), z'_1) \right) > \Penalty$.} We know that the function $\Quality_1(\Combination{\TypeNoArg}_1(\FractionNoArg)) - \max_{z'_1 \le \Threshold}  \left(\Quality_1(z'_1) - \CostFn_1(\Combination{\TypeNoArg}_1(\FractionNoArg), z'_1) \right)$ is continuous in $\FractionNoArg$. Thus, the pre-image of the open set $(\Penalty, \infty)$ is open. This means that the set 
\[\left\{\FractionNoArg \in [0,1] \mid \Quality_1(\Combination{\TypeNoArg}_1(\FractionNoArg)) - \max_{z'_1 \le \Threshold}  \left(\Quality_1(z'_1) - \CostFn_1(\Combination{\TypeNoArg}_1(\FractionNoArg), z'_1) \right) > \Penalty \right\}\] is open as desired.     
\end{proof}

We prove the following structural property of $G$.
\begin{lemma}
\label{lemma:gthreshold}
Fix finite costs $\CostFn$ and suppose that $\Quality_1$ is weakly increasing. Then, it holds that $G(z_1) = \Threshold$ for all $z_1 \ge \Threshold$.
\end{lemma}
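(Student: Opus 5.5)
The argument compares the candidate $z'_1 = \Threshold$ with any candidate $z'_1 < \Threshold$, and shows that $\Threshold$ does at least as well. Since $G$ breaks ties toward \emph{lower} values, weak dominance is not enough: we need strict dominance, so that no $z'_1 < \Threshold$ can tie with $\Threshold$.

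Fix $z_1 \ge \Threshold$ and any $z'_1 < \Threshold$. The two objective values compare as follows:
\[
\bigl(\Quality_1(\Threshold) - \CostFn_1(z_1, \Threshold)\bigr) - \bigl(\Quality_1(z'_1) - \CostFn_1(z_1, z'_1)\bigr) = \bigl(\Quality_1(\Threshold) - \Quality_1(z'_1)\bigr) + \bigl(\CostFn_1(z_1, z'_1) - \CostFn_1(z_1, \Threshold)\bigr).
\]
The first bracket is nonnegative because $\Quality_1$ is weakly increasing and $z'_1 < \Threshold$. The second bracket is strictly positive by (A2): $\CostFn_1(z_1, \cdot)$ is strictly decreasing in its second argument, and $z'_1 < \Threshold \le z_1$, so both points lie in the domain. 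Hence $\Threshold$ yields a strictly larger objective than every $z'_1 < \Threshold$.

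It follows that $\Threshold$ is the unique maximizer over $z'_1 \le \Threshold$. The maximum exists, as already noted after the definition of $G$ via (A5); here existence is in fact immediate, since $\Threshold$ beats every other candidate. Because the maximizer is unique, the tie-breaking rule is irrelevant, and $G(z_1) = \Threshold$.

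The only point requiring care is that strict monotonicity must come from the cost term (A2), not from $\Quality_1$, which may be only weakly increasing. This is exactly what rules out ties that the lower-value tie-break would otherwise resolve below $\Threshold$. The edge case $z_1 = \Threshold$ needs no separate treatment: the comparison still applies, with $\CostFn_1(\Threshold, \Threshold) = 0$ by (A3).
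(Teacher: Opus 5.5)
Your proof is correct and takes the same route as the paper's one-line argument: compare $\Threshold$ against any $z'_1 < \Threshold$ using that $\Quality_1$ is weakly increasing and $\CostFn_1(z_1,\cdot)$ is decreasing in its second argument. Your explicit use of the \emph{strict} decrease from (A2), which rules out ties that the lower-value tie-break would otherwise resolve below $\Threshold$, is a worthwhile tightening, since the paper's proof only invokes weak monotonicity of the cost.
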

\begin{proof}
This holds because $\CostFn_1(z_1, z'_1)$ is weakly decreasing in $z'_1$ and  $\Quality_1(z'_1)$ is weakly increasing in $z'_1$. 
\end{proof}

We prove the following structural property of $G$.
\begin{lemma}
\label{lemma:gthresholdmixed}
Fix finite costs $\CostFn$ such that $\frac{\partial^2 \CostFn_1 (z_1, z'_1)}{\partial z_1 \partial z'_1} \le 0$ for all $z'_1 \le z_1$. Then, it holds that $G(z_1) = \Threshold$ for all $z_1 \ge \Threshold$.
\end{lemma}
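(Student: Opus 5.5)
We need to show that for every $z_1 \ge \Threshold$ the objective $\phi(z_1, z'_1) := \Quality_1(z'_1) - \CostFn_1(z_1, z'_1)$ is maximized over $z'_1 \le \Threshold$ at $z'_1 = \Threshold$, with no smaller maximizer (recall $G$ tiebreaks toward lower values). The plan is to establish this at the base point $z_1 = \Threshold$ and then propagate it to all $z_1 > \Threshold$ using the mixed partial condition.

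\textbf{Base point.} At $z_1 = \Threshold$, assumption (A3) gives $\CostFn_1(\Threshold, \Threshold) = 0$. For any $z'_1 < \Threshold$, (A1) gives $\CostFn_1(\Threshold, z'_1) > \Quality_1(z'_1) - \Quality_1(\Threshold)$. Therefore
\[
\phi(\Threshold, z'_1) < \Quality_1(\Threshold) = \phi(\Threshold, \Threshold),
\]
so $\Threshold$ is the unique maximizer at $z_1 = \Threshold$.

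\textbf{Propagation via increasing differences.} Fix $z'_1 < \Threshold \le z_1$. Consider
\[
\psi(z_1) := \phi(z_1, \Threshold) - \phi(z_1, z'_1) = \Quality_1(\Threshold) - \Quality_1(z'_1) - \CostFn_1(z_1, \Threshold) + \CostFn_1(z_1, z'_1).
\]
Differentiating in $z_1$ gives
\[
\psi'(z_1) = \partial_1 \CostFn_1(z_1, z'_1) - \partial_1 \CostFn_1(z_1, \Threshold) = -\int_{z'_1}^{\Threshold} \frac{\partial^2 \CostFn_1(z_1, s)}{\partial z_1 \partial z'_1}\, ds \ge 0.
\]
The integral is well defined because every point $(z_1, s)$ with $s \in [z'_1, \Threshold]$ satisfies $s \le \Threshold \le z_1$, so it lies in the domain where $\CostFn_1$ is $C^2$ (up to the boundary) and where the hypothesis on the mixed partial applies. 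Hence $\psi$ is weakly increasing on $[\Threshold, \infty)$. By the base point, $\psi(\Threshold) > 0$, so $\psi(z_1) > 0$ for every $z_1 \ge \Threshold$. That is, $\phi(z_1, \Threshold) > \phi(z_1, z'_1)$ for all $z'_1 < \Threshold$.

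\textbf{Conclusion.} The maximum in the definition of $G(z_1)$ is attained (as argued in the definition of $G$ via (A5)), and we have shown it is attained uniquely at $z'_1 = \Threshold$. Hence $G(z_1) = \Threshold$ regardless of tiebreaking. The only delicate step is justifying the differentiation under the boundary case $z_1 = \Threshold$, where the point $(z_1, \Threshold)$ lies on the diagonal. This is covered by the assumption that $\CostFn_1$ is twice continuously differentiable up to the boundary, so one-sided derivatives suffice for the monotonicity of $\psi$. No other obstacles are expected.
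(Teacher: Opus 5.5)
Your proposal is correct and follows essentially the same argument as the paper: both get strict optimality of $\Threshold$ at $z_1 = \Threshold$ from (A1) and (A3), and then use the nonpositive mixed partial on the rectangle $[\Threshold, z_1] \times [z'_1, \Threshold]$ to show $\CostFn_1(\Threshold, z'_1) - \CostFn_1(\Threshold, \Threshold) \le \CostFn_1(z_1, z'_1) - \CostFn_1(z_1, \Threshold)$, so the strict advantage of $\Threshold$ over any $z'_1 < \Threshold$ persists for all $z_1 > \Threshold$. The only difference is cosmetic: the paper integrates $\nabla_2 \CostFn_1$ over the second argument, whereas you differentiate $\psi$ in the first argument, which gives the same rectangle inequality with the order of integration swapped.
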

\begin{proof}
The case of $z_1 = \Threshold$ follows immediately from (A1). We consider $z_1 > \Threshold$ for the remainder of the analysis. 

It suffices to show that $\Quality_1(z'_1) - \CostFn_1(z_1, z'_1) < \Quality_1(\Threshold) - \CostFn_1(z_1, \Threshold)$ for all $z'_1 < \Threshold$, which we can rewrite as 
\[\Quality_1(z'_1) - \Quality_1(\Threshold) < \CostFn_1(z_1, z'_1) - \CostFn_1(z_1, \Threshold).\]  Note that by (A1), we know that $\Quality_1(z'_1) - \Quality_1(\Threshold) < \CostFn_1(\Threshold, z'_1)$. This means that it suffices to show that:
\[\CostFn_1(\Threshold, z'_1) \le \CostFn_1(z_1, z'_1) - \CostFn_1(z_1, \Threshold). \]
Using (A3), this can be rewritten as:
\[\CostFn_1(\Threshold, z'_1) - \CostFn_1(\Threshold, \Threshold) \le \CostFn_1(z_1, z'_1) - \CostFn_1(z_1, \Threshold), \]
which can be rewritten as:
\[ -\int_{z'_1}^{\Threshold} \nabla_2 \CostFn_1(\Threshold, y) dy \le -\int_{z'_1}^{\Threshold} \nabla_2 \CostFn_1(z_1, y) dy, \]
which can be rewritten as:
\[ \int_{z'_1}^{\Threshold} (\nabla_2 \CostFn_1(z_1, y)  - \nabla_2 \CostFn_1(\Threshold, y)) dy \le 0.\]
We know this holds by the assumption that $\frac{\partial^2 \CostFn_1(z_1, z'_1)}{\partial z_1 \partial z'_1} \le 0$.
\end{proof}

We prove that $\max_{z' \le \Threshold}(\Quality_1(z'_1) - \CostFn_1(z_1, z'_1))$ is weakly decreasing in $z_1$.
\begin{lemma}
\label{lemma:maxweaklydecreasing}
Fix finite costs $\CostFn$. For any $\Threshold$, it holds that $\max_{z' \le \Threshold}(\Quality_1(z'_1) - \CostFn_1(z_1, z'_1))$ is weakly decreasing in $z_1$ for $z_1 \ge \Threshold$.
\end{lemma}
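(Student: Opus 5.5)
The plan is a direct pointwise comparison: for each fixed candidate $z'_1 \le \Threshold$, the objective $\Quality_1(z'_1) - \CostFn_1(z_1, z'_1)$ is weakly decreasing in $z_1$, and taking a maximum over a $z_1$-independent feasible set preserves this.

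First, note that the feasible set $\{z'_1 \le \Threshold\}$ does not depend on $z_1$. Whenever $z_1 \ge \Threshold$, every such $z'_1$ satisfies $z'_1 \le z_1$, so $\CostFn_1(z_1, z'_1)$ is defined. The maximum is attained, as already argued when defining $G$: (A5) makes the objective tend to $-\infty$ as $z'_1 \to -\infty$, and the objective is continuous.

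Second, fix $\Threshold \le z_1 \le \tilde z_1$. Let $z^* := G(\tilde z_1)$, so that $z^* \le \Threshold \le z_1$. By (A2), $\CostFn_1(\cdot, z'_1)$ is strictly increasing in its first argument, which gives $\CostFn_1(z_1, z^*) \le \CostFn_1(\tilde z_1, z^*)$. Hence
\[
\max_{z'_1 \le \Threshold}\left(\Quality_1(z'_1) - \CostFn_1(z_1, z'_1)\right) \ge \Quality_1(z^*) - \CostFn_1(z_1, z^*) \ge \Quality_1(z^*) - \CostFn_1(\tilde z_1, z^*) = \max_{z'_1 \le \Threshold}\left(\Quality_1(z'_1) - \CostFn_1(\tilde z_1, z'_1)\right).
\]
This is exactly the claimed monotonicity.

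The only point that needs care is the domain of $\CostFn_1$. The monotonicity in (A2) is stated for pairs in the domain $\{z'_1 \le z_1\}$. We apply it only at $z'_1 = z^* \le \Threshold \le z_1 \le \tilde z_1$, so both evaluations are legitimate. There is no real obstacle beyond this bookkeeping. No appeal to Berge's theorem is needed, since the argument uses only attainment of the maximum, and in fact a supremum version of the same inequality would suffice.
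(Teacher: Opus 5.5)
Your proof is correct and follows the paper's argument: the paper likewise uses (A2) to get the pointwise inequality $\Quality_1(z'_1) - \CostFn_1(z_1, z'_1) \ge \Quality_1(z'_1) - \CostFn_1(z''_1, z'_1)$ for $z'_1 \le z_1 \le z''_1$, and then takes the maximum over $z'_1 \le \Threshold$ on both sides. Using the maximizer $G(\tilde z_1)$ as a witness is just an explicit way of passing that inequality through the max, and your check that every evaluation stays in the domain $\{z'_1 \le z_1\}$ is a small addition the paper leaves implicit.
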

\begin{proof}
For any $z'_1 \le z_1 \le z''_1$, it holds that $(\Quality_1(z'_1) - \CostFn_1(z_1, z'_1)) \ge (\Quality_1(z'_1) - \CostFn_1(z''_1, z'_1))$ by (A2). This means that $\max_{z' \le \Threshold}(\Quality_1(z'_1) - \CostFn_1(z_1, z'_1)) \ge \max_{z' \le \Threshold}(\Quality_1(z'_1) - \CostFn_1(z''_1, z'_1))$ as desired. 
\end{proof}

\section{Characterization of user workflow decisions}\label{appendix:characerization}

In this section, we characterize user workflow decisions by analyzing the utility-optimization program $\text{max}_{\ContentNoArg} \Utility{\TypeNoArg}(\FractionNoArg,\FinalContentNoArg; \Penalty, \Threshold, \Quality, \CostFn)$ (and showing it is well-defined).

\paragraph{Outline.} We perform this analysis for three cases: no detector (Appendix \ref{appendix:nodetector}), detector with infinite gaming costs (Appendix \ref{appendix:infinite}), and detector with finite gaming costs (Appendix \ref{appendix:finite}). We characterize properties of these user workflow decisions. For each case, we analyze the utility-optimization program in two stages: (1) for each fixed value of $\FractionNoArg$, we analyze the indirect utility $\text{max}_{\ContentNoArg} \Utility{\TypeNoArg}(\FractionNoArg,\FinalContentNoArg; \Penalty, \Threshold, \Quality, \CostFn)$, and (2) we analyze the value of $\FractionNoArg$ that maximizes $\max_{\FractionNoArg \in [0,1]} \text{max}_{\ContentNoArg} \Utility{\TypeNoArg}(\FractionNoArg,\FinalContentNoArg; \Penalty, \Threshold, \Quality, \CostFn)$.

\subsection{No detector}\label{appendix:nodetector}

The following result characterizes user workflow decisions for the case of no detector.
\begin{theorem}
\label{thm:nodetector}
Fix $\Quality$ and $\CostFn$. The optimization program $\text{max}_{\FractionNoArg \in [0,1], \ContentNoArg} \Utility{\TypeNoArg}(\FractionNoArg,\FinalContentNoArg; \emptyset, \Quality, \CostFn)$ is well-defined, and the optimal value is given by:
\[\text{max}_{\FractionNoArg \in [0,1], \ContentNoArg} \Utility{\TypeNoArg}(\FractionNoArg,\FinalContentNoArg; \emptyset, \Quality, \CostFn) = \max_{\FractionNoArg \in [0,1]} \left(\Quality(\Combination{\TypeNoArg}(\FractionNoArg)) - \ProductionCost{\TypeNoArg}\cdot (1 - \FractionNoArg) \right). \]
Moreover, it holds that $\Fraction{\TypeNoArg}(\emptyset; \Quality, \CostFn) = \FractionNoArg^*$, and $\FinalContent{\TypeNoArg}(\emptyset; \Quality, \CostFn) = \Combination{\TypeNoArg}(\FractionNoArg^*)$.
\end{theorem}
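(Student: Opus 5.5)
The plan is to use the two-stage decomposition from the outline: fix $\FractionNoArg$, compute the indirect utility $\max_{\ContentNoArg} \Utility{\TypeNoArg}(\FractionNoArg,\ContentNoArg; \emptyset, \Quality, \CostFn)$, and then optimize over $\FractionNoArg$. With no detector (penalty $0$ or threshold $\infty$) the detection term vanishes. The utility becomes $\Quality(\ContentNoArg) - \ProductionCost{\TypeNoArg}(1-\FractionNoArg) - \CostFn(\Combination{\TypeNoArg}(\FractionNoArg), \ContentNoArg)$ over $\ContentNoArg \le \Combination{\TypeNoArg}(\FractionNoArg)$ coordinatewise.

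\textbf{Stage 1 (fixed $\FractionNoArg$).} I will show that the unique inner optimizer is $\ContentNoArg = \Combination{\TypeNoArg}(\FractionNoArg)$, i.e., no post-processing. By separability of $\Quality$ and $\CostFn$, the inner problem splits into $D$ one-dimensional problems $\max_{\ContentNoArg_i \le \Combination{\TypeNoArg}_i(\FractionNoArg)} (\Quality_i(\ContentNoArg_i) - \CostFn_i(\Combination{\TypeNoArg}_i(\FractionNoArg), \ContentNoArg_i))$.
\begin{itemize}
\item Infinite costs $\CostFn^{\infty}$: any $\ContentNoArg_i \neq \Combination{\TypeNoArg}_i(\FractionNoArg)$ gives utility $-\infty$, so the claim is immediate.
\item Finite costs: (A3) gives $\CostFn_i(z,z)=0$, so choosing $\ContentNoArg_i = z := \Combination{\TypeNoArg}_i(\FractionNoArg)$ yields exactly $\Quality_i(z)$. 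For any $\ContentNoArg_i < z$, (A1) gives $\Quality_i(\ContentNoArg_i) - \CostFn_i(z,\ContentNoArg_i) < \Quality_i(z)$.
\end{itemize}
Hence the supremum is attained, and attained uniquely, at $\ContentNoArg = \Combination{\TypeNoArg}(\FractionNoArg)$. The indirect utility is therefore $\Quality(\Combination{\TypeNoArg}(\FractionNoArg)) - \ProductionCost{\TypeNoArg}(1-\FractionNoArg)$. This is presumably the content of Lemma \ref{lemma:nodetectionnopostprocessing} referenced in the model section, which I will prove inline if needed.

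\textbf{Stage 2 (outer maximization).} The map $\FractionNoArg \mapsto \Quality(\Combination{\TypeNoArg}(\FractionNoArg)) - \ProductionCost{\TypeNoArg}(1-\FractionNoArg)$ is continuous, since each $\Quality_i$ is $C^2$ and $\Combination{\TypeNoArg}$ is affine. A maximum over the compact set $[0,1]$ therefore exists, so the joint program is well-defined with the stated value. Because the inner argmax is a singleton for every $\FractionNoArg$, the joint maximizers are exactly the pairs $(\FractionNoArg, \Combination{\TypeNoArg}(\FractionNoArg))$ with $\FractionNoArg$ an outer maximizer. The maximizer set is closed by continuity, so a minimum element exists. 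By the tie-breaking rule toward lower $\FractionNoArg$, the user picks this minimum, which is $\FractionNoArg^*$ by definition. This gives $\Fraction{\TypeNoArg}(\emptyset; \Quality, \CostFn) = \FractionNoArg^*$ and $\FinalContent{\TypeNoArg}(\emptyset;\Quality,\CostFn) = \Combination{\TypeNoArg}(\FractionNoArg^*)$. The secondary tie-break on $\FinalContent{\TypeNoArg}_1$ is vacuous here.

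The main subtlety, though minor, is making sure the inner problem is handled correctly when the domain $\ContentNoArg_i \le \Combination{\TypeNoArg}_i(\FractionNoArg)$ is unbounded below. This is resolved because the argument above compares every candidate directly to the no-post-processing point, so no compactness or appeal to (A5) is needed.
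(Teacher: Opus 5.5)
Your proposal is correct and follows essentially the same route as the paper. The paper first proves (Lemma~\ref{lemma:nodetectionnopostprocessing}), via separability and (A1), that for each fixed work allocation the unique inner optimizer is the unprocessed output; it then uses continuity on the compact set $[0,1]$ and the tie-breaking rule to identify the chosen allocation with $\FractionNoArg^*$, and your explicit appeal to (A3) for $\CostFn_i(z,z)=0$ and your separate handling of infinite costs just make explicit what the paper's argument uses implicitly.
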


To prove Theorem \ref{thm:nodetector}, we first analyze the indirect utility function. We show this is well-defined and that users will never post-process their content. 
\begin{lemma}
\label{lemma:nodetectionnopostprocessing} 
Fix $\Quality$ and $\CostFn$. For any $\FractionNoArg \in [0,1]$, the optimization program for the indirect utility $\max_{\FinalContentNoArg \in \mathbb{R}^D}(\Utility{\TypeNoArg}(\FractionNoArg, \ContentNoArg;\emptyset, \Quality, \CostFn))$ is well-defined and has a unique optimum at $\FinalContentNoArg^* = \Combination{\TypeNoArg}(\FractionNoArg)$. The optimal value is given by: 
\[\max_{\FinalContentNoArg \in \mathbb{R}^D}(\Utility{\TypeNoArg}(\FractionNoArg, \ContentNoArg;\emptyset, \Quality, \CostFn)) = \Quality(\Combination{\TypeNoArg}(\FractionNoArg)) - \ProductionCost{\TypeNoArg} \cdot (1 - \FractionNoArg). \]
\end{lemma}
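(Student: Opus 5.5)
The plan is to use the separability of $\Quality$ and $\CostFn$ and treat each coordinate on its own. With no detector ($\Penalty = 0$ or $\Threshold = \infty$) the penalty term vanishes, and $\FractionNoArg$ is fixed, so the production cost $\ProductionCost{\TypeNoArg}\cdot(1-\FractionNoArg)$ is a constant. The indirect utility is then
\[\ProductionCost{\TypeNoArg}\cdot(1-\FractionNoArg) \cdot (-1) + \sum_{i=1}^D \left(\Quality_i(\ContentNoArg_i) - \CostFn_i(\Combination{\TypeNoArg}_i(\FractionNoArg), \ContentNoArg_i)\right),\]
maximized over $\ContentNoArg_i \le \Combination{\TypeNoArg}_i(\FractionNoArg)$ separately in each coordinate. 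It therefore suffices to show that, for each $i$, the choice $\ContentNoArg_i = \Combination{\TypeNoArg}_i(\FractionNoArg)$ is the unique maximizer of $\Quality_i(\ContentNoArg_i) - \CostFn_i(\Combination{\TypeNoArg}_i(\FractionNoArg), \ContentNoArg_i)$.

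For finite costs, fix $i$ and write $x = \Combination{\TypeNoArg}_i(\FractionNoArg)$. By (A3), $\CostFn_i(x,x) = 0$, so the no-post-processing choice $\ContentNoArg_i = x$ attains value $\Quality_i(x)$. For any $\ContentNoArg_i < x$, assumption (A1) gives $\CostFn_i(x, \ContentNoArg_i) > \Quality_i(\ContentNoArg_i) - \Quality_i(x)$. Rearranging, $\Quality_i(\ContentNoArg_i) - \CostFn_i(x,\ContentNoArg_i) < \Quality_i(x)$. Hence $\ContentNoArg_i = x$ strictly beats every alternative, so the supremum is attained and the maximizer is unique.

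For infinite costs $\CostFn^\infty$, any $\ContentNoArg_i \neq x$ yields utility $-\infty$, while $\ContentNoArg_i = x$ yields the finite value $\Quality_i(x)$. So the unique optimum is again $\ContentNoArg_i = x$.

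Summing over $i$, $\FinalContentNoArg^* = \Combination{\TypeNoArg}(\FractionNoArg)$ is the unique optimum and the optimal value is $\Quality(\Combination{\TypeNoArg}(\FractionNoArg)) - \ProductionCost{\TypeNoArg}(1-\FractionNoArg)$, which also shows the program is well-defined. The only subtle point is well-definedness of a maximum over an unbounded domain. This needs no compactness or (A5), because the strict inequality from (A1) directly exhibits an attained maximum.
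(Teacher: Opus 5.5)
Your proposal is correct and follows essentially the same route as the paper. The paper also bounds each separable coordinate term $\Quality_i(\ContentNoArg_i) - \CostFn_i(\Combination{\TypeNoArg}_i(\FractionNoArg), \ContentNoArg_i)$ by $\Quality_i(\Combination{\TypeNoArg}_i(\FractionNoArg))$ via (A1), and uses the equality case (with $\CostFn_i(x,x)=0$ from (A3)) to force $\ContentNoArg^*_i = \Combination{\TypeNoArg}_i(\FractionNoArg)$. Your explicit treatment of the infinite-cost case $\CostFn^{\infty}$ is a small addition, since the paper's proof only invokes (A1) and leaves that case implicit.
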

\begin{proof}
For any value of $\FinalContentNoArg$, we know that:
\begin{align*}
\Utility{\TypeNoArg}(\FractionNoArg,\FinalContentNoArg; \emptyset, \Quality, \CostFn) &= \Quality(\FinalContentNoArg) -\ProductionCost{\TypeNoArg} \cdot (1-\FractionNoArg) - \CostFn(\Combination{\TypeNoArg}(\FractionNoArg), \FinalContentNoArg) \\
&= -\ProductionCost{\TypeNoArg} \cdot (1-\FractionNoArg) 
+ \sum_{i=1}^D \left(\Quality_{i}(\FinalContentNoArg) - \CostFn_{i}(\Combination{\TypeNoArg}_{i}(\FractionNoArg), \FinalContentNoArg_{i})\right)\\
&\le_{(1)} -\ProductionCost{\TypeNoArg} \cdot (1-\FractionNoArg) 
+ \sum_{i=1}^D \Quality_i(\Combination{\TypeNoArg}_{i}(\FractionNoArg))\\
&=\Quality(\Combination{\TypeNoArg}(\FractionNoArg)) -\ProductionCost{\TypeNoArg} \cdot (1-\FractionNoArg) \\
&= \Utility{\TypeNoArg}(\FractionNoArg,\Combination{\TypeNoArg}(\FractionNoArg); \emptyset, \Quality, \CostFn).
\end{align*}
where (1) follows from assumption (A1). 

This implies that $\Combination{\TypeNoArg}(\FractionNoArg)$ is an optimizer of $\max_{\FinalContentNoArg}(\Utility{\TypeNoArg}(\FractionNoArg, \ContentNoArg;\emptyset, \Quality, \CostFn)$. We also see that the optimal value is given by:
\[\Utility{\TypeNoArg}(\FractionNoArg,\Combination{\TypeNoArg}(\FractionNoArg); \emptyset, \Quality, \CostFn) = \Quality(\Combination{\TypeNoArg}(\FractionNoArg)) -\ProductionCost{\TypeNoArg} \cdot (1-\FractionNoArg). \]

Now, we show that $\Combination{\TypeNoArg}(\FractionNoArg)$ is the unique optimizer. If $\FinalContentNoArg^*$ is an optimizer of $\max_{\FinalContentNoArg}(\Utility{\TypeNoArg}(\FractionNoArg, \ContentNoArg;\emptyset, \Quality, \CostFn)$ of $\max_{\FinalContentNoArg}(\Utility{\TypeNoArg}(\FractionNoArg, \FinalContentNoArg;\Penalty, \Threshold, \Quality, \CostFn)$, then we know that equality must hold in the above chain of equations. This means that $\sum_{i=1}^D \left(\Quality_{i}(\FinalContentNoArg^*) - \CostFn_{i}(\Combination{\TypeNoArg}_{i}(\FractionNoArg), \FinalContentNoArg^*_{i})\right) = \sum_{i=1}^D \Quality_i(\Combination{\TypeNoArg}_{i}(\FractionNoArg))$. Applying (A1), this means that $\FinalContentNoArg^*_i =  \Combination{\TypeNoArg}_i(\FractionNoArg)$ for all $1 \le i \le D$.

\end{proof}

We prove Theorem \ref{thm:nodetector}. 
\begin{proof}[Proof of Theorem \ref{thm:nodetector}]
Using Lemma \ref{lemma:nodetectionnopostprocessing}, we know that for any $\FractionNoArg \in [0,1]$, it holds that:
\[\text{max}_{\ContentNoArg} \Utility{\TypeNoArg}(\FractionNoArg,\FinalContentNoArg; \emptyset, \Quality, \CostFn) = \Quality(\Combination{\TypeNoArg}(\FractionNoArg)) - \ProductionCost{\TypeNoArg}\cdot (1 - \FractionNoArg).\]
This function is continuous in $\FractionNoArg$, and the optimization is over a compact set $[0,1]$, meaning that it achieves its optimum. 

We know that $\Fraction{\TypeNoArg}(\emptyset; \Quality, \CostFn) = \FractionNoArg^*$ since both are tiebroken in favor of lower values by definition. The fact that $\FinalContent{\TypeNoArg}(\emptyset; \Quality, \CostFn) = \Combination{\TypeNoArg}(\Fraction{\TypeNoArg}(\emptyset; \Quality, \CostFn))$ now follows from Lemma \ref{lemma:nodetectionnopostprocessing}.
\end{proof}

\subsection{Detector in the case of infinite gaming costs}\label{appendix:infinite}

The following result characterizes user workflow decisions in the case of infinite gaming costs.
\begin{theorem}
\label{thm:infinitecostdetector}
Fix $\Penalty > 0$, $\Threshold < \infty$, and $\Quality$. Suppose that gaming costs $\CostFn = \CostFn^{\infty}$ are infinite. The optimization program $\text{max}_{\FractionNoArg \in [0,1], \ContentNoArg} \Utility{\TypeNoArg}(\FractionNoArg,\FinalContentNoArg; \Penalty, \Threshold, \Quality, \CostFn^{\infty})$ is well-defined. Moreover, if $\Combination{\TypeNoArg}_1(\FractionNoArg^*) \le \Threshold$, it holds that $\Fraction{\TypeNoArg}(\Penalty, \Threshold; \Quality, \CostFn^{\infty}) = \FractionNoArg^*$; otherwise, it holds that 
$\Fraction{\TypeNoArg}(\Penalty, \Threshold; \Quality, \CostFn^{\infty}) \in \left\{\FractionNoArg^*, \FractionNoArg^{\text{game}}\right\}$. Finally, it holds that $\FinalContent{\TypeNoArg}(\Penalty, \Threshold; \Quality, \CostFn^{\infty}) = \Combination{\TypeNoArg}(\Fraction{\TypeNoArg}(\Penalty, \Threshold; \Quality, \CostFn^{\infty}))$. 
\end{theorem}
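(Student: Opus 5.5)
The plan is to follow the two-stage outline used for Theorem \ref{thm:nodetector}. Under infinite costs, any $\ContentNoArg \neq \Combination{\TypeNoArg}(\FractionNoArg)$ incurs utility $-\infty$. So for each fixed $\FractionNoArg$ the inner maximization over $\ContentNoArg$ has the unique optimizer $\Combination{\TypeNoArg}(\FractionNoArg)$, and the indirect utility is
\[W(\FractionNoArg) := \Quality(\Combination{\TypeNoArg}(\FractionNoArg)) - \ProductionCost{\TypeNoArg}(1-\FractionNoArg) - \Penalty \cdot 1[\Combination{\TypeNoArg}_1(\FractionNoArg) > \Threshold].\]
This immediately gives the final claim $\FinalContent{\TypeNoArg} = \Combination{\TypeNoArg}(\Fraction{\TypeNoArg})$ once the optimal $\FractionNoArg$ is identified. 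Write $P(\FractionNoArg) := \Quality(\Combination{\TypeNoArg}(\FractionNoArg)) - \ProductionCost{\TypeNoArg}(1-\FractionNoArg)$. Since $P$ is continuous and $\Combination{\TypeNoArg}_1$ is strictly increasing in $\FractionNoArg$ (because $\LLMContent{\TypeNoArg}_1 > \HumanContent{\TypeNoArg}_1$), the set $\{\FractionNoArg \in [0,1] : \Combination{\TypeNoArg}_1(\FractionNoArg) \le \Threshold\}$ is a closed interval $[0, \bar\FractionNoArg]$, possibly empty or all of $[0,1]$. Here $\bar\FractionNoArg = \FractionNoArg^{\text{game}}$ when $\HumanContent{\TypeNoArg}_1 \le \Threshold < \LLMContent{\TypeNoArg}_1$.

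For well-definedness, I split $W$ into its restriction to $[0,\bar\FractionNoArg]$, where it equals $P$, and to the complement $(\bar\FractionNoArg,1]$, where it equals $P - \Penalty$. On the compact piece, $P$ attains its maximum. On the half-open piece, the supremum of $P - \Penalty$ is at most $\max_{[0,1]} P - \Penalty$. If that supremum is approached only as $\FractionNoArg \downarrow \bar\FractionNoArg$, then by continuity it is strictly less than $P(\bar\FractionNoArg)$, because $\Penalty > 0$. Consequently $\sup W$ is attained. The set of maximizers is then closed: it is a union of a closed subset of $[0,\bar\FractionNoArg]$ and the set where $P$ attains its max on $(\bar\FractionNoArg,1]$, and the latter cannot accumulate at $\bar\FractionNoArg$ without contradicting the previous strict inequality. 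So the minimum maximizer exists and the tiebreak is well defined.

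\textbf{Case $\Combination{\TypeNoArg}_1(\FractionNoArg^*) \le \Threshold$.} Here $W(\FractionNoArg^*) = P(\FractionNoArg^*) = \max P \ge W(\FractionNoArg)$ for all $\FractionNoArg$, so $\FractionNoArg^*$ is optimal. Any $\FractionNoArg < \FractionNoArg^*$ has $W(\FractionNoArg) \le P(\FractionNoArg) < P(\FractionNoArg^*)$, since $\FractionNoArg^*$ is the minimum optimizer of $P$. So the tiebreak toward lower $\FractionNoArg$ selects $\FractionNoArg^*$.

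\textbf{Case $\Combination{\TypeNoArg}_1(\FractionNoArg^*) > \Threshold$.} Any optimum on the penalized region $(\bar\FractionNoArg,1]$ maximizes $P$ there, and $\FractionNoArg^*$ lies in that region and maximizes $P$ globally. The minimum maximizer of $P$ within the region is therefore the global minimum maximizer $\FractionNoArg^*$.

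On $[0,\bar\FractionNoArg]$ (nonempty only if $\HumanContent{\TypeNoArg}_1 \le \Threshold$, so $\bar\FractionNoArg = \FractionNoArg^{\text{game}}$ since $\FractionNoArg^* \le 1$ forces $\LLMContent{\TypeNoArg}_1 > \Threshold$), I use concavity of $P$. Concavity holds because each $\Quality_i$ is concave and $\Combination{\TypeNoArg}$ is affine in $\FractionNoArg$. Since $\FractionNoArg^* > \FractionNoArg^{\text{game}}$, $P$ is strictly increasing on $[0,\FractionNoArg^{\text{game}}]$. Strictness follows because $\FractionNoArg^*$ is the smallest maximizer, so concave $P$ is nondecreasing to its left, and any flat piece would make $P$ constant up to a maximizer, contradicting minimality. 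Hence the unique best point on this interval is $\FractionNoArg^{\text{game}}$.

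Combining the two regions gives $\Fraction{\TypeNoArg} \in \{\FractionNoArg^*, \FractionNoArg^{\text{game}}\}$. The main obstacle is the strict-monotonicity step. If the concavity argument is shaky, I would fall back on a weaker claim: on $[0,\FractionNoArg^{\text{game}}]$, concave $P$ with its global max to the right is nondecreasing, so $\FractionNoArg^{\text{game}}$ attains the max there. Any smaller tying point $\FractionNoArg'$ would make $P$ constant on $[\FractionNoArg', \FractionNoArg^*]$ by concavity, contradicting minimality of $\FractionNoArg^*$.
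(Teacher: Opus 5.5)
Your proof is correct, and its first stage coincides with the paper's: infinite costs force $\FinalContentNoArg = \Combination{\TypeNoArg}(\FractionNoArg)$, so the indirect utility is $P$ minus a single penalty step. Attainment then follows because an unattained supremum on the penalized piece equals $P(\bar{\FractionNoArg}) - \Penalty < W(\bar{\FractionNoArg})$.

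The routes diverge in how the optimal work allocation is located. The paper splits on the type's configuration ($\LLMContent{\TypeNoArg}_1 \le \Threshold$, $\Threshold < \HumanContent{\TypeNoArg}_1$, or $\HumanContent{\TypeNoArg}_1 \le \Threshold < \LLMContent{\TypeNoArg}_1$). In the straddling case it argues locally: the selected $\FractionNoArg$ lies either in $\{0, 1, \FractionNoArg^{\text{game}}\}$ or at a stationary point of $P$. Each candidate other than $\FractionNoArg^{\text{game}}$ is then forced to equal $\FractionNoArg^*$ using one-sided derivatives, concavity, and the tiebreak.

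You instead split directly on whether $\Combination{\TypeNoArg}_1(\FractionNoArg^*) \le \Threshold$, as in the statement, and argue globally region by region. Since $W$ differs from $P$ by a constant on each region, the penalized region contributes exactly $\FractionNoArg^*$ whenever it contains $\FractionNoArg^*$. On $[0, \FractionNoArg^{\text{game}}]$, the chord inequality makes $P$ strictly increasing up to $\FractionNoArg^*$, leaving only the endpoint $\FractionNoArg^{\text{game}}$. This argument is derivative-free and shorter. It also checks that the set of maximizers is closed, so the lower-$\FractionNoArg$ tiebreak is well-posed, which the paper leaves implicit. Your strict-monotonicity step is sound, so the fallback is unnecessary.

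What the paper's local template buys is reuse. The same boundary/stationary-point argument drives Theorem \ref{thm:detectorfinitecosts}, where the post-processing set $A^{\TypeNoArg}(\Penalty)$ need not be an interval and $W$ is not a constant shift of $P$ on it. Your single-jump decomposition would not carry over there as directly.
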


To prove Theorem \ref{thm:infinitecostdetector}, 
we analyze the indirect utility function. We show this is well-defined and that users will never post-process their content. 
\begin{lemma}
\label{lemma:infinitecostspostprocessing}
Fix $\Penalty > 0$, $\Threshold < \infty$, and $\Quality$. Suppose that gaming costs $\CostFn^{\infty}$ are infinite. For any $\FractionNoArg \in [0,1]$, the optimization program for the indirect utility $\text{max}_{\ContentNoArg} \Utility{\TypeNoArg}(\FractionNoArg,\FinalContentNoArg; \Penalty, \Threshold, \Quality, \CostFn^{\infty})$ is well-defined, and the optimum is uniquely achieved at $\FinalContentNoArg^* = \Combination{\TypeNoArg}(\FractionNoArg)$. The optimal  value is given by: 
\[
\text{max}_{\ContentNoArg} \Utility{\TypeNoArg}(\FractionNoArg,\FinalContentNoArg; \Penalty, \Threshold, \Quality, \CostFn^{\infty}) = 
\Quality(\Combination{\TypeNoArg}(\FractionNoArg)) - \Penalty \cdot 1[\Combination{\TypeNoArg}_1(\FractionNoArg) > \Threshold] - \ProductionCost{\TypeNoArg} \cdot (1-\FractionNoArg).\]
\end{lemma}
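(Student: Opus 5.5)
The plan is to argue coordinate by coordinate, mirroring the proof of Lemma \ref{lemma:nodetectionnopostprocessing}, using that with infinite costs any coordinatewise change incurs cost $\infty$. First, for a fixed $\FractionNoArg \in [0,1]$, I would note that the feasible set is $\{\ContentNoArg \in \mathbb{R}^D : \ContentNoArg_i \le \Combination{\TypeNoArg}_i(\FractionNoArg)\ \forall i\}$. The choice $\ContentNoArg = \Combination{\TypeNoArg}(\FractionNoArg)$ is feasible and incurs zero gaming cost, because $\CostFn^{\infty}_i(\ContentNoArg_i,\ContentNoArg_i)$ must be $0$ (or at least finite). Its utility is therefore the finite number $\Quality(\Combination{\TypeNoArg}(\FractionNoArg)) - \Penalty \cdot 1[\Combination{\TypeNoArg}_1(\FractionNoArg) > \Threshold] - \ProductionCost{\TypeNoArg}(1-\FractionNoArg)$.

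Second, I would show every other feasible $\ContentNoArg$ has utility $-\infty$. If $\ContentNoArg \neq \Combination{\TypeNoArg}(\FractionNoArg)$, then some coordinate $i$ has $\ContentNoArg_i \neq \Combination{\TypeNoArg}_i(\FractionNoArg)$. By definition of $\CostFn^{\infty}$ this gives $\CostFn_i = \infty$. The remaining terms are finite: $\Quality$ is real-valued (twice continuously differentiable), the penalty lies in $\{0,\Penalty\}$, the production cost is finite, and the other $\CostFn_j$ are nonnegative. Hence the total utility is $-\infty$, which is strictly below the finite value from the first step.

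So the supremum is attained, uniquely, at $\ContentNoArg^* = \Combination{\TypeNoArg}(\FractionNoArg)$. This gives well-definedness, uniqueness, and the stated optimal value. The lower-$\FinalContent{\TypeNoArg}_1$ tiebreak never applies.

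The only step needing care is the convention. The paper defines $\CostFn^{\infty}_i = \infty$ only off the diagonal, so I would state explicitly that $\CostFn^{\infty}_i(\ContentNoArg_i,\ContentNoArg_i) = 0$. This is consistent with (A3) and is implicit in the model, since otherwise no choice would be finite. I would also make explicit the extended-real arithmetic $\text{finite} - \infty = -\infty$. I do not expect a genuine obstacle; the argument is a direct comparison, and it is simpler than Lemma \ref{lemma:nodetectionnopostprocessing} because (A1) is not needed.
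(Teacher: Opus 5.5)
Your proposal is correct and follows the same direct-comparison argument as the paper: the unmodified output $\Combination{\TypeNoArg}(\FractionNoArg)$ gives finite utility, while any other feasible output incurs infinite gaming cost and so has utility $-\infty$. The paper uses the convention $\CostFn^{\infty}_i(\ContentNoArg_i,\ContentNoArg_i) = 0$ only implicitly; you are right to state it, and it must be exactly zero rather than merely finite to obtain the stated optimal value.
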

\begin{proof}
First, for any value of $\FractionNoArg$, we claim that $\max_{\FinalContentNoArg}(\Utility{\TypeNoArg}(\FractionNoArg, \FinalContentNoArg; \Penalty, \Threshold, \Quality, \CostFn^{\infty}))$ is uniquely maximized at $\FinalContentNoArg^* = \Combination{\TypeNoArg}(\FractionNoArg)$. For any value of $\FinalContentNoArg$, note that:
\[\Utility{\TypeNoArg}(\FractionNoArg,\FinalContentNoArg;\Penalty, \Threshold, \Quality, \CostFn^{\infty}) := \Quality(\FinalContentNoArg) - \Penalty \cdot 1[\FinalContentNoArg_1 > \Threshold] - \ProductionCost{\TypeNoArg} \cdot (1-\FractionNoArg) - \CostFn^{\infty}(\Combination{\TypeNoArg}(\FractionNoArg), \FinalContentNoArg).   \]
This expression is equal to $-\infty$ if $\FinalContentNoArg \neq \Combination{\TypeNoArg}(\FractionNoArg)$, and it is finite when $\FinalContentNoArg = \Combination{\TypeNoArg}(\FractionNoArg)$. This proves the desired claim. 

This implies that
\[
\text{max}_{\ContentNoArg} \Utility{\TypeNoArg}(\FractionNoArg,\FinalContentNoArg; \Penalty, \Threshold, \Quality, \CostFn^{\infty}) = 
\Quality(\Combination{\TypeNoArg}(\FractionNoArg)) - \Penalty \cdot 1[\Combination{\TypeNoArg}_1(\FractionNoArg) > \Threshold] - \ProductionCost{\TypeNoArg} \cdot (1-\FractionNoArg)\]
as desired. 
\end{proof}

We prove Theorem \ref{thm:infinitecostdetector}.
\begin{proof}[Proof of Theorem \ref{thm:infinitecostdetector}]
By Lemma \ref{lemma:infinitecostspostprocessing}, we know that for any $\FractionNoArg \in [0,1]$ it holds that:
\[\text{max}_{\ContentNoArg} \Utility{\TypeNoArg}(\FractionNoArg,\FinalContentNoArg; \Penalty, \Threshold, \Quality, \CostFn) = \Quality(\Combination{\TypeNoArg}(\FractionNoArg)) - \Penalty \cdot 1[\Combination{\TypeNoArg}_1(\FractionNoArg) > \Threshold] - \ProductionCost{\TypeNoArg} \cdot (1-\FractionNoArg). \]
To optimize over $\FractionNoArg$, we split into cases.

\paragraph{Case 1: $\HumanContent{\TypeNoArg}_1 < \LLMContent{\TypeNoArg}_1 \le  \Threshold$.} This means that $\Combination{\TypeNoArg}_1(\FractionNoArg)) \le \Threshold$ for all $\FractionNoArg \in [0,1]$. Then we know that $\text{max}_{\ContentNoArg} \Utility{\TypeNoArg}(\FractionNoArg,\FinalContentNoArg; \Penalty, \Threshold, \Quality, \CostFn^{\infty}) = \Quality(\Combination{\TypeNoArg}(\FractionNoArg))  - \ProductionCost{\TypeNoArg} \cdot (1 - \FractionNoArg)$ for all $\FractionNoArg \in [0,1]$. The
function $\Quality(\Combination{\TypeNoArg}(\FractionNoArg)) - \ProductionCost{\TypeNoArg} \cdot (1 - \FractionNoArg)$ is a continuous function of $\FractionNoArg$ on a compact set $[0,1]$, meaning that $\text{max}_{\FractionNoArg \in [0,1], \ContentNoArg} \Utility{\TypeNoArg}(\FractionNoArg,\FinalContentNoArg; \Penalty, \Threshold, \Quality, \CostFn^{\infty})$ is well-defined. Using the tie-breaking rule specified in Section \ref{subsec:userdecisions}, it holds that  $\Fraction{\TypeNoArg}(\Penalty, \Threshold; \Quality, \CostFn^{\infty}) = \FractionNoArg^*$. The fact that $\FinalContent{\TypeNoArg}(\Penalty, \Threshold; \Quality, \CostFn^{\infty}) = \Combination{\TypeNoArg}(\Fraction{\TypeNoArg}(\Penalty, \Threshold; \Quality, \CostFn^{\infty}))$ follows from Lemma \ref{lemma:infinitecostspostprocessing}. 

\paragraph{Case 2: $\Threshold < \HumanContent{\TypeNoArg}_1 < \LLMContent{\TypeNoArg}_1$.}
This means that $\Combination{\TypeNoArg}_1(\FractionNoArg)) > \Threshold$ for all $\FractionNoArg \in [0,1]$. Then we know that $\text{max}_{\ContentNoArg} \Utility{\TypeNoArg}(\FractionNoArg,\FinalContentNoArg; \Penalty, \Threshold,\Quality, \CostFn^{\infty}) = \Quality(\Combination{\TypeNoArg}(\FractionNoArg)) - \ProductionCost{\TypeNoArg} \cdot (1 - \FractionNoArg) - \Penalty$ for all $\FractionNoArg \in [0,1]$. By an analogous argument, we know that $\text{max}_{\FractionNoArg \in [0,1], \ContentNoArg} \Utility{\TypeNoArg}(\FractionNoArg,\FinalContentNoArg;\Penalty, \Threshold,\Quality, \CostFn^{\infty})$ is well-defined. The optimizer of $\Quality(\Combination{\TypeNoArg}(\FractionNoArg)) - \ProductionCost{\TypeNoArg} \cdot (1 - \FractionNoArg) - \Penalty$ is the same as the optimizer of $\Quality(\Combination{\TypeNoArg}(\FractionNoArg)) - \ProductionCost{\TypeNoArg} \cdot (1 - \FractionNoArg)$, since these functions are just off by a constant shift. Using the tie-breaking rule specified in Section \ref{subsec:userdecisions}, it thus holds that  $\Fraction{\TypeNoArg}(\Penalty, \Threshold; \Quality, \CostFn) = \FractionNoArg^*$. The fact that $\FinalContent{\TypeNoArg}(\Penalty, \Threshold; \Quality, \CostFn^{\infty}) = \Combination{\TypeNoArg}(\Fraction{\TypeNoArg}(\Penalty, \Threshold; \Quality, \CostFn^{\infty}))$ follows from Lemma \ref{lemma:infinitecostspostprocessing}. 

\paragraph{Case 3: $\HumanContent{\TypeNoArg}_1 \le \Threshold < \LLMContent{\TypeNoArg}_1$.} In this case, the value $\FractionNoArg^{\text{game}}$ denotes the unique value such that $\Combination{\TypeNoArg}_1(\FractionNoArg^{\text{game}})) = \Threshold$. We further know that $\Combination{\TypeNoArg}_1(\FractionNoArg)) < \Threshold$ for $\FractionNoArg < \FractionNoArg^{\text{game}} $ and $\Combination{\TypeNoArg}_1(\FractionNoArg)) > \Threshold$ for $\FractionNoArg > \FractionNoArg^{\text{game}} $. This means that:
\[\text{max}_{\ContentNoArg} \Utility{\TypeNoArg}(\FractionNoArg,\FinalContentNoArg; \Penalty, \Threshold, \Quality, \CostFn^{\infty}) = 
\begin{cases}
\Quality(\Combination{\TypeNoArg}(\FractionNoArg)) - \ProductionCost{\TypeNoArg} \cdot (1 - \FractionNoArg)  &\text{ if } 0 \le \FractionNoArg \le \FractionNoArg^{\text{game}} \\
\Quality(\Combination{\TypeNoArg}(\FractionNoArg)) - \ProductionCost{\TypeNoArg} \cdot (1 - \FractionNoArg) - \Penalty &\text{ if } \FractionNoArg^{\text{game}} < \FractionNoArg \le 1.
\end{cases}.\]

First, we show that $\text{max}_{\FractionNoArg \in [0,1], \ContentNoArg} \Utility{\TypeNoArg}(\FractionNoArg,\FinalContentNoArg; \Penalty, \Threshold, \Quality, \CostFn)$ is well-defined. A useful intermediate step is to show that  
\[\text{max}_{\FractionNoArg \in [0,1], \ContentNoArg} \Utility{\TypeNoArg}(\FractionNoArg,\FinalContentNoArg; \Penalty, \Threshold, \Quality, \CostFn) \]
\[= \max\left(\max_{\FractionNoArg \in [0, \FractionNoArg^{\text{game}}]} \left(\Quality(\Combination{\TypeNoArg}(\FractionNoArg)) - \ProductionCost{\TypeNoArg} \cdot (1 - \FractionNoArg)\right), \max_{\FractionNoArg \in [\FractionNoArg^{\text{game}}, 1]} \left(\Quality(\Combination{\TypeNoArg}(\FractionNoArg)) - \ProductionCost{\TypeNoArg} \cdot (1 - \FractionNoArg) - \Penalty\right) \right). \]
This almost follows from the equation above; the only aspect we need to verify is that we can replace $(\FractionNoArg^{\text{game}}, 1]$ by $[\FractionNoArg^{\text{game}}, 1]$ in the second term. If $\max_{\FractionNoArg \in [\FractionNoArg^{\text{game}}, 1]} \left(\Quality(\Combination{\TypeNoArg}(\FractionNoArg)) - \ProductionCost{\TypeNoArg} \cdot (1 - \FractionNoArg) - \Penalty\right)$ is achieved for $\FractionNoArg \in (\FractionNoArg^{\text{game}}, 1]$, then we are done. If not, then we know that: 
\begin{align*}
\sup_{\FractionNoArg \in (\FractionNoArg^{\text{game}}, 1]} \left(\Quality(\Combination{\TypeNoArg}(\FractionNoArg)) - \ProductionCost{\TypeNoArg} \cdot (1 - \FractionNoArg) - \Penalty \right) &= \Quality(\Combination{\TypeNoArg}(\FractionNoArg^{\text{game}})) - \ProductionCost{\TypeNoArg} \cdot (1 - \FractionNoArg^{\text{game}}) - \Penalty \\
&< \Quality(\Combination{\TypeNoArg}(\FractionNoArg^{\text{game}})) - \ProductionCost{\TypeNoArg} \cdot (1 - \FractionNoArg^{\text{game}}) \\ &\le \text{max}_{\FractionNoArg \in [0, \FractionNoArg^{\text{game}}]} \left(\Quality(\Combination{\TypeNoArg}(\FractionNoArg)) - \ProductionCost{\TypeNoArg} \cdot (1 - \FractionNoArg)\right),
\end{align*}
so the first branch dominates anyway. This shows the intermediate step. Using the intermediate step, we show that $\text{max}_{\FractionNoArg \in [0, \FractionNoArg^{\text{game}}]} \left(\Quality(\Combination{\TypeNoArg}(\FractionNoArg)) - \ProductionCost{\TypeNoArg} \cdot (1 - \FractionNoArg)\right)$ is well-defined. Note that the
functions $\Quality(\Combination{\TypeNoArg}(\FractionNoArg)) - \ProductionCost{\TypeNoArg} \cdot (1 - \FractionNoArg)$ and $\Quality(\Combination{\TypeNoArg}(\FractionNoArg)) - \ProductionCost{\TypeNoArg} \cdot (1 - \FractionNoArg) - \Penalty$ are continuous functions of $\FractionNoArg$, and in the intermediate step, we optimize these functions  over compact sets. 

We now claim that $\Fraction{\TypeNoArg}(\Penalty, \Threshold; \Quality, \CostFn) \in \left\{ \FractionNoArg^*, \FractionNoArg^{\text{game}}\right\}$ always, and $\Fraction{\TypeNoArg}(\Penalty, \Threshold; \Quality, \CostFn)  = \FractionNoArg^*$ if $\FractionNoArg^* \le \FractionNoArg^{\text{game}}$.  Using the characterization
\[\text{max}_{\ContentNoArg} \Utility{\TypeNoArg}(\FractionNoArg,\FinalContentNoArg; \Penalty, \Threshold, \Quality, \CostFn) = 
\begin{cases}
\Quality(\Combination{\TypeNoArg}(\FractionNoArg)) - \ProductionCost{\TypeNoArg} \cdot (1 - \FractionNoArg)  &\text{ if } 0 \le \FractionNoArg \le \FractionNoArg^{\text{game}} \\
\Quality(\Combination{\TypeNoArg}(\FractionNoArg)) - \ProductionCost{\TypeNoArg} \cdot (1 - \FractionNoArg) - \Penalty &\text{ if } \FractionNoArg^{\text{game}} < \FractionNoArg \le 1.
\end{cases},\]
we know that either (1) $\Fraction{\TypeNoArg}(\Penalty, \Threshold; \Quality, \CostFn)$ lies on the boundary $\left\{0, 1, \FractionNoArg^{\text{game}}\right\}$, or (2) $\Fraction{\TypeNoArg}(\Penalty, \Threshold; \Quality, \CostFn)$ does not lie on the boundary $\left\{0, 1, \FractionNoArg^{\text{game}}\right\}$ and the derivative of $\Quality(\Combination{\TypeNoArg}(\FractionNoArg)) - \ProductionCost{\TypeNoArg} \cdot (1 - \FractionNoArg) - \Penalty$ or $\Quality(\Combination{\TypeNoArg}(\FractionNoArg)) - \ProductionCost{\TypeNoArg} \cdot (1 - \FractionNoArg)$ is $0$ at $\Fraction{\TypeNoArg}(\Penalty, \Threshold; \Quality, \CostFn)$. We handle the cases separately.   
\begin{itemize}[leftmargin=*]
    \item For (1), to show that $\Fraction{\TypeNoArg}(\Penalty, \Threshold; \Quality, \CostFn) \in \left\{ \FractionNoArg^*, \FractionNoArg^{\text{game}}\right\}$, it suffices to show that $\Fraction{\TypeNoArg}(\Penalty, \Threshold; \Quality, \CostFn) \in \left\{ 0, 1\right\} \setminus \left\{\FractionNoArg^{\text{game}}\right\}$ implies that $\Fraction{\TypeNoArg}(\Penalty, \Threshold; \Quality, \CostFn) = \FractionNoArg^*$. If $\Fraction{\TypeNoArg}(\Penalty, \Threshold; \Quality, \CostFn) = 0$ and $\Fraction{\TypeNoArg}(\Penalty, \Threshold; \Quality, \CostFn) \neq \FractionNoArg^{\text{game}}$, then the right derivative of $\Quality(\Combination{\TypeNoArg}(\FractionNoArg)) - \ProductionCost{\TypeNoArg} \cdot (1 - \FractionNoArg)$ at $\FractionNoArg = 0$ must be non-positive, and by concavity, this means that $\Quality(\Combination{\TypeNoArg}(\FractionNoArg)) - \ProductionCost{\TypeNoArg} \cdot (1 - \FractionNoArg)$ is non-increasing, so $\FractionNoArg^* = 0$ as well. If $\Fraction{\TypeNoArg}(\Penalty, \Threshold; \Quality, \CostFn) = 1$ and $\Fraction{\TypeNoArg}(\Penalty, \Threshold; \Quality, \CostFn) \neq \FractionNoArg^{\text{game}}$, then the left derivative of $\Quality(\Combination{\TypeNoArg}(\FractionNoArg)) - \ProductionCost{\TypeNoArg} \cdot (1 - \FractionNoArg)$ at $\FractionNoArg = 1$ must be non-negative, and by concavity, this means that $\Quality(\Combination{\TypeNoArg}(\FractionNoArg)) - \ProductionCost{\TypeNoArg} \cdot (1 - \FractionNoArg)$ is non-decreasing. Moreover, by the tie-breaking rule, it must hold that the inequality $\Quality(\Combination{\TypeNoArg}(\FractionNoArg)) - \ProductionCost{\TypeNoArg} \cdot (1 - \FractionNoArg) < \Quality(\LLMContent{\TypeNoArg})$ is strict for sufficiently large $\FractionNoArg < 1$. Using concavity again, this means that $\Quality(\Combination{\TypeNoArg}(\FractionNoArg)) - \ProductionCost{\TypeNoArg} \cdot (1 - \FractionNoArg)$ is strictly increasing. Thus, it holds that $\FractionNoArg^* = 1$ as well. This proves the desired claim. 

    We further show that if $\FractionNoArg^* \le \FractionNoArg^{\text{game}}$, then it holds that $\Fraction{\TypeNoArg}(\Penalty, \Threshold; \Quality, \CostFn)  = \FractionNoArg^*$. This follows from the fact that $\max_{\FinalContentNoArg \in \mathbb{R}^D} (\Utility{\TypeNoArg}(\FractionNoArg, \FinalContentNoArg; \Penalty, \Threshold, \Quality, \CostFn))$ is point-wise weakly dominated by the function $\Quality(\Combination{\TypeNoArg}(\FractionNoArg)) - \ProductionCost{\TypeNoArg}\cdot (1-\FractionNoArg)$ with equality at $\FractionNoArg^*$, and tiebreaking favors lower values of the work allocation. 
    \item For (2), we use the concavity of $\Quality$, and the fact that the derivative of $\Quality(\Combination{\TypeNoArg}(\FractionNoArg)) - \ProductionCost{\TypeNoArg} \cdot (1 - \FractionNoArg) - \Penalty$ is the same as the derivative of $\Quality(\Combination{\TypeNoArg}(\FractionNoArg)) - \ProductionCost{\TypeNoArg} \cdot (1 - \FractionNoArg)$ everywhere. If $\Fraction{\TypeNoArg}(\Penalty, \Threshold; \Quality, \CostFn) = \FractionNoArg^*$, we are done. Assume for sake of contradiction that $\Fraction{\TypeNoArg}(\Penalty, \Threshold; \Quality, \CostFn) \neq \FractionNoArg^*$. Then, we would know that the derivative of $\Quality(\Combination{\TypeNoArg}(\FractionNoArg)) - \ProductionCost{\TypeNoArg} \cdot (1 - \FractionNoArg) - \Penalty$ is $0$ both at $\Fraction{\TypeNoArg}(\Penalty, \Threshold; \Quality, \CostFn)$ and at $\FractionNoArg^*$. However, by the concavity of $\Quality$, this means that the derivative of $\Quality(\Combination{\TypeNoArg}(\FractionNoArg)) - \ProductionCost{\TypeNoArg} \cdot (1 - \FractionNoArg) - \Penalty$ is $0$ along the full interval between $\Fraction{\TypeNoArg}(\Penalty, \Threshold; \Quality, \CostFn)$ and $\FractionNoArg^*$. By assumption, there is a local ball around $\Fraction{\TypeNoArg}(\Penalty, \Threshold; \Quality, \CostFn)$ that does not intersect $\left\{0, 1, \FractionNoArg^{\text{game}}\right\}$, which means that slightly reducing $\Fraction{\TypeNoArg}(\Penalty, \Threshold; \Quality, \CostFn)$ would yield the same utility but be lower, which is a contradiction given the tiebreaking rule. 
\end{itemize}
The fact that $\FinalContent{\TypeNoArg}(\Penalty, \Threshold; \Quality, \CostFn) = \Combination{\TypeNoArg}(\Fraction{\TypeNoArg}(\Penalty, \Threshold; \Quality, \CostFn))$ follows from Lemma \ref{lemma:infinitecostspostprocessing}. 
\end{proof}

\subsection{Detector in the case of finite gaming costs}\label{appendix:finite}

The following result characterizes user workflow decisions in the case of finite gaming costs. 
\begin{theorem}
\label{thm:detectorfinitecosts}
Fix $\Penalty > 0$, $\Threshold < \infty$, $\Quality$, and finite costs $\CostFn$. The  program $\text{max}_{\FractionNoArg \in [0,1], \ContentNoArg} \Utility{\TypeNoArg}(\FractionNoArg,\FinalContentNoArg; \Penalty, \Threshold, \Quality, \CostFn)$ is well-defined.  Moreover, if $\Combination{\TypeNoArg}_1(\FractionNoArg^*) \le \Threshold$, it holds that $\Fraction{\TypeNoArg}(\Penalty, \Threshold; \Quality, \CostFn) = \FractionNoArg^*$; otherwise, it holds that $\Fraction{\TypeNoArg}(\Penalty, \Threshold; \Quality, \CostFn) \in \left\{\FractionNoArg^*, \FractionNoArg^{\text{game}}\right\} \cup  A^{\TypeNoArg}(\Penalty)$. Finally, it holds that $\FinalContent{\TypeNoArg}_i(\Penalty, \Threshold; \Quality, \CostFn) = \Combination{\TypeNoArg}_i(\Fraction{\TypeNoArg}(\Penalty, \Threshold; \Quality, \CostFn))$ for $i \ge 2$, and it holds that: 
\[ 
\FinalContent{\TypeNoArg}_1(\Penalty, \Threshold; \Quality, \CostFn) = 
\begin{cases}
G(\Combination{\TypeNoArg}_1(\Fraction{\TypeNoArg}(\Penalty, \Threshold; \Quality, \CostFn))) &\text{ if  } \Fraction{\TypeNoArg}(\Penalty, \Threshold; \Quality, \CostFn) \in A^{\TypeNoArg}(\Penalty) \\
\Combination{\TypeNoArg}_1(\Fraction{\TypeNoArg}(\Penalty, \Threshold; \Quality, \CostFn)) & \text{ else. }
\end{cases}
\]
\end{theorem}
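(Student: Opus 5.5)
Following the template of Theorems \ref{thm:nodetector} and \ref{thm:infinitecostdetector}, I would first analyze the indirect utility $\max_{\ContentNoArg} \Utility{\TypeNoArg}(\FractionNoArg,\ContentNoArg; \Penalty, \Threshold, \Quality, \CostFn)$ for fixed $\FractionNoArg$, and then optimize over $\FractionNoArg$.

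\textbf{Step 1 (coordinates $i \ge 2$).} By separability, each coordinate $i \ge 2$ appears only through $\Quality_i(\ContentNoArg_i) - \CostFn_i(\Combination{\TypeNoArg}_i(\FractionNoArg), \ContentNoArg_i)$. Exactly as in Lemma \ref{lemma:nodetectionnopostprocessing}, (A1) and (A3) show this term is uniquely maximized at $\ContentNoArg_i = \Combination{\TypeNoArg}_i(\FractionNoArg)$.

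\textbf{Step 2 (coordinate $1$).} With $x = \Combination{\TypeNoArg}_1(\FractionNoArg)$, the user maximizes $\Quality_1(z_1) - \Penalty \cdot 1[z_1 > \Threshold] - \CostFn_1(x, z_1)$ over $z_1 \le x$.
\begin{itemize}
\item If $x \le \Threshold$, the penalty is never incurred, so (A1) gives the optimum $z_1 = x$.
\item If $x > \Threshold$, there are two candidates. Over $z_1 \in (\Threshold, x]$ the penalty is constant, so by (A1) the best choice is $z_1 = x$, with value $\Quality_1(x) - \Penalty$. Over $z_1 \le \Threshold$ the best choice is $G(x)$, with value $\max_{z'_1 \le \Threshold}(\Quality_1(z'_1) - \CostFn_1(x, z'_1))$; this is well-defined by (A5) and the domain truncation via $V(w_1)$.
\item Comparing the two values, post-processing to $G(x)$ is weakly preferred exactly when $x \in S(\Penalty)$, that is, when $\FractionNoArg \in A^{\TypeNoArg}(\Penalty)$. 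Ties are broken toward the lower $z_1$, namely $G(x)$. This gives the displayed formula for $\FinalContent{\TypeNoArg}_1$.
\end{itemize}
The resulting indirect utility is
\[ W(\FractionNoArg) = \Quality(\Combination{\TypeNoArg}(\FractionNoArg)) - \ProductionCost{\TypeNoArg}(1-\FractionNoArg) - \Delta(\FractionNoArg), \]
where the correction term is
\[ \Delta(\FractionNoArg) = \begin{cases} 0 & \text{if } \Combination{\TypeNoArg}_1(\FractionNoArg) \le \Threshold, \\ \Quality_1(x) - \max_{z'_1 \le \Threshold}(\Quality_1(z'_1) - \CostFn_1(x, z'_1)) & \text{if } \FractionNoArg \in A^{\TypeNoArg}(\Penalty), \\ \Penalty & \text{otherwise.} \end{cases} \]
In every case $\Delta \ge 0$, and by the definition of $S(\Penalty)$ we have $\Delta \le \Penalty$ on $A^{\TypeNoArg}(\Penalty)$.

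\textbf{Step 3 (existence of a maximizer).} I would show $W$ is upper semicontinuous on $[0,1]$. Both branch values are continuous in $\FractionNoArg$ by Berge's theorem. $W$ is the pointwise maximum of the available branches, and the branch sets are relatively closed except possibly at $\FractionNoArg^{\text{game}}$ (Lemma \ref{lemma:apenaltyset}). At $\FractionNoArg^{\text{game}}$ the no-penalty value $\Quality_1(\Threshold)$ dominates the limits from the right, because $\Delta \ge 0$ and $\Delta(\FractionNoArg) \to 0$ there by (A3) (the post-processing option costs nothing at $x = \Threshold$). So $W$ is upper semicontinuous and attains its maximum on the compact set $[0,1]$. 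The tie-breaking rule then selects the minimal maximizer, which is possible since the argmax set is closed.

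\textbf{Step 4 (locating $\Fraction{\TypeNoArg}$).} Let $f(\FractionNoArg) = \Quality(\Combination{\TypeNoArg}(\FractionNoArg)) - \ProductionCost{\TypeNoArg}(1-\FractionNoArg)$, which is concave. Since $W \le f$ pointwise with equality whenever $\Combination{\TypeNoArg}_1(\FractionNoArg) \le \Threshold$:
\begin{itemize}
\item If $\Combination{\TypeNoArg}_1(\FractionNoArg^*) \le \Threshold$, then $W(\FractionNoArg^*) = f(\FractionNoArg^*) \ge W$ everywhere. Any smaller maximizer would also maximize $f$, contradicting minimality of $\FractionNoArg^*$, so the tie-breaking rule gives $\Fraction{\TypeNoArg} = \FractionNoArg^*$.
\item Otherwise, suppose $\Fraction{\TypeNoArg} \notin A^{\TypeNoArg}(\Penalty) \cup \{\FractionNoArg^{\text{game}}\}$. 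Then, using Lemma \ref{lemma:apenaltyset}, $\Fraction{\TypeNoArg}$ has a neighborhood disjoint from $\bar{A}^{\TypeNoArg}(\Penalty)$, on which $W = f$ or $W = f - \Penalty$. So $\Fraction{\TypeNoArg}$ is a local maximizer of $f$ up to a constant shift. The derivative and concavity argument from the proof of Theorem \ref{thm:infinitecostdetector}, including boundary cases $0$ and $1$ and the tie-breaking rule, then forces $\Fraction{\TypeNoArg} = \FractionNoArg^*$.
\end{itemize}

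The main obstacle is Step 3, together with the neighborhood argument in Step 4. Because $A^{\TypeNoArg}(\Penalty)$ need not be an interval, I must rely on Lemma \ref{lemma:apenaltyset} to ensure the only problematic point is $\FractionNoArg^{\text{game}}$, and check upper semicontinuity carefully there.
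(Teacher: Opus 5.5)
Your proposal is correct and follows essentially the same route as the paper. The per-$\FractionNoArg$ analysis via separability, (A1)/(A3), and the comparison defining $S(\Penalty)$ is the paper's Lemma~\ref{lemma:detectorfinitecostspostprocessing}. Locating $\Fraction{\TypeNoArg}(\Penalty,\Threshold;\Quality,\CostFn)$ via Lemma~\ref{lemma:apenaltyset}, concavity of $\Quality(\Combination{\TypeNoArg}(\FractionNoArg)) - \ProductionCost{\TypeNoArg}(1-\FractionNoArg)$, and tie-breaking (including the endpoints $0$ and $1$) matches the paper's case analysis. The only difference is cosmetic: you argue upper semicontinuity for existence, whereas the paper asserts continuity via Berge's theorem. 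Your observation that the post-processing gap tends to $0$ as $\FractionNoArg \downarrow \FractionNoArg^{\text{game}}$ in fact gives that continuity, and supplies the branch-matching step the paper leaves implicit.
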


To prove Theorem \ref{thm:detectorfinitecosts}, 
we first analyze the indirect utility. We show the optimization program for the indirect utility is well-defined, users will never post-process their content along dimensions that are not used in the detection classifier, and users post-process in a structured manner. 

\begin{lemma}
\label{lemma:detectorfinitecostspostprocessing} 
Fix $\Penalty > 0$, $\Threshold < \infty$, $\Quality$, and finite costs $\CostFn$. For any $\FractionNoArg \in [0,1]$, the optimization program for the indirect utility $\max_{\FinalContentNoArg \in \mathbb{R}^D}(\Utility{\TypeNoArg}(\FractionNoArg, \ContentNoArg;\Penalty, \Threshold, \Quality, \CostFn))$ is well-defined, and the optimal value $\max_{\FinalContentNoArg \in \mathbb{R}^D}(\Utility{\TypeNoArg}(\FractionNoArg, \ContentNoArg;\Penalty, \Threshold, \Quality, \CostFn))$ is given by: 
\[\begin{cases}
\Quality(\Combination{\TypeNoArg}(\FractionNoArg)) - \ProductionCost{\TypeNoArg}\cdot (1-\FractionNoArg) &\text{ if } (\Combination{\TypeNoArg}_1(\FractionNoArg)) \le \Threshold \\
\sum_{i\ge 2} \left( \Quality_i(\Combination{\TypeNoArg}_i(\FractionNoArg))\right) + \max_{z'_1 \le \Threshold} \left(\Quality_1(z'_1) - \CostFn_1(\Combination{\TypeNoArg}_1(\FractionNoArg), z'_1)\right) - \ProductionCost{\TypeNoArg} \cdot (1-\FractionNoArg)  &\text{ if } (\Combination{\TypeNoArg}_1(\FractionNoArg)) \in S(\Penalty) \\
\Quality(\Combination{\TypeNoArg}(\FractionNoArg)) - \Penalty - \ProductionCost{\TypeNoArg} \cdot (1-\FractionNoArg) &\text{ else. } \\
\end{cases} \]
Moreover, any optimizer $\FinalContentNoArg^*$ satisfies $\FinalContentNoArg^*_i = \Combination{\TypeNoArg}_i(\FractionNoArg)$ for $i \ge 2$. Finally, the optimizer $\FinalContentNoArg^*$ with minimum first dimension (as per the tiebreaking rule in Section \ref{subsec:userdecisions}) satisfies:
\[ 
\FinalContentNoArg^*_1 = 
\begin{cases}
G(\Combination{\TypeNoArg}_1(\FractionNoArg)) &\text{ if  } \Combination{\TypeNoArg}_1(\FractionNoArg) \in S(\Penalty) \\
\Combination{\TypeNoArg}_1(\FractionNoArg) & \text{ else. }
\end{cases}
\]
\end{lemma}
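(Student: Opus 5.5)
The plan is to exploit separability: for fixed $\FractionNoArg$, the objective splits as $\sum_{i\ge 2}\left(\Quality_i(\FinalContentNoArg_i) - \CostFn_i(\Combination{\TypeNoArg}_i(\FractionNoArg), \FinalContentNoArg_i)\right)$ plus a first-coordinate term $\Quality_1(\FinalContentNoArg_1) - \Penalty\cdot 1[\FinalContentNoArg_1 > \Threshold] - \CostFn_1(\Combination{\TypeNoArg}_1(\FractionNoArg), \FinalContentNoArg_1)$, minus the constant $\ProductionCost{\TypeNoArg}(1-\FractionNoArg)$. The coordinates decouple, so each can be optimized separately. For $i \ge 2$, exactly as in Lemma \ref{lemma:nodetectionnopostprocessing}, assumption (A1) together with (A3) shows that $\FinalContentNoArg_i = \Combination{\TypeNoArg}_i(\FractionNoArg)$ is the unique maximizer, with value $\Quality_i(\Combination{\TypeNoArg}_i(\FractionNoArg))$. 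This settles the claim about coordinates $i \ge 2$ for every optimizer.

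For the first coordinate, write $x = \Combination{\TypeNoArg}_1(\FractionNoArg)$ and split the feasible set $\{z'_1 \le x\}$ into two parts: $\{z'_1 \le \min(x,\Threshold)\}$, which is undetected, and $\{\Threshold < z'_1 \le x\}$, which is detected.

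\textbf{Detected region.} Here the objective is $\Quality_1(z'_1) - \CostFn_1(x,z'_1) - \Penalty$. By (A1) this is maximized uniquely at $z'_1 = x$, giving $\Quality_1(x) - \Penalty$.

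\textbf{Undetected region, $x \le \Threshold$.} The detected region is empty. By (A1) again the unique optimum is $z'_1 = x$, which gives the first case.

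\textbf{Undetected region, $x > \Threshold$.} The best undetected value is $\max_{z'_1 \le \Threshold}\left(\Quality_1(z'_1) - \CostFn_1(x,z'_1)\right)$. This maximum exists: by (A5), together with the domain-truncation bound \eqref{eq:upperbound} with $w_1 = x$, we can restrict to the compact set $[V(x), \Threshold]$, where the function is continuous. The smallest maximizer is $G(x)$ by definition.

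Comparing the two values, posting-processing below the threshold is weakly preferred exactly when $\Quality_1(x) - \max_{z'_1\le\Threshold}(\cdots) \le \Penalty$, that is, when $x \in S(\Penalty)$. This yields the second and third cases. For the tiebreak, note that when equality holds the undetected optimizer $G(x) \le \Threshold < x$ has the smaller first coordinate, so the lowest-first-dimension optimizer is $G(x)$ on $S(\Penalty)$ and $x$ otherwise. Well-definedness follows because every piece attains its maximum.

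The main obstacle I expect is the existence step in the undetected region for $x > \Threshold$, which requires invoking (A5) and the truncation carefully. A second delicate point is the boundary behavior at $z'_1 = \Threshold$ versus the open detected region. That boundary is handled by the fact that the detected maximum is attained at the interior point $x$ rather than only in the limit, so the supremum over the open region is in fact a maximum.
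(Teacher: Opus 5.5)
Your proposal is correct and follows essentially the same route as the paper. The paper builds the candidate $\tilde{z}$, with $\tilde{z}_1 = G(m^{\mathbf{t}}_1(\alpha))$ on $S(\beta)$ and $m^{\mathbf{t}}_1(\alpha)$ otherwise, and $\tilde{z}_i = m^{\mathbf{t}}_i(\alpha)$ for $i \ge 2$. It shows $\tilde{z}$ dominates every feasible $z$ through one chain of inequalities, using (A1) for $i \ge 2$ and your detected/undetected comparison for the first coordinate, then reads off the optimizers from equality in the chain and the tiebreak rule. Your explicit compactness argument (via (A5) and the truncation) for attaining $\max_{z'_1 \le \nu}$ and the smallest maximizer fills in what the paper takes for granted when it says $G$ is well-defined.
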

\begin{proof}

Let $\tilde{z}$ denote the content where the 1st coordinate is given by $\tilde{\FinalContentNoArg}_1 = G(\Combination{\TypeNoArg}_1(\FractionNoArg))$ if $\Combination{\TypeNoArg}_1(\FractionNoArg) \in S(\Penalty)$ and $\tilde{\FinalContentNoArg}_1 = \Combination{\TypeNoArg}_1(\FractionNoArg)$ otherwise, and the other coordinates are given by $\tilde{\FinalContentNoArg}_{i} = \Combination{\TypeNoArg}_i(\FractionNoArg)$ for all $i \ge 2$. 
Given any value of $\FinalContentNoArg$, we know that: 
\begin{align*}
&\Utility{\TypeNoArg}(\FractionNoArg,\FinalContentNoArg; \Penalty, \Threshold, \Quality, \CostFn) \\
&= \Quality(\FinalContentNoArg) -\ProductionCost{\TypeNoArg} \cdot (1-\FractionNoArg) -  \Penalty \cdot 1[\FinalContentNoArg_1 > \Threshold] - \CostFn(\Combination{\TypeNoArg}(\FractionNoArg), \FinalContentNoArg) \\
&= -\ProductionCost{\TypeNoArg} \cdot (1-\FractionNoArg) + \Quality_1(\FinalContentNoArg_1) - \Penalty \cdot 1[\FinalContentNoArg_1 > \Threshold] -  \CostFn_{1}(\Combination{\TypeNoArg}_{1}(\FractionNoArg), \FinalContentNoArg_{1})
+ \sum_{i \ge 2} \left(\Quality_{i}(\FinalContentNoArg) - \CostFn_{i}(\Combination{\TypeNoArg}_{i}(\FractionNoArg), \FinalContentNoArg_{i})\right)\\
&\le_{(1)} -\ProductionCost{\TypeNoArg} \cdot (1-\FractionNoArg) + \Quality_1(\FinalContentNoArg_1) - \Penalty \cdot 1[\FinalContentNoArg_1 > \Threshold] -  \CostFn_{1}(\Combination{\TypeNoArg}_{1}(\FractionNoArg), \FinalContentNoArg_{1})
+ \sum_{i \ge 2} \left(\Quality_{i}(\Combination{\TypeNoArg}_i(\FractionNoArg)\right)\\
&= -\ProductionCost{\TypeNoArg} \cdot (1-\FractionNoArg) + \Quality_1(\FinalContentNoArg_1) - \Penalty \cdot 1[\FinalContentNoArg_1 > \Threshold] -  \CostFn_{1}(\Combination{\TypeNoArg}_{1}(\FractionNoArg), \FinalContentNoArg_{1})
+ \sum_{i \ge 2} \left(\Quality_{i}(\tilde{\FinalContentNoArg}_i)\right)\\
&\le_{(2)} -\ProductionCost{\TypeNoArg} \cdot (1-\FractionNoArg) + \Quality_1(\tilde{\FinalContentNoArg}_1) - \Penalty \cdot 1[\tilde{\FinalContentNoArg}_1 > \Threshold] -  \CostFn_{1}(\Combination{\TypeNoArg}_{1}(\FractionNoArg), \tilde{\FinalContentNoArg}_1)
+ \sum_{i \ge 2} \left(\Quality_{i}(\tilde{\FinalContentNoArg}_i)\right)\\
&=\Quality(\tilde{\FinalContentNoArg}) -\ProductionCost{\TypeNoArg} \cdot (1-\FractionNoArg)- \Penalty \cdot 1[\tilde{\FinalContentNoArg}_1 > \Threshold] - \CostFn(\Combination{\TypeNoArg}(\FractionNoArg), \tilde{\ContentNoArg}) \\
&= \Utility{\TypeNoArg}(\FractionNoArg,\tilde{\FinalContentNoArg}; \Penalty, \Threshold, \Quality, \CostFn)
\end{align*}
where (1) follows from (A1). We observe that (2) follows from the following two facts. 
The first is that if $\Combination{\TypeNoArg}_1(\FractionNoArg) \not\in S(\Penalty)$, then by (A1): 
\[\Quality_1(\FinalContentNoArg_1) - \Penalty \cdot 1[\FinalContentNoArg_1 > \Threshold] -  \CostFn_{1}(\Combination{\TypeNoArg}_{1}(\FractionNoArg), \FinalContentNoArg_1) \le \Quality_1(\Combination{\TypeNoArg}_{1}(\FractionNoArg)) - \Penalty \cdot 1[\Combination{\TypeNoArg}_{1}(\FractionNoArg) > \Threshold]. \]
The second is that if $\Combination{\TypeNoArg}_1(\FractionNoArg) \in S(\Penalty)$, then: 
\[\Quality_1(\FinalContentNoArg_1) - \Penalty \cdot 1[\FinalContentNoArg_1 > \Threshold] -  \CostFn_{1}(\Combination{\TypeNoArg}_{1}(\FractionNoArg), \FinalContentNoArg_1) \le \Quality_1(G(\Combination{\TypeNoArg}_{1}(\FractionNoArg))) -  \Penalty \cdot 1[G(\Combination{\TypeNoArg}_{1}(\FractionNoArg)) > \Threshold] - \CostFn_{1}(\Combination{\TypeNoArg}_{1}(\FractionNoArg), G(\Combination{\TypeNoArg}_{1}(\FractionNoArg))). \]
(2) follows from these two facts. 

This proves that $\tilde{\FinalContentNoArg}$ is an optimizer of $\max_{\FinalContentNoArg \in \mathbb{R}^D}(\Utility{\TypeNoArg}(\FractionNoArg, \ContentNoArg;\Penalty, \Threshold, \Quality, \CostFn))$. Using the definition of $G$, we see that the optimal value is given by:  
\[\begin{cases}
\Quality(\Combination{\TypeNoArg}(\FractionNoArg)) - \ProductionCost{\TypeNoArg}\cdot (1-\FractionNoArg) &\text{ if } (\Combination{\TypeNoArg}_1(\FractionNoArg)) \le \Threshold \\
\sum_{i\ge 2} \left( \Quality_i(\Combination{\TypeNoArg}_i(\FractionNoArg))\right) + \max_{z'_1 \le \Threshold} \left(\Quality_1(z'_1) - \CostFn_1(\Combination{\TypeNoArg}_1(\FractionNoArg), z'_1)\right) - \ProductionCost{\TypeNoArg} \cdot (1-\FractionNoArg)  &\text{ if } (\Combination{\TypeNoArg}_1(\FractionNoArg)) \in S(\Penalty) \\
\Quality(\Combination{\TypeNoArg}(\FractionNoArg)) - \Penalty - \ProductionCost{\TypeNoArg} \cdot (1-\FractionNoArg) &\text{ else. }  \\
\end{cases} \]
as desired. 

Now, consider any optimizer $\FinalContentNoArg^*$. Then we know that equality must hold in the above chain of equalities. 
For (1), this means that 
\[\sum_{i\ge 2} \left(\Quality_{i}(\FinalContentNoArg^*) - \CostFn_{i}(\Combination{\TypeNoArg}_{i}(\FractionNoArg), \FinalContentNoArg^*_{i})\right) = \sum_{i\ge 2} \Quality_{i}(\Combination{\TypeNoArg}_i(\FractionNoArg)).\] 
Applying (A1), this means that $\FinalContentNoArg^*_i = \Combination{\TypeNoArg}_i(\FractionNoArg)$ for all $i \ge 2$. For (2), we know that:
\begin{align*}
\Quality_1(\FinalContentNoArg^*_1) - \Penalty \cdot 1[\FinalContentNoArg^*_1 > \Threshold] -  \CostFn_{1}(\Combination{\TypeNoArg}_{1}(\FractionNoArg), \FinalContentNoArg^*_1) &= \Quality_1(\tilde{\FinalContentNoArg}_1) - \Penalty \cdot 1[\tilde{\FinalContentNoArg}_1 > \Threshold] -  \CostFn_{1}(\Combination{\TypeNoArg}_{1}(\FractionNoArg), \tilde{\FinalContentNoArg}_1) \\
&\ge \max_{\FinalContentNoArg_1} \left(\Quality_1(\FinalContentNoArg_1) - \Penalty \cdot 1[\FinalContentNoArg > \Threshold] - \CostFn_{1}(\Combination{\TypeNoArg}_{1}(\FractionNoArg), \FinalContentNoArg_1) \right), 
\end{align*}
where the last step follows from the analysis above that we showed for every value of $\FinalContentNoArg_1$. 

Let's now assume that $\FinalContentNoArg^*$ tiebreaks in favor of lower values. Based on the definitions of $G$ and $S(\Penalty)$, we know that $\tilde{\FinalContentNoArg}_1$ is the maximizer of $\max_{\FinalContentNoArg_1} \left(\Quality_1(\FinalContentNoArg_1) - \Penalty \cdot 1[\FinalContentNoArg > \Threshold] - \CostFn_{1}(\Combination{\TypeNoArg}_{1}(\FractionNoArg), \FinalContentNoArg_1) \right)$ with lowest value, meaning that  $\FinalContentNoArg^*_1 = \tilde{\FinalContentNoArg}_1$ as desired. 
\end{proof}

We prove Theorem \ref{thm:detectorfinitecosts}.
\begin{proof}[Proof of Theorem \ref{thm:detectorfinitecosts}]
Using Lemma \ref{lemma:detectorfinitecostspostprocessing} and applying a partial reparameterization, we know that the optimal value 
$\max_{\FinalContentNoArg \in \mathbb{R}^D}(\Utility{\TypeNoArg}(\FractionNoArg, \ContentNoArg; \Penalty, \Threshold,\Quality, \CostFn))$ is given by: 
\[\begin{cases}
\Quality(\Combination{\TypeNoArg}(\FractionNoArg)) - \ProductionCost{\TypeNoArg}\cdot (1-\FractionNoArg) &\text{ if } \Combination{\TypeNoArg}_1(\FractionNoArg) \le \Threshold \\
\sum_{i\ge 2} \left( \Quality_i(\Combination{\TypeNoArg}_i(\FractionNoArg))\right) + \max_{z'_1 \le \Threshold} \left(\Quality_1(z'_1) - \CostFn_1(\Combination{\TypeNoArg}_1(\FractionNoArg), z'_1)\right) - \ProductionCost{\TypeNoArg} \cdot (1-\FractionNoArg)  &\text{ if } \FractionNoArg \in A^{\TypeNoArg}(\Penalty) \\
\Quality(\Combination{\TypeNoArg}(\FractionNoArg)) - \Penalty - \ProductionCost{\TypeNoArg} \cdot (1-\FractionNoArg) &\text{ else. } \\
\end{cases} \]
Since the functions $\Quality$, $\CostFn$, and $\Combination{\TypeNoArg}$ are continuous, we know that this function is continuous in $\FractionNoArg$ (we use Berge's maximum theorem to ensure continuity of $\max_{z'_1 \le \Threshold} \left(\Quality_1(z'_1) - \CostFn_1(\Combination{\TypeNoArg}_1(\FractionNoArg), z'_1)\right)$).  The optimization is over a compact set $[0,1]$, meaning that it achieves its optima. 

The characterization of $\FinalContent{\TypeNoArg}(\Penalty, \Threshold; \Quality, \CostFn)$ follows from Lemma \ref{lemma:detectorfinitecostspostprocessing} coupled with the fact that $A^{\TypeNoArg}(\Penalty)$ is a reparameterization of $S(\Penalty)$.  

If $\Combination{\TypeNoArg}_1(\FractionNoArg^*) \le \Threshold$, we show that $\Fraction{\TypeNoArg}(\Penalty, \Threshold; \Quality, \CostFn) = \FractionNoArg^*$. This follows from the fact that \\
$\max_{\FinalContentNoArg \in \mathbb{R}^D} (\Utility{\TypeNoArg}(\FractionNoArg, \FinalContentNoArg; \Penalty, \Threshold, \Quality, \CostFn))$ is point-wise dominated by the function $\Quality(\Combination{\TypeNoArg}(\FractionNoArg)) - \ProductionCost{\TypeNoArg}\cdot (1-\FractionNoArg)$ with equality at $\FractionNoArg^*$, and tiebreaking favors lower values of the work allocation. 

For the remainder of the analysis, we can assume that $\Combination{\TypeNoArg}_1(\FractionNoArg^*) > \Threshold$. We wish to show that $\Fraction{\TypeNoArg}(\Penalty, \Threshold; \Quality, \CostFn) \in A^{\TypeNoArg}(\Penalty) \cup \left\{ \FractionNoArg^{\text{game}},\FractionNoArg^*  \right\}$. It suffices to show any $\FractionNoArg \in [0,1]$ such that $\FractionNoArg \not\in A^{\TypeNoArg}(\Penalty) \cup \left\{ \FractionNoArg^{\text{game}},\FractionNoArg^*  \right\}$ is not equal to $\Fraction{\TypeNoArg}(\Penalty, \Threshold; \Quality, \CostFn)$. We split into cases.

\paragraph{Case 1: $\FractionNoArg \in (0,1)$ such that $\FractionNoArg \not\in A^{\TypeNoArg}(\Penalty) \cup \left\{ \FractionNoArg^{\text{game}},\FractionNoArg^*  \right\}$.} By Lemma \ref{lemma:apenaltyset}, we know that the set $A^{\TypeNoArg}(\Penalty) \cup \left\{ \FractionNoArg^{\text{game}} \right\}$ is closed. This means that there is a ball around $\FractionNoArg$ which is fully contained in either the first branch ($\Combination{\TypeNoArg}_1(\FractionNoArg) \le \Threshold$) or the third branch  ($\Combination{\TypeNoArg}_1(\FractionNoArg) > \Threshold$, $\FractionNoArg \not\in A^{\TypeNoArg}(\Penalty)$). The function $\max_{\FinalContentNoArg \in \mathbb{R}^D}(\Utility{\TypeNoArg}(\FractionNoArg, \ContentNoArg;\Penalty, \Threshold, \Quality, \CostFn))$ is differentiable on these branches. We now use the concavity of $\Quality$, and the fact that the derivative of $\Quality(\Combination{\TypeNoArg}(\FractionNoArg)) - \ProductionCost{\TypeNoArg} \cdot (1 - \FractionNoArg) - \Penalty$ is the same as the derivative of $\Quality(\Combination{\TypeNoArg}(\FractionNoArg)) - \ProductionCost{\TypeNoArg} \cdot (1 - \FractionNoArg)$ everywhere. If the derivative is nonzero, then we can find a local improvement, meaning that $\FractionNoArg \neq \Fraction{\TypeNoArg}(\Penalty, \Threshold; \Quality, \CostFn)$ as desired. Suppose the derivative is zero. By assumption, we know that $\FractionNoArg \neq \FractionNoArg^*$. However, by the concavity of $\Quality$, this means that the derivative of $\Quality(\Combination{\TypeNoArg}(\FractionNoArg)) - \ProductionCost{\TypeNoArg} \cdot (1 - \FractionNoArg) - \Penalty$ is $0$ along the full interval between $\FractionNoArg$ and $\FractionNoArg^*$. However, this is a contradiction given the tiebreaking rule.

\paragraph{Case 2: $\FractionNoArg =0$ and $\FractionNoArg \not\in A^{\TypeNoArg}(\Penalty) \cup \left\{\FractionNoArg^{\text{game}}, \FractionNoArg^*\right\}$.} By Lemma \ref{lemma:apenaltyset}, we know that the set $A^{\TypeNoArg}(\Penalty) \cup \left\{ \FractionNoArg^{\text{game}}  \right\}$ is closed. This means that there exists $\epsilon > 0$ such that around $\FractionNoArg + \epsilon'$ for $0 \le \epsilon' < \epsilon$ which is fully contained in either the first branch ($\Combination{\TypeNoArg}_1(\FractionNoArg) \le \Threshold$) or the third branch  ($\Combination{\TypeNoArg}_1(\FractionNoArg) > \Threshold$, $\FractionNoArg \not\in A^{\TypeNoArg}(\Penalty)$). The function $\max_{\FinalContentNoArg \in \mathbb{R}^D}(\Utility{\TypeNoArg}(\FractionNoArg, \ContentNoArg;\Penalty, \Threshold, \Quality, \CostFn))$ is differentiable on these branches. We now use the concavity of $\Quality$, and the fact that the derivative of $\Quality(\Combination{\TypeNoArg}(\FractionNoArg)) - \ProductionCost{\TypeNoArg} \cdot (1 - \FractionNoArg) - \Penalty$ is the same as the derivative of $\Quality(\Combination{\TypeNoArg}(\FractionNoArg)) - \ProductionCost{\TypeNoArg} \cdot (1 - \FractionNoArg)$ everywhere. If the derivative is positive, then we can find a local improvement, meaning that $\FractionNoArg \neq \Fraction{\TypeNoArg}(\Penalty, \Threshold; \Quality, \CostFn)$ as desired. Suppose the derivative is nonpositive. By assumption, we know that $0 \neq \FractionNoArg^*$. However, by the concavity of $\Quality$, this means that the derivative of $\Quality(\Combination{\TypeNoArg}(\FractionNoArg)) - \ProductionCost{\TypeNoArg} \cdot (1 - \FractionNoArg) - \Penalty$ is nonpositive along the full interval between $[0, \FractionNoArg^*]$. However, this would mean that $\FractionNoArg^* = 0$, due to the tiebreaking rule, which is a contradiction. 

\paragraph{Case 3: $\FractionNoArg =1$ and $\FractionNoArg \not\in A^{\TypeNoArg}(\Penalty) \cup \left\{\FractionNoArg^{\text{game}}, \FractionNoArg^*\right\}$.} By Lemma \ref{lemma:apenaltyset}, we know that the set $A^{\TypeNoArg}(\Penalty) \cup \left\{ \FractionNoArg^{\text{game}}  \right\}$ is closed. This means that there exists $\epsilon > 0$ such that around $\FractionNoArg - \epsilon'$ for $0 \le \epsilon' < \epsilon$ which is fully contained in either the first branch ($\Combination{\TypeNoArg}_1(\FractionNoArg) \le \Threshold$) or the third branch  ($\Combination{\TypeNoArg}_1(\FractionNoArg) > \Threshold$, $\FractionNoArg \not\in A^{\TypeNoArg}(\Penalty)$). The function $\max_{\FinalContentNoArg \in \mathbb{R}^D}(\Utility{\TypeNoArg}(\FractionNoArg, \ContentNoArg;\Penalty, \Threshold, \Quality, \CostFn))$ is differentiable on these branches. We now use the concavity of $\Quality$, and the fact that the derivative of $\Quality(\Combination{\TypeNoArg}(\FractionNoArg)) - \ProductionCost{\TypeNoArg} \cdot (1 - \FractionNoArg) - \Penalty$ is the same as the derivative of $\Quality(\Combination{\TypeNoArg}(\FractionNoArg)) - \ProductionCost{\TypeNoArg} \cdot (1 - \FractionNoArg)$ everywhere. If the derivative is negative, then we can find a local improvement, meaning that $\FractionNoArg \neq \Fraction{\TypeNoArg}(\Penalty, \Threshold; \Quality, \CostFn)$ as desired. Suppose the derivative is nonnegative. By assumption, we know that $1 \neq \FractionNoArg^*$. However, by the concavity of $\Quality$, this means that the derivative of $\Quality(\Combination{\TypeNoArg}(\FractionNoArg)) - \ProductionCost{\TypeNoArg} \cdot (1 - \FractionNoArg) - \Penalty$ is nonnegative along the full interval between $[\FractionNoArg^*, 1]$. However, the tiebreaking rule would mean that $\Fraction{\TypeNoArg}(\Penalty, \Threshold; \Quality, \CostFn)  = \FractionNoArg^* \neq 1$, which is a contradiction. 

\end{proof}

\section{Proofs for Section \ref{subsec:llmusage}}

\subsection{Proof of Theorem \ref{thm:usageincrease}}\label{appendix:proofthmincrease}

We prove Theorem \ref{thm:usageincrease}, and we also state and prove a variant for 2-dimensional settings that matches the presented proof intuition more closely. 

\begin{proof}[Proof of Theorem \ref{thm:usageincrease}]
Suppose that \eqref{eq:conditionincrease} is satisfied. 

First, we claim that $\Quality_1$ is weakly decreasing. Assume for sake of contradiction that $\Quality_1$ is weakly increasing (recall that we assumed one of the two holds). This means that:
\begin{align*}
 \max_{z'_1 \le \Threshold}(\Quality_1(z'_1) - \CostFn_1(z^u_1, z'_1)) - \max_{z'_1 \le \Threshold}(\Quality_1(z'_1) - \CostFn_1(z^l_1, z'_1)) &=_{(A)}  (\Quality_1(\Threshold) - \CostFn_1(z^u_1, \Threshold)) - (\Quality_1(\Threshold) - \CostFn_1(z^l_1, \Threshold)) \\
 &= \CostFn_1(z^l_1, \Threshold) - \CostFn_1(z^u_1, \Threshold) \\
 &<_{(B)} 0 \\
 &\le_{(C)} Q'_1(z^l_1) (z^u_1 - z^l_1).
\end{align*}
where (A) follows from Lemma \ref{lemma:gthreshold}, (B) follows from assumption (A2), and (C) follows from the fact that $Q_1$ is weakly increasing. This violates \eqref{eq:conditionincrease}, which is a contradiction.

Let $\TypeNoArg$ be such that $\HumanContent{\TypeNoArg}_1 = z^l_1$, $\LLMContent{\TypeNoArg}_1 = z^u_1$, $\HumanContent{\TypeNoArg}_i = \LLMContent{\TypeNoArg}_i$ for all $i \ge 2$. Let 
$\ProductionCost{\TypeNoArg} = - Q'_1(z^l_1) \cdot (z^u_1 - z^l_1)$, which we know is nonnegative since $Q_1$ is weakly decreasing. Let 
\[\bar{\Penalty} = \sup_{z_1 \in [z^l_1, z^u_1]} \left(Q_1(z_1) - \max_{z'_1 \le \Threshold} (\Quality_1(z'_1) - \CostFn_1(z_1, z'_1)\right)\] which we know is finite because it is the supremum of a continuous function (continuous by Berge's maximum theorem) on a compact set. 

We claim that $\FractionNoArg^* = 0$. By concavity, it suffices to show that the derivative at $\FractionNoArg = 0$ is nonpositive. Note that: 
\[\Quality(\Combination{\TypeNoArg}(\FractionNoArg)) - \ProductionCost{\TypeNoArg}(1 - \FractionNoArg) = \Quality_1(\Combination{\TypeNoArg}_1(\FractionNoArg)) - \ProductionCost{\TypeNoArg}(1 - \FractionNoArg) + \sum_{i=2}^D \Quality_i(\HumanContent{\TypeNoArg}_i). \]
Taking a derivative, we obtain:
\[\Quality'_1(\HumanContent{\TypeNoArg}_1) (\LLMContent{\TypeNoArg}_1 - \HumanContent{\TypeNoArg}_1) + \ProductionCost{\TypeNoArg} = 0. \]
This coupled with the tiebreaking rule means that $\FractionNoArg^* = 0$.  

We claim that for $\Penalty > \bar{\beta}$, it holds that $\FractionNoArg \in A^{\TypeNoArg}(\Penalty)$ for all $\FractionNoArg \in [0,1]$. We know that:
\begin{align*}
Q_1(\Combination{\TypeNoArg}_1(\FractionNoArg)) - \max_{z'_1 \le \Threshold} (\Quality_1(z'_1) - \CostFn_1(\Combination{\TypeNoArg}_1(\FractionNoArg), z'_1)) &\le  \sup_{z_1 \in [z^l_1, z^u_1]} \left(Q_1(z_1) - \max_{z'_1 \le \Threshold} (\Quality_1(z'_1) - \CostFn_1(z_1, z'_1)\right) \\
&= \bar{\Penalty} \\
&< \Penalty,   
\end{align*}
as desired.

Now, we claim that $\max_{z}(\Utility{\TypeNoArg}(1, \ContentNoArg; \Penalty, \Threshold, \Quality, \CostFn)) > \max_{z}(\Utility{\TypeNoArg}(0, \ContentNoArg; \Penalty, \Threshold, \Quality, \CostFn))$. We know that: 
\begin{align*}
&\max_{z}(\Utility{\TypeNoArg}(1, \ContentNoArg; \Penalty, \Threshold, \Quality, \CostFn)) - \max_{z}(\Utility{\TypeNoArg}(0, \ContentNoArg; \Penalty, \Threshold, \Quality, \CostFn))  \\
&=_{(A)} \left(\max_{z'_1 \le \Threshold}(\Quality_1(z'_1) - \CostFn_1(z^u_1, z'_1)) \right) - \left(\max_{z'_1 \le \Threshold}(\Quality_1(z'_1) - \CostFn_1(z^l_1, z'_1)) \right) + \ProductionCost{\TypeNoArg} \\
&= \left(\max_{z'_1 \le \Threshold}(\Quality_1(z'_1) - \CostFn_1(z^u_1, z'_1)) \right) - \left(\max_{z'_1 \le \Threshold}(\Quality_1(z'_1) - \CostFn_1(z^l_1, z'_1)) \right) - Q'_1(z^l_1) \cdot (z^u_1 - z^l_1)  \\
&>_{(B)} 0,
\end{align*}
where (A) uses Lemma \ref{lemma:detectorfinitecostspostprocessing}, (B)
uses \eqref{eq:conditionincrease}. 

By Theorem \ref{thm:nodetector}, we know that  $\Fraction{\TypeNoArg}(\emptyset; \Quality, \CostFn) = \FractionNoArg^*$. By the above, we know that $\Fraction{\TypeNoArg}(\Penalty, \Threshold; \Quality, \CostFn) \neq 0$. This means that $\Fraction{\TypeNoArg}(\Penalty, \Threshold; \Quality, \CostFn) > 0 = \FractionNoArg^* = \Fraction{\TypeNoArg}(\emptyset; \Quality, \CostFn)$ as desired. 

\end{proof}

The proof intuition in the main body and depicted in Figure \ref{fig:llmusage} actually shows a slightly more complex construction, which relies on slightly stronger assumptions. We state this alternate result below. We believe this construction conveys more practical intuition, even though the underlying result is less general. 
\begin{theorem}
\label{thm:alternateusageincrease}
Consider the same setup as Theorem \ref{thm:usageincrease}. Let's additionally assume that $D\ge 2$ and $Q_{2}$ is non-constant. If gaming costs $\CostFn$ are finite and there exists $z_1^u > z_1^l > \Threshold$ satisfying \eqref{eq:conditionincrease}, then there exists a penalty threshold $\bar{\Penalty} > 0$ and a type $\TypeNoArg$ of the form $\Quality_{2}(\LLMContent{\TypeNoArg}_2)  > \Quality_{2}(\HumanContent{\TypeNoArg}_2)$ and $\Quality_{1}(\LLMContent{\TypeNoArg}_1) < \Quality_{1}(\HumanContent{\TypeNoArg}_1)$ such that 
\[\Fraction{\TypeNoArg}(\Penalty, \Threshold; \Quality, \CostFn) > \Fraction{\TypeNoArg}(\emptyset; \Quality, \CostFn)  \text{ for all } \Penalty > \bar{\Penalty}.\] 
\end{theorem}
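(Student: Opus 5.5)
We adapt the proof of Theorem \ref{thm:usageincrease}, perturbing the second coordinate so that the LLM output is strictly better there. First, exactly as in that proof, condition \eqref{eq:conditionincrease} forces $\Quality_1$ to be weakly decreasing. Moreover, $\Quality_1(z^u_1) \le \Quality_1(z^l_1) + \Quality'_1(z^l_1)(z^u_1-z^l_1)$ by concavity, and the left-hand side of \eqref{eq:conditionincrease} is nonpositive by Lemma \ref{lemma:maxweaklydecreasing}. Hence $\Quality'_1(z^l_1) < 0$, and therefore $\Quality_1(z^u_1) < \Quality_1(z^l_1)$. Since $\Quality_2$ is non-constant, pick $a < b$ with $\Quality_2(b) > \Quality_2(a)$. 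Let $\delta := \Quality_2(b) - \Quality_2(a) > 0$; by continuity we may shrink $b$ toward $a$ so that $\delta$ is as small as we like.

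Define the type by $\HumanContent{\TypeNoArg}_1 = z^l_1$, $\LLMContent{\TypeNoArg}_1 = z^u_1$, $\HumanContent{\TypeNoArg}_2 = a$, $\LLMContent{\TypeNoArg}_2 = b$ (or the reverse orientation if $\Quality_2$ is decreasing near the chosen points), and $\HumanContent{\TypeNoArg}_i = \LLMContent{\TypeNoArg}_i$ for $i \ge 3$. Set
\[\ProductionCost{\TypeNoArg} := -\Quality'_1(z^l_1)(z^u_1 - z^l_1) - \Quality'_2(a)(b-a) \ge 0.\]
The right derivative of the no-detector objective $\Quality(\Combination{\TypeNoArg}(\FractionNoArg)) - \ProductionCost{\TypeNoArg}(1-\FractionNoArg)$ at $\FractionNoArg=0$ is then exactly $0$. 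The objective is concave, so $\FractionNoArg^* = 0$ by tiebreaking, and Theorem \ref{thm:nodetector} gives $\Fraction{\TypeNoArg}(\emptyset;\Quality,\CostFn)=0$. Nonnegativity of $\ProductionCost{\TypeNoArg}$ needs $\Quality'_2(a)(b-a)$ to be small, which holds once $b$ is close to $a$.

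Next, define $\bar\Penalty$ as the supremum over $z_1\in[z^l_1,z^u_1]$ of $\Quality_1(z_1)-\max_{z'_1\le\Threshold}(\Quality_1(z'_1)-\CostFn_1(z_1,z'_1))$. This is finite by Berge. For $\Penalty>\bar\Penalty$ every $\FractionNoArg$ lies in $A^{\TypeNoArg}(\Penalty)$. By Lemma \ref{lemma:detectorfinitecostspostprocessing}, the difference in indirect utility between $\FractionNoArg=1$ and $\FractionNoArg=0$ equals
\[\Big[\max_{z'_1\le\Threshold}(\Quality_1(z'_1)-\CostFn_1(z^u_1,z'_1)) - \max_{z'_1\le\Threshold}(\Quality_1(z'_1)-\CostFn_1(z^l_1,z'_1)) - \Quality'_1(z^l_1)(z^u_1-z^l_1)\Big] + \Big[\Quality_2(b)-\Quality_2(a)-\Quality'_2(a)(b-a)\Big].\]
The first bracket is a fixed positive constant $\eta$ by \eqref{eq:conditionincrease}. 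The second bracket is nonpositive by concavity but is $o(b-a)$ as $b\to a$. Choosing $b$ close enough to $a$ makes it exceed $-\eta$, so the utility at $\FractionNoArg=1$ strictly beats the utility at $\FractionNoArg=0$. Therefore $\Fraction{\TypeNoArg}(\Penalty,\Threshold;\Quality,\CostFn)>0$ for all $\Penalty > \bar\Penalty$.

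The main obstacle is the simultaneous choice of $(a,b)$. We need $\Quality_2(b) > \Quality_2(a)$ strictly, a production cost that stays nonnegative, and a concavity gap $\Quality_2(b)-\Quality_2(a)-\Quality'_2(a)(b-a)$ that is small relative to $\eta$. Continuity of $\Quality_2$ and differentiability at $a$ give all three together. One must pick $a$ where $\Quality'_2(a)\neq 0$ and orient $b$ accordingly; such a point exists because $\Quality_2$ is non-constant.
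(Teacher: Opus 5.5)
Your proposal is correct and uses essentially the same construction as the paper. The shared ingredients are the first-coordinate endpoints $z^l_1, z^u_1$, a small favorable perturbation in coordinate $2$, the same $\bar{\Penalty}$, and the comparison of $\FractionNoArg=1$ against $\FractionNoArg=0$ when every allocation is post-processed. The only difference is bookkeeping: you fold the coordinate-2 first-order term into $\ProductionCost{\TypeNoArg}$, so the derivative at $\FractionNoArg=0$ is exactly zero, and then you control the concavity gap. The paper instead uses a separate slack $\epsilon_2$ with $|\Quality_2'(\HumanContent{\TypeNoArg}_2)|\epsilon_1\le\epsilon_2$.

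Your explicit derivation that $\Quality_1'(z^l_1)<0$ is more careful than the paper. The paper's claim that its production cost is nonnegative tacitly relies on this fact, and so does the strict inequality $\Quality_1(\LLMContent{\TypeNoArg}_1)<\Quality_1(\HumanContent{\TypeNoArg}_1)$, which the paper never checks.
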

\begin{proof}
First, we claim that $\Quality_1$ is weakly decreasing. Assume for sake of contradiction that $\Quality_1$ is weakly increasing (recall that we assumed one of the two holds). This means that:
\begin{align*}
 \max_{z'_1 \le \Threshold}(\Quality_1(z'_1) - \CostFn_1(z^u_1, z'_1)) - \max_{z'_1 \le \Threshold}(\Quality_1(z'_1) - \CostFn_1(z^l_1, z'_1)) &=_{(A)}  (\Quality_1(\Threshold) - \CostFn_1(z^u_1, \Threshold)) - (\Quality_1(\Threshold) - \CostFn_1(z^l_1, \Threshold)) \\
 &= \CostFn_1(z^l_1, \Threshold) - \CostFn_1(z^u_1, \Threshold) \\
 &<_{(B)} 0 \\
 &\le_{(C)} Q'_1(z^l_1) (z^u_1 - z^l_1).
\end{align*}
where (A) follows from Lemma \ref{lemma:gthreshold}, (B) follows from assumption (A2), and (C) follows from the fact that $Q_1$ is weakly increasing. This violates \eqref{eq:conditionincrease}, which is a contradiction.  

We will let $\epsilon_1 > 0$ and $\epsilon_2 >0$ be sufficiently small parameters that we will set later. Let $\TypeNoArg$ be such that $\HumanContent{\TypeNoArg}_1 = z^l_1$, $\LLMContent{\TypeNoArg}_1 = z^u_1$, $\HumanContent{\TypeNoArg}_i = \LLMContent{\TypeNoArg}_i$ for all $i > 2$. Furthermore, let $\TypeNoArg$ be such that $Q'_2(\HumanContent{\TypeNoArg}_2) \neq 0$ (this exists because $Q_2$ is non-constant by assumption). 
Let 
\[\bar{\beta} = \sup_{z_1 \in [z^l_1, z^u_1]} \left(Q_1(z_1) - \max_{z'_1 \le \Threshold} (\Quality_1(z'_1) - \CostFn_1(z_1, z'_1)\right)\] which we know is finite because it is the maximum of a continuous function (continuous by Berge's maximum theorem) on a compact set. 

Let $\LLMContent{\TypeNoArg}_2 = \HumanContent{\TypeNoArg}_2 + \epsilon_1 \cdot \text{sign}(Q'_2(\HumanContent{\TypeNoArg}_2))$. (If $\epsilon_1$ is positive and falls below some positive threshold $\bar{\epsilon}_1$, we know that $\Quality_2(\LLMContent{\TypeNoArg}_2) > \Quality_2(\HumanContent{\TypeNoArg}_2)$.) Let 
$\ProductionCost{\TypeNoArg} = - Q'_1(z^l_1) \cdot (z^u_1 - z^l_1) - \epsilon_2$, which we know is nonnegative since $Q_1$ is weakly decreasing.

If $|\Quality'_2(\HumanContent{\TypeNoArg}_2)| \cdot \epsilon_1 - \epsilon_2 \le 0$, we claim that $\FractionNoArg^* = 0$. By concavity, it suffices to show that the derivative at $\FractionNoArg = 0$ is nonpositive. Note that: 
\[\Quality(\Combination{\TypeNoArg}(\FractionNoArg)) - \ProductionCost{\TypeNoArg}(1 - \FractionNoArg) = \Quality_1(\Combination{\TypeNoArg}_1(\FractionNoArg)) + \Quality_2(\HumanContent{\TypeNoArg}_2) - \ProductionCost{\TypeNoArg}(1 - \FractionNoArg) +   \sum_{i > 2} \Quality_i(\HumanContent{\TypeNoArg}_i). \]
Taking a derivative, we obtain:
\begin{align*}
 \Quality'_1(\HumanContent{\TypeNoArg}_1) (\LLMContent{\TypeNoArg}_1 - \HumanContent{\TypeNoArg}_1) + \Quality'_2(\HumanContent{\TypeNoArg}_2) \cdot (\LLMContent{\TypeNoArg}_2 - \HumanContent{\TypeNoArg}_2) + \ProductionCost{\TypeNoArg} &= |\Quality'_2(\HumanContent{\TypeNoArg}_2)| \cdot \epsilon_1 - \epsilon_2 \le 0.   
\end{align*}
This coupled with the tiebreaking rule means that $\FractionNoArg^* = 0$.  

We claim that for $\Penalty > \bar{\beta}$, it holds that $\FractionNoArg \in A^{\TypeNoArg}(\Penalty)$ for all $\FractionNoArg \in [0,1]$. We know that:
\begin{align*}
Q_1(\Combination{\TypeNoArg}_1(\FractionNoArg)) - \max_{z'_1 \le \Threshold} (\Quality_1(z'_1) - \CostFn_1(\Combination{\TypeNoArg}_1(\FractionNoArg), z'_1)) &\le  \sup_{z_1 \in [z^l_1, z^u_1]} \left(Q_1(z_1) - \max_{z'_1 \le \Threshold} (\Quality_1(z'_1) - \CostFn_1(z_1, z'_1)\right) \\
&= \bar{\Penalty} \\
&< \Penalty,   
\end{align*}
as desired. 
 
Now, we claim that $\max_{z}(\Utility{\TypeNoArg}(1, \ContentNoArg; \Penalty, \Threshold, \Quality, \CostFn)) > \max_{z}(\Utility{\TypeNoArg}(0, \ContentNoArg; \Penalty, \Threshold,  \Quality, \CostFn))$. For $\epsilon_2$ below some positive threshold  $\bar{\epsilon}_2$, we know that: 
\begin{align*}
&\max_{z}(\Utility{\TypeNoArg}(1, \ContentNoArg; \Penalty, \Threshold, \Quality, \CostFn)) - \max_{z}(\Utility{\TypeNoArg}(0, \ContentNoArg; \Penalty, \Threshold,  \Quality, \CostFn))  \\
&=_{(A)} \left(\max_{z'_1 \le \Threshold}(\Quality_1(z'_1) - \CostFn_1(z^u_1, z'_1)) \right) - \left(\max_{z'_1 \le \Threshold}(\Quality_1(z'_1) - \CostFn_1(z^l_1, z'_1)) \right) + \ProductionCost{\TypeNoArg} + \Quality_2(\LLMContent{\TypeNoArg}_2) - \Quality_2(\HumanContent{\TypeNoArg}_2)   \\
&>_{(B)} \left(\max_{z'_1 \le \Threshold}(\Quality_1(z'_1) - \CostFn_1(z^u_1, z'_1)) \right) - \left(\max_{z'_1 \le \Threshold}(\Quality_1(z'_1) - \CostFn_1(z^l_1, z'_1)) \right) - Q'_1(z^l_1) \cdot (z^u_1 - z^l_1) - \epsilon_2  \\
&>_{(C)} 0,
\end{align*}
where (A) uses Lemma \ref{lemma:detectorfinitecostspostprocessing}, and (B) uses that $\epsilon_1$ is sufficiently small, and (C)
uses that $\epsilon_2$ is sufficiently small and uses \eqref{eq:conditionincrease}.

By Theorem \ref{thm:nodetector}, we know that  $\Fraction{\TypeNoArg}(\emptyset; \Quality, \CostFn) = \FractionNoArg^*$. By the above, we know that $\Fraction{\TypeNoArg}(\Penalty, \Threshold; \Quality, \CostFn) \neq 0$. This means that $\Fraction{\TypeNoArg}(\Penalty, \Threshold; \Quality, \CostFn) > 0 = \FractionNoArg^* = \Fraction{\TypeNoArg}(\emptyset; \Quality, \CostFn)$ as desired. 

We can satisfy the conditions on $\epsilon_1$ as follows. Let $\epsilon_2 = \bar{\epsilon}_2/2$, and let $\epsilon_1 = \min(\bar{\epsilon}_1/2, \epsilon_2 / |\Quality'_2(\HumanContent{\TypeNoArg}_2)|)$.

\end{proof}

\subsection{Statement and Proof of Proposition \ref{proposition:nonmonotoneusage}}\label{appendix:propconstruction}

\begin{proposition}
\label{proposition:nonmonotoneusage}
Fix $k_1 \in \mathbb{R}$, $k_2 > e^{-2}$, and $0 < k_3 < 4 k_2$, and $k_4 > 0$. Let $\Threshold = 0$, $\Quality_1(z_1) = k_1 - k_2 z_1$, and let 
\[\CostFn_1(z_1, z'_1) = k_2(z_1 - z'_1) + \frac{k_3}{1 + e^{z_1}} (1 - e^{-(z_1 - z'_1)}) + k_4 (z_1 - z'_1)^2.\] Then the assumptions in Section \ref{sec:model} are satisfied. For any $z^u_1 >  z^l_1 > \ln(1+\sqrt{2})$, if $k_4$ is sufficiently small, the condition \eqref{eq:conditionincrease} holds. 
\end{proposition}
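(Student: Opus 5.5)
The plan has two parts. First I verify that the proposed $\Quality_1$ and $\CostFn_1$ satisfy the standing assumptions (concavity, monotonicity, smoothness, (A1)--(A5)). Then I compute the after-post-processing utility in closed form and reduce \eqref{eq:conditionincrease} to a one-variable monotonicity statement. Throughout I write $d = z_1 - z'_1 \ge 0$ and $\sigma(z_1) = 1/(1+e^{z_1})$, so that
\[\CostFn_1(z_1,z'_1) = k_2 d + k_3\sigma(z_1)(1-e^{-d}) + k_4 d^2 .\]

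\emph{Assumptions.} $\Quality_1$ is linear, hence concave, weakly decreasing and smooth; $\CostFn_1$ is smooth. Since $\Quality_1(z'_1)-\Quality_1(z_1) = k_2 d$, (A1) reduces to $k_3\sigma(z_1)(1-e^{-d}) + k_4 d^2 > 0$ for $d>0$, which is immediate. (A3) also follows from this expression.

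For (A2), the $z'_1$-derivative is $-k_2 - k_3\sigma(z_1)e^{-d} - 2k_4 d < 0$. The $z_1$-derivative is
\[k_2 + k_3\sigma'(z_1)(1-e^{-d}) + k_3\sigma(z_1)e^{-d} + 2k_4 d .\]
Here $\sigma'(z_1) = -e^{z_1}/(1+e^{z_1})^2 \ge -1/4$, so the derivative is at least $k_2 - k_3/4 > 0$, using $k_3<4k_2$.

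(A4) holds because $\CostFn_1$ grows at least linearly in $z_1$ for fixed $z'_1$.

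For (A5), bound $\Quality_1(z'_1) - \CostFn_1(z_1,z'_1) \le k_1 - k_2 z_1 - k_4(z_1-z'_1)^2$. Take the supremum over $z_1 \ge z''_1$. The unconstrained maximizer $z'_1 - k_2/(2k_4)$ lies below $z''_1$ once $z'_1$ is very negative, so the supremum is attained at $z_1=z''_1$. Its value is $k_1 - k_2 z''_1 - k_4(z''_1-z'_1)^2 \to -\infty$.

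The hypothesis $k_2>e^{-2}$ seems unused in this route. I would double-check whether it is needed for some boundary detail, but I would not build the argument around it.

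\emph{Closed form of the after-post-processing utility.} For fixed $z_1 > 0 = \Threshold$, consider the objective $\Quality_1(z'_1)-\CostFn_1(z_1,z'_1)$ over $z'_1 \le 0$. It equals
\[k_1 - k_2 z_1 - k_3\sigma(z_1)(1-e^{z'_1 - z_1}) - k_4(z_1-z'_1)^2 .\]
Its derivative in $z'_1$ is $k_3\sigma(z_1)e^{z'_1-z_1} + 2k_4(z_1 - z'_1) > 0$, so the maximum is at $z'_1 = 0$. Hence
\[\max_{z'_1\le 0}\left(\Quality_1(z'_1)-\CostFn_1(z_1,z'_1)\right) = k_1 - k_2 z_1 - k_3 h(z_1) - k_4 z_1^2, \qquad h(z) := \frac{1-e^{-z}}{1+e^{z}} .\]

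\emph{Reduction of \eqref{eq:conditionincrease}.} Since $\Quality'_1 \equiv -k_2$, the right-hand side of \eqref{eq:conditionincrease} is $-k_2(z^u_1-z^l_1)$. The $-k_2 z_1$ terms cancel, and the condition becomes
\[k_3\bigl(h(z^l_1) - h(z^u_1)\bigr) > k_4\bigl((z^u_1)^2 - (z^l_1)^2\bigr).\]
So it suffices to show that $h$ is strictly decreasing on $(\ln(1+\sqrt 2),\infty)$. With $x=e^z$ we have $h = (x-1)/(x^2+x)$, and the numerator of $dh/dx$ is $(x^2+x)-(x-1)(2x+1) = -x^2+2x+1$. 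This is negative exactly when $x>1+\sqrt2$, i.e.\ $z>\ln(1+\sqrt 2)$. Thus for $z^u_1>z^l_1>\ln(1+\sqrt2)$ the left side is a fixed positive number. Taking
\[k_4 < k_3\bigl(h(z^l_1)-h(z^u_1)\bigr)\big/\bigl((z^u_1)^2-(z^l_1)^2\bigr)\]
gives the condition.

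The step I expect to need the most care is the (A2) monotonicity in $z_1$. It relies on the bound $|\sigma'| \le 1/4$ being exactly what $k_3<4k_2$ is designed to absorb. A secondary point of care is the uniform-in-$z_1$ limit in (A5).
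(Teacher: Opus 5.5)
Your proof is correct and follows essentially the same route as the paper. Both verify (A1)--(A5) directly, bounding the $z_1$-derivative below by $k_2 - k_3/4$. Both show that $\Quality_1(z'_1) - \CostFn_1(z_1,z'_1)$ is increasing in $z'_1$, so the maximum over $z'_1 \le 0$ sits at $z'_1 = 0$. Both then reduce \eqref{eq:conditionincrease} to strict decrease of $(1-e^{-z})/(1+e^{z})$ beyond $\ln(1+\sqrt{2})$, plus a small-$k_4$ choice.

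Your (A5) argument and explicit $k_4$ threshold are slightly more careful than the paper's. You are also right that $k_2 > e^{-2}$ plays no role in the paper's argument either.
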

\begin{proof}

First, we see that $\Quality_1$ is weakly decreasing and continuously differentiable. 

For (A1), note that:
\[\CostFn_1(z_1, z'_1) =  k_2(z_1 - z'_1) + \frac{k_3}{1 + e^{z_1}} (1 - e^{-(z_1 - z'_1)}) + k_4 (z_1 - z'_1)^2 \ge \Quality_1(z'_1) - \Quality_1(z_1)\]
as desired. Equality holds only when $z'_1 = z_1$. 

For (A2), we take a derivative with respect to $z_1$ to obtain:
\[k_2  -  \frac{k_3 \cdot e^{z_1}}{(1 + e^{z_1})^2} +  \frac{k_3(2e^{z'_1} + e^{-z_1+z'_1})}{(1 + e^{z_1})^2} +  2 k_4 (z_1 - z'_1) \ge k_2 - \frac{k_3}{4} > 0.  \]
We take a derivative with respect to $z'_1$ to obtain: 
\[-k_2 - \frac{k_3}{1 + e^{z_1}} e^{-(z_1 - z'_1)}) + 2 k_4 (z'_1 - z_1) < 0. \]

For (A3), we see that:
\[\CostFn_1(z_1, z'_1) =  k_2(z_1 - z'_1) + \frac{k_3}{1 + e^{z_1}} (1 - e^{-(z_1 - z'_1)}) + k_4 (z_1 - z'_1)^2 \ge 0\] 
since each term is nonnegative because $z_1 \ge z'_1$. Equality is achieved if and only if $z_1 = z'_1$. 

For (A4), note that $k_2 (z_1 - z'_1) \rightarrow \infty$ as $z_1 \rightarrow \infty$, and the other terms are nonnegative.  

For (A5), we see that 
\begin{align*}
 \Quality_1(z'_1) - \CostFn_1(z_1, z'_1) &= k_1 - k_2 z'_1 - k_2(z_1 - z'_1) - \frac{k_3}{1 + e^{z_1}} (1 - e^{-(z_1 - z'_1)}) - k_4 (z_1 - z'_1)^2  \\
 &= k_1 - k_2z_1 - \frac{k_3}{1 + e^{z_1}} (1 - e^{-(z_1 - z'_1)}) - k_4 (z_1 - z'_1)^2.
\end{align*}
Note that for any $\ContentNoArg''_i$, it holds that the utility $\sup_{\ContentNoArg_i \ge \ContentNoArg''_i} (- k_4 (z_1 - z'_1)^2)  \rightarrow -\infty$ as $\ContentNoArg'_i \rightarrow -\infty$. 

Now, observe that:
\begin{align*}
\max_{z'_1 \le \Threshold} \left(Q_1(z'_1) - \CostFn_1(z_1, z'_1) \right) &= \max_{z'_1 \le \Threshold} \left( k_1 - k_2z_1 - \frac{k_3}{1 + e^{z_1}} (1 - e^{-(z_1 - z'_1)}) - k_4 (z_1 - z'_1)^2 \right) \\
&= k_1 - k_2z_1 - \min_{z'_1 \le \Threshold} \left(\frac{k_3}{1 + e^{z_1}} (1 - e^{-(z_1 - z'_1)}) + k_4 (z_1 - z'_1)^2 \right).  
\end{align*}

Note that $\frac{k_3}{1 + e^{z_1}} (1 - e^{-(z_1 - z'_1)}) + k_4 (z_1 - z'_1)^2$ is decreasing with $z'_1$, so it is minimized at $z'_1 = \Threshold = 0$. Thus we obtain:
\[\max_{z'_1 \le \Threshold} \left(Q_1(z'_1) - \CostFn_1(z_1, z'_1) \right) = k_1 - k_2z_1 - \left(\frac{k_3}{1 + e^{z_1}} (1 - e^{-z_1}) + k_4 z_1^2 \right). \]

Now, observe that:
\begin{align*}
 & \max_{z'_1 \le \Threshold} \left(Q_1(z'_1) - \CostFn_1(z^u_1, z'_1) \right) - \max_{z'_1 \le \Threshold} \left(Q_1(z'_1) - \CostFn_1(z^l_1, z'_1) \right) \\
 &= - k_2(z^u_1 - z^l_1) - \frac{k_3}{1 + e^{z^u_1}} (1 - e^{-z^u_1}) - \frac{k_3}{1 + e^{z^l_1}} (1 - e^{-z^l_1}) + k_4 (z_1^l)^2 - k_4 (z_1^u)^2  \\
 &= Q'_1(z^l_1) (z^u_1 - z^l_1) - \frac{k_3 (1 - e^{-z^u_1})}{1 + e^{z^u_1}}  + \frac{k_3 (1 - e^{-z^l_1})}{1 + e^{z^l_1}}  + k_4 (z_1^l)^2 - k_4 (z_1^u)^2
\end{align*}
Let $f(x) = \frac{k_3 (1 - e^{-x)}}{1 + e^{x}}$. By taking a derivative, we can see that $f'(x) < 0$ for $x \ge \ln(1+\sqrt{2})$. This means that for $z^u_1 >  z^l_1 > \ln(1+\sqrt{2})$, it holds that $ - \frac{k_3 (1 - e^{-z^u_1})}{1 + e^{z^u_1}}  + \frac{k_3 (1 - e^{-z^l_1})}{1 + e^{z^l_1}} > 0$. We can take $k_4$ sufficiently small so that the overall expression is positive. 
\end{proof}

\subsection{Proof of Theorem \ref{thm:usagedecrease}}\label{appendix:thmdecrease}

To prove Theorem \ref{thm:usagedecrease}, we first prove a series of lemmas. The first case is infinite gaming costs. 

\begin{lemma}
\label{lemma:infinitellmusage}
Fix $\Penalty > 0$, $\Threshold < \infty$, and $\Quality$. Suppose that gaming costs $\CostFn = \CostFn^{\infty}$ are infinite. If $\Combination{\TypeNoArg}_1(\FractionNoArg^*) > \Threshold$, then it holds that:
\[ \Fraction{\TypeNoArg}(\Penalty, \Threshold; \Quality, \CostFn) \le \FractionNoArg^*.\]
\end{lemma}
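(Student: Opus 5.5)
The plan is to invoke Theorem \ref{thm:infinitecostdetector}, which already restricts the optimal work allocation under infinite gaming costs. Under the hypothesis $\Combination{\TypeNoArg}_1(\FractionNoArg^*) > \Threshold$, that theorem gives $\Fraction{\TypeNoArg}(\Penalty, \Threshold; \Quality, \CostFn^{\infty}) \in \left\{\FractionNoArg^*, \FractionNoArg^{\text{game}}\right\}$. If the optimum equals $\FractionNoArg^*$ we are done, so the whole argument reduces to the case where the optimum is $\FractionNoArg^{\text{game}}$. We must then show $\FractionNoArg^{\text{game}} \le \FractionNoArg^*$.

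\textbf{Case analysis on $\FractionNoArg^{\text{game}}$.} If $\FractionNoArg^{\text{game}} = \infty$, it cannot be a work allocation in $[0,1]$, so the optimum must be $\FractionNoArg^*$. Otherwise $\HumanContent{\TypeNoArg}_1 \le \Threshold < \LLMContent{\TypeNoArg}_1$ and $\FractionNoArg^{\text{game}} \in [0,1]$ is the unique point with $\Combination{\TypeNoArg}_1(\FractionNoArg^{\text{game}}) = \Threshold$. The map $\FractionNoArg \mapsto \Combination{\TypeNoArg}_1(\FractionNoArg)$ is strictly increasing, since $\LLMContent{\TypeNoArg}_1 > \HumanContent{\TypeNoArg}_1$ on $\mathcal{T}$. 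Therefore the hypothesis $\Combination{\TypeNoArg}_1(\FractionNoArg^*) > \Threshold = \Combination{\TypeNoArg}_1(\FractionNoArg^{\text{game}})$ immediately yields $\FractionNoArg^* > \FractionNoArg^{\text{game}}$.

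In both cases the chosen allocation is at most $\FractionNoArg^*$, which gives the claim.

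The only step needing care is confirming that $\FractionNoArg^{\text{game}}$ is genuinely defined and lies in $[0,1]$ when it is selected. This follows from its definition in the auxiliary section together with Case 3 in the proof of Theorem \ref{thm:infinitecostdetector}. No real obstacle is expected, since the monotonicity of $\Combination{\TypeNoArg}_1$ does all the work.
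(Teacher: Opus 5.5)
Your proposal is correct and follows essentially the same route as the paper: apply Theorem \ref{thm:infinitecostdetector} to get $\Fraction{\TypeNoArg}(\Penalty, \Threshold; \Quality, \CostFn^{\infty}) \in \left\{\FractionNoArg^*, \FractionNoArg^{\text{game}}\right\}$. In the $\FractionNoArg^{\text{game}}$ case, use that $\FractionNoArg^{\text{game}}$ is finite, $\Combination{\TypeNoArg}_1(\FractionNoArg^{\text{game}}) = \Threshold < \Combination{\TypeNoArg}_1(\FractionNoArg^*)$, and that $\Combination{\TypeNoArg}_1$ is strictly increasing. You spell out this monotonicity step explicitly, whereas the paper leaves it implicit.
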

\begin{proof}
By Theorem \ref{thm:infinitecostdetector}, we know that $\Fraction{\TypeNoArg}(\Penalty, \Threshold; \Quality, \CostFn) \in \left\{\FractionNoArg^*, \FractionNoArg^{\text{game}} \right\}$. If $\Fraction{\TypeNoArg}(\Penalty, \Threshold; \Quality, \CostFn) = \FractionNoArg^*$, then we are done. If $\Fraction{\TypeNoArg}(\Penalty, \Threshold; \Quality, \CostFn) = \FractionNoArg^{\text{game}}$, then we know that $\FractionNoArg^{\text{game}} < \infty$ and this case would imply that $\FractionNoArg^* \ge \FractionNoArg^{\text{game}} = \Fraction{\TypeNoArg}(\Penalty, \Threshold; \Quality, \CostFn)$ as desired. 

\end{proof}

The second case is that $\max_{z'_1 \le \Threshold} \left(\Quality_1(z'_1) - \CostFn_1(z_1, z'_1)\right) - Q_1(z_1)$ is weakly decreasing in $z_1$ for all $z_1 > \Threshold$. 
\begin{lemma}
\label{lemma:increasingllmusage}
Fix $\Penalty > 0$, $\Threshold < \infty$, and finite costs $\CostFn$. Suppose that $\max_{z'_1 \le \Threshold} \left(\Quality_1(z'_1) - \CostFn_1(z_1, z'_1)\right) - Q_1(z_1)$ is weakly decreasing in $z_1$ for all $z_1 > \Threshold$.   
If $\Combination{\TypeNoArg}_1(\FractionNoArg^*) > \Threshold$, then it holds that:
\[ \Fraction{\TypeNoArg}(\Penalty, \Threshold; \Quality, \CostFn) \le \FractionNoArg^*.\]
\end{lemma}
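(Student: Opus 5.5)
The plan is a direct comparison argument. I would write the indirect utility from Lemma \ref{lemma:detectorfinitecostspostprocessing} as the no-detector objective minus a monotone ``detection loss.'' Then I would show that no work allocation above $\FractionNoArg^*$ can strictly beat $\FractionNoArg^*$, so the tiebreaking rule finishes the proof.

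Let $f(\FractionNoArg) := \Quality(\Combination{\TypeNoArg}(\FractionNoArg)) - \ProductionCost{\TypeNoArg}(1-\FractionNoArg)$. By definition $\FractionNoArg^*$ is its minimum maximizer. Let $V(\FractionNoArg) := \max_{\FinalContentNoArg} \Utility{\TypeNoArg}(\FractionNoArg, \FinalContentNoArg; \Penalty, \Threshold, \Quality, \CostFn)$. Write $H(z_1) := \max_{z'_1 \le \Threshold}(\Quality_1(z'_1) - \CostFn_1(z_1, z'_1))$ for $z_1 \ge \Threshold$. Reading off the three branches of Lemma \ref{lemma:detectorfinitecostspostprocessing}, I will check that $V(\FractionNoArg) = f(\FractionNoArg) - L(\FractionNoArg)$, where
\[L(\FractionNoArg) := \begin{cases} 0 & \text{if } \Combination{\TypeNoArg}_1(\FractionNoArg) \le \Threshold, \\ \min\bigl(\Penalty,\ \Quality_1(\Combination{\TypeNoArg}_1(\FractionNoArg)) - H(\Combination{\TypeNoArg}_1(\FractionNoArg))\bigr) & \text{otherwise.} \end{cases}\]
The case split follows the definition of $S(\Penalty)$. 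When $\Combination{\TypeNoArg}_1(\FractionNoArg) \in S(\Penalty)$, the loss is $\Quality_1 - H \le \Penalty$. When $\Combination{\TypeNoArg}_1(\FractionNoArg) > \Threshold$ but lies outside $S(\Penalty)$, we have $\Quality_1 - H > \Penalty$ and the loss is $\Penalty$. In both cases the loss equals the stated minimum.

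Next I show that $L$ is weakly increasing on $[0,1]$. Since $\LLMContent{\TypeNoArg}_1 > \HumanContent{\TypeNoArg}_1$, the map $\FractionNoArg \mapsto \Combination{\TypeNoArg}_1(\FractionNoArg)$ is strictly increasing. By hypothesis, $H(z_1) - \Quality_1(z_1)$ is weakly decreasing for $z_1 > \Threshold$, so $\Quality_1(z_1) - H(z_1)$ is weakly increasing there. Taking the minimum with the constant $\Penalty$ preserves this, so $L$ is weakly increasing on the region $\Combination{\TypeNoArg}_1 > \Threshold$. Across the threshold, I need $L \ge 0$ on the upper region. This holds because $\Quality_1(z_1) - H(z_1) > 0$ for $z_1 > \Threshold$: every candidate $z'_1 \le \Threshold$ differs from $z_1$, so (A1) gives $\Quality_1(z'_1) - \CostFn_1(z_1, z'_1) < \Quality_1(z_1)$, and the maximum defining $H$ is attained. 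Together with $\Penalty > 0$, this makes $L$ jump upward (weakly) from $0$ at the threshold.

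Finally, take any $\FractionNoArg > \FractionNoArg^*$. Then $f(\FractionNoArg) \le f(\FractionNoArg^*)$ by optimality of $\FractionNoArg^*$, and $L(\FractionNoArg) \ge L(\FractionNoArg^*)$ by monotonicity. Hence $V(\FractionNoArg) \le V(\FractionNoArg^*)$. Because users tiebreak toward lower work allocations, the chosen $\Fraction{\TypeNoArg}(\Penalty, \Threshold; \Quality, \CostFn)$ cannot exceed $\FractionNoArg^*$.

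The only delicate step is verifying the decomposition $V = f - L$ exactly, including the boundary behavior at $\FractionNoArg^{\text{game}}$ and the sign of $L$ just above the threshold. The rest is routine. The hypothesis $\Combination{\TypeNoArg}_1(\FractionNoArg^*) > \Threshold$ is not really needed for this argument; it only matches the case split in Theorem \ref{thm:detectorfinitecosts}.
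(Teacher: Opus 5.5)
Your proof is correct, and it takes a cleaner route than the paper's. The paper works inside the candidate characterization of Theorem~\ref{thm:detectorfinitecosts}. It first disposes of the case $\Fraction{\TypeNoArg}(\Penalty,\Threshold;\Quality,\CostFn)=\FractionNoArg^{\text{game}}$ directly. For an allocation in $A^{\TypeNoArg}(\Penalty)$ lying above $\FractionNoArg^*$, it forms $A' = A^{\TypeNoArg}(\Penalty)\cap[\FractionNoArg^*,1]$ and uses Lemma~\ref{lemma:apenaltyset} to show $A'$ is closed. It then uses concavity of your $f$ together with the monotonicity hypothesis to show the chosen allocation must be $\min A'$. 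Finally it observes that $\min A'$ sits on the boundary of the post-processing region, where the loss from post-processing equals exactly $\Penalty$. So its utility is $f(\min A') - \Penalty \le f(\FractionNoArg^*) - \Penalty$, which is the utility at $\FractionNoArg^*$ because $\FractionNoArg^* \notin A^{\TypeNoArg}(\Penalty)$ in that case.

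Your identity $V = f - L$, with $L = \min(\Penalty, \Quality_1 - H)$ above the threshold, merges the post-processing and pay-the-penalty branches into a single loss term. The lemma then collapses to a monotone comparative-statics fact: subtracting a weakly increasing loss from a function maximized at $\FractionNoArg^*$ cannot create a strictly better point to its right. This avoids the candidate-set restriction, the closedness lemma, and the $\min A'$ construction. It also confirms your remark that the hypothesis $\Combination{\TypeNoArg}_1(\FractionNoArg^*) > \Threshold$ is superfluous, since $L(\FractionNoArg^*) = 0 \le L(\FractionNoArg)$ otherwise.

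The only inputs you need are Lemma~\ref{lemma:detectorfinitecostspostprocessing} and the existence of an optimum from Theorem~\ref{thm:detectorfinitecosts}, so that tiebreaking selects the minimal maximizer. The paper's version buys consistency with the case-based framework it reuses elsewhere; yours is shorter and makes the role of the monotonicity hypothesis transparent.
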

\begin{proof}
 By Theorem \ref{thm:detectorfinitecosts}, 
we know that $\Fraction{\TypeNoArg}(\Penalty, \Threshold; \Quality, \CostFn) \in \left\{\FractionNoArg^*, \FractionNoArg^{\text{game}} \right\} \cup A^{\TypeNoArg}(\Penalty)$. If $\Fraction{\TypeNoArg}(\Penalty, \Threshold; \Quality, \CostFn) = \FractionNoArg^{\text{game}}$, then we know that $\FractionNoArg^{\text{game}} < \infty$ and that thus $\Combination{\TypeNoArg}_1(\FractionNoArg^*) > \Threshold = \Combination{\TypeNoArg}_1(\FractionNoArg^{\text{game}})$, which means that $\FractionNoArg^* \ge \FractionNoArg^{\text{game}} = \Fraction{\TypeNoArg}(\Penalty, \Threshold; \Quality, \CostFn)$ as desired. 

Otherwise, we know that $\Fraction{\TypeNoArg}(\Penalty, \Threshold; \Quality, \CostFn) \in A^{\TypeNoArg}(\Penalty)$. Assume for sake of contradiction that $\Fraction{\TypeNoArg}(\Penalty, \Threshold; \Quality, \CostFn) > \FractionNoArg^*$. 

We first claim that $A' := A^{\TypeNoArg}(\Penalty) \cap \left\{ \FractionNoArg \in [0,1] \mid \FractionNoArg \ge \FractionNoArg^* \right\}$ is nonempty and closed, meaning that $\min A'$ exists. The fact that it is nonempty follows from the fact that $\Fraction{\TypeNoArg}(\Penalty, \Threshold; \Quality, \CostFn) > \FractionNoArg^*$. Now, we show it is closed. Using Lemma \ref{lemma:apenaltyset}, we know that $\bar{A}^{\TypeNoArg}(\Penalty) \setminus A^{\TypeNoArg}(\Penalty) \subseteq \left\{\FractionNoArg^{\text{game}}  \right\}$. Since the intersection of two closed sets is closed, we know that $\bar{A}^{\TypeNoArg}(\Penalty) \cap \left\{ \FractionNoArg \in [0,1] \mid \FractionNoArg \ge \FractionNoArg^* \right\}$ is closed. Since $\Combination{\TypeNoArg}_1(\FractionNoArg^*) > \Threshold$, we know that  $\FractionNoArg^{\text{game}}  \not\in \bar{A}^{\TypeNoArg}(\Penalty) \cap \left\{ \FractionNoArg \in [0,1] \mid \FractionNoArg \ge \FractionNoArg^* \right\}$, which means that 
\[A' = A^{\TypeNoArg}(\Penalty) \cap \left\{ \FractionNoArg \in [0,1] \mid \FractionNoArg \ge \FractionNoArg^* \right\} = \bar{A}^{\TypeNoArg}(\Penalty) \cap \left\{ \FractionNoArg \in [0,1] \mid \FractionNoArg \ge \FractionNoArg^* \right\},\] so $A'$ is closed. 

First, we claim that $\Fraction{\TypeNoArg}(\Penalty, \Threshold; \Quality, \CostFn) = \min A'$.  We know that:  
\begin{align*}
  &\max_{z} \Utility{\TypeNoArg}(\Fraction{\TypeNoArg}(\Penalty, \Threshold; \Quality, \CostFn), \ContentNoArg; \Penalty, \Threshold, \Quality, \CostFn) \\ &=_{(A)} \sum_{i\ge 2} \left( \Quality_i(\Combination{\TypeNoArg}_i(\Fraction{\TypeNoArg}(\Penalty, \Threshold; \Quality, \CostFn)))\right) \\
  &+ \max_{z'_1 \le \Threshold} \left(\Quality_1(z'_1) - \CostFn_1((\Combination{\TypeNoArg}_1(\Fraction{\TypeNoArg}(\Penalty, \Threshold; \Quality, \CostFn))), z'_1)\right) - \ProductionCost{\TypeNoArg} \cdot (1-\Fraction{\TypeNoArg}(\Penalty, \Threshold; \Quality, \CostFn))  \\
  &= \left( \Quality(\Combination{\TypeNoArg}(\Fraction{\TypeNoArg}(\Penalty, \Threshold; \Quality, \CostFn))) - \ProductionCost{\TypeNoArg} \cdot (1-\Fraction{\TypeNoArg}(\Penalty, \Threshold; \Quality, \CostFn))\right)  \\ &+\max_{z'_1 \le \Threshold} \left(\Quality_1(z'_1) - \CostFn_1((\Combination{\TypeNoArg}_1(\Fraction{\TypeNoArg}(\Penalty, \Threshold; \Quality, \CostFn))), z'_1)\right) - \Quality_1(\Combination{\TypeNoArg}_1(\Fraction{\TypeNoArg}(\Penalty, \Threshold; \Quality, \CostFn))) \\
  &\le_{(B)}  \left( \Quality(\Combination{\TypeNoArg}(\min A')) - \ProductionCost{\TypeNoArg} \cdot (1-\min A')\right) \\
  &+\max_{z'_1 \le \Threshold} \left(\Quality_1(z'_1) - \CostFn_1((\Combination{\TypeNoArg}_1(\Fraction{\TypeNoArg}(\Penalty, \Threshold; \Quality, \CostFn))), z'_1)\right) - \Quality_1(\Combination{\TypeNoArg}_1(\Fraction{\TypeNoArg}(\Penalty, \Threshold; \Quality, \CostFn))) \\ 
  &= \max_{z} \Utility{\TypeNoArg}(\min A', \ContentNoArg; \Penalty, \Threshold, \Quality, \CostFn) - \left(\max_{z'_1 \le \Threshold} \left(\Quality_1(z'_1) - \CostFn_1((\Combination{\TypeNoArg}_1(\min A')), z'_1)\right) - \Quality_1(\Combination{\TypeNoArg}_1(\min A')) \right) \\
  &+\max_{z'_1 \le \Threshold} \left(\Quality_1(z'_1) - \CostFn_1((\Combination{\TypeNoArg}_1(\Fraction{\TypeNoArg}(\Penalty, \Threshold; \Quality, \CostFn))), z'_1)\right) - \Quality_1(\Combination{\TypeNoArg}_1(\Fraction{\TypeNoArg}(\Penalty, \Threshold; \Quality, \CostFn)))
\end{align*}
where (A) uses  Lemma \ref{lemma:detectorfinitecostspostprocessing}, and (B) uses the fact that 
$\Fraction{\TypeNoArg}(\Penalty, \Threshold; \Quality, \CostFn) \ge \min A' \ge \FractionNoArg^*$ and $\Quality(\Combination{\TypeNoArg}(\FractionNoArg)) - \ProductionCost{\TypeNoArg}(1 - \FractionNoArg)$ is concave and maximized at $\FractionNoArg^*$. Because of the tiebreaking rule, it now suffices to show that $\max_{z'_1 \le \Threshold} \left(\Quality_1(z'_1) - \CostFn_1((\Combination{\TypeNoArg}_1(\Fraction{\TypeNoArg}(\Penalty, \Threshold; \Quality, \CostFn))), z'_1)\right) - \Quality_1(\Combination{\TypeNoArg}_1(\Fraction{\TypeNoArg}(\Penalty, \Threshold; \Quality, \CostFn)))$ is upper bounded by \\
$\max_{z'_1 \le \Threshold} \left(\Quality_1(z'_1) - \CostFn_1(\Combination{\TypeNoArg}_1(\min A'), z'_1)\right) - \Quality_1(\Combination{\TypeNoArg}_1(\min A'))$. This follows from the assumption that $\max_{z'_1 \le \Threshold} \left(\Quality_1(z'_1) - \CostFn_1(z_1, z'_1)\right) - Q_1(z_1)$ is weakly decreasing in $z_1$ for all $z_1 > \Threshold$. 

Next, we claim that $\max_{z} \Utility{\TypeNoArg}(\min A', \ContentNoArg; \Penalty, \Threshold, \Quality, \CostFn) \le \max_{z} \Utility{\TypeNoArg}(\FractionNoArg^*, \ContentNoArg; \Penalty, \Threshold, \Quality, \CostFn)$. By assumption, we know that $\min A' \neq \FractionNoArg^*$ and so $((\min A') - \epsilon) \not\in A^{\TypeNoArg}(\Penalty)$ for sufficiently small $\epsilon > 0$. We know that: 
\[ A^{\TypeNoArg}(\Penalty) = \left\{ \FractionNoArg \in[0,1] \mid \Combination{\TypeNoArg}_1(\FractionNoArg) >  \Threshold, \Quality_1(\Combination{\TypeNoArg}(\FractionNoArg)) - \max_{z'_1 \le \Threshold} \left(\Quality_1(z'_1) -  \CostFn_1(\Combination{\TypeNoArg}_1(\FractionNoArg), z'_1) \right) \le \Penalty \right\}.\]
Using that $\Combination{\TypeNoArg}_1(\min A')) > \Combination{\TypeNoArg}_1(\FractionNoArg^*) > \Threshold$ by assumption, we know that $\Combination{\TypeNoArg}_1((\min A') - \epsilon)  > \Threshold$ for sufficiently small $\epsilon > 0$. 
Since $\Quality_1(\Combination{\TypeNoArg}(\FractionNoArg)) - \max_{z'_1 \le \Threshold} \left(\Quality_1(z'_1) -  \CostFn_1(\Combination{\TypeNoArg}(\FractionNoArg), z'_1) \right)$ is continuous in $\FractionNoArg$ (the continuity of the max follows from Berge's maximum theorem), this altogether means that $\Quality_1(\Combination{\TypeNoArg}(\min A')) - \max_{z'_1 \le \Threshold} \left(\Quality_1(z'_1) -  \CostFn_1(\Combination{\TypeNoArg}(\min A'), z'_1) \right) = \Penalty$. Thus, using Lemma \ref{lemma:detectorfinitecostspostprocessing}, we know that:
\begin{align*}
  &\max_{z} \Utility{\TypeNoArg}(\min A', \ContentNoArg; \Penalty, \Threshold, \Quality, \CostFn) \\ &=  \left( \Quality(\Combination{\TypeNoArg}(\min A')) - \ProductionCost{\TypeNoArg} \cdot (1-\min A')\right) - \Quality_1(\Combination{\TypeNoArg}_1(\min A')) + \max_{z'_1 \le \Threshold} \left(\Quality_1(z'_1)  -  \CostFn_1(\Combination{\TypeNoArg}_1(\min A'), z'_1) \right) \\
  &= \left( \Quality(\Combination{\TypeNoArg}(\min A')) - \ProductionCost{\TypeNoArg} \cdot (1-\min A')\right) - \Penalty \\
  &\le_{(C)} \left( \Quality(\Combination{\TypeNoArg}(\FractionNoArg^*)) - \ProductionCost{\TypeNoArg} \cdot (1-\FractionNoArg^*)\right) - \Penalty \\
  &=_{(D)} \max_{z} \Utility{\TypeNoArg}(\FractionNoArg^*, \ContentNoArg; \Penalty, \Threshold, \Quality, \CostFn)
\end{align*}
where (C) uses the fact that 
$\min A' > \FractionNoArg^*$ and $\Quality(\Combination{\TypeNoArg}(\FractionNoArg)) - \ProductionCost{\TypeNoArg}(1 - \FractionNoArg)$ is concave and maximized at $\FractionNoArg^*$, and (D) uses the fact that $\FractionNoArg^* \not\in A^{\TypeNoArg}(\Penalty)$ since $\min A' \neq \FractionNoArg^*$. However, based on the tiebreaking rule, this would mean that $\Fraction{\TypeNoArg}(\Penalty, \Threshold; \Quality, \CostFn) = \FractionNoArg^*$, which is a contradiction. 
\end{proof}

We prove Theorem \ref{thm:usagedecrease}. 
\begin{proof}[Proof of Theorem \ref{thm:usagedecrease}]

Note that the condition that there does not exist $z^u_1 > z^l_1  > \Threshold$ such that \eqref{eq:condition} holds is equivalent to the condition that $\max_{z'_1 \le \Threshold} \left(\Quality_1(z'_1) - \CostFn_1(z_1, z'_1)\right) - Q_1(z_1)$ is weakly decreasing in $z_1$ for all $z_1 > \Threshold$. It suffices to prove that if gaming costs $\CostFn$ are infinite or $\max_{z'_1 \le \Threshold} \left(\Quality_1(z'_1) - \CostFn_1(z_1, z'_1)\right) - Q_1(z_1)$ is weakly decreasing in $z_1$ for all $z_1 > \Threshold$, then the presence of the detector weakly decreases LLM usage for all types $\TypeNoArg$ and all penalties $\Penalty > 0$. We split into cases based on the value of $\Combination{\TypeNoArg}_1(\FractionNoArg^*)$.

\paragraph{Case 1: $\Combination{\TypeNoArg}_1(\FractionNoArg^*) \le \Threshold$.} In this case, we know that $\Fraction{\TypeNoArg}(\Penalty, \Threshold; \Quality, \CostFn) = \FractionNoArg^*$ in the case of finite gaming costs (by Theorem \ref{thm:detectorfinitecosts}) and infinite gaming costs (by Theorem \ref{thm:infinitecostdetector}). This, coupled with Theorem \ref{thm:nodetector}, implies that: \[\Fraction{\TypeNoArg}(\Penalty, \Threshold; \Quality, \CostFn) = \FractionNoArg^* = \Fraction{\TypeNoArg}(\emptyset; \Quality, \CostFn) \]
as desired. 

\paragraph{Case 2: $\Combination{\TypeNoArg}_1(\FractionNoArg^*) > \Threshold$.} We know that:
\[\Fraction{\TypeNoArg}(\Penalty, \Threshold; \Quality, \CostFn) \le_{(A)} \FractionNoArg^* =_{(B)} \Fraction{\TypeNoArg}(\emptyset; \Quality, \CostFn) \]
where (A) follows from Lemma \ref{lemma:infinitellmusage} for the case of infinite costs and Lemma \ref{lemma:increasingllmusage} for the case where $\max_{z'_1 \le \Threshold} \left(\Quality_1(z'_1) - \CostFn_1(z_1, z'_1)\right) - Q_1(z_1)$ is weakly decreasing in $z_1$ for all $z_1 > \Threshold$, and (B) follows from Theorem \ref{thm:nodetector}.

\end{proof}

\subsection{Proof of Corollary \ref{cor:monotoneusage}}\label{appendix:cordecrease}

Besides Theorem \ref{thm:usagedecrease}, another key lemma is the following structural property of $\max_{z'_1 \le \Threshold} (\Quality_1(z'_1) - \CostFn_1(z_1, z'_1)) - \Quality_1(z_1)$. 
\begin{lemma}
\label{lemma:maxshifteddecreasing}
Fix $\Quality$ and finite costs $\CostFn$. Suppose that either (1) $\Quality_1$ is increasing, or (2) $\frac{\partial^2 \CostFn_1 (z_1, z'_1)}{\partial z_1 \partial z'_1} \le 0$ for all $z'_1 \le z_1$. Then, it holds that $\max_{z'_1 \le \Threshold} (\Quality_1(z'_1) - C(z_1, z'_1)) - \Quality_1(z_1)$ is weakly decreasing in $z_1$ for $z_1 > \Threshold$.
\end{lemma}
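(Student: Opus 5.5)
In both cases I will first pin down the post-processing target $G$. Lemma \ref{lemma:gthreshold} covers case (1) and Lemma \ref{lemma:gthresholdmixed} covers case (2), and each gives $G(z_1) = \Threshold$ for all $z_1 \ge \Threshold$. Because the maximum over $z'_1 \le \Threshold$ is attained at $z'_1 = \Threshold$, the quantity of interest simplifies to
\[
H(z_1) := \max_{z'_1 \le \Threshold} \left(\Quality_1(z'_1) - \CostFn_1(z_1, z'_1)\right) - \Quality_1(z_1) = \Quality_1(\Threshold) - \CostFn_1(z_1, \Threshold) - \Quality_1(z_1).
\]
It then suffices to show that $z_1 \mapsto \CostFn_1(z_1, \Threshold) + \Quality_1(z_1)$ is weakly increasing on $z_1 > \Threshold$.

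\textbf{Case (1).} Suppose $\Quality_1$ is (weakly) increasing. By (A2), $\CostFn_1(z_1, \Threshold)$ is strictly increasing in $z_1$. The sum of an increasing function and a weakly increasing function is increasing, so $H$ is weakly decreasing, and this case is done.

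\textbf{Case (2).} Here $\Quality_1$ may be decreasing, so I cannot simply add monotone terms. I will compare points $\Threshold < z_1 < z''_1$ and aim to show
\[
\Quality_1(z_1) - \Quality_1(z''_1) \le \CostFn_1(z''_1, \Threshold) - \CostFn_1(z_1, \Threshold).
\]
Applying (A1) with original point $z''_1$ and target $z_1 < z''_1$ gives $\Quality_1(z_1) - \Quality_1(z''_1) < \CostFn_1(z''_1, z_1)$. So it is enough to prove the ``triangle-type'' inequality
\[
\CostFn_1(z''_1, z_1) \le \CostFn_1(z''_1, \Threshold) - \CostFn_1(z_1, \Threshold).
\]
By (A3), $\CostFn_1(z_1, z_1) = 0$, so the left side equals $\CostFn_1(z''_1, z_1) - \CostFn_1(z_1, z_1)$. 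The inequality therefore becomes
\[
\CostFn_1(z''_1, z_1) - \CostFn_1(z_1, z_1) \le \CostFn_1(z''_1, \Threshold) - \CostFn_1(z_1, \Threshold).
\]
This is a supermodularity-type statement, and I will handle it exactly as in the proof of Lemma \ref{lemma:gthresholdmixed}. I write each side as an integral in the first argument, $\int_{z_1}^{z''_1} \nabla_1 \CostFn_1(x, y)\,dx$, at $y = z_1$ and $y = \Threshold$ respectively. The sign of the mixed partial says that $\nabla_1 \CostFn_1(x, y)$ is weakly decreasing in $y$. Since $\Threshold < z_1$, we get $\nabla_1 \CostFn_1(x, z_1) \le \nabla_1 \CostFn_1(x, \Threshold)$, and integrating over $x \in [z_1, z''_1]$ yields the claim.

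\textbf{Main obstacle.} The main thing to check is the domain of the integration. Every point $(x, y)$ used must satisfy $y \le x$, which holds because $\Threshold < z_1 \le x$. The integration relies on $\CostFn_1$ being $C^2$ up to the boundary, which is assumed. With that in place, chaining the (A1) inequality with the triangle-type inequality shows that $H$ is weakly decreasing in case (2) as well.
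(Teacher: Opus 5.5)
Your proof is correct and has the same skeleton as the paper's. Both reduce to $G(z_1)=\nu$ via Lemmas~\ref{lemma:gthreshold} and~\ref{lemma:gthresholdmixed}, so the quantity becomes $H(z_1)=Q_1(\nu)-C_1(z_1,\nu)-Q_1(z_1)$, and both then treat the two cases separately. Your case (1) is the paper's argument, except that you apply (A2) directly rather than going through Lemma~\ref{lemma:maxweaklydecreasing}.

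The only real difference is in case (2). The paper argues pointwise. It differentiates $H$ and extracts from (A1) and (A3) the first-order inequality $-\nabla_1 C_1(x,x)-Q_1'(x)\le 0$. This holds because $y\mapsto Q_1(x)-C_1(y,x)-Q_1(y)$ is maximized at the endpoint $y=x$ of $\{y\ge x\}$. The mixed-partial sign then gives $H'(x)=-\nabla_1 C_1(x,\nu)-Q_1'(x)\le -\nabla_1 C_1(x,x)-Q_1'(x)\le 0$.

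You use the same two ingredients in integrated, finite-difference form. You apply (A1) directly to the pair $(z_1'',z_1)$, and you integrate the mixed-partial sign over the rectangle $[z_1,z_1'']\times[\nu,z_1]$, mirroring the proof of Lemma~\ref{lemma:gthresholdmixed}. Your domain check $\nu<z_1\le x$ is exactly what is needed for the mixed-partial hypothesis to apply on that rectangle.

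Your version avoids the one-sided-derivative step at the diagonal. Because (A1) is strict, it also yields strict decrease of $H$, whereas the paper's version only gives weak decrease. The paper's version is shorter and makes explicit that the crux is comparing the marginal cost on the diagonal with $-Q_1'$.
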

\begin{proof}
We split into two cases: (1) $\Quality_1$ is weakly increasing, and (2) $\frac{\partial^2 \CostFn (z_1, z'_1)}{\partial z_1 \partial z'_1} \le 0$ for all $z'_1 \le z_1$

\paragraph{Case 1: $\Quality_1$ is weakly increasing.} 
By Lemma \ref{lemma:gthreshold}, we know that $G(z_1) = \Threshold$ for all $z_1 \ge \Threshold$. This means that for $z''_1 \le z_1$, it holds that:
    \begin{align*}
    &\max_{z'_1 \le \Threshold} \left(\Quality_1(z'_1) - \CostFn_1(z_1, z'_1)\right) - \Quality_1(z_1) \\
    &\le_{(A)} \max_{z'_1 \le \Threshold} \left(\Quality_1(z'_1) - \CostFn_1(z''_1, z'_1)\right) - \Quality_1(z_1) \\
     &\le_{(B)} \max_{z'_1 \le \Threshold} \left(\Quality_1(z'_1) - \CostFn_1(z''_1, z'_1)\right) - \Quality_1(z''_1)
    \end{align*}
    where (A) follows from Lemma \ref{lemma:maxweaklydecreasing}, and (B) follows from the fact that $\Quality_1$ is weakly increasing. 

\paragraph{Case 2: $\frac{\partial^2 \CostFn_1 (z_1, z'_1)}{\partial z_1 \partial z'_1} \le 0$ for all $z'_1 \le z_1$.} By Lemma \ref{lemma:gthresholdmixed}, we know that $G(z_1) = \Threshold$ for all $z_1 \ge \Threshold$. This means that it suffices to show that $\Quality_1(\Threshold) -\CostFn_1(z_1, \Threshold) - \Quality_1(z_1)$ is weakly decreasing in $z_1$. This is differentiable and we take the derivative to obtain $-\nabla_1 \CostFn_1(x, \Threshold) -\Quality'_1(x)$. Since $\Quality_1(x) - \CostFn_1(y, x) - \Quality_1(y) < 0$ for $x < y$ by (A1) and since $\CostFn_1(x, x) = 0$ for all $x$, we know that the function $y \mapsto \Quality_1(x) - \CostFn_1(y, x) - \Quality_1(y)$ is uniquely maximized at $y = x$ in the domain $y \ge x$. Since we know that $\CostFn_1$ is continuously differentiable up to the boundary, this means that $-\nabla_1 \CostFn_1(x, x) - \Quality'_1(x) \le 0$. Now, we use the mixed partial condition to conclude that:
\[-\nabla_1 \CostFn_1(x, \Threshold) - \Quality'_1(x) \le -\nabla_1 \CostFn_1(x, x) - \Quality'_1(x) \le 0\]
as desired. 

\end{proof}

Corollary \ref{cor:monotoneusage} now follows from Theorem \ref{thm:usagedecrease} and Lemma \ref{lemma:maxshifteddecreasing}.
\begin{proof}[Proof of Corollary \ref{cor:monotoneusage}]
If gaming costs $\CostFn = \CostFn^{\infty}$ are infinite, then the result follows directly from Theorem \ref{thm:usagedecrease}. If gaming costs $\CostFn$ satisfy the mixed partial condition $\frac{\partial^2 \CostFn_1 (z_1, z'_1)}{\partial z_1 \partial z'_1} \le 0$ for all $z'_1 \le z_1$ or if  $\Quality_1$ is weakly increasing, then by Lemma \ref{lemma:maxshifteddecreasing}, we know that $\max_{z'_1 \le \Threshold} (\Quality_1(z'_1) - \CostFn_1(z_1, z'_1)) - \Quality_1(z_1)$ is weakly decreasing in $z_1$ for $z_1 > \Threshold$. This means that there does not exist $z^u_1 > z^l_1  > \Threshold$ such that \eqref{eq:condition} holds. By Theorem \ref{thm:usagedecrease}, this means that the presence of the detector weakly decreases LLM usage. 
\end{proof}

\section{Additional material and Proofs for Section \ref{subsec:quality}}\label{appendix:quality}

\subsection{Additional result}

We show an analogue of Theorem \ref{thm:qualitydecrease} for quality increases and quality staying constant. 
\begin{theorem}
\label{thm:qualitygeneral}
Consider the setup of Theorem \ref{thm:qualitydecrease}. The presence of the detector increases quality for some user type $\TypeNoArg_I$: that is,  $\Quality( \FinalContent{\TypeNoArg_I}(\Penalty, \Threshold; \Quality, \CostFn)) > \Quality( \FinalContent{\TypeNoArg_I}(\emptyset; \Quality, \CostFn))$. It also induces no change in quality for some user type $\TypeNoArg_N$: that is, $\Quality( \FinalContent{\TypeNoArg_N}(\Penalty, \Threshold; \Quality, \CostFn)) = \Quality( \FinalContent{\TypeNoArg_N}(\emptyset; \Quality, \CostFn))$. 
\end{theorem}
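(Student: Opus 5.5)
The type $\TypeNoArg_N$ with no quality change is immediate, so I would handle it first. Take any type with $\HumanContent{\TypeNoArg}_1 < \LLMContent{\TypeNoArg}_1 \le \Threshold$. Then $\Combination{\TypeNoArg}_1(\FractionNoArg) \le \Threshold$ for all $\FractionNoArg$, and in particular $\Combination{\TypeNoArg}_1(\FractionNoArg^*) \le \Threshold$. By Theorem \ref{thm:detectorfinitecosts} (finite costs) or Theorem \ref{thm:infinitecostdetector} (infinite costs), the user picks $\FractionNoArg^*$ and does no post-processing, so the output is $\Combination{\TypeNoArg}(\FractionNoArg^*)$. By Theorem \ref{thm:nodetector} this is exactly the no-detector output, so the two qualities coincide.

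For $\TypeNoArg_I$ the plan is the mirror image of the construction behind Theorem \ref{thm:qualitydecrease}. We want the LLM to \emph{lower} quality on a non-detected dimension, while production costs still push the user to $\FractionNoArg^* = 1$ in the baseline. The detector should then make the user retreat to $\FractionNoArg^{\text{game}} < 1$, which raises quality.

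\textbf{Construction.} Set $\HumanContent{\TypeNoArg}_1 < \Threshold < \LLMContent{\TypeNoArg}_1 = \Threshold + \delta$, with $\delta > 0$ to be chosen. On the non-constant dimension $i'$, choose the coordinates so that $\Quality_{i'}$ has slope $s \neq 0$ at $\HumanContent{\TypeNoArg}_{i'}$ and $\Quality_{i'}(\LLMContent{\TypeNoArg}_{i'}) < \Quality_{i'}(\HumanContent{\TypeNoArg}_{i'})$. Make this gap large enough that $\FractionNoArg \mapsto \Quality(\Combination{\TypeNoArg}(\FractionNoArg))$ is strictly decreasing on $[\FractionNoArg^{\text{game}}, 1]$, even accounting for any gain along dimension $1$ when $\Quality_1$ is increasing. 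Set all other coordinates equal. Finally pick $\ProductionCost{\TypeNoArg}$ just large enough that the derivative of $\Quality(\Combination{\TypeNoArg}(\FractionNoArg)) - \ProductionCost{\TypeNoArg}(1-\FractionNoArg)$ is nonnegative at $\FractionNoArg = 1$. By concavity this gives $\FractionNoArg^* = 1$, hence $\Fraction{\TypeNoArg}(\emptyset) = 1$, and the baseline output is $\LLMContent{\TypeNoArg}$.

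\textbf{Showing the detector moves the user to $\FractionNoArg^{\text{game}}$.}
\begin{itemize}
\item Infinite costs: Theorem \ref{thm:infinitecostdetector} restricts the choice to $\{1, \FractionNoArg^{\text{game}}\}$. As $\delta \to 0$, the utility loss from dropping to $\FractionNoArg^{\text{game}}$ is $O(\delta)$, whereas staying at $1$ costs $\Penalty$. So $\FractionNoArg^{\text{game}}$ wins for small $\delta$.
\item Finite costs: by Lemmas \ref{lemma:gthreshold} and \ref{lemma:gthresholdmixed}, $G \equiv \Threshold$, so post-processing always lands at $z_1 = \Threshold$. On $A^{\TypeNoArg}(\Penalty)$ the indirect utility is then $\sum_{i\ge2}\Quality_i(\Combination{\TypeNoArg}_i(\FractionNoArg)) + \Quality_1(\Threshold) - \CostFn_1(\Combination{\TypeNoArg}_1(\FractionNoArg),\Threshold) - \ProductionCost{\TypeNoArg}(1-\FractionNoArg)$. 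Near $\FractionNoArg^{\text{game}}$ its derivative in $\FractionNoArg$ is approximately $\big[\Quality_{i'}$-slope term $+ \ProductionCost{\TypeNoArg}\big] - \nabla_1\CostFn_1(\Threshold,\Threshold)(\LLMContent{\TypeNoArg}_1-\HumanContent{\TypeNoArg}_1)$.
\item The hypothesis $\nabla_1\CostFn_1(\Threshold,\Threshold) > \max(0,-\Quality_1'(\Threshold))$ leaves a strict gap between this derivative and the no-detector derivative $\Quality_1'(\Threshold)(\LLMContent{\TypeNoArg}_1-\HumanContent{\TypeNoArg}_1) + [\cdot]$. So I can choose $\ProductionCost{\TypeNoArg}$ inside this window: the baseline derivative is $\ge 0$ (giving $\FractionNoArg^*=1$), while the post-processing branch is strictly decreasing on $[\FractionNoArg^{\text{game}},1]$ (so every post-processing choice is beaten by $\FractionNoArg^{\text{game}}$).
\item Taking $\delta$ small keeps $\Combination{\TypeNoArg}_1$ uniformly near $\Threshold$, so the derivative approximations are valid by continuity of $\nabla_1\CostFn_1$. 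It also makes the detected-without-post-processing branch lose by roughly $\Penalty$.
\end{itemize}
Theorem \ref{thm:detectorfinitecosts} then gives $\Fraction{\TypeNoArg}(\Penalty,\Threshold) = \FractionNoArg^{\text{game}}$ with output $\Combination{\TypeNoArg}(\FractionNoArg^{\text{game}})$. Its quality strictly exceeds $\Quality(\LLMContent{\TypeNoArg})$ because $\Quality(\Combination{\TypeNoArg}(\FractionNoArg))$ is strictly decreasing.

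\textbf{Main obstacle.} The hardest step is choosing $\ProductionCost{\TypeNoArg}$, the $i'$-gap, and $\delta$ consistently in the finite-cost case. Three requirements must hold at once: $\FractionNoArg^* = 1$ in the baseline, quality strictly decreasing in $\FractionNoArg$, and post-processing unprofitable. These pull in different directions. I would first fix $\delta$, then scale the $i'$-gap and $\ProductionCost{\TypeNoArg}$ together, so that both derivatives stay proportional to $(\LLMContent{\TypeNoArg}_1-\HumanContent{\TypeNoArg}_1)$. The strict inequality $\nabla_1\CostFn_1(\Threshold,\Threshold) > -\Quality'_1(\Threshold)$ then provides a window of positive width in which to place $\ProductionCost{\TypeNoArg}$.
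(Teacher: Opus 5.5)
Your proposal is correct and is essentially the paper's argument: $\TypeNoArg_N$ is handled the same way (the detector never binds when $\LLMContent{\TypeNoArg}_1 \le \Threshold$), and for $\TypeNoArg_I$ the paper uses the same mirrored type---$\HumanContent{\TypeNoArg}_1 = \Threshold - 1$, $\LLMContent{\TypeNoArg}_1$ just above $\Threshold$, the LLM worse on coordinate $i'$, $\ProductionCost{\TypeNoArg}$ large enough to force $\FractionNoArg^* = 1$, and the margin $\nabla_1\CostFn_1(\Threshold,\Threshold) + \Quality_1'(\Threshold) > 0$ to make post-processing lose---differing only in bookkeeping. The paper ties the $i'$-coordinates to $\epsilon_1$ through a gap $\epsilon_2 \sim \epsilon_1^{3/4}$, only excludes $\FractionNoArg \ge \bar{\FractionNoArg}_3$, and then checks quality on the remaining post-processed candidates, whereas your derivative window with a fixed $i'$-gap pins the choice to exactly $\FractionNoArg^{\text{game}}$.

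Two small repairs are needed. First, the derivative at $\FractionNoArg = 1$ must be strictly positive (take $\ProductionCost{\TypeNoArg}$ in the interior of your window rather than ``just large enough''), since ties break toward smaller $\FractionNoArg$ and a flat concave objective would give $\FractionNoArg^* < 1$. Second, $\delta$ must be sent to $0$ after the $i'$-coordinates are fixed, since the curvature loss $\sup|\Quality_{i'}''|\,(\LLMContent{\TypeNoArg}_{i'} - \HumanContent{\TypeNoArg}_{i'})^2\,\delta/(\LLMContent{\TypeNoArg}_1 - \HumanContent{\TypeNoArg}_1)$ has to stay below the window width $\approx (\nabla_1\CostFn_1(\Threshold,\Threshold) + \Quality_1'(\Threshold))(\LLMContent{\TypeNoArg}_1 - \HumanContent{\TypeNoArg}_1)$.
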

Together, Theorem \ref{thm:qualitydecrease} and Theorem \ref{thm:qualitygeneral} illustrate that LLM detection can have a highly heterogeneous impact on output quality across users. This casts doubt about whether LLM detection is an effective intervention to improve output quality. 

\subsection{Proofs}
To prove Theorem \ref{thm:qualitydecrease} and Theorem \ref{thm:qualitygeneral}, we handle the infinite cost and finite cost cases separately.

\subsubsection{Infinite costs}

We split into sublemmas for quality increase, quality decrease, and constant quality. 

\begin{lemma}
\label{lemma:qualityinfiniteincrease}
Fix $\Penalty > 0$, $\Threshold < \infty$, $\Quality$, and infinite costs $\CostFn = \CostFn^{\infty}$. Suppose also  $D \ge 2$, that there exists a dimension $2 \le i' \le D$ such that $\Quality_{i'}$ is non-constant. Then, the presence of the detector increases quality for some user type $\TypeNoArg_I$,
\[ \Quality( \FinalContent{\TypeNoArg_I}(\Penalty, \Threshold; \Quality, \CostFn)) > \Quality( \FinalContent{\TypeNoArg_I}(\emptyset; \Quality, \CostFn)).\]
\end{lemma}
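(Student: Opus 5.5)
We need to construct a type where, with infinite gaming costs, detection increases quality. With infinite costs there is no post-processing, so Theorem \ref{thm:infinitecostdetector} gives $\FinalContent{\TypeNoArg}(\Penalty,\Threshold;\Quality,\CostFn^{\infty}) = \Combination{\TypeNoArg}(\Fraction{\TypeNoArg}(\Penalty,\Threshold;\Quality,\CostFn^{\infty}))$, and the detector can only change quality by shifting the work allocation from $\FractionNoArg^*$ to $\FractionNoArg^{\text{game}}$. The plan is to build a type whose human output is better in a non-detected dimension $i'$, but whose low production cost pushes $\FractionNoArg^*$ beyond the threshold. The detector then forces the user back to $\FractionNoArg^{\text{game}}$, and quality rises.

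\textbf{Construction.} Pick a point $x$ with $\Quality'_{i'}(x) \neq 0$; one exists because $\Quality_{i'}$ is non-constant. Set $\HumanContent{\TypeNoArg_I}_{i'} = x$ and $\LLMContent{\TypeNoArg_I}_{i'} = x - \delta\,\mathrm{sign}(\Quality'_{i'}(x))$ for a small $\delta>0$, so the LLM is strictly worse in dimension $i'$. Set all other coordinates $i \ge 2$, $i\neq i'$, equal for the two outputs. Set $\HumanContent{\TypeNoArg_I}_1 = \Threshold - 1$ and $\LLMContent{\TypeNoArg_I}_1 = \Threshold + 1$, so $\FractionNoArg^{\text{game}} = 1/2$.

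Choose $\ProductionCost{\TypeNoArg_I}$ large, so that $\FractionNoArg^* = 1$. The derivative of $\Quality(\Combination{\TypeNoArg_I}(\FractionNoArg)) - \ProductionCost{\TypeNoArg_I}(1-\FractionNoArg)$ equals $\ProductionCost{\TypeNoArg_I}$ plus a bounded term from $\Quality_1$ and $\Quality_{i'}$ over a compact segment. For $\ProductionCost{\TypeNoArg_I}$ large this derivative is strictly positive on $[0,1]$, so $\FractionNoArg^*=1$ and $\Combination{\TypeNoArg_I}_1(\FractionNoArg^*) = \Threshold+1 > \Threshold$.

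\textbf{Switching condition.} By Theorem \ref{thm:infinitecostdetector}, the detected choice lies in $\{\FractionNoArg^*,\FractionNoArg^{\text{game}}\}$. The user picks $\FractionNoArg^{\text{game}}$ whenever the no-detector indirect utility gap $[\Quality(\Combination{}(1)) - 0] - [\Quality(\Combination{}(1/2)) - \ProductionCost{\TypeNoArg_I}/2]$ is less than $\Penalty$. This is the main obstacle: a large $\ProductionCost{\TypeNoArg_I}$ makes $\FractionNoArg^*=1$ but also makes the gap large, since the gap contains $\ProductionCost{\TypeNoArg_I}/2$. The penalty $\Penalty$ is fixed, so I cannot simply raise it.

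\textbf{Resolving the obstacle by scaling.} I will instead shrink the geometry. Replace the coordinates by $\HumanContent{}_1 = \Threshold - \eta$ and $\LLMContent{}_1 = \Threshold + \eta$, and use gap $\eta\delta$-scale differences in dimension $i'$ as well. All derivatives of quality along the segment are then $O(\eta)$. It therefore suffices to take $\ProductionCost{\TypeNoArg_I} = K\eta$ with $K$ exceeding the bound on $|\Quality'_1|$ and $|\Quality'_{i'}|$ near the relevant points, which keeps the derivative positive and gives $\FractionNoArg^*=1$. The utility gap is then $O(\eta)$, which is below $\Penalty$ once $\eta$ is small.

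\textbf{Quality comparison.} Without the detector, quality is $\Quality(\LLMContent{\TypeNoArg_I})$; with it, quality is $\Quality(\Combination{\TypeNoArg_I}(1/2))$. To make the second strictly larger, I need the dimension-$i'$ loss to dominate any dimension-1 change. Two sources of trouble must be controlled:

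\begin{itemize}
\item \textbf{The $O(\eta)$ change in $\Quality_1$.} To kill it, I choose the dimension-$i'$ gap of order $\eta$ with a larger constant, using $|\Quality'_{i'}(x)|>0$. If $\Quality_1$ is decreasing, moving down in $z_1$ only helps. If $\Quality_1$ is increasing, I bound the dimension-1 loss by $\sup|\Quality'_1|\cdot\eta$ and set the $i'$-gap to $c\eta$ with $c|\Quality'_{i'}(x)|/2 > \sup|\Quality'_1|$. $K$ is then chosen after $c$.
\item \textbf{Concavity of $\Quality_{i'}$.} Concavity could reduce the dimension-$i'$ gain relative to the linear estimate. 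This is handled by a first-order Taylor argument for small $\eta$.
\end{itemize}
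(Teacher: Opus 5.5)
Your proof is correct, but it chooses the parameters differently from the paper. The skeleton is the same: a type with $\alpha^*=1$ and $m_1(1)>\nu$, human output better in dimension $i'$, and three checks fed into the characterization $\alpha^{t}(\beta,\nu)\in\{\alpha^*,\alpha^{\text{game}}\}$. The checks are $\alpha^*=1$, $Q(m(\alpha^{\text{game}}))>Q(m(1))$, and $\alpha^{\text{game}}$ beating $1$ under detection.

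\textbf{The paper's route.} It places $\ell_1=\nu+\epsilon_1$ just above the threshold and $h_1=\nu-1$ far below, so $\alpha^{\text{game}}=1/(1+\epsilon_1)\to 1$. It stretches dimension $i'$ via $h_{i'}=b+(a-b)(1+\epsilon_1)/\epsilon_1$ so that $m_{i'}(\alpha^{\text{game}})=a$ exactly. The production cost is then of order $1/\epsilon_1$. The saving $\gamma\,\epsilon_1/(1+\epsilon_1)$ must fall in a window of width about $\beta$ whose endpoints are $O(1)$. That is why the paper needs the curvature bound on $Q_{i'}$, the Taylor term for $Q_1$, and a two-sided condition analysis.

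\textbf{Your route.} You shrink the whole type homothetically around $(\nu,x)$, with $\alpha^{\text{game}}=1/2$ fixed and $\gamma=K\eta$. The cost needed for $\alpha^*=1$ scales with the geometry, so every competing quantity is $\Theta(\eta)$ while $\beta$ stays fixed. The switching inequality is then automatic: since $Q(m(1))<Q(m(1/2))$, the utility gap is below $K\eta/2$, which is below $\beta$ once $\eta<2\beta/K$. Only the one-sided choices of $c$ and then $K$ remain, with $\eta$ chosen last. This is simpler. The price is that your quality increase is only of order $\eta$, whereas the paper's stays bounded away from zero as $\epsilon_1\to 0$. That does not matter for an existence statement.

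\textbf{Two small comments.}
\begin{enumerate}
\item The obstacle you describe is really a curvature effect. Write $f(\alpha)=Q(m(\alpha))$ and take $\gamma$ just above $-f'(1)$. Then the gap equals the concavity deficit $f(1)-f(1/2)-f'(1)/2\ge 0$ plus slack, which vanishes for linear $Q$. Your scaling removes it rather than bounding it.
\item Your Taylor step for $Q_{i'}$ is unnecessary. By the mean value theorem and monotonicity of $Q'_{i'}$, the gain $Q_{i'}(m_{i'}(1/2))-Q_{i'}(\ell_{i'})$ is at least $c\eta|Q'_{i'}(x)|/2$ exactly, so concavity works in your favor there.
\end{enumerate}
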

\begin{proof}
We let $\HumanContent{\TypeNoArg_I}_i = \LLMContent{\TypeNoArg_I}_i$ for all $i \neq 1, i'$. Since $\Quality_{i'}$ is non-constant and twice-differentiable concave, for every value $\epsilon$, there exists $b_{i'}(\epsilon)$ and $a_{i'}(\epsilon)$ such that $\sup_{z_i \in \text{conv}(a_{i'}(\epsilon), b_{i'}(\epsilon)))} \left|Q''_{i'}(z_i) \right| \cdot (b_{i'}(\epsilon) - a_{i'}(\epsilon))^2 < \epsilon$ and $\Quality_{i'}(a_{i'}(\epsilon)) \in (\Quality_{i'}(b_{i'}(\epsilon)), \Quality_{i'}(b_{i'}(\epsilon))+ \epsilon)$.

 Let $\TypeNoArg_I$ be such that $\HumanContent{\TypeNoArg_I}_1 = \Threshold - 1$, and let $\LLMContent{\TypeNoArg_I}_1 = \Threshold + \epsilon_1$ for an $\epsilon_1 > 0$ that we will set later. We let 
$\LLMContent{\TypeNoArg_I}_{i'} = b_{i'}(\epsilon_2)$ and we let $\HumanContent{\TypeNoArg_I}_{i'} = b_{i'}(\epsilon_2) + (a_{i'}(\epsilon_2) - b_{i'}(\epsilon_2)) \cdot \frac{1 + \epsilon_1}{\epsilon_1}$ for $\epsilon_2 > 0$ we will set later. We will construct $\ProductionCost{\TypeNoArg_I}$ later. 

\paragraph{Utility without the detector.}
For $\ProductionCost{\TypeNoArg_I} > -\Quality'_1(\Threshold + \epsilon_1) (1+ \epsilon_1) -\Quality'_{i'}(b_{i'}(\epsilon_2)) (b_{i'}(\epsilon_2) - a_{i'}(\epsilon_2)) \cdot \frac{1 + \epsilon_1}{\epsilon_1}$, we  claim that $\FractionNoArg^* = 1$. By concavity, it suffices to show that the derivative at $\FractionNoArg = 1$ is positive. 
By the definition of $\FractionNoArg^*$
\[\Quality(\Combination{\TypeNoArg_I}(\FractionNoArg)) - \ProductionCost{\TypeNoArg_I}(1- \FractionNoArg) = \Quality_1(\Combination{\TypeNoArg_I}_1(\FractionNoArg)) + \Quality_{i'}(\Combination{\TypeNoArg_I}_{i'}(\FractionNoArg)) - \ProductionCost{\TypeNoArg_I}(1- \FractionNoArg)  +  \sum_{i \neq i', 1} \Quality_i(\LLMContent{\TypeNoArg}). \]
Taking a derivative, we obtain:
\begin{align*}
&\Quality'_1(\LLMContent{\TypeNoArg_I}_1) (\LLMContent{\TypeNoArg_I}_1 - \HumanContent{\TypeNoArg_I}_1) + \Quality'_{i'}(\LLMContent{\TypeNoArg_I}_{i'}) (\LLMContent{\TypeNoArg_I}_{i'} - \HumanContent{\TypeNoArg_I}_{i'}) + \ProductionCost{\TypeNoArg_I}  \\
&= \Quality'_1(\Threshold + \epsilon_1) (1+ \epsilon_1) + \Quality'_{i'}(b_{i'}(\epsilon_2)) (b_{i'}(\epsilon_2) - a_{i'}(\epsilon_2)) \cdot \frac{1 + \epsilon_1}{\epsilon_1} + \ProductionCost{\TypeNoArg_I}  \\
&> 0
\end{align*}
as desired.

\paragraph{Quality analysis.}
If $\Quality_{i'}(a_{i'}(\epsilon_2)) - \Quality_{i'}(b_{i'}(\epsilon_2)) + \Quality_1(\Threshold) - \Quality_1(\Threshold + \epsilon_1)  > 0$, then we claim that $\Quality(\Combination{\TypeNoArg_I}(\FractionNoArg^{\text{game}}) > \Quality(\Combination{\TypeNoArg_I}(1)$. Note that $\FractionNoArg^{\text{game}} = \frac{1}{1+\epsilon_1}$. We know that:
\[\Quality(\Combination{\TypeNoArg_I}(\FractionNoArg^{\text{game}})) - \Quality(\Combination{\TypeNoArg_I}(1)) = \Quality_1(\Threshold) + \Quality_{i'}(a_{i'}(\epsilon_2)) - \Quality_1(\Threshold + \epsilon_1) - \Quality_{i'}(b_{i'}(\epsilon_2)) > 0. \]

\paragraph{Utility under detection.}
If $\Quality_{i'}(a_{i'}(\epsilon_2)) - \Quality_{i'}(b_{i'}(\epsilon_2)) + \Quality_1(\Threshold) - \Quality_1(\Threshold + \epsilon_1) - \ProductionCost{\TypeNoArg_I} \cdot \frac{\epsilon_1}{1+\epsilon_1} + \Penalty > 0$, then we claim that  $\max_{z}\Utility{\TypeNoArg_I}(\FractionNoArg^{\text{game}}, z; \Penalty, \Threshold,\Quality,\CostFn) > \max_{z}\Utility{\TypeNoArg_I}(1, z; \Penalty, \Threshold,\Quality,\CostFn)$. 
By Lemma \ref{lemma:infinitecostspostprocessing}, we also know that:
\begin{align*}
 &\max_{z}\Utility{\TypeNoArg_I}(\FractionNoArg^{\text{game}}, z; \Penalty, \Threshold,\Quality,\CostFn) - \max_{z}\Utility{\TypeNoArg_I}(1, z; \Penalty, \Threshold,\Quality,\CostFn) \\
 &= \Quality(\Combination{\TypeNoArg_I}(\FractionNoArg^{\text{game}})) + \Penalty - \Quality(\Combination{\TypeNoArg_I}(1)) - \ProductionCost{\TypeNoArg_I} (1 - \FractionNoArg^{\text{game}}) \\
 &= \Penalty + \Quality_{i'}(a_{i'}(\epsilon_2)) - \Quality_{i'}(b_{i'}(\epsilon_2)) + \Quality_1(\Threshold) - \Quality_1(\Threshold + \epsilon_1) - \ProductionCost{\TypeNoArg_I} \cdot \frac{\epsilon_1}{1+\epsilon_1} \\
 &> 0.
\end{align*}

\paragraph{Guarantees under conditions.}
Suppose that $\ProductionCost{\TypeNoArg_I} > -\Quality'_1(\Threshold + \epsilon_1) (1+ \epsilon_1) -\Quality'_{i'}(b_{i'}(\epsilon_2)) (b_{i'}(\epsilon_2) - a_{i'}(\epsilon_2)) \cdot \frac{1 + \epsilon_1}{\epsilon_1}$, $\Quality_{i'}(a_{i'}(\epsilon_2)) - \Quality_{i'}(b_{i'}(\epsilon_2)) + \Quality_1(\Threshold) - \Quality_1(\Threshold + \epsilon_1) > 0$, and $\Quality_{i'}(a_{i'}(\epsilon_2)) - \Quality_{i'}(b_{i'}(\epsilon_2)) + \Quality_1(\Threshold) - \Quality_1(\Threshold + \epsilon_1) - \ProductionCost{\TypeNoArg_I} \cdot \frac{\epsilon_1}{1+\epsilon_1} + \Penalty > 0$. By Theorem \ref{thm:infinitecostdetector}, we know that $\Fraction{\TypeNoArg_I}(\Penalty, \Threshold; \Quality, \CostFn^{\infty}) \in \left\{1, \FractionNoArg^{\text{game}}\right\}$, and the above arguments show that $\Fraction{\TypeNoArg_I}(\Penalty, \Threshold; \Quality, \CostFn^{\infty}) = \FractionNoArg^{\text{game}}$. Moreover, Theorem \ref{thm:infinitecostdetector} also tells us that $\FinalContent{\TypeNoArg_I}(\Penalty, \Threshold; \Quality, \CostFn^{\infty}) = \Combination{\TypeNoArg_I}(\FractionNoArg^{\text{game}})$. Moreover, by Theorem \ref{thm:nodetector}, we know that $\Fraction{\TypeNoArg_I}(\emptyset; \Quality, \CostFn^{\infty}) = \FractionNoArg^*$, , and the above arguments show that $\FractionNoArg^* = 1$. Moreover, Theorem \ref{thm:nodetector} also tells us
that $\FinalContent{\TypeNoArg_I}(\emptyset; \Quality, \CostFn^{\infty}) = \Combination{\TypeNoArg_I}(\FractionNoArg^*)$. Putting this together, and combining with the above arguments, we see that 
\[\Quality(\FinalContent{\TypeNoArg_I}(\Penalty, \Threshold; \Quality, \CostFn^{\infty})) = \Quality(\Combination{\TypeNoArg_I}(\Fraction{\TypeNoArg_I}(\Penalty, \Threshold; \Quality, \CostFn^{\infty})))  >  \Quality(\Combination{\TypeNoArg_I}(\Fraction{\TypeNoArg_I}(\emptyset; \Quality, \CostFn^{\infty}))) = \Quality(\FinalContent{\TypeNoArg_I}(\emptyset; \Quality, \CostFn^{\infty}))\]
as desired.

\paragraph{Condition analysis.}
It now suffices to construct $\epsilon_1, \epsilon_2$ and $\ProductionCost{\TypeNoArg_I}$ that satisfy 
\[\Quality_{i'}(a_{i'}(\epsilon_2)) - \Quality_{i'}(b_{i'}(\epsilon_2)) + \Quality_1(\Threshold) - \Quality_1(\Threshold + \epsilon_1) - \ProductionCost{\TypeNoArg_I} \cdot \frac{\epsilon_1}{1+\epsilon_1} + \Penalty > 0\]
and
\[\Quality_{i'}(a_{i'}(\epsilon_2)) - \Quality_{i'}(b_{i'}(\epsilon_2)) + \Quality_1(\Threshold) - \Quality_1(\Threshold + \epsilon_1)  > 0\]
and 
\[\ProductionCost{\TypeNoArg_I} > -\Quality'_1(\Threshold + \epsilon_1) (1+ \epsilon_1) -\Quality'_{i'}(b_{i'}(\epsilon_2)) (b_{i'}(\epsilon_2) - a_{i'}(\epsilon_2)) \cdot \frac{1 + \epsilon_1}{\epsilon_1}.\] 
We can rewrite these as:
\[\ProductionCost{\TypeNoArg_I} \cdot \frac{\epsilon_1}{1+\epsilon_1} < \Penalty + \Quality_{i'}(a_{i'}(\epsilon_2)) - \Quality_{i'}(b_{i'}(\epsilon_2)) + \Quality_1(\Threshold) - \Quality_1(\Threshold + \epsilon_1) \]
and
\[\Quality_{i'}(a_{i'}(\epsilon_2)) - \Quality_{i'}(b_{i'}(\epsilon_2))> \Quality_1(\Threshold + \epsilon_1)-  \Quality_1(\Threshold) \]
and 
\[\ProductionCost{\TypeNoArg_I} \cdot \frac{\epsilon_1}{1+\epsilon_1} > -\epsilon_1 \cdot \Quality'_1(\Threshold + \epsilon_1) + \Quality'_{i'}(b_{i'}(\epsilon_2)) (a_{i'}(\epsilon_2) - b_{i'}(\epsilon_2)).\] 
We can construct $\ProductionCost{\TypeNoArg_I} \ge 0$ as long as:
\[\Penalty + \Quality_{i'}(a_{i'}(\epsilon_2)) - \Quality_{i'}(b_{i'}(\epsilon_2)) + \Quality_1(\Threshold) - \Quality_1(\Threshold + \epsilon_1) > -\epsilon_1 \cdot \Quality'_1(\Threshold + \epsilon_1) + \Quality'_{i'}(b_{i'}(\epsilon_2)) (a_{i'}(\epsilon_2) - b_{i'}(\epsilon_2)).\] 
and 
\[\Quality_{i'}(a_{i'}(\epsilon_2)) - \Quality_{i'}(b_{i'}(\epsilon_2))> \Quality_1(\Threshold + \epsilon_1)-  \Quality_1(\Threshold). \]
This is equivalent to:
\[\Quality_{i'}(a_{i'}(\epsilon_2)) - \Quality_{i'}(b_{i'}(\epsilon_2)) > \Quality_1(\Threshold + \epsilon_1) - \Quality_1(\Threshold) \]
and 
\[\Penalty + \Quality_{i'}(a_{i'}(\epsilon_2)) - \Quality_{i'}(b_{i'}(\epsilon_2)) - \Quality'_{i'}(b_{i'}(\epsilon_2)) (a_{i'}(\epsilon_2) - b_{i'}(\epsilon_2))  > \Quality_1(\Threshold + \epsilon_1) - \Quality_1(\Threshold) -\epsilon_1 \cdot \Quality'_1(\Threshold + \epsilon_1).  \]

Since $\Quality_{i'}$ is concave, we know that:
\begin{align*}
&\Quality_{i'}(a_{i'}(\epsilon_2)) - \Quality_{i'}(b_{i'}(\epsilon_2)) - \Quality'_{i'}(b_{i'}(\epsilon_2)) (a_{i'}(\epsilon_2) - b_{i'}(\epsilon_2)) \\
&= \Quality_{i'}(a_{i'}(\epsilon_2)) - \Quality_{i'}(b_{i'}(\epsilon_2))
+ \Quality'_{i'}(a_{i'}(\epsilon_2)) (b_{i'}(\epsilon_2) - a_{i'}(\epsilon_2)) \\
& - \Quality'_{i'}(b_{i'}(\epsilon_2)) (a_{i'}(\epsilon_2) - b_{i'}(\epsilon_2)) + \Quality'_{i'}(a_{i'}(\epsilon_2)) (a_{i'}(\epsilon_2) - b_{i'}(\epsilon_2)) \\
&\ge  - 0.5 \sup_{z_i \in \text{conv}(a_{i'}(\epsilon_2), b_{i'}(\epsilon_2)))} \left|Q''_{i'}(z_i) \right| \cdot (b_{i'}(\epsilon_2) - a_{i'}(\epsilon_2))^2.  
\end{align*}
Thus it suffices to have that:
\[\Quality_{i'}(a_{i'}(\epsilon_2)) - \Quality_{i'}(b_{i'}(\epsilon_2)) > \Quality_1(\Threshold + \epsilon_1) - \Quality_1(\Threshold) \]
and 
\[\Penalty - 0.5 \sup_{z_i \in \text{conv}(a_{i'}(\epsilon_2), b_{i'}(\epsilon_2)))} \left|Q''_{i'}(z_i) \right| \cdot (b_{i'}(\epsilon_2) - a_{i'}(\epsilon_2))^2  > \Quality_1(\Threshold + \epsilon_1) - \Quality_1(\Threshold) -\epsilon_1 \cdot \Quality'_1(\Threshold + \epsilon_1).  \]

\paragraph{Parameter settings.} We can set $\epsilon_2 = \Penalty$, which means that $\sup_{z_i \in \text{conv}(a_{i'}(\epsilon_2), b_{i'}(\epsilon_2)))} \left|Q''_{i'}(z_i) \right| \cdot (b_{i'}(\epsilon_2) - a_{i'}(\epsilon_2))^2 < \Penalty$ and $\Quality_{i'}(a_{i'}(\epsilon_2)) > \Quality_{i'}(b_{i'}(\epsilon_2))$. As $\epsilon_1 \rightarrow 0$, we know that $\Quality_1(\Threshold + \epsilon_1) - \Quality_1(\Threshold) \rightarrow 0$ and  $\Quality_1(\Threshold + \epsilon_1) - \Quality_1(\Threshold) -\epsilon_1 \cdot \Quality'_1(\Threshold + \epsilon_1) \rightarrow 0$. Thus, we can  choose $\epsilon_1$ sufficiently small to satisfy the desired expressions. 

\end{proof}

\begin{lemma}
\label{lemma:qualityinfinitedecrease}
Fix $\Penalty > 0$, $\Threshold < \infty$, $\Quality$, and infinite costs $\CostFn = \CostFn^{\infty}$. Suppose also  $D \ge 2$, that there exists a dimension $2 \le i' \le D$ such that $\Quality_{i'}$ is non-constant. Then, the presence of the detector decreases quality for some user type $\TypeNoArg_D$ such that 
\[ \Quality( \FinalContent{\TypeNoArg_D}(\Penalty, \Threshold; \Quality, \CostFn)) < \Quality( \FinalContent{\TypeNoArg_D}(\emptyset; \Quality, \CostFn)).\]
\end{lemma}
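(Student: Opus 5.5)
We construct a type $\TypeNoArg_D$ explicitly, mirroring the construction in Lemma \ref{lemma:qualityinfiniteincrease}, but reversing which side wins on the non-detected dimension $i'$. We set $\HumanContent{\TypeNoArg_D}_i = \LLMContent{\TypeNoArg_D}_i$ for all $i \neq 1, i'$. In the detected dimension we set $\HumanContent{\TypeNoArg_D}_1 = \Threshold - 1$ and $\LLMContent{\TypeNoArg_D}_1 = \Threshold + \epsilon_1$, so that $\FractionNoArg^{\text{game}} = \frac{1}{1+\epsilon_1}$. In dimension $i'$ we pick two points $b, a$ with $\Quality_{i'}(b) > \Quality_{i'}(a)$; such points exist because $\Quality_{i'}$ is non-constant. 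We then let $\LLMContent{\TypeNoArg_D}_{i'} = b$ and place $\HumanContent{\TypeNoArg_D}_{i'}$ so that $\Combination{\TypeNoArg_D}_{i'}(\FractionNoArg^{\text{game}}) = a$. The goal is that the LLM is strictly better on dimension $i'$ by a margin that dominates the small detected-dimension loss $\Quality_1(\Threshold) - \Quality_1(\Threshold+\epsilon_1)$.

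\textbf{No-detector baseline.} We choose $\ProductionCost{\TypeNoArg_D}$ large enough that the derivative of $\Quality(\Combination{\TypeNoArg_D}(\FractionNoArg)) - \ProductionCost{\TypeNoArg_D}(1-\FractionNoArg)$ at $\FractionNoArg=1$ is positive. By concavity this gives $\FractionNoArg^* = 1$. Theorem \ref{thm:nodetector} then yields $\FinalContent{\TypeNoArg_D}(\emptyset;\Quality,\CostFn) = \LLMContent{\TypeNoArg_D}$.

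\textbf{Detector case.} Since $\Combination{\TypeNoArg_D}_1(1) > \Threshold$, Theorem \ref{thm:infinitecostdetector} gives $\Fraction{\TypeNoArg_D}(\Penalty,\Threshold;\Quality,\CostFn^\infty) \in \{1, \FractionNoArg^{\text{game}}\}$ with no post-processing. By Lemma \ref{lemma:infinitecostspostprocessing}, the user picks $\FractionNoArg^{\text{game}}$ exactly when
\[\Penalty > \Quality(\LLMContent{\TypeNoArg_D}) - \Quality(\Combination{\TypeNoArg_D}(\FractionNoArg^{\text{game}})) + \ProductionCost{\TypeNoArg_D}\tfrac{\epsilon_1}{1+\epsilon_1}.\]
Here the quality difference equals $\Quality_{i'}(b) - \Quality_{i'}(a) + \Quality_1(\Threshold+\epsilon_1) - \Quality_1(\Threshold)$. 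If this difference is strictly positive, quality strictly drops under detection. So we need three conditions simultaneously:
\begin{itemize}
\item[(i)] $0 < \Quality_{i'}(b) - \Quality_{i'}(a) + \Quality_1(\Threshold+\epsilon_1) - \Quality_1(\Threshold)$;
\item[(ii)] this same quantity plus $\ProductionCost{\TypeNoArg_D}\frac{\epsilon_1}{1+\epsilon_1}$ is less than $\Penalty$;
\item[(iii)] the derivative condition making $\FractionNoArg^*=1$, which by concavity is implied by a lower bound of the form $\ProductionCost{\TypeNoArg_D}\frac{\epsilon_1}{1+\epsilon_1} > \epsilon_1(-\Quality_1'(\Threshold+\epsilon_1)) + \Quality'_{i'}(b)(a-b)$.
\end{itemize}

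\textbf{Parameter choice (main obstacle).} The hardest step is fitting $\ProductionCost{\TypeNoArg_D}\frac{\epsilon_1}{1+\epsilon_1}$ between the lower bound from (iii) and the upper bound from (ii). This requires
\[\Penalty > \bigl[\Quality_{i'}(b)-\Quality_{i'}(a) - \Quality'_{i'}(b)(b-a)\bigr] + \bigl[\Quality_1(\Threshold+\epsilon_1)-\Quality_1(\Threshold) - \epsilon_1\Quality_1'(\Threshold+\epsilon_1)\bigr] - \ldots,\]
with the remaining terms controlled by concavity. I would handle this as the paper does: choose $a,b$ very close together, so that the second-order gap in dimension $i'$ is at most $\tfrac12\sup|\Quality''_{i'}|(b-a)^2 < \Penalty/2$ while $\Quality_{i'}(b) - \Quality_{i'}(a)$ stays strictly positive. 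Then take $\epsilon_1 \to 0$. In that limit both $\Quality_1(\Threshold+\epsilon_1) - \Quality_1(\Threshold)$ and its first-order remainder vanish, so (i) and the compatibility condition hold for small $\epsilon_1$. We also need $\ProductionCost{\TypeNoArg_D} \ge 0$; this is automatic, because the lower bound can be taken positive (enlarging $\ProductionCost{\TypeNoArg_D}$ only helps (iii)), and there is room as long as that lower bound stays below $\Penalty$. That completes the construction.
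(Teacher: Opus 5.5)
Your proposal matches the paper's proof. It uses the same type with the labels of $a,b$ swapped, reduces via Theorem \ref{thm:infinitecostdetector} to comparing $\FractionNoArg^{\text{game}}$ against $1$, imposes the same three conditions, and uses the same parameter strategy: fix $a,b$ close so the curvature term is below $\Penalty/2$, then send $\epsilon_1\to 0$. One correction: (ii) bounds $\ProductionCost{\TypeNoArg_D}\frac{\epsilon_1}{1+\epsilon_1}$ above by $\Penalty$ minus the quality gap in (i), not by $\Penalty$, and your lower bound from (iii) can be negative, so $\ProductionCost{\TypeNoArg_D}\ge 0$ also needs $\Quality_{i'}(b)-\Quality_{i'}(a)<\Penalty$ up to a vanishing term — the paper enforces this by keeping the gap below $\Penalty/2$, and your choice of $a,b$ very close gives it by continuity.
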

\begin{proof}
We let $\HumanContent{\TypeNoArg_D}_i = \LLMContent{\TypeNoArg_D}_i$ for all $i \neq 1, i'$. Since $\Quality_{i'}$ is non-constant and twice-differentiable concave, for every value $\epsilon$, there exists $b_{i'}(\epsilon)$ and $a_{i'}(\epsilon)$ such that $\sup_{z_i \in \text{conv}(a_{i'}(\epsilon), b_{i'}(\epsilon)))} \left|Q''_{i'}(z_i) \right| \cdot (b_{i'}(\epsilon) - a_{i'}(\epsilon))^2 < \epsilon$ and $\Quality_{i'}(a_{i'}(\epsilon)) \in (\Quality_{i'}(b_{i'}(\epsilon)), \Quality_{i'}(b_{i'}(\epsilon))+ \epsilon)$.

Let $\TypeNoArg_D$ be such that $\HumanContent{\TypeNoArg_D}_1 = \Threshold - 1$, and let $\LLMContent{\TypeNoArg_D}_1 = \Threshold + \epsilon_1$ for an $\epsilon_1 > 0$ that we will set later. We let 
$\LLMContent{\TypeNoArg_D}_{i'} = a_{i'}(\epsilon_2)$ and we let $\HumanContent{\TypeNoArg_D}_{i'} = a_{i'}(\epsilon_2) + (b_{i'}(\epsilon_2) - a_{i'}(\epsilon_2)) \cdot \frac{1 + \epsilon_1}{\epsilon_1}$ for $\epsilon_2 > 0$ we will set later. We will construct $\ProductionCost{\TypeNoArg_D}$ later. 

\paragraph{Utility without the detector.}
For $\ProductionCost{\TypeNoArg_D} > -\Quality'_1(\Threshold + \epsilon_1) (1+ \epsilon_1) -\Quality'_{i'}(a_{i'}(\epsilon_2)) (a_{i'}(\epsilon_2) - b_{i'}(\epsilon_2)) \cdot \frac{1 + \epsilon_1}{\epsilon_1}$, we  claim that $\FractionNoArg^* = 1$. By concavity, it suffices to show that the derivative at $\FractionNoArg = 1$ is positive. 
By the definition of $\FractionNoArg^*$
\[\Quality(\Combination{\TypeNoArg_D}(\FractionNoArg)) - \ProductionCost{\TypeNoArg_D}(1- \FractionNoArg) = \Quality_1(\Combination{\TypeNoArg_D}_1(\FractionNoArg)) + \Quality_{i'}(\Combination{\TypeNoArg_D}_{i'}(\FractionNoArg)) - \ProductionCost{\TypeNoArg_D}(1- \FractionNoArg)  +  \sum_{i \neq i', 1} \Quality_i(\LLMContent{\TypeNoArg}). \]
Taking a derivative, we obtain:
\begin{align*}
&\Quality'_1(\LLMContent{\TypeNoArg_D}_1) (\LLMContent{\TypeNoArg_D}_1 - \HumanContent{\TypeNoArg_D}_1) + \Quality'_{i'}(\LLMContent{\TypeNoArg_D}_{i'}) (\LLMContent{\TypeNoArg_D}_{i'} - \HumanContent{\TypeNoArg_D}_{i'}) + \ProductionCost{\TypeNoArg_D}  \\
&= \Quality'_1(\Threshold + \epsilon_1) (1+ \epsilon_1) + \Quality'_{i'}(a_{i'}(\epsilon_2)) (a_{i'}(\epsilon_2) - b_{i'}(\epsilon_2)) \cdot \frac{1 + \epsilon_1}{\epsilon_1} + \ProductionCost{\TypeNoArg_D}  \\
&> 0
\end{align*}
as desired.

\paragraph{Quality analysis.} If $\Quality_{i'}(b_{i'}(\epsilon_2)) - \Quality_{i'}(a_{i'}(\epsilon_2)) + \Quality_1(\Threshold) - \Quality_1(\Threshold + \epsilon_1)  < 0$, then we claim that $\Quality(\Combination{\TypeNoArg_D}(\FractionNoArg^{\text{game}})) < \Quality(\Combination{\TypeNoArg_D}(1)$. Note that $\FractionNoArg^{\text{game}} = \frac{1}{1+\epsilon_1}$. We know that:
\[\Quality(\Combination{\TypeNoArg_D}(\FractionNoArg^{\text{game}})) - \Quality(\Combination{\TypeNoArg_D}(1)) = \Quality_1(\Threshold) + \Quality_{i'}(b_{i'}(\epsilon_2)) - \Quality_1(\Threshold + \epsilon_1) - \Quality_{i'}(a_{i'}(\epsilon_2)) < 0. \]

\paragraph{Utility under detection.}
If $\Quality_{i'}(b_{i'}(\epsilon_2)) - \Quality_{i'}(a_{i'}(\epsilon_2)) + \Quality_1(\Threshold) - \Quality_1(\Threshold + \epsilon_1) - \ProductionCost{\TypeNoArg_D} \cdot \frac{\epsilon_1}{1+\epsilon_1} + \Penalty > 0$, then we claim that  $\max_{z}\Utility{\TypeNoArg_D}(\FractionNoArg^{\text{game}}, z; \Penalty, \Threshold,\Quality,\CostFn) > \max_{z}\Utility{\TypeNoArg_D}(1, z; \Penalty, \Threshold,\Quality,\CostFn)$. 
By Lemma \ref{lemma:infinitecostspostprocessing}, we also know that:
\begin{align*}
 &\max_{z}\Utility{\TypeNoArg_D}(\FractionNoArg^{\text{game}}, z; \Penalty, \Threshold,\Quality,\CostFn) - \max_{z}\Utility{\TypeNoArg_D}(1, z; \Penalty, \Threshold,\Quality,\CostFn) \\
 &= \Quality(\Combination{\TypeNoArg_D}(\FractionNoArg^{\text{game}})) + \Penalty - \Quality(\Combination{\TypeNoArg_D}(1)) - \ProductionCost{\TypeNoArg_D} (1 - \FractionNoArg^{\text{game}}) \\
 &= \Penalty + \Quality_{i'}(b_{i'}(\epsilon_2)) - \Quality_{i'}(a_{i'}(\epsilon_2)) + \Quality_1(\Threshold) - \Quality_1(\Threshold + \epsilon_1) - \ProductionCost{\TypeNoArg_D} \cdot \frac{\epsilon_1}{1+\epsilon_1} \\
 &> 0.
\end{align*}

\paragraph{Guarantees under conditions.}
Suppose that $\ProductionCost{\TypeNoArg_D} > -\Quality'_1(\Threshold + \epsilon_1) (1+ \epsilon_1) -\Quality'_{i'}(a_{i'}(\epsilon_2)) (a_{i'}(\epsilon_2) - b_{i'}(\epsilon_2)) \cdot \frac{1 + \epsilon_1}{\epsilon_1}$, $\Quality_{i'}(b_{i'}(\epsilon_2)) - \Quality_{i'}(a_{i'}(\epsilon_2)) + \Quality_1(\Threshold) - \Quality_1(\Threshold + \epsilon_1) < 0$, and $\Quality_{i'}(b_{i'}(\epsilon_2)) - \Quality_{i'}(a_{i'}(\epsilon_2)) + \Quality_1(\Threshold) - \Quality_1(\Threshold + \epsilon_1) - \ProductionCost{\TypeNoArg_D} \cdot \frac{\epsilon_1}{1+\epsilon_1} + \Penalty > 0$. By Theorem \ref{thm:infinitecostdetector}, we know that $\Fraction{\TypeNoArg_D}(\Penalty, \Threshold; \Quality, \CostFn^{\infty}) \in \left\{1, \FractionNoArg^{\text{game}}\right\}$, and the above arguments show that $\Fraction{\TypeNoArg_D}(\Penalty, \Threshold; \Quality, \CostFn^{\infty}) = \FractionNoArg^{\text{game}}$. Moreover, Theorem \ref{thm:infinitecostdetector} also tells us that $\FinalContent{\TypeNoArg_D}(\Penalty, \Threshold; \Quality, \CostFn^{\infty}) = \Combination{\TypeNoArg_D}(\FractionNoArg^{\text{game}})$. Moreover, by Theorem \ref{thm:nodetector}, we know that $\Fraction{\TypeNoArg_D}(\emptyset; \Quality, \CostFn^{\infty}) = \FractionNoArg^*$, , and the above arguments show that $\FractionNoArg^* = 1$. Moreover, Theorem \ref{thm:nodetector} also tells us
that $\FinalContent{\TypeNoArg_D}(\emptyset; \Quality, \CostFn^{\infty}) = \Combination{\TypeNoArg_D}(\FractionNoArg^*)$. Putting this together, and combining with the above arguments, we see that 
\[\Quality(\FinalContent{\TypeNoArg_D}(\Penalty, \Threshold; \Quality, \CostFn^{\infty})) = \Quality(\Combination{\TypeNoArg_D}(\Fraction{\TypeNoArg_D}(\Penalty, \Threshold; \Quality, \CostFn^{\infty}))  <  \Quality(\Combination{\TypeNoArg_D}(\Fraction{\TypeNoArg_D}(\emptyset; \Quality, \CostFn^{\infty})) = \Quality(\FinalContent{\TypeNoArg_D}(\emptyset; \Quality, \CostFn^{\infty}))\]
as desired.

\paragraph{Condition analysis.}
It now suffices to construct $\epsilon_1, \epsilon_2$ and $\ProductionCost{\TypeNoArg_D}$ that satisfy 
\[\Quality_{i'}(b_{i'}(\epsilon_2)) - \Quality_{i'}(a_{i'}(\epsilon_2)) + \Quality_1(\Threshold) - \Quality_1(\Threshold + \epsilon_1) - \ProductionCost{\TypeNoArg_D} \cdot \frac{\epsilon_1}{1+\epsilon_1} + \Penalty > 0\]
and
\[\Quality_{i'}(b_{i'}(\epsilon_2)) - \Quality_{i'}(a_{i'}(\epsilon_2)) + \Quality_1(\Threshold) - \Quality_1(\Threshold + \epsilon_1) < 0\]
and 
\[\ProductionCost{\TypeNoArg_D} > -\Quality'_1(\Threshold + \epsilon_1) (1+ \epsilon_1) -\Quality'_{i'}(a_{i'}(\epsilon_2)) (a_{i'}(\epsilon_2) - b_{i'}(\epsilon_2)) \cdot \frac{1 + \epsilon_1}{\epsilon_1}.\] 
We can rewrite these as:
\[\ProductionCost{\TypeNoArg_D} \cdot \frac{\epsilon_1}{1+\epsilon_1} < \Penalty + \Quality_{i'}(b_{i'}(\epsilon_2)) - \Quality_{i'}(a_{i'}(\epsilon_2)) + \Quality_1(\Threshold) - \Quality_1(\Threshold + \epsilon_1) \]
and
\[\Quality_{i'}(b_{i'}(\epsilon_2)) - \Quality_{i'}(a_{i'}(\epsilon_2)) < \Quality_1(\Threshold + \epsilon_1)-  \Quality_1(\Threshold) \]
and 
\[\ProductionCost{\TypeNoArg_D} \cdot \frac{\epsilon_1}{1+\epsilon_1} > -\epsilon_1 \cdot \Quality'_1(\Threshold + \epsilon_1) + \Quality'_{i'}(a_{i'}(\epsilon_2)) (b_{i'}(\epsilon_2) - a_{i'}(\epsilon_2)).\] 
 We can construct $\ProductionCost{\TypeNoArg_D} \ge 0$ as long as:
\[\Penalty + \Quality_{i'}(b_{i'}(\epsilon_2)) - \Quality_{i'}(a_{i'}(\epsilon_2)) + \Quality_1(\Threshold) - \Quality_1(\Threshold + \epsilon_1) > -\epsilon_1 \cdot \Quality'_1(\Threshold + \epsilon_1) + \Quality'_{i'}(a_{i'}(\epsilon_2)) (b_{i'}(\epsilon_2) - a_{i'}(\epsilon_2)).\] 
and 
\[-\Penalty +  \Quality_1(\Threshold + \epsilon_1)-  \Quality_1(\Threshold) < \Quality_{i'}(b_{i'}(\epsilon_2)) - \Quality_{i'}(a_{i'}(\epsilon_2)) < \Quality_1(\Threshold + \epsilon_1)-  \Quality_1(\Threshold). \]
This is equivalent to:
\[\Penalty -  \left(\Quality_1(\Threshold + \epsilon_1)-  \Quality_1(\Threshold)\right) > \Quality_{i'}(a_{i'}(\epsilon_2)) - \Quality_{i'}(b_{i'}(\epsilon_2)) > -\left(\Quality_1(\Threshold + \epsilon_1)-  \Quality_1(\Threshold)\right). \]
and 
\[\Penalty + \Quality_{i'}(b_{i'}(\epsilon_2)) - \Quality_{i'}(a_{i'}(\epsilon_2)) - \Quality'_{i'}(a_{i'}(\epsilon_2)) (b_{i'}(\epsilon_2) - a_{i'}(\epsilon_2))  > \Quality_1(\Threshold + \epsilon_1) - \Quality_1(\Threshold) -\epsilon_1 \cdot \Quality'_1(\Threshold + \epsilon_1).  \]

Since $\Quality_{i'}$ is concave, we know that:
\begin{align*}
&\Quality_{i'}(b_{i'}(\epsilon_2)) - \Quality_{i'}(a_{i'}(\epsilon_2)) - \Quality'_{i'}(a_{i'}(\epsilon_2)) (b_{i'}(\epsilon_2) - a_{i'}(\epsilon_2)) \\
&= \Quality_{i'}(b_{i'}(\epsilon_2)) - \Quality_{i'}(a_{i'}(\epsilon_2)) - \Quality'_{i'}(b_{i'}(\epsilon_2)) (b_{i'}(\epsilon_2) - a_{i'}(\epsilon_2))\\
&- \Quality'_{i'}(a_{i'}(\epsilon_2)) (b_{i'}(\epsilon_2) - a_{i'}(\epsilon_2)) + \Quality'_{i'}(b_{i'}(\epsilon_2)) (b_{i'}(\epsilon_2) - a_{i'}(\epsilon_2)) \\
&\ge - 0.5 \sup_{z_i \in \text{conv}(a_{i'}(\epsilon_2), b_{i'}(\epsilon_2)))} \left|Q''_{i'}(z_i) \right| \cdot (b_{i'}(\epsilon_2) - a_{i'}(\epsilon_2))^2.  
\end{align*}

\paragraph{Parameter settings.}
We can set $\epsilon_2 = \Penalty / 2$, which means that $\sup_{z_i \in \text{conv}(a_{i'}(\epsilon_2), b_{i'}(\epsilon_2)))} \left|Q''_{i'}(z_i) \right| \cdot (b_{i'}(\epsilon_2) - a_{i'}(\epsilon_2))^2 < \Penalty/2$ and $\Quality_{i'}(a_{i'}(\epsilon_2)) \in  (\Quality_{i'}(b_{i'}(\epsilon_2)), \Quality_{i'}(b_{i'}(\epsilon_2)) + \Penalty/2)$. As $\epsilon_1 \rightarrow 0$, we know that $\Quality_1(\Threshold + \epsilon_1) - \Quality_1(\Threshold) \rightarrow 0$ and  $\Quality_1(\Threshold + \epsilon_1) - \Quality_1(\Threshold) -\epsilon_1 \cdot \Quality'_1(\Threshold + \epsilon_1) \rightarrow 0$. Thus, we can  choose $\epsilon_1$ sufficiently small to satisfy the desired expressions. 

\end{proof}

\begin{lemma}
\label{lemma:qualityinfiniteconstant}
Fix $\Penalty > 0$, $\Threshold < \infty$, $\Quality$, and infinite costs $\CostFn = \CostFn^{\infty}$. Then, the presence of the detector keeps quality the same for some user type $\TypeNoArg_N$ 
\[ \Quality( \FinalContent{\TypeNoArg_N}(\Penalty, \Threshold; \Quality, \CostFn)) = \Quality( \FinalContent{\TypeNoArg_N}(\emptyset; \Quality, \CostFn)).\]
\end{lemma}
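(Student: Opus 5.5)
The plan is to exhibit a type for which the detector changes nothing: a user whose content never crosses the threshold. I will take $\TypeNoArg_N$ with $\HumanContent{\TypeNoArg_N}_1 < \LLMContent{\TypeNoArg_N}_1 \le \Threshold$. For example, $\HumanContent{\TypeNoArg_N}_1 = \Threshold - 2$ and $\LLMContent{\TypeNoArg_N}_1 = \Threshold - 1$, with arbitrary remaining coordinates and any $\ProductionCost{\TypeNoArg_N} \ge 0$. This lies in $\mathcal{T}$ because the first coordinate of the LLM output strictly exceeds that of the human output. Such a type exists for every finite $\Threshold$, and no assumption on $\Quality$ or on $D$ is needed.

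The key observation is that $\Combination{\TypeNoArg_N}_1(\FractionNoArg) \le \LLMContent{\TypeNoArg_N}_1 \le \Threshold$ for every $\FractionNoArg \in [0,1]$, since the first coordinate is a convex combination of two values that are both at most $\Threshold$. In particular $\Combination{\TypeNoArg_N}_1(\FractionNoArg^*) \le \Threshold$.

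I then invoke the characterizations in order.
\begin{enumerate}
\item By Theorem \ref{thm:infinitecostdetector}, which covers the infinite-cost detector case (its Case 1 is exactly this situation), $\Fraction{\TypeNoArg_N}(\Penalty, \Threshold; \Quality, \CostFn^{\infty}) = \FractionNoArg^*$.
\item The same theorem gives $\FinalContent{\TypeNoArg_N}(\Penalty, \Threshold; \Quality, \CostFn^{\infty}) = \Combination{\TypeNoArg_N}(\FractionNoArg^*)$.
\item By Theorem \ref{thm:nodetector}, $\Fraction{\TypeNoArg_N}(\emptyset; \Quality, \CostFn^{\infty}) = \FractionNoArg^*$ and $\FinalContent{\TypeNoArg_N}(\emptyset; \Quality, \CostFn^{\infty}) = \Combination{\TypeNoArg_N}(\FractionNoArg^*)$.
\end{enumerate}

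One subtlety arises in the last step: Theorem \ref{thm:nodetector} and Lemma \ref{lemma:nodetectionnopostprocessing} were written using (A1) for finite costs. With $\CostFn^{\infty}$ the conclusion still holds, and more directly: any $\FinalContentNoArg \ne \Combination{\TypeNoArg_N}(\FractionNoArg)$ yields utility $-\infty$. Alternatively, one can view the no-detector case as $\Penalty = 0$ and repeat the argument of Lemma \ref{lemma:infinitecostspostprocessing}. I will note this briefly.

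Combining the two computations, the final outputs coincide, so their qualities are equal. This gives $\Quality(\FinalContent{\TypeNoArg_N}(\Penalty, \Threshold; \Quality, \CostFn)) = \Quality(\FinalContent{\TypeNoArg_N}(\emptyset; \Quality, \CostFn))$. The only mild obstacle is the applicability of Theorem \ref{thm:nodetector} under infinite costs, which the remark above resolves; everything else is immediate.
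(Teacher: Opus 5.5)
Your proposal is correct and follows the paper's proof. You pick a type with $\HumanContent{\TypeNoArg_N}_1 < \LLMContent{\TypeNoArg_N}_1 \le \Threshold$, so that $\Combination{\TypeNoArg_N}_1(\FractionNoArg) \le \Threshold$ for all $\FractionNoArg$. Theorem \ref{thm:infinitecostdetector} and Theorem \ref{thm:nodetector} then give the identical final output $\Combination{\TypeNoArg_N}(\FractionNoArg^*)$ with and without the detector.

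The paper also sets the remaining coordinates equal, which is unnecessary, and it invokes Theorem \ref{thm:nodetector} under $\CostFn^{\infty}$ without comment. Your remark that any $\FinalContentNoArg \neq \Combination{\TypeNoArg_N}(\FractionNoArg)$ yields utility $-\infty$ is a harmless clarification of that step.
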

\begin{proof}
Let $\TypeNoArg_N$ be such that $\HumanContent{\TypeNoArg_N}_1 <  \LLMContent{\TypeNoArg_N}_1 < \Threshold$, and let $\HumanContent{\TypeNoArg_N}_{i} = \LLMContent{\TypeNoArg_N}_{i}$ for all $i \ge 2$. By Theorem \ref{thm:infinitecostdetector}, we know that $\Fraction{\TypeNoArg_N}(\Penalty, \Threshold; \Quality, \CostFn^{\infty}) = \FractionNoArg^*$ and $\FinalContent{\TypeNoArg_N}(\Penalty, \Threshold; \Quality, \CostFn^{\infty}) = \Combination{\TypeNoArg_N}(\FractionNoArg^*)$. By Theorem \ref{thm:nodetector}, we know that $\Fraction{\TypeNoArg_N}(\emptyset; \Quality, \CostFn^{\infty}) = \FractionNoArg^*$ and $\FinalContent{\TypeNoArg_N}(\emptyset; \Quality, \CostFn^{\infty}) = \Combination{\TypeNoArg_N}(\FractionNoArg^*)$. This implies that $\Quality(\FinalContent{\TypeNoArg_N}(\Penalty, \Threshold; \Quality, \CostFn^{\infty})) = \Quality(\FinalContent{\TypeNoArg_N}(\emptyset; \Quality, \CostFn^{\infty}))$ as desired. 
\end{proof}

\subsubsection{Finite costs}

We split into sublemmas for quality increase, quality decrease, and constant quality.

\begin{lemma}
\label{lemma:qualityfiniteincrease}
 Fix $\Penalty > 0$, $\Threshold < \infty$, $\Quality$, and finite costs $\CostFn$. Suppose that the second or third condition in Corollary \ref{cor:monotoneusage} holds. Suppose also that $\nabla_1 (\CostFn_1(\Threshold, \Threshold)) > \max(0, -\Quality'_1(\Threshold))$.  Suppose also  $D \ge 2$, that there exists a dimension $2 \le i' \le D$ such that $\Quality_{i'}$ is non-constant. Then, the presence of the detector increases quality for some user type $\TypeNoArg_I$,
\[ \Quality( \FinalContent{\TypeNoArg_I}(\Penalty, \Threshold; \Quality, \CostFn)) > \Quality( \FinalContent{\TypeNoArg_I}(\emptyset; \Quality, \CostFn)).\] 
\end{lemma}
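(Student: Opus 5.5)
The plan is to mirror the construction in Lemma \ref{lemma:qualityinfiniteincrease}. The new ingredient is an argument that, under finite costs, the user still prefers to sit exactly at the threshold rather than use more LLM and post-process. I will take $\HumanContent{\TypeNoArg_I}_i = \LLMContent{\TypeNoArg_I}_i$ for $i \neq 1, i'$. On dimension $1$, I set $\HumanContent{\TypeNoArg_I}_1 = \Threshold - 1$ and $\LLMContent{\TypeNoArg_I}_1 = \Threshold + \epsilon_1$, so that $\FractionNoArg^{\text{game}} = \frac{1}{1+\epsilon_1}$. On dimension $i'$, I use the same points $a_{i'}(\epsilon_2), b_{i'}(\epsilon_2)$ as before: $\LLMContent{\TypeNoArg_I}_{i'} = b_{i'}(\epsilon_2)$, and $\HumanContent{\TypeNoArg_I}_{i'}$ is chosen so that $\Combination{\TypeNoArg_I}_{i'}(\FractionNoArg^{\text{game}}) = a_{i'}(\epsilon_2)$. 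Here $\Quality_{i'}(a_{i'}) > \Quality_{i'}(b_{i'})$ by a small margin, and $\Quality_{i'}$ is nearly linear on the segment.

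\textbf{No-detector behaviour.} By Theorem \ref{thm:nodetector} and concavity, $\FractionNoArg^* = 1$ as soon as the derivative at $\FractionNoArg=1$ is positive. This derivative is
\[\ProductionCost{\TypeNoArg_I} + \Quality'_1(\Threshold+\epsilon_1)(1+\epsilon_1) + \Quality'_{i'}(b_{i'})(b_{i'}-a_{i'})\tfrac{1+\epsilon_1}{\epsilon_1},\]
so it is positive exactly when $\ProductionCost{\TypeNoArg_I}$ exceeds the lower bound used in the infinite-cost proof. The quality comparison $\Quality(\Combination{\TypeNoArg_I}(\FractionNoArg^{\text{game}})) > \Quality(\Combination{\TypeNoArg_I}(1))$ then reduces, verbatim, to $\Quality_{i'}(a_{i'}) - \Quality_{i'}(b_{i'}) > \Quality_1(\Threshold+\epsilon_1) - \Quality_1(\Threshold)$. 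This holds for $\epsilon_1$ small.

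\textbf{Detector behaviour.} Here the work is to show $\Fraction{\TypeNoArg_I}(\Penalty,\Threshold;\Quality,\CostFn) = \FractionNoArg^{\text{game}}$ with no post-processing.

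First, for $\FractionNoArg \le \FractionNoArg^{\text{game}}$ the indirect utility equals the no-detector objective. That objective is increasing on this range, since it is concave and maximized at $1$. Hence among these allocations $\FractionNoArg^{\text{game}}$ is best.

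Next, for $\FractionNoArg > \FractionNoArg^{\text{game}}$, Lemma \ref{lemma:detectorfinitecostspostprocessing} gives the utility as the maximum of two branches: a post-processing branch and a penalized branch. The penalized branch is at most the unpenalized value at $\FractionNoArg^{\text{game}}$ plus an $O(\epsilon_1)$ change, minus $\Penalty$. It therefore loses for small $\epsilon_1$; the slack from the $\ProductionCost{\TypeNoArg_I}$ term must also be controlled, as in the infinite-cost proof, by choosing $\ProductionCost{\TypeNoArg_I}\frac{\epsilon_1}{1+\epsilon_1}$ small relative to $\Penalty$. For the post-processing branch, the second or third condition of Corollary \ref{cor:monotoneusage} lets me invoke Lemma \ref{lemma:gthreshold} or Lemma \ref{lemma:gthresholdmixed}, so $G \equiv \Threshold$. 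The post-processing utility is then
\[\textstyle\sum_{i\ge2}\Quality_i(\Combination{\TypeNoArg_I}_i(\FractionNoArg)) + \Quality_1(\Threshold) - \CostFn_1(\Combination{\TypeNoArg_I}_1(\FractionNoArg),\Threshold) - \ProductionCost{\TypeNoArg_I}(1-\FractionNoArg).\]
This equals the detector utility at $\FractionNoArg^{\text{game}}$ when $\FractionNoArg = \FractionNoArg^{\text{game}}$, by (A3). So it suffices to show this expression is strictly decreasing in $\FractionNoArg$ on $[\FractionNoArg^{\text{game}},1]$. Its derivative is at most
\[\ProductionCost{\TypeNoArg_I} + \Quality'_{i'}(\cdot)(b_{i'}-a_{i'})\tfrac{1+\epsilon_1}{\epsilon_1} - \nabla_1\CostFn_1(\Combination{\TypeNoArg_I}_1(\FractionNoArg),\Threshold)(1+\epsilon_1).\]

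\textbf{Main obstacle: the window for $\ProductionCost{\TypeNoArg_I}$.} The step I expect to be delicate is choosing $\ProductionCost{\TypeNoArg_I}$ so that both requirements hold at once. Let $s$ denote the $i'$-slope term $\Quality'_{i'}(b_{i'})(b_{i'}-a_{i'})\frac{1+\epsilon_1}{\epsilon_1}$. The no-detector step needs $\ProductionCost{\TypeNoArg_I} + s > -\Quality'_1(\Threshold+\epsilon_1)(1+\epsilon_1)$. The detector step needs $\ProductionCost{\TypeNoArg_I} + s < \nabla_1\CostFn_1(z_1,\Threshold)(1+\epsilon_1)$ for all $z_1\in[\Threshold,\Threshold+\epsilon_1]$. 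A valid choice exists because:
\begin{itemize}
\item the window is nonempty in the limit, since $\nabla_1\CostFn_1(\Threshold,\Threshold) > \max(0,-\Quality'_1(\Threshold))$;
\item the relevant quantities are continuous up to the boundary, so the window stays nonempty for small $\epsilon_1$;
\item $s$ and its variation across the segment are $O(\epsilon_2)$ by the near-linearity of $\Quality_{i'}$, so it does not close the gap;
\item the $\max$ with $0$ keeps $\ProductionCost{\TypeNoArg_I} \ge 0$ possible.
\end{itemize}

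\textbf{Conclusion.} I first fix $\epsilon_2$ small, then $\epsilon_1$ small, and then pick $\ProductionCost{\TypeNoArg_I}$ strictly inside the window. With these choices, Theorem \ref{thm:detectorfinitecosts}, together with the above comparisons and the tiebreaking rule, gives $\FinalContent{\TypeNoArg_I}(\Penalty,\Threshold;\Quality,\CostFn) = \Combination{\TypeNoArg_I}(\FractionNoArg^{\text{game}})$. Since Theorem \ref{thm:nodetector} gives $\FinalContent{\TypeNoArg_I}(\emptyset;\Quality,\CostFn) = \Combination{\TypeNoArg_I}(1)$, the quality comparison above yields the claimed strict increase.
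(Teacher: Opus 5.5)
\textbf{There is a genuine gap in the step you flag as the main obstacle.} Under your stated order of choices (fix $\epsilon_2$, then shrink $\epsilon_1$), the window for $\gamma$ is in general empty.

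The $i'$-coordinate enters the $\alpha$-derivative through the chain-rule factor $\ell_{i'}-h_{i'}=(b_{i'}-a_{i'})\frac{1+\epsilon_1}{\epsilon_1}$. So $s$ carries a factor $1/\epsilon_1$ and is not $O(\epsilon_2)$.

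Since $s$ appears on both sides, what matters is the variation of $T(\alpha):=Q'_{i'}(m_{i'}(\alpha))(\ell_{i'}-h_{i'})$, which is nonincreasing in $\alpha$.
\begin{itemize}
\item The lower end of the window comes from the derivative at $\alpha=1$, where $m_{i'}=b_{i'}$.
\item Strict decrease of the post-processing branch just above $\alpha^{\text{game}}$, where $m_{i'}=a_{i'}$, requires $\gamma+T(\alpha^{\text{game}})\le\nabla_1C_1(\nu,\nu)(1+\epsilon_1)$.
\end{itemize}
Hence a valid $\gamma$ exists only if
\[\bigl(Q'_{i'}(a_{i'})-Q'_{i'}(b_{i'})\bigr)(b_{i'}-a_{i'})\frac{1+\epsilon_1}{\epsilon_1}<\bigl(\nabla_1C_1(\nu,\nu)+Q'_1(\nu+\epsilon_1)\bigr)(1+\epsilon_1).\]
Suppose $Q''_{i'}$ is bounded away from $0$ near the chosen points (e.g.\ $Q_{i'}(x)=-x^2$). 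Then the left side is of exact order $|a_{i'}-b_{i'}|^2/\epsilon_1$. With $a_{i'},b_{i'}$ fixed and $\epsilon_1\to 0$ it diverges. Your continuity-up-to-the-boundary argument controls only the $Q_1$ and $C_1$ terms, so it cannot close this.

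\textbf{The missing idea is to couple the two scales, which is what the paper's parameter choice does.} Pick $a_{i'},b_{i'}$ in an interval on which $Q'_{i'}$ has constant sign, with $0<k_1\le|Q'_{i'}|\le k_2$ and $|Q''_{i'}|\le k_3$. Then let $|a_{i'}-b_{i'}|$ shrink with $\epsilon_1$ subject to two constraints:
\begin{itemize}
\item $|a_{i'}-b_{i'}|^2=o(\epsilon_1)$. The left side above then vanishes, and the window keeps width about $\nabla_1C_1(\nu,\nu)+Q'_1(\nu)>0$.
\item $|a_{i'}-b_{i'}|/\epsilon_1\to\infty$. Then $Q_{i'}(a_{i'})-Q_{i'}(b_{i'})\ge k_1|a_{i'}-b_{i'}|$ still beats $Q_1(\nu+\epsilon_1)-Q_1(\nu)\le\epsilon_1Q'_1(\nu)$ when $Q'_1(\nu)>0$.
\end{itemize}
For example, $|a_{i'}-b_{i'}|\asymp\epsilon_1^{3/4}$ works. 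The points from the infinite-cost lemma come only with upper bounds, so they cannot enforce the second constraint.

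Note also that $\gamma$ is not free: it is pinned at $-T(1)+O(1)\asymp|a_{i'}-b_{i'}|/\epsilon_1$. So $\gamma\epsilon_1\to 0$ holds because $|a_{i'}-b_{i'}|\to 0$, and this is what your penalized-branch bound actually uses.

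With this repair your argument goes through, and it is somewhat cleaner than the paper's. A negative post-processing derivative on all of $[\alpha^{\text{game}},1]$ pins the detector choice at exactly $\alpha^{\text{game}}$, with no post-processing. The paper instead only localizes the choice to $\{\alpha^{\text{game}}\}\cup([\alpha^{\text{game}},\bar{\alpha}_3]\cap A^{\mathbf{t}}(\beta))$. It therefore needs the auxiliary $\epsilon_3$ and a separate quality check over that range.
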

\begin{proof}

We let $\HumanContent{\TypeNoArg_I}_i = \LLMContent{\TypeNoArg_I}_i$ for all $i \neq 1, i'$. By applying the assumptions in the lemma along with Lemma \ref{lemma:gthresholdmixed}
in the mixed-partial case and Lemma \ref{lemma:gthreshold} in the weakly-increasing $Q_1$ case, we know that $G(z_1) = \Threshold$ for all $z_1 \ge \Threshold$.

Define the type such that 
$\HumanContent{\TypeNoArg_I}_1 = \Threshold - 1$, and let $\LLMContent{\TypeNoArg_I}_1 = \Threshold + \epsilon_1$ for an $\epsilon_1 > 0$ that we will set later. We will also later construct $0 < \epsilon_3 < \epsilon_1$ and $\ProductionCost{\TypeNoArg_I}$. Let $\bar{\FractionNoArg}_3 = \frac{1}{1+ \epsilon_3}$. Let $\kappa = \nabla_1 (\CostFn_1(\Threshold, \Threshold))$. 

We will use the following fact. For any $0 \le \epsilon < \epsilon_1$, let $\alpha = \frac{1}{1+\epsilon}$. For any $\epsilon_4 > 0$, if  $\epsilon_1$ is sufficiently small, and letting $\kappa = \nabla_1 (\CostFn_1(\Threshold, \Threshold))$, it holds that: 
\begin{align*}
&\min\left(\left(\Quality_1(\Combination{\TypeNoArg_I}_1(\FractionNoArg)) - \max_{z'_1 \le \Threshold} \left(\Quality_1(z'_1) - \CostFn_1(\Combination{\TypeNoArg_I}_1(\FractionNoArg), z'_1) \right)  \right), \Penalty \right) \\ 
 &=_{(B)} \min\left(\left(\Quality_1(\Combination{\TypeNoArg_I}_1(\FractionNoArg)) - \Quality_1(\Threshold) + \CostFn_1(\Combination{\TypeNoArg_I}_1(\FractionNoArg), \Threshold)  \right), \Penalty \right) \\
 &=_{(C)} \Quality_1(\Combination{\TypeNoArg_I}_1(\FractionNoArg)) - \Quality_1(\Threshold) + \CostFn_1(\Combination{\TypeNoArg_I}_1(\FractionNoArg), \Threshold) \\
 &= \Quality_1(\Combination{\TypeNoArg_I}_1(\FractionNoArg)) - \Quality_1(\Threshold) + \CostFn_1(\Combination{\TypeNoArg_I}_1(\FractionNoArg), \Threshold) - \CostFn_1(\Threshold, \Threshold) \\
 &\ge \Quality_1\left(\Threshold + \frac{\epsilon_1 - \epsilon}{1 + \epsilon}\right) - \Quality_1(\Threshold) + \left(\inf_{x \in [\Threshold, \Combination{\TypeNoArg_I}_1(\FractionNoArg)]} (\nabla_1 \CostFn_1(x, \Threshold))  \right) \cdot \left(\Combination{\TypeNoArg_I}_1(\FractionNoArg) - \Threshold \right) \\ 
 &\ge_{(D)} \Quality_1\left(\Threshold + \frac{\epsilon_1 - \epsilon}{1 + \epsilon}\right) - \Quality_1(\Threshold) + (\kappa - \epsilon_4) \cdot  \left(\Combination{\TypeNoArg_I}_1(\FractionNoArg) - \Threshold) \right) \\ 
&= \Quality_1\left(\Threshold + \frac{\epsilon_1 - \epsilon}{1 + \epsilon}\right) - \Quality_1(\Threshold) + (\kappa - \epsilon_4) \cdot \frac{\epsilon_1 - \epsilon}{1 + \epsilon}.  
\end{align*}
where (B) uses the fact that $G(z_1) = \Threshold$ for all $z_1 \ge \Threshold$, and (C) uses that $\epsilon_1$ is sufficiently small coupled with the continuity of $\Quality_1$ and assumption (A3) and the continuity of $\CostFn_1$ in its first argument, and (D) also uses that $\epsilon_1$ is sufficiently small coupled with the assumption that  $\CostFn$
is twice continuously differentiable. 

Since \(Q_{i'}\) is non-constant and concave, there exists an interval \(I\)
on which \(Q'_{i'}\) has constant nonzero sign. Thus, for all sufficiently
small \(\epsilon_2>0\), we can choose
\(a_{i'}(\epsilon_2),b_{i'}(\epsilon_2)\in I\) such that
\[
|a_{i'}(\epsilon_2)-b_{i'}(\epsilon_2)|=\epsilon_2,
\qquad
Q_{i'}(a_{i'}(\epsilon_2))>Q_{i'}(b_{i'}(\epsilon_2)).
\]
Moreover, there exist constants \(0<k_1<k_2<\infty\) and \(k_3<\infty\) such that,
for all sufficiently small \(\epsilon_2\),
\[
k_1\le |Q'_{i'}(z)|\le k_2,\qquad |Q''_{i'}(z)|\le k_3
\]
for all \(z\in \operatorname{conv}(a_{i'}(\epsilon_2),b_{i'}(\epsilon_2))\).

We let 
$\LLMContent{\TypeNoArg_I}_{i'} = b_{i'}(\epsilon_2)$ and we let $\HumanContent{\TypeNoArg_I}_{i'} = b_{i'}(\epsilon_2) + (a_{i'}(\epsilon_2) - b_{i'}(\epsilon_2)) \cdot \frac{1 + \epsilon_1}{\epsilon_1}$ for $\epsilon_2 > 0$.

\paragraph{Utility with no detection.} For $\ProductionCost{\TypeNoArg_I} > -\Quality'_1(\Threshold + \epsilon_1) (1+ \epsilon_1) -\Quality'_{i'}(b_{i'}(\epsilon_2)) (b_{i'}(\epsilon_2) - a_{i'}(\epsilon_2)) \cdot \frac{1 + \epsilon_1}{\epsilon_1}$, we  claim that $\FractionNoArg^* = 1$. By concavity, it suffices to show that the derivative at $\FractionNoArg = 1$ is positive. 
By the definition of $\FractionNoArg^*$
\[\Quality(\Combination{\TypeNoArg_I}(\FractionNoArg)) - \ProductionCost{\TypeNoArg_I}(1- \FractionNoArg) = \Quality_1(\Combination{\TypeNoArg_I}_1(\FractionNoArg)) + \Quality_{i'}(\Combination{\TypeNoArg_I}_{i'}(\FractionNoArg)) - \ProductionCost{\TypeNoArg_I}(1- \FractionNoArg)  +  \sum_{i \neq i', 1} \Quality_i(\LLMContent{\TypeNoArg}). \]
Taking a derivative, we obtain:
\begin{align*}
&\Quality'_1(\LLMContent{\TypeNoArg_I}_1) (\LLMContent{\TypeNoArg_I}_1 - \HumanContent{\TypeNoArg_I}_1) + \Quality'_{i'}(\LLMContent{\TypeNoArg_I}_{i'}) (\LLMContent{\TypeNoArg_I}_{i'} - \HumanContent{\TypeNoArg_I}_{i'}) + \ProductionCost{\TypeNoArg_I}  \\
&= \Quality'_1(\Threshold + \epsilon_1) (1+ \epsilon_1) + \Quality'_{i'}(b_{i'}(\epsilon_2)) (b_{i'}(\epsilon_2) - a_{i'}(\epsilon_2)) \cdot \frac{1 + \epsilon_1}{\epsilon_1} + \ProductionCost{\TypeNoArg_I}  \\
&> 0
\end{align*}
as desired.

\paragraph{Quality analysis.}
If $\Quality_1(\Threshold) - \Quality_1(\Threshold + \epsilon_1) + \Quality_{i'}\left(b_{i'}(\epsilon_2) + \frac{(1-\bar{\FractionNoArg}_3)(1+\epsilon_1)}{\epsilon_1} \cdot (a_{i'}(\epsilon_2) - b_{i'}(\epsilon_2)) \right) - \Quality_{i'}(b_{i'}(\epsilon_2)) > 0$, then we claim that $\Quality(\FinalContent{\TypeNoArg_I}(\FractionNoArg)) > \Quality(\Combination{\TypeNoArg_I}(1))$ for all $\FractionNoArg \in \left\{\FractionNoArg^{\text{game}}\right\} \cup ([\FractionNoArg^{\text{game}}, \bar{\FractionNoArg}_3] \cap A^{\TypeNoArg}(\Penalty))$. By Lemma \ref{lemma:detectorfinitecostspostprocessing}, we know that:
\begin{align*}
&\Quality(\FinalContent{\TypeNoArg_I}(\FractionNoArg)) - \Quality(\Combination{\TypeNoArg_I}(1)) \\ &= \Quality_1(\text{argmax}_{z'_1 \le \Threshold}\left(\Quality_1(z'_1) - \CostFn_1(\Combination{\TypeNoArg_I}(\FractionNoArg), z'_1 \right))) - \Quality_1(\Threshold + \epsilon_1) \\
&+ \Quality_{i'}\left(b_{i'}(\epsilon_2) + \frac{(1-\FractionNoArg)(1+\epsilon_1)}{\epsilon_1} \cdot (a_{i'}(\epsilon_2) - b_{i'}(\epsilon_2)) \right) - \Quality_{i'}(b_{i'}(\epsilon_2)) \\
&\ge_{(B)} \Quality_1(\Threshold) - \Quality_1(\Threshold + \epsilon_1) + \Quality_{i'}\left(b_{i'}(\epsilon_2) + \frac{(1-\bar{\FractionNoArg}_3)(1+\epsilon_1)}{\epsilon_1} \cdot (a_{i'}(\epsilon_2) - b_{i'}(\epsilon_2)) \right) - \Quality_{i'}(b_{i'}(\epsilon_2)) \\
&> 0
\end{align*}
where (B) uses the fact that $G(z_1) = \Threshold$ for all $z_1 \ge \Threshold$.  

\paragraph{Utility with detection.}
If $\Quality_{i'}(a_{i'}(\epsilon_2))  - \Quality_{i'}\left(b_{i'}(\epsilon_2) + \frac{\epsilon (1+\epsilon_1)}{\epsilon_1(1+\epsilon)} \cdot (a_{i'}(\epsilon_2) - b_{i'}(\epsilon_2)) \right) -  \ProductionCost{\TypeNoArg_I} \left(\frac{1}{1+\epsilon} - \FractionNoArg^{\text{game}}\right) + (\kappa - \epsilon_4) \cdot \frac{\epsilon_1 - \epsilon}{1 + \epsilon} > 0$ for all $0 < \epsilon < \epsilon_3$, then we claim that 
$\max_{z}\Utility{\TypeNoArg_I}(\FractionNoArg^{\text{game}}, z; \Penalty, \Threshold,\Quality,\CostFn) \ge \max_{z}\Utility{\TypeNoArg_I}(\FractionNoArg, z; \Penalty, \Threshold,\Quality,\CostFn)$ for all $\FractionNoArg \ge \bar{\FractionNoArg}_3$.  
By Lemma \ref{lemma:detectorfinitecostspostprocessing}, we also know that if we write $\FractionNoArg = \frac{1}{1+\epsilon}$, then:
\begin{align*}
 &\max_{z}\Utility{\TypeNoArg_I}(\FractionNoArg^{\text{game}}, z; \Penalty, \Threshold,\Quality,\CostFn) - \max_{z}\Utility{\TypeNoArg_I}(\FractionNoArg, z; \Penalty, \Threshold,\Quality,\CostFn) \\
 &= \Quality(\Combination{\TypeNoArg_I}(\FractionNoArg^{\text{game}}))  + \min\left(\Quality_1(\Combination{\TypeNoArg_I}_1(\FractionNoArg)) - \max_{z'_1 \le \Threshold} (Q_1(z'_1) - \CostFn_1(\Combination{\TypeNoArg_I}(\FractionNoArg), z'_1)), \Penalty\right) \\
 &- \Quality(\Combination{\TypeNoArg_I}(\FractionNoArg)) - \ProductionCost{\TypeNoArg_I} (\FractionNoArg - \FractionNoArg^{\text{game}}) \\
&\ge \Quality_{i'}(a_{i'}(\epsilon_2))  - \Quality_{i'}\left(b_{i'}(\epsilon_2) + \frac{(1-\alpha)(1+\epsilon_1)}{\epsilon_1} \cdot (a_{i'}(\epsilon_2) - b_{i'}(\epsilon_2)) \right) + \Quality_{1}(\Threshold) - \Quality_1\left(\Threshold + \frac{\epsilon_1-\epsilon}{1+\epsilon} \right) \\
&+  \min\left(\Quality_1(\Combination{\TypeNoArg_I}_1(\FractionNoArg)) - \max_{z'_1 \le \Threshold} (Q_1(z'_1) - \CostFn_1(\Combination{\TypeNoArg_I}(\FractionNoArg), z'_1)), \Penalty\right) -  \ProductionCost{\TypeNoArg_I} (\FractionNoArg - \FractionNoArg^{\text{game}}) \\
 &\ge \Quality_{i'}(a_{i'}(\epsilon_2))  - \Quality_{i'}\left(b_{i'}(\epsilon_2) + \frac{\epsilon(1+\epsilon_1)}{\epsilon_1(1+\epsilon)} \cdot (a_{i'}(\epsilon_2) - b_{i'}(\epsilon_2)) \right) -  \ProductionCost{\TypeNoArg_I} \left(\frac{1}{1+\epsilon} - \FractionNoArg^{\text{game}}\right) \\
 &+ (\kappa - \epsilon_4) \cdot \frac{\epsilon_1 - \epsilon}{1 + \epsilon} \\
 &\ge 0,
\end{align*}
where the last inequality holds by assumption for $0 < \epsilon < \epsilon_3$ and by continuity for $\epsilon = 0$ and $\epsilon = \epsilon_3$. 

\paragraph{Guarantees under conditions.}
Suppose that $\ProductionCost{\TypeNoArg_I} > -\Quality'_1(\Threshold + \epsilon_1) (1+ \epsilon_1) -\Quality'_{i'}(b_{i'}(\epsilon_2)) (b_{i'}(\epsilon_2) - a_{i'}(\epsilon_2)) \cdot \frac{1 + \epsilon_1}{\epsilon_1}$, $\Quality_1(\Threshold) - \Quality_1(\Threshold + \epsilon_1) + \Quality_{i'}\left(b_{i'}(\epsilon_2) + \frac{(1-\bar{\FractionNoArg}_3)(1+\epsilon_1)}{\epsilon_1} \cdot (a_{i'}(\epsilon_2) - b_{i'}(\epsilon_2)) \right) - \Quality_{i'}(b_{i'}(\epsilon_2)) > 0$, and \\
$\Quality_{i'}(a_{i'}(\epsilon_2))  - \Quality_{i'}\left(b_{i'}(\epsilon_2) + \frac{\epsilon (1+\epsilon_1)}{\epsilon_1(1+\epsilon)} \cdot (a_{i'}(\epsilon_2) - b_{i'}(\epsilon_2)) \right) -  \ProductionCost{\TypeNoArg_I} \left(\frac{1}{1+\epsilon} - \FractionNoArg^{\text{game}}\right) + (\kappa - \epsilon_4) \cdot \frac{\epsilon_1 - \epsilon}{1 + \epsilon} > 0$. By Theorem \ref{thm:detectorfinitecosts}, we know that $\Fraction{\TypeNoArg_I}(\Penalty, \Threshold; \Quality, \CostFn) \in \left\{1, \FractionNoArg^{\text{game}}\right\} \cup A^{\TypeNoArg}(\Penalty)$, and the above arguments coupled with the tiebreaking rule show that $\Fraction{\TypeNoArg_I}(\Penalty, \Threshold; \Quality, \CostFn) \in \left\{\FractionNoArg^{\text{game}}\right\} \cup ([\FractionNoArg^{\text{game}}, \bar{\FractionNoArg}_3] \cap A^{\TypeNoArg}(\Penalty))$.  
Moreover, by Theorem \ref{thm:nodetector}, we know that $\Fraction{\TypeNoArg_I}(\emptyset; \Quality, \CostFn) = \FractionNoArg^*$, and the above arguments show that $\FractionNoArg^* = 1$. Moreover, Theorem \ref{thm:nodetector} also tells us
that $\FinalContent{\TypeNoArg_I}(\emptyset; \Quality, \CostFn) = \Combination{\TypeNoArg_I}(\FractionNoArg^*)$. Putting this together, and combining with the above arguments, we see that 
\[\Quality(\FinalContent{\TypeNoArg_I}(\Penalty, \Threshold; \Quality, \CostFn)) >  \Quality(\Combination{\TypeNoArg_I}(\Fraction{\TypeNoArg_I}(\emptyset; \Quality, \CostFn))) = \Quality(\FinalContent{\TypeNoArg_I}(\emptyset; \Quality, \CostFn))\]
as desired.

\paragraph{Condition analysis.} It now suffices to construct $\epsilon_1, \epsilon_2, \epsilon_3$ and $\ProductionCost{\TypeNoArg_I}$ that satisfy the following for all $0 \le \epsilon < \epsilon_3$:
\[\ProductionCost{\TypeNoArg_I} > -\Quality'_1(\Threshold + \epsilon_1) (1+ \epsilon_1) -\Quality'_{i'}(b_{i'}(\epsilon_2)) (b_{i'}(\epsilon_2) - a_{i'}(\epsilon_2)) \cdot \frac{1 + \epsilon_1}{\epsilon_1},\]
\[\Quality_1(\Threshold) - \Quality_1(\Threshold + \epsilon_1) + \Quality_{i'}\left(b_{i'}(\epsilon_2) + \frac{(1-\bar{\FractionNoArg}_3)(1+\epsilon_1)}{\epsilon_1} \cdot (a_{i'}(\epsilon_2) - b_{i'}(\epsilon_2)) \right) - \Quality_{i'}(b_{i'}(\epsilon_2)) > 0\]
\[\Quality_{i'}(a_{i'}(\epsilon_2))  - \Quality_{i'}\left(b_{i'}(\epsilon_2) + \frac{\epsilon (1+\epsilon_1)}{\epsilon_1(1+\epsilon)} \cdot (a_{i'}(\epsilon_2) - b_{i'}(\epsilon_2)) \right)  \]
\[-  \ProductionCost{\TypeNoArg_I} \left(\frac{1}{1+\epsilon} - \FractionNoArg^{\text{game}}\right) + (\kappa - \epsilon_4) \cdot \frac{\epsilon_1 - \epsilon}{1 + \epsilon} > 0\]

This is equivalent to 
\[\ProductionCost{\TypeNoArg_I} \cdot  \frac{\epsilon_1 - \epsilon}{(1 + \epsilon_1)(1+\epsilon)} > -\Quality'_1(\Threshold + \epsilon_1) \cdot \frac{\epsilon_1 - \epsilon}{(1+\epsilon)}  + \Quality'_{i'}(b_{i'}(\epsilon_2)) (a_{i'}(\epsilon_2) - b_{i'}(\epsilon_2)) \cdot \frac{\epsilon_1 - \epsilon}{\epsilon_1(1+\epsilon)} ,\]
\[\Quality_1(\Threshold) - \Quality_1(\Threshold + \epsilon_1) + \Quality_{i'}\left(b_{i'}(\epsilon_2) + \frac{\epsilon_3(1+\epsilon_1)}{\epsilon_1 (1+\epsilon_3)} \cdot (a_{i'}(\epsilon_2) - b_{i'}(\epsilon_2)) \right) - \Quality_{i'}(b_{i'}(\epsilon_2)) > 0\]
\[\Quality_{i'}(a_{i'}(\epsilon_2))  - \Quality_{i'}\left(b_{i'}(\epsilon_2) + \frac{\epsilon (1+\epsilon_1)}{\epsilon_1(1+\epsilon)} \cdot (a_{i'}(\epsilon_2) - b_{i'}(\epsilon_2)) \right)  \]
\[-  \ProductionCost{\TypeNoArg_I} \cdot  \frac{\epsilon_1 - \epsilon}{(1 + \epsilon_1)(1+\epsilon)} + (\kappa - \epsilon_4) \cdot \frac{\epsilon_1 - \epsilon}{1 + \epsilon} > 0\]

We can construct $\ProductionCost{\TypeNoArg_I} \ge 0$ as long as: 
\[-\Quality'_1(\Threshold + \epsilon_1) \cdot \frac{\epsilon_1 - \epsilon}{(1+\epsilon)} + \Quality'_{i'}(b_{i'}(\epsilon_2)) (a_{i'}(\epsilon_2) - b_{i'}(\epsilon_2)) \cdot \frac{\epsilon_1 - \epsilon}{\epsilon_1(1+\epsilon)} \]
\[< \Quality_{i'}(a_{i'}(\epsilon_2))  - \Quality_{i'}\left(b_{i'}(\epsilon_2) + \frac{\epsilon(1+\epsilon_1)}{\epsilon_1 (1+\epsilon)} \cdot (a_{i'}(\epsilon_2) - b_{i'}(\epsilon_2)) \right) + (\kappa - \epsilon_4) \cdot \frac{\epsilon_1 - \epsilon}{1 + \epsilon}.\]
\[\Quality_{i'}\left(b_{i'}(\epsilon_2) + \frac{\epsilon_3(1+\epsilon_1)}{\epsilon_1 (1+\epsilon_3)} \cdot (a_{i'}(\epsilon_2) - b_{i'}(\epsilon_2)) \right) - \Quality_{i'}(b_{i'}(\epsilon_2)) > \Quality_1(\Threshold + \epsilon_1) - \Quality_1(\Threshold)\]
\[\Quality_{i'}(a_{i'}(\epsilon_2))  - \Quality_{i'}\left(b_{i'}(\epsilon_2) + \frac{\epsilon (1+\epsilon_1)}{\epsilon_1(1+\epsilon)} \cdot (a_{i'}(\epsilon_2) - b_{i'}(\epsilon_2)) \right)  + (\kappa - \epsilon_4) \cdot \frac{\epsilon_1 - \epsilon}{1 + \epsilon} > 0.\]

We can rewrite the first inequality as:
\[- \Quality'_1(\Threshold + \epsilon_1) \cdot \frac{\epsilon_1 - \epsilon}{(1+\epsilon)} \]
\[< \Quality_{i'}(a_{i'}(\epsilon_2))  - \Quality_{i'}\left(a_{i'}(\epsilon_2) + \frac{\epsilon_1 - \epsilon}{\epsilon_1 (1+\epsilon)} \cdot (b_{i'}(\epsilon_2) - a_{i'}(\epsilon_2)) \right) \]
\[- \Quality'_{i'}(b_{i'}(\epsilon_2)) (a_{i'}(\epsilon_2) - b_{i'}(\epsilon_2)) \cdot \frac{\epsilon_1 - \epsilon}{\epsilon_1(1+\epsilon)} + (\kappa - \epsilon_4) \cdot \frac{\epsilon_1 - \epsilon}{1 + \epsilon}.\]
Using concavity, the right-hand-side can be lower bounded by:
\begin{align*}
  &\Quality_{i'}(a_{i'}(\epsilon_2))  - \Quality_{i'}\left(a_{i'}(\epsilon_2) + \frac{\epsilon_1 - \epsilon}{\epsilon_1 (1+\epsilon)} \cdot (b_{i'}(\epsilon_2) - a_{i'}(\epsilon_2)) \right) \\
  &- \Quality'_{i'}(b_{i'}(\epsilon_2)) (a_{i'}(\epsilon_2) - b_{i'}(\epsilon_2)) \cdot \frac{\epsilon_1 - \epsilon}{\epsilon_1(1+\epsilon)} + (\kappa - \epsilon_4) \cdot \frac{\epsilon_1 - \epsilon}{1 + \epsilon} \\
  &\ge \Quality'_{i'}(a_{i'}(\epsilon_2)) \cdot \frac{\epsilon_1 - \epsilon}{\epsilon_1 (1+\epsilon)} \cdot (a_{i'}(\epsilon_2) - b_{i'}(\epsilon_2)) \\
  &- \Quality'_{i'}(b_{i'}(\epsilon_2)) (a_{i'}(\epsilon_2) - b_{i'}(\epsilon_2)) \cdot \frac{\epsilon_1 - \epsilon}{\epsilon_1(1+\epsilon)} + (\kappa - \epsilon_4) \cdot \frac{\epsilon_1 - \epsilon}{1 + \epsilon} \\
  &= (\Quality'_{i'}(a_{i'}(\epsilon_2)) - \Quality'_{i'}(b_{i'}(\epsilon_2))) \cdot (a_{i'}(\epsilon_2) - b_{i'}(\epsilon_2)) \cdot \frac{\epsilon_1 - \epsilon}{\epsilon_1(1+\epsilon)} + (\kappa - \epsilon_4) \cdot \frac{\epsilon_1 - \epsilon}{1 + \epsilon} \\
  &\ge -\sup_{z_i \in \text{conv}(a_{i'}(\epsilon_2), b_{i'}(\epsilon_2))} |Q''_{i'}(z_i)| \cdot (b_{i'}(\epsilon_2) - a_{i'}(\epsilon_2))^2 \cdot \frac{\epsilon_1 - \epsilon}{\epsilon_1(1+\epsilon)} + (\kappa - \epsilon_4) \cdot \frac{\epsilon_1 - \epsilon}{1 + \epsilon} .
\end{align*}
The first inequality is thus implied by:
\[(\kappa - \epsilon_4) \cdot \epsilon_1 + \epsilon_1 \Quality'_1(\Threshold + \epsilon_1) > \sup_{z_i \in \text{conv}(a_{i'}(\epsilon_2), b_{i'}(\epsilon_2))} |Q''_{i'}(z_i)| \cdot (b_{i'}(\epsilon_2) - a_{i'}(\epsilon_2))^2. \]
The second inequality is implied by:
\[\Quality'_{i'}\left(b_{i'}(\epsilon_2) + \frac{\epsilon_3(1+\epsilon_1)}{\epsilon_1 (1+\epsilon_3)} \cdot (a_{i'}(\epsilon_2) - b_{i'}(\epsilon_2)) \right) \left(\frac{\epsilon_3(1+\epsilon_1)}{\epsilon_1 (1+\epsilon_3)} \right) \left(a_{i'}(\epsilon_2) - b_{i'}(\epsilon_2) \right) > \epsilon_1 \Quality'_1(\Threshold). \]
The third inequality is equivalent to:
\[\Quality_{i'}(a_{i'}(\epsilon_2))  - \Quality_{i'}\left(a_{i'}(\epsilon_2) + \frac{\epsilon_1 - \epsilon}{\epsilon_1(1+\epsilon)} \cdot (b_{i'}(\epsilon_2) - a_{i'}(\epsilon_2)) \right)  + (\kappa - \epsilon_4) \cdot \frac{\epsilon_1 - \epsilon}{1 + \epsilon} > 0.\]

This is implied by: 
\[-\Quality'_{i'}(a_{i'}(\epsilon_2))  \cdot \frac{\epsilon_1 - \epsilon}{\epsilon_1(1+\epsilon)} \cdot (b_{i'}(\epsilon_2) - a_{i'}(\epsilon_2) + (\kappa - \epsilon_4) \cdot \frac{\epsilon_1 - \epsilon}{1 + \epsilon} > 0.\]
This is equivalent to:
\[\Quality'_{i'}(a_{i'}(\epsilon_2))  \cdot (a_{i'}(\epsilon_2) - b_{i'}(\epsilon_2)) + (\kappa - \epsilon_4) \cdot \epsilon_1 > 0.\]

\paragraph{Parameter settings.} Our goal is to construct parameters that satisfy the following three conditions: 
\begin{equation}
\label{eqref:condition1increase}
(\kappa - \epsilon_4) \cdot \epsilon_1 + \epsilon_1 \Quality'_1(\Threshold + \epsilon_1) > \sup_{z_i \in \text{conv}(a_{i'}(\epsilon_2), b_{i'}(\epsilon_2))} |Q''_{i'}(z_i)| \cdot (b_{i'}(\epsilon_2) - a_{i'}(\epsilon_2))^2
\end{equation}
\begin{equation}
\label{eqref:condition2increase}
\Quality'_{i'}\left(b_{i'}(\epsilon_2) + \frac{\epsilon_3(1+\epsilon_1)}{\epsilon_1 (1+\epsilon_3)} \cdot (a_{i'}(\epsilon_2) - b_{i'}(\epsilon_2)) \right) \left(\frac{\epsilon_3(1+\epsilon_1)}{\epsilon_1 (1+\epsilon_3)} \right) \left(a_{i'}(\epsilon_2) - b_{i'}(\epsilon_2) \right) > \epsilon_1 \Quality'_1(\Threshold)
\end{equation}
\begin{equation}
\label{eqref:condition3increase}
\Quality'_{i'}(a_{i'}(\epsilon_2))  \cdot (a_{i'}(\epsilon_2) - b_{i'}(\epsilon_2)) + (\kappa - \epsilon_4) \cdot \epsilon_1 > 0.
\end{equation}

We can take $\epsilon_4 < \min((\kappa + \Quality'_1(\Threshold))/4, \kappa/2)$. Then we satisfy \eqref{eqref:condition3increase}. 
If we take $\epsilon_2$ such that $\left(a_{i'}(\epsilon_2) - b_{i'}(\epsilon_2) \right) = o(\sqrt{(\kappa + Q'(\Threshold)) \epsilon_1})$ and make $\epsilon_1$ sufficiently small, this satisfies \eqref{eqref:condition1increase}. If we take $\epsilon_2$ such that $\left(a_{i'}(\epsilon_2) - b_{i'}(\epsilon_2) \right) = \Omega(\epsilon_1)$, take $\epsilon_3$ sufficiently close to $\epsilon_1$, and take $\epsilon_1$ sufficiently small, then we satisfy \eqref{eqref:condition2increase}.  
We can achieve both conditions on $\epsilon_2$ by making it scale with $\epsilon_1^{3/4}$ and making $\epsilon_1$ sufficiently small.  

\end{proof}

\begin{lemma}
\label{lemma:qualityfinitedecrease}
 Fix $\Penalty > 0$, $\Threshold < \infty$, $\Quality$, and finite costs $\CostFn$. Suppose that the second or third condition in Corollary \ref{cor:monotoneusage} holds. Suppose also that $\nabla_1 (\CostFn_1(\Threshold, \Threshold)) > \max(0, -\Quality'_1(\Threshold))$.  Suppose also  $D \ge 2$, that there exists a dimension $2 \le i' \le D$ such that $\Quality_{i'}$ is non-constant. Then, the presence of the detector 
decreases quality for some user type $\TypeNoArg_D$ such that 
\[ \Quality( \FinalContent{\TypeNoArg_D}(\Penalty, \Threshold; \Quality, \CostFn)) < \Quality( \FinalContent{\TypeNoArg_D}(\emptyset; \Quality, \CostFn)).\]  
\end{lemma}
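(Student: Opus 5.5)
The plan mirrors the constructions in Lemma \ref{lemma:qualityinfinitedecrease} and Lemma \ref{lemma:qualityfiniteincrease}, with the roles of $a_{i'}$ and $b_{i'}$ swapped so that the LLM is \emph{better} on dimension $i'$. Under the hypotheses, Lemma \ref{lemma:gthreshold} or Lemma \ref{lemma:gthresholdmixed} gives $G(z_1)=\Threshold$ for all $z_1\ge\Threshold$. Hence any post-processing user ends exactly at $z_1=\Threshold$, at cost $\CostFn_1(\Combination{\TypeNoArg}_1(\FractionNoArg),\Threshold)$.

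\textbf{Construction of the type.} Take $\HumanContent{\TypeNoArg_D}_i=\LLMContent{\TypeNoArg_D}_i$ for $i\ne 1,i'$, and set $\HumanContent{\TypeNoArg_D}_1=\Threshold-1$ and $\LLMContent{\TypeNoArg_D}_1=\Threshold+\epsilon_1$, so that $\FractionNoArg^{\text{game}}=\frac{1}{1+\epsilon_1}$. As in Lemma \ref{lemma:qualityfiniteincrease}, pick $a_{i'}(\epsilon_2),b_{i'}(\epsilon_2)$ in an interval where $\Quality'_{i'}$ has constant nonzero sign, with $\Quality_{i'}(a_{i'})>\Quality_{i'}(b_{i'})$ and $|a_{i'}-b_{i'}|=\epsilon_2$. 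Set $\LLMContent{\TypeNoArg_D}_{i'}=a_{i'}$ and $\HumanContent{\TypeNoArg_D}_{i'}=a_{i'}+(b_{i'}-a_{i'})\frac{1+\epsilon_1}{\epsilon_1}$, so that $\Combination{\TypeNoArg_D}_{i'}(\FractionNoArg^{\text{game}})=b_{i'}$. Then choose $\ProductionCost{\TypeNoArg_D}$ large enough that the derivative of $\Quality(\Combination{\TypeNoArg_D}(\FractionNoArg))-\ProductionCost{\TypeNoArg_D}(1-\FractionNoArg)$ at $\FractionNoArg=1$ is positive. By concavity this gives $\FractionNoArg^*=1$, and by Theorem \ref{thm:nodetector} the no-detector output is $\LLMContent{\TypeNoArg_D}$.

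\textbf{Detector behavior.} By Theorem \ref{thm:detectorfinitecosts}, $\Fraction{\TypeNoArg_D}\in\{1,\FractionNoArg^{\text{game}}\}\cup A^{\TypeNoArg_D}(\Penalty)$. Moreover, Corollary \ref{cor:monotoneusage} together with Theorem \ref{thm:usagedecrease} ensures that every candidate satisfies $\FractionNoArg\ge\FractionNoArg^{\text{game}}$ only in the form $\FractionNoArg=\frac{1}{1+\epsilon}$ with $\epsilon\in[0,\epsilon_1]$. For such $\FractionNoArg$ in the post-processing branch, the final output has $z_1=\Threshold$ and $z_{i'}=\Combination{\TypeNoArg_D}_{i'}(\FractionNoArg)$.

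\textbf{Comparing the two qualities.} The quality change relative to $\LLMContent{\TypeNoArg_D}$ is
\[\Quality_1(\Threshold)-\Quality_1(\Threshold+\epsilon_1)+\Quality_{i'}(\Combination{\TypeNoArg_D}_{i'}(\FractionNoArg))-\Quality_{i'}(a_{i'}).\]
The term $\Quality_1(\Threshold)-\Quality_1(\Threshold+\epsilon_1)$ is $O(\epsilon_1)$. The $i'$ term is at most a negative quantity proportional to $\frac{\epsilon}{\epsilon_1}\epsilon_2$, which is of order $\epsilon_2$ once $\epsilon$ is bounded below by a constant multiple of $\epsilon_1$. So it suffices to force the optimal allocation to be $\le\bar\FractionNoArg_3=\frac{1}{1+\epsilon_3}$ with $\epsilon_3$ comparable to $\epsilon_1$, and to take $\epsilon_2\gg\epsilon_1$. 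This is the mirror of the increase lemma, where $\epsilon_2\sim\epsilon_1^{3/4}$ was used.

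To force the allocation down, I would show that the utility at $\FractionNoArg^{\text{game}}$ beats the utility at every $\FractionNoArg\ge\bar\FractionNoArg_3$, including $\FractionNoArg=1$. For those $\FractionNoArg$ the user either pays $\Penalty$ or pays post-processing costs. Using the linearization from the proof of Lemma \ref{lemma:qualityfiniteincrease}, these costs are at least $\Quality_1(\Combination{\TypeNoArg_D}_1(\FractionNoArg))-\Quality_1(\Threshold)+(\kappa-\epsilon_4)(\Combination{\TypeNoArg_D}_1(\FractionNoArg)-\Threshold)$, where $\kappa=\nabla_1\CostFn_1(\Threshold,\Threshold)$.

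\textbf{Main obstacle.} The production cost works in two opposite directions. It must be large enough to make $\FractionNoArg^*=1$, which requires roughly $\ProductionCost{\TypeNoArg_D}\frac{\epsilon_1}{1+\epsilon_1}\gtrsim-\epsilon_1\Quality'_1+\Quality'_{i'}(a_{i'})(b_{i'}-a_{i'})\cdot\frac{\epsilon_1-\epsilon}{\epsilon_1}$. Yet it must be small enough that the saved production cost $\ProductionCost{\TypeNoArg_D}(\FractionNoArg-\FractionNoArg^{\text{game}})$ is outweighed by the $i'$-quality gain from using more LLM plus the post-processing cost $(\kappa-\epsilon_4)\frac{\epsilon_1-\epsilon}{1+\epsilon}$.

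The gap between the two requirements comes from two terms:
\begin{itemize}
\item the strict margin $\kappa+\Quality'_1(\Threshold)>0$, which is exactly the hypothesis $\nabla_1\CostFn_1(\Threshold,\Threshold)>-\Quality'_1(\Threshold)$;
\item the concavity error on dimension $i'$, which is $O(\epsilon_2^2)$.
\end{itemize}
I therefore expect the resulting conditions to read $(\kappa-\epsilon_4+\Quality'_1)\epsilon_1\gtrsim k_3\epsilon_2^2$ and $k_1\epsilon_2\cdot\frac{\epsilon_3}{\epsilon_1}\gg\epsilon_1|\Quality'_1|$. I would try $\epsilon_2=\epsilon_1^{3/4}$, $\epsilon_3=\epsilon_1/2$, and $\epsilon_1\to 0$. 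The infinite-cost case of Theorem \ref{thm:qualitydecrease} is already Lemma \ref{lemma:qualityinfinitedecrease}, so this lemma completes the theorem.
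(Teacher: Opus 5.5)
\textbf{The gap: a sign error in the utility comparison.} Your type construction matches the paper's, but the parameter regime you propose fails. The cause is a sign error you carried over from Lemma~\ref{lemma:qualityfiniteincrease}.

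Let $p:=\Quality'_{i'}(a_{i'})(a_{i'}-b_{i'})>0$. Here the LLM is \emph{better} on dimension $i'$. So moving from $\FractionNoArg^{\text{game}}$ to $\FractionNoArg=\frac{1}{1+\epsilon}$ raises $\Quality_{i'}$ by about $\frac{p}{\epsilon_1}\cdot\frac{\epsilon_1-\epsilon}{1+\epsilon}$. This gain, like the saved production cost, favors the \emph{larger} allocation. Only the post-processing cost, about $(\kappa-\epsilon_4)\frac{\epsilon_1-\epsilon}{1+\epsilon}$, favors $\FractionNoArg^{\text{game}}$. Your sentence puts the $i'$-gain on the same side as the post-processing cost, which is backwards. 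What you actually need is roughly
\[
\kappa-\epsilon_4-\frac{p}{\epsilon_1}-\frac{\ProductionCost{\TypeNoArg_D}}{1+\epsilon_1}>0 .
\]
Since $\ProductionCost{\TypeNoArg_D}\ge 0$, this forces $p<(\kappa-\epsilon_4)\epsilon_1$ up to lower-order terms, i.e.\ $\epsilon_2=O(\epsilon_1)$. This is the paper's condition \eqref{eq:condition3qualitydecrease}, and it is missing from your list. In the increase lemma the $i'$ term has the opposite sign and the analogous condition is automatic. That is the only reason $\epsilon_2\sim\epsilon_1^{3/4}$ was admissible there.

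\textbf{Why your scaling fails.} With $\epsilon_2=\epsilon_1^{3/4}$ you get $p/\epsilon_1\to\infty$. Then every $\FractionNoArg\in(\FractionNoArg^{\text{game}},1]$ lies in $A^{\TypeNoArg_D}(\Penalty)$, and the indirect utility on that branch is increasing in $\FractionNoArg$. Moreover, $U(1)-U(\FractionNoArg^{\text{game}})\ge p-\CostFn_1(\Threshold+\epsilon_1,\Threshold)>0$. So the detected user chooses $\FractionNoArg=1$ and post-processes to $\Threshold$. The quality change is then $\Quality_1(\Threshold)-\Quality_1(\Threshold+\epsilon_1)$. This is $\ge 0$ whenever $\Quality_1$ is weakly decreasing, e.g.\ in the linear example after Theorem~\ref{thm:qualitydecrease}. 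So no decrease occurs.

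\textbf{The missing idea.} You need a two-sided \emph{linear} scaling. Pick $\epsilon_2$ with $p(\epsilon_2)=\rho\,\epsilon_1$ for a constant $\max(0,-\Quality'_1(\Threshold))<\rho<\kappa-\epsilon_4$.
\begin{itemize}
\item The lower bound on $\rho$ makes the $\Quality_{i'}$ loss at $\FractionNoArg^{\text{game}}$ (at least $\rho\epsilon_1$ by concavity) beat the $\Quality_1$ gain (about $-\Quality'_1(\Threshold)\epsilon_1$).
\item The upper bound on $\rho$ gives the utility comparison.
\item The window for $\rho$ is nonempty exactly by the hypothesis $\nabla_1\CostFn_1(\Threshold,\Threshold)>\max(0,-\Quality'_1(\Threshold))$.
\end{itemize}
With this scaling the curvature terms are $O(\epsilon_1^2)$, so your first condition becomes automatic. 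Also, $\ProductionCost{\TypeNoArg_D}=0$ already yields $\FractionNoArg^*=1$ for small $\epsilon_1$. So the real tension is a two-sided constraint on $p$, not on the production cost as your ``main obstacle'' suggests.

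\textbf{A secondary issue with $\epsilon_3$.} The choice $\epsilon_3=\epsilon_1/2$ is too crude. If you only exclude $\FractionNoArg\ge\bar{\FractionNoArg}_3$, the quality condition becomes $\rho>-2\Quality'_1(\Threshold)$. That can clash with $\rho<\kappa-\epsilon_4$ when $\kappa$ only slightly exceeds $-\Quality'_1(\Threshold)$. There are two fixes. You can take $\epsilon_3/\epsilon_1\to 1$, as the paper does. Alternatively, note that the utility gap above is proportional to $\epsilon_1-\epsilon$, so the comparison covers all of $(\FractionNoArg^{\text{game}},1]$.
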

\begin{proof}

We let $\HumanContent{\TypeNoArg_D}_i = \LLMContent{\TypeNoArg_D}_i$ for all $i \neq 1, i'$. By applying the assumptions in the lemma along with Lemma \ref{lemma:gthresholdmixed}
in the mixed-partial case and Lemma \ref{lemma:gthreshold} in the weakly-increasing $Q_1$ case, we know that $G(z_1) = \Threshold$ for all $z_1 \ge \Threshold$.

Let $\HumanContent{\TypeNoArg_D}_1 = \Threshold - 1$, and let $\LLMContent{\TypeNoArg_D}_1 = \Threshold + \epsilon_1$ for an $\epsilon_1 > 0$ that we will set later. We will also later construct $0 < \epsilon_3 < \epsilon_1$ and $\ProductionCost{\TypeNoArg_D}$. Let $\bar{\FractionNoArg}_3 = \frac{1}{1+ \epsilon_3}$. Let $\kappa = \nabla_1 (\CostFn_1(\Threshold, \Threshold))$. 

We will use the following fact. For any $0 \le \epsilon < \epsilon_1$, let $\alpha = \frac{1}{1+\epsilon}$. 
For any $\epsilon_4 > 0$, if  $\epsilon_1$ is sufficiently small, and letting $\kappa = \nabla_1 (\CostFn_1(\Threshold, \Threshold))$, it holds that: 
\begin{align*}
&\min\left(\left(\Quality_1(\Combination{\TypeNoArg_D}_1(\FractionNoArg)) - \max_{z'_1 \le \Threshold} \left(\Quality_1(z'_1) - \CostFn_1(\Combination{\TypeNoArg_D}_1(\FractionNoArg), z'_1) \right)  \right), \Penalty \right) \\ 
 &=_{(B)} \min\left(\left(\Quality_1(\Combination{\TypeNoArg_D}_1(\FractionNoArg)) - \Quality_1(\Threshold) + \CostFn_1(\Combination{\TypeNoArg_D}_1(\FractionNoArg), \Threshold)  \right), \Penalty \right) \\
 &=_{(C)} \Quality_1(\Combination{\TypeNoArg_D}_1(\FractionNoArg)) - \Quality_1(\Threshold) + \CostFn_1(\Combination{\TypeNoArg_D}_1(\FractionNoArg), \Threshold) \\
 &= \Quality_1(\Combination{\TypeNoArg_D}_1(\FractionNoArg)) - \Quality_1(\Threshold) + \CostFn_1(\Combination{\TypeNoArg_D}_1(\FractionNoArg), \Threshold) - \CostFn_1(\Threshold, \Threshold) \\
 &\ge \Quality_1\left(\Threshold + \frac{\epsilon_1 - \epsilon}{1 + \epsilon}\right) - \Quality_1(\Threshold) + \left(\inf_{x \in [\Threshold, \Combination{\TypeNoArg_D}_1(\FractionNoArg)]} (\nabla_1 \CostFn_1(x, \Threshold))  \right) \cdot \left(\Combination{\TypeNoArg_D}_1(\FractionNoArg) - \Threshold \right) \\ 
 &\ge_{(D)} \Quality_1\left(\Threshold + \frac{\epsilon_1 - \epsilon}{1 + \epsilon}\right) - \Quality_1(\Threshold) + (\kappa - \epsilon_4) \cdot  \left(\Combination{\TypeNoArg_D}_1(\FractionNoArg) - \Threshold) \right) \\ 
&= \Quality_1\left(\Threshold + \frac{\epsilon_1 - \epsilon}{1 + \epsilon}\right) - \Quality_1(\Threshold) + (\kappa - \epsilon_4) \cdot \frac{\epsilon_1 - \epsilon}{1 + \epsilon}.  
\end{align*}
where (B) uses the fact that $G(z_1) = \Threshold$ for all $z_1 \ge \Threshold$, and (C) uses that $\epsilon_1$ is sufficiently small along with the continuity of $\Quality_1$, assumption (A3), and the continuity of $\CostFn_1$ in its first argument, and (D) also uses that $\epsilon_1$ is sufficiently small along with the assumption that  $\CostFn$
is twice continuously differentiable. 

Since $Q_{i'}$ is non-constant and concave, there exists an interval \(I\)
on which \(Q'_{i'}\) has constant nonzero sign. Thus, for all sufficiently
small \(\epsilon_2>0\), we can choose \(a_{i'}(\epsilon_2),b_{i'}(\epsilon_2)\in I\)
such that
\[
|a_{i'}(\epsilon_2)-b_{i'}(\epsilon_2)|=\epsilon_2,
\qquad
Q_{i'}(a_{i'}(\epsilon_2))>Q_{i'}(b_{i'}(\epsilon_2)),
\]
and
\[
Q'_{i'}(a_{i'}(\epsilon_2))(a_{i'}(\epsilon_2)-b_{i'}(\epsilon_2))>0.
\]
Moreover, there exist constants \(0<k_1<k_2<\infty\) and \(k_3<\infty\) such that,
for all sufficiently small \(\epsilon_2\),
\[
k_1 \le |Q'_{i'}(z)| \le k_2,\qquad |Q''_{i'}(z)|\le k_3
\]
for all \(z\in \operatorname{conv}(a_{i'}(\epsilon_2),b_{i'}(\epsilon_2))\).
Moreover, we choose the parametrization so that the map
\[
\epsilon_2 \mapsto 
Q'_{i'}(a_{i'}(\epsilon_2))(a_{i'}(\epsilon_2)-b_{i'}(\epsilon_2))
\]
extends continuously to \(\epsilon_2=0\), has value \(0\) at \(\epsilon_2=0\), and is strictly positive for all sufficiently small \(\epsilon_2>0\).

Let $\LLMContent{\TypeNoArg_D}_{i'} = a_{i'}(\epsilon_2)$ and we let $\HumanContent{\TypeNoArg_D}_{i'} = a_{i'}(\epsilon_2) + (b_{i'}(\epsilon_2) - a_{i'}(\epsilon_2)) \cdot \frac{1 + \epsilon_1}{\epsilon_1}$ for $\epsilon_2 > 0$.

\paragraph{Utility with no detector.}
For $\ProductionCost{\TypeNoArg_D} > -\Quality'_1(\Threshold + \epsilon_1) (1+ \epsilon_1) -\Quality'_{i'}(a_{i'}(\epsilon_2)) (a_{i'}(\epsilon_2) - b_{i'}(\epsilon_2)) \cdot \frac{1 + \epsilon_1}{\epsilon_1}$, we  claim that $\FractionNoArg^* = 1$. By concavity, it suffices to show that the derivative at $\FractionNoArg = 1$ is positive. 
By the definition of $\FractionNoArg^*$
\[\Quality(\Combination{\TypeNoArg_D}(\FractionNoArg)) - \ProductionCost{\TypeNoArg_D}(1- \FractionNoArg) = \Quality_1(\Combination{\TypeNoArg_D}_1(\FractionNoArg)) + \Quality_{i'}(\Combination{\TypeNoArg_D}_{i'}(\FractionNoArg)) - \ProductionCost{\TypeNoArg_D}(1- \FractionNoArg)  +  \sum_{i \neq i', 1} \Quality_i(\LLMContent{\TypeNoArg}). \]
Taking a derivative, we obtain:
\begin{align*}
&\Quality'_1(\LLMContent{\TypeNoArg_D}_1) (\LLMContent{\TypeNoArg_D}_1 - \HumanContent{\TypeNoArg_D}_1) + \Quality'_{i'}(\LLMContent{\TypeNoArg_D}_{i'}) (\LLMContent{\TypeNoArg_D}_{i'} - \HumanContent{\TypeNoArg_D}_{i'}) + \ProductionCost{\TypeNoArg_D}  \\
&= \Quality'_1(\Threshold + \epsilon_1) (1+ \epsilon_1) + \Quality'_{i'}(a_{i'}(\epsilon_2)) (a_{i'}(\epsilon_2) - b_{i'}(\epsilon_2)) \cdot \frac{1 + \epsilon_1}{\epsilon_1} + \ProductionCost{\TypeNoArg_D}  \\
&> 0
\end{align*}
as desired.

\paragraph{Quality analysis.}
If $\Quality_1(\Threshold) - \Quality_1(\Threshold + \epsilon_1) + \Quality_{i'}\left(a_{i'}(\epsilon_2) + \frac{(1-\bar{\FractionNoArg}_3)(1+\epsilon_1)}{\epsilon_1} \cdot (b_{i'}(\epsilon_2) - a_{i'}(\epsilon_2)) \right) - \Quality_{i'}(a_{i'}(\epsilon_2)) < 0$, then we claim that $\Quality(\FinalContent{\TypeNoArg_D}(\FractionNoArg)) < \Quality(\Combination{\TypeNoArg_D}(1))$ for all $\FractionNoArg \in \left\{\FractionNoArg^{\text{game}}\right\} \cup ([\FractionNoArg^{\text{game}}, \bar{\FractionNoArg}_3] \cap A^{\TypeNoArg}(\Penalty))$. By Lemma \ref{lemma:detectorfinitecostspostprocessing}, we know that:
\begin{align*}
&\Quality(\FinalContent{\TypeNoArg_D}(\FractionNoArg)) - \Quality(\Combination{\TypeNoArg_D}(1)) \\ &= \Quality_1(\text{argmax}_{z'_1 \le \Threshold}\left(\Quality_1(z'_1) - \CostFn_1(\Combination{\TypeNoArg_D}(\FractionNoArg), z'_1 \right))) - \Quality_1(\Threshold + \epsilon_1) \\
&+ \Quality_{i'}\left(a_{i'}(\epsilon_2) + \frac{(1-\FractionNoArg)(1+\epsilon_1)}{\epsilon_1} \cdot (b_{i'}(\epsilon_2) - a_{i'}(\epsilon_2)) \right) - \Quality_{i'}(a_{i'}(\epsilon_2)) \\
&\le_{(B)} \Quality_1(\Threshold) - \Quality_1(\Threshold + \epsilon_1) + \Quality_{i'}\left(a_{i'}(\epsilon_2) + \frac{(1-\bar{\FractionNoArg}_3)(1+\epsilon_1)}{\epsilon_1} \cdot (b_{i'}(\epsilon_2) - a_{i'}(\epsilon_2)) \right) - \Quality_{i'}(a_{i'}(\epsilon_2)) \\
&< 0
\end{align*}
where (B) uses the fact that $G(z_1) = \Threshold$ for all $z_1 \ge \Threshold$. 

\paragraph{Utility under detection.}
If $\Quality_{i'}(b_{i'}(\epsilon_2))  - \Quality_{i'}\left(a_{i'}(\epsilon_2) + \frac{\epsilon (1+\epsilon_1)}{\epsilon_1(1+\epsilon)}  \cdot (b_{i'}(\epsilon_2) - a_{i'}(\epsilon_2)) \right) + (\kappa - \epsilon_4) \cdot \left(\frac{\epsilon_1 - \epsilon}{1 + \epsilon} \right) -  \ProductionCost{\TypeNoArg_D} \left(\frac{1}{1+\epsilon} - \FractionNoArg^{\text{game}}\right)  > 0$ for all $0 < \epsilon < \epsilon_3$, then we claim that 
$\max_{z}\Utility{\TypeNoArg_D}(\FractionNoArg^{\text{game}}, z; \Penalty, \Threshold,\Quality,\CostFn) \ge \max_{z}\Utility{\TypeNoArg_D}(\FractionNoArg, z; \Penalty, \Threshold,\Quality,\CostFn)$ for all $\FractionNoArg \ge \bar{\FractionNoArg}_3$.  
If we let $\alpha = \frac{1}{1+\epsilon}$, then 
by Lemma \ref{lemma:detectorfinitecostspostprocessing}, we know that:
\begin{align*}
 &\max_{z}\Utility{\TypeNoArg_D}(\FractionNoArg^{\text{game}}, z; \Penalty, \Threshold,\Quality,\CostFn) - \max_{z}\Utility{\TypeNoArg_D}(\FractionNoArg, z; \Penalty, \Threshold,\Quality,\CostFn) \\
 &= \Quality(\Combination{\TypeNoArg_D}(\FractionNoArg^{\text{game}}))  + \min\left(\Quality_1(\Combination{\TypeNoArg_D}_1(\FractionNoArg)) - \max_{z'_1 \le \Threshold} (Q_1(z'_1) - \CostFn_1(\Combination{\TypeNoArg_D}(\FractionNoArg), z'_1)), \Penalty\right) \\
 &- \Quality(\Combination{\TypeNoArg_D}(\FractionNoArg)) - \ProductionCost{\TypeNoArg_D} (\FractionNoArg - \FractionNoArg^{\text{game}}) \\
&\ge \Quality_{i'}(b_{i'}(\epsilon_2))  - \Quality_{i'}\left(a_{i'}(\epsilon_2) + \frac{\epsilon (1+\epsilon_1)}{\epsilon_1(1+\epsilon)}  \cdot (b_{i'}(\epsilon_2) - a_{i'}(\epsilon_2)) \right) \\
&+ (\kappa - \epsilon_4) \cdot \left(\frac{\epsilon_1 - \epsilon}{1 + \epsilon} \right) -  \ProductionCost{\TypeNoArg_D} \left(\frac{1}{1+\epsilon} - \FractionNoArg^{\text{game}}\right) \\
&\ge 0,
\end{align*}
where the last inequality holds by assumption for $0 < \epsilon < \epsilon_3$ and by continuity for $\epsilon = 0$ and $\epsilon = \epsilon_3$.
\paragraph{Guarantees under conditions.}
Suppose that $\ProductionCost{\TypeNoArg_D} > -\Quality'_1(\Threshold + \epsilon_1) (1+ \epsilon_1) -\Quality'_{i'}(a_{i'}(\epsilon_2)) (a_{i'}(\epsilon_2) - b_{i'}(\epsilon_2)) \cdot \frac{1 + \epsilon_1}{\epsilon_1}$, $\Quality_1(\Threshold) - \Quality_1(\Threshold + \epsilon_1) + \Quality_{i'}\left(a_{i'}(\epsilon_2) + \frac{(1-\bar{\FractionNoArg}_3)(1+\epsilon_1)}{\epsilon_1} \cdot (b_{i'}(\epsilon_2) - a_{i'}(\epsilon_2)) \right) - \Quality_{i'}(a_{i'}(\epsilon_2)) < 0$, and \\
$\Quality_{i'}(b_{i'}(\epsilon_2))  - \Quality_{i'}\left(a_{i'}(\epsilon_2) + \frac{\epsilon (1+\epsilon_1)}{\epsilon_1(1+\epsilon)}  \cdot (b_{i'}(\epsilon_2) - a_{i'}(\epsilon_2)) \right) + (\kappa - \epsilon_4) \cdot \left(\frac{\epsilon_1 - \epsilon}{1 + \epsilon} \right) -  \ProductionCost{\TypeNoArg_D} \left(\frac{1}{1+\epsilon} - \FractionNoArg^{\text{game}}\right)  > 0$ for all $0 \le \epsilon < \epsilon_3$. By Theorem \ref{thm:detectorfinitecosts}, we know that $\Fraction{\TypeNoArg_D}(\Penalty, \Threshold; \Quality, \CostFn) \in \left\{1, \FractionNoArg^{\text{game}}\right\} \cup A^{\TypeNoArg}(\Penalty)$, and the above arguments coupled with the tiebreaking rule show that $\Fraction{\TypeNoArg_D}(\Penalty, \Threshold; \Quality, \CostFn) \in \left\{\FractionNoArg^{\text{game}}\right\} \cup ([\FractionNoArg^{\text{game}}, \bar{\FractionNoArg}_3] \cap A^{\TypeNoArg}(\Penalty))$. 
Moreover, by Theorem \ref{thm:nodetector}, we know that $\Fraction{\TypeNoArg_D}(\emptyset; \Quality, \CostFn) = \FractionNoArg^*$, and the above arguments show that $\FractionNoArg^* = 1$. Moreover, Theorem \ref{thm:nodetector} also tells us
that $\FinalContent{\TypeNoArg_D}(\emptyset; \Quality, \CostFn) = \Combination{\TypeNoArg_D}(\FractionNoArg^*)$. Putting this together, and combining with the above arguments, we see that 
\[\Quality(\FinalContent{\TypeNoArg_D}(\Penalty, \Threshold; \Quality, \CostFn)) <  \Quality(\Combination{\TypeNoArg_D}(\Fraction{\TypeNoArg_D}(\emptyset; \Quality, \CostFn))) = \Quality(\FinalContent{\TypeNoArg_D}(\emptyset; \Quality, \CostFn))\]
as desired.

\paragraph{Condition analysis.}
It now suffices to construct $\epsilon_1, \epsilon_2, \epsilon_3$ and $\ProductionCost{\TypeNoArg_D}$ that satisfy the following for all $0 \le \epsilon < \epsilon_3$:
\[\ProductionCost{\TypeNoArg_D} > -\Quality'_1(\Threshold + \epsilon_1) (1+ \epsilon_1) -\Quality'_{i'}(a_{i'}(\epsilon_2)) (a_{i'}(\epsilon_2) - b_{i'}(\epsilon_2)) \cdot \frac{1 + \epsilon_1}{\epsilon_1},\]
\[\Quality_1(\Threshold) - \Quality_1(\Threshold + \epsilon_1) + \Quality_{i'}\left(a_{i'}(\epsilon_2) + \frac{(1-\bar{\FractionNoArg}_3)(1+\epsilon_1)}{\epsilon_1} \cdot (b_{i'}(\epsilon_2) - a_{i'}(\epsilon_2)) \right) - \Quality_{i'}(a_{i'}(\epsilon_2)) < 0\]
\[\Quality_{i'}(b_{i'}(\epsilon_2))  - \Quality_{i'}\left(a_{i'}(\epsilon_2) + \frac{\epsilon (1+\epsilon_1)}{\epsilon_1(1+\epsilon)}  \cdot (b_{i'}(\epsilon_2) - a_{i'}(\epsilon_2)) \right) \]
\[+ (\kappa - \epsilon_4) \cdot \left(\frac{\epsilon_1 - \epsilon}{1 + \epsilon} \right) -  \ProductionCost{\TypeNoArg_D} \left(\frac{1}{1+\epsilon} - \FractionNoArg^{\text{game}}\right)  > 0.\]

This is equivalent to 
\[\ProductionCost{\TypeNoArg_D} \cdot  \frac{\epsilon_1 - \epsilon}{(1 + \epsilon_1)(1+\epsilon)} > -\Quality'_1(\Threshold + \epsilon_1) \cdot \frac{\epsilon_1 - \epsilon}{(1+\epsilon)}  + \Quality'_{i'}(a_{i'}(\epsilon_2)) (b_{i'}(\epsilon_2) - a_{i'}(\epsilon_2)) \cdot \frac{\epsilon_1 - \epsilon}{\epsilon_1(1+\epsilon)} ,\]
\[\Quality_1(\Threshold) - \Quality_1(\Threshold + \epsilon_1) + \Quality_{i'}\left(a_{i'}(\epsilon_2) + \frac{\epsilon_3(1+\epsilon_1)}{\epsilon_1 (1+\epsilon_3)} \cdot (b_{i'}(\epsilon_2) - a_{i'}(\epsilon_2)) \right) - \Quality_{i'}(a_{i'}(\epsilon_2)) < 0\]
\[\Quality_{i'}(b_{i'}(\epsilon_2))  - \Quality_{i'}\left(a_{i'}(\epsilon_2) + \frac{\epsilon(1+\epsilon_1)}{\epsilon_1 (1+\epsilon)} \cdot (b_{i'}(\epsilon_2) - a_{i'}(\epsilon_2)) \right) \]
\[+ (\kappa - \epsilon_4) \cdot \left(\frac{\epsilon_1 - \epsilon}{1 + \epsilon} \right) -  \ProductionCost{\TypeNoArg_D} \cdot \frac{\epsilon_1 - \epsilon}{(1 + \epsilon_1)(1+\epsilon)}  > 0.\]

We can construct $\ProductionCost{\TypeNoArg_D} \ge 0$ as long as: 
\[-\Quality'_1(\Threshold + \epsilon_1) \cdot \frac{\epsilon_1 - \epsilon}{(1+\epsilon)} + \Quality'_{i'}(a_{i'}(\epsilon_2)) (b_{i'}(\epsilon_2) - a_{i'}(\epsilon_2)) \cdot \frac{\epsilon_1 - \epsilon}{\epsilon_1(1+\epsilon)} \]
\[< \Quality_{i'}(b_{i'}(\epsilon_2))  - \Quality_{i'}\left(a_{i'}(\epsilon_2) + \frac{\epsilon(1+\epsilon_1)}{\epsilon_1 (1+\epsilon)} \cdot (b_{i'}(\epsilon_2) - a_{i'}(\epsilon_2)) \right) + (\kappa - \epsilon_4) \cdot \left(\frac{\epsilon_1 - \epsilon}{1 + \epsilon} \right) .\]
\[\Quality_{i'}\left(a_{i'}(\epsilon_2) + \frac{\epsilon_3(1+\epsilon_1)}{\epsilon_1 (1+\epsilon_3)} \cdot (b_{i'}(\epsilon_2) - a_{i'}(\epsilon_2)) \right) - \Quality_{i'}(a_{i'}(\epsilon_2)) < \Quality_1(\Threshold + \epsilon_1) - \Quality_1(\Threshold)\]
\[\Quality_{i'}(b_{i'}(\epsilon_2))  - \Quality_{i'}\left(a_{i'}(\epsilon_2) + \frac{\epsilon(1+\epsilon_1)}{\epsilon_1 (1+\epsilon)} \cdot (b_{i'}(\epsilon_2) - a_{i'}(\epsilon_2)) \right) + (\kappa - \epsilon_4) \cdot \left(\frac{\epsilon_1 - \epsilon}{1 + \epsilon} \right) > 0.\]

We can rewrite the first inequality as:
\[- \Quality'_1(\Threshold + \epsilon_1) \cdot \frac{\epsilon_1 - \epsilon}{(1+\epsilon)}  \]
\[< \Quality_{i'}(b_{i'}(\epsilon_2))  - \Quality_{i'}\left(b_{i'}(\epsilon_2) + \frac{\epsilon_1 - \epsilon}{\epsilon_1 (1+\epsilon)} \cdot (a_{i'}(\epsilon_2) - b_{i'}(\epsilon_2)) \right)\]
\[- \Quality'_{i'}(a_{i'}(\epsilon_2)) (b_{i'}(\epsilon_2) - a_{i'}(\epsilon_2)) \cdot \frac{\epsilon_1 - \epsilon}{\epsilon_1(1+\epsilon)} + (\kappa - \epsilon_4) \cdot \left(\frac{\epsilon_1 - \epsilon}{1 + \epsilon} \right).\]

Using concavity, the right-hand side can be lower bounded by: 
\begin{align*}
  &\Quality_{i'}(b_{i'}(\epsilon_2))  - \Quality_{i'}\left(b_{i'}(\epsilon_2) + \frac{\epsilon_1 - \epsilon}{\epsilon_1 (1+\epsilon)} \cdot (a_{i'}(\epsilon_2) - b_{i'}(\epsilon_2)) \right) \\
  &- \Quality'_{i'}(a_{i'}(\epsilon_2)) (b_{i'}(\epsilon_2) - a_{i'}(\epsilon_2)) \cdot \frac{\epsilon_1 - \epsilon}{\epsilon_1(1+\epsilon)} + (\kappa - \epsilon_4) \cdot \left(\frac{\epsilon_1 - \epsilon}{1 + \epsilon} \right) \\
  &\ge \Quality'_{i'}(b_{i'}(\epsilon_2)) \cdot \frac{\epsilon_1 - \epsilon}{\epsilon_1 (1+\epsilon)} \cdot (b_{i'}(\epsilon_2) - a_{i'}(\epsilon_2))\\
  &- \Quality'_{i'}(a_{i'}(\epsilon_2)) (b_{i'}(\epsilon_2) - a_{i'}(\epsilon_2)) \cdot \frac{\epsilon_1 - \epsilon}{\epsilon_1(1+\epsilon)} + (\kappa - \epsilon_4) \cdot \left(\frac{\epsilon_1 - \epsilon}{1 + \epsilon} \right) \\
  &= (\Quality'_{i'}(b_{i'}(\epsilon_2)) - \Quality'_{i'}(a_{i'}(\epsilon_2))) \cdot (b_{i'}(\epsilon_2) - a_{i'}(\epsilon_2)) \cdot \frac{\epsilon_1 - \epsilon}{\epsilon_1(1+\epsilon)} + (\kappa - \epsilon_4) \cdot \left(\frac{\epsilon_1 - \epsilon}{1 + \epsilon} \right)\\
  &\ge -\sup_{z_i \in \text{conv}(a_{i'}(\epsilon_2), b_{i'}(\epsilon_2))} |Q''_{i'}(z_i)| \cdot (b_{i'}(\epsilon_2) - a_{i'}(\epsilon_2))^2 \cdot \frac{\epsilon_1 - \epsilon}{\epsilon_1(1+\epsilon)} + (\kappa - \epsilon_4) \cdot \left(\frac{\epsilon_1 - \epsilon}{1 + \epsilon} \right).
\end{align*}

The first inequality is thus implied by:
\[\sup_{z_i \in \text{conv}(a_{i'}(\epsilon_2), b_{i'}(\epsilon_2))} |Q''_{i'}(z_i)| \cdot (b_{i'}(\epsilon_2) - a_{i'}(\epsilon_2))^2  < \epsilon_1 (\kappa - \epsilon_4) + \epsilon_1 \Quality'_1(\Threshold + \epsilon_1) \]
We can rewrite the second inequality as: 
\[\Quality_{i'}(a_{i'}(\epsilon_2)) - \Quality_{i'}\left(a_{i'}(\epsilon_2) + \frac{\epsilon_3(1+\epsilon_1)}{\epsilon_1 (1+\epsilon_3)} \cdot (b_{i'}(\epsilon_2) - a_{i'}(\epsilon_2)) \right) >\Quality_1(\Threshold) -  \Quality_1(\Threshold + \epsilon_1)\]
The second inequality is implied by: 
\[\Quality'_{i'}(a_{i'}(\epsilon_2)) \left(\frac{\epsilon_3(1+\epsilon_1)}{\epsilon_1 (1+\epsilon_3)} \right) \cdot (a_{i'}(\epsilon_2) - b_{i'}(\epsilon_2)) > -\epsilon_1 Q'_1(\Threshold + \epsilon_1).\]
We can rewrite the third inequality as:
\[(\kappa - \epsilon_4) \cdot \left(\frac{\epsilon_1 - \epsilon}{1 + \epsilon} \right) > \Quality_{i'}\left(b_{i'}(\epsilon_2) + \frac{\epsilon_1 - \epsilon}{\epsilon_1 (1+\epsilon)} \cdot (a_{i'}(\epsilon_2) - b_{i'}(\epsilon_2)) \right)- \Quality_{i'}(b_{i'}(\epsilon_2)).\]
The third inequality is implied by: \[(\kappa - \epsilon_4) \cdot \left(\frac{\epsilon_1 - \epsilon}{1 + \epsilon} \right) >  \Quality'_{i'}(b_{i'}(\epsilon_2)) \left(\frac{\epsilon_1 - \epsilon}{\epsilon_1 (1+\epsilon)} \right) (a_{i'}(\epsilon_2) - b_{i'}(\epsilon_2)),\]
which can be written as:
\[\epsilon_1(\kappa - \epsilon_4) >  \Quality'_{i'}(b_{i'}(\epsilon_2)) (a_{i'}(\epsilon_2) - b_{i'}(\epsilon_2)).\]
which is implied by:
\[-\epsilon_1 \cdot Q'_1(\Threshold + \epsilon_1) + (\epsilon_1 \cdot Q'_1(\Threshold + \epsilon_1) - \epsilon_1 \cdot Q'_1(\Threshold)) + \epsilon_1(\kappa + Q'_1(\Threshold) - \epsilon_4)  \]
\[>  \Quality'_{i'}(a_{i'}(\epsilon_2)) (a_{i'}(\epsilon_2) - b_{i'}(\epsilon_2)) + \sup_{z_i \in \text{conv}(a_{i'}(\epsilon_2), b_{i'}(\epsilon_2))} |Q''_{i'}(z_i)| \cdot (a_{i'}(\epsilon_2) - b_{i'}(\epsilon_2))^2.\]
which is implied by:
\[-\epsilon_1 \cdot Q'_1(\Threshold + \epsilon_1) - \epsilon^2_1 \cdot \sup_{z_1 \in [\Threshold, \Threshold + \epsilon_1]} |Q''(z_1)| + \epsilon_1(\kappa + Q'_1(\Threshold) - \epsilon_4)  \]
\[>  \Quality'_{i'}(a_{i'}(\epsilon_2)) (a_{i'}(\epsilon_2) - b_{i'}(\epsilon_2)) + \sup_{z_i \in \text{conv}(a_{i'}(\epsilon_2), b_{i'}(\epsilon_2))} |Q''_{i'}(z_i)| \cdot (a_{i'}(\epsilon_2) - b_{i'}(\epsilon_2))^2.\]

\paragraph{Parameter settings.} Our goal is to construct parameters that satisfy the following three conditions:
\begin{equation}
\label{eq:condition1qualitydecrease}
\sup_{z_i \in \text{conv}(a_{i'}(\epsilon_2), b_{i'}(\epsilon_2))} |Q''_{i'}(z_i)| \cdot (b_{i'}(\epsilon_2) - a_{i'}(\epsilon_2))^2  < \epsilon_1 (\kappa - \epsilon_4) + \epsilon_1 \Quality'_1(\Threshold + \epsilon_1) 
\end{equation}
\begin{equation}
\label{eq:condition2qualitydecrease}
\Quality'_{i'}(a_{i'}(\epsilon_2)) \left(\frac{\epsilon_3(1+\epsilon_1)}{\epsilon_1 (1+\epsilon_3)} \right) \cdot (a_{i'}(\epsilon_2) - b_{i'}(\epsilon_2)) > -\epsilon_1 Q'_1(\Threshold + \epsilon_1)
\end{equation}
\begin{equation}
\label{eq:condition3qualitydecrease}
\begin{split}
& -\epsilon_1 \cdot Q'_1(\Threshold + \epsilon_1)  - \epsilon^2_1 \cdot \sup_{z_1 \in [\Threshold, \Threshold + \epsilon_1]} |Q''(z_1)|  + \epsilon_1(\kappa + Q'_1(\Threshold) - \epsilon_4) \\
&>  \Quality'_{i'}(a_{i'}(\epsilon_2)) (a_{i'}(\epsilon_2) - b_{i'}(\epsilon_2)) + \sup_{z_i \in \text{conv}(a_{i'}(\epsilon_2), b_{i'}(\epsilon_2))} |Q''_{i'}(z_i)| \cdot (a_{i'}(\epsilon_2) - b_{i'}(\epsilon_2))^2
\end{split}
\end{equation}

  Let
  \[
  p(\epsilon_2)
  := Q'_{i'}(a_{i'}(\epsilon_2))
     (a_{i'}(\epsilon_2)-b_{i'}(\epsilon_2)).
  \]
  By construction, \(p(\epsilon_2)>0\) for all sufficiently small
  \(\epsilon_2>0\), \(p(\epsilon_2)\to 0\) as \(\epsilon_2\to 0\), and,
  since \(|Q'_{i'}|\) is bounded above and below on the chosen interval,
  \(p(\epsilon_2)=\Theta(|a_{i'}(\epsilon_2)-b_{i'}(\epsilon_2)|)\).

  Choose \(\epsilon_4>0\) small enough that
  \[
  \max(0,-Q'_1(\nu))
  < \kappa-\epsilon_4 .
  \]
  Then choose a constant \(\rho\) satisfying
  \[
  \max(0,-Q'_1(\nu))
  < \rho
  < \kappa-\epsilon_4 .
  \]
  For each sufficiently small \(\epsilon_1\), choose \(\epsilon_2\) so that
  \[
  p(\epsilon_2)=\rho\epsilon_1 .
  \]
  This is possible by the continuity of \(p\) and the fact that
  \(p(\epsilon_2)=\Theta(\epsilon_2)\) locally. In particular,
  \[
  a_{i'}(\epsilon_2)-b_{i'}(\epsilon_2)=O(\epsilon_1),
  \]
  and hence the curvature term
  \[
  \sup_{z_i\in\operatorname{conv}(a_{i'}(\epsilon_2),b_{i'}(\epsilon_2))}
  |Q''_{i'}(z_i)|\,
  (a_{i'}(\epsilon_2)-b_{i'}(\epsilon_2))^2
  \]
  is \(O(\epsilon_1^2)\).

  We now verify \eqref{eq:condition1qualitydecrease}--\eqref{eq:condition3qualitydecrease}. Since
  \[
  \epsilon_1(\kappa-\epsilon_4)+\epsilon_1 Q'_1(\nu+\epsilon_1)
  =
  \epsilon_1(\kappa+Q'_1(\nu)-\epsilon_4)+O(\epsilon_1^2),
  \]
  and \(\kappa+Q'_1(\nu)-\epsilon_4>0\), condition \eqref{eq:condition1qualitydecrease} holds for
  sufficiently small \(\epsilon_1\). Condition \eqref{eq:condition2qualitydecrease} holds because
  \(\epsilon_3\) is chosen sufficiently close to \(\epsilon_1\), so the
  multiplicative factor converges to \(1\), and
  \[
  p(\epsilon_2)=\rho\epsilon_1
   >
  -\epsilon_1 Q'_1(\nu+\epsilon_1)
  \]
  for sufficiently small \(\epsilon_1\), where we use that $\rho > -\Quality'_1(\Threshold)$ and $Q'_1(\nu+\epsilon_1) \rightarrow \Quality'_1(\Threshold)$.  
  Finally, condition \eqref{eq:condition3qualitydecrease} holds
  because its left-hand side is lower bounded as 
  \[
  \epsilon_1(\kappa+Q'_1(\nu)-\epsilon_4)-\epsilon_1Q'_1(\nu+\epsilon_1)
  -O(\epsilon_1^2)
  =
  \epsilon_1(\kappa-\epsilon_4)-O(\epsilon_1^2),
  \]
  while its right-hand side is upper bounded as
  \[
  p(\epsilon_2)+O(\epsilon_1^2)=\rho\epsilon_1+O(\epsilon_1^2),
  \]
  and \(\rho<\kappa-\epsilon_4\) was chosen with slack.

\end{proof}

\begin{lemma}
\label{lemma:qualityfiniteconstant}
 Fix $\Penalty > 0$, $\Threshold < \infty$, $\Quality$, and finite costs $\CostFn$. Suppose that the second or third condition in Corollary \ref{cor:monotoneusage} holds.  Then, the presence of the detector keeps quality the same for some user type $\TypeNoArg_N$ 
\[ \Quality( \FinalContent{\TypeNoArg_N}(\Penalty, \Threshold; \Quality, \CostFn)) = \Quality( \FinalContent{\TypeNoArg_N}(\emptyset; \Quality, \CostFn)).\]   
\end{lemma}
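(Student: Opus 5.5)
We will mirror the proof of Lemma \ref{lemma:qualityinfiniteconstant}: choose a type whose entire segment of convex combinations lies strictly below the threshold, so that the detector never binds and the user behaves exactly as in the no-detector baseline. Concretely, let $\TypeNoArg_N$ have $\HumanContent{\TypeNoArg_N}_1 < \LLMContent{\TypeNoArg_N}_1 < \Threshold$, which is a valid type in $\mathcal{T}$ since the strict inequality $\LLMContent{\TypeNoArg_N}_1 > \HumanContent{\TypeNoArg_N}_1$ holds. Set $\HumanContent{\TypeNoArg_N}_i = \LLMContent{\TypeNoArg_N}_i$ for all $i \ge 2$, and take any $\ProductionCost{\TypeNoArg_N} \ge 0$. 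The assumption that the second or third condition of Corollary \ref{cor:monotoneusage} holds is not actually needed for this construction. It only places us in the finite-cost setting, so that Theorem \ref{thm:detectorfinitecosts} applies.

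The key steps, in order, are as follows.

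First, since $\Combination{\TypeNoArg_N}_1(\FractionNoArg)$ is a convex combination of $\HumanContent{\TypeNoArg_N}_1$ and $\LLMContent{\TypeNoArg_N}_1$, we have $\Combination{\TypeNoArg_N}_1(\FractionNoArg) \le \LLMContent{\TypeNoArg_N}_1 < \Threshold$ for every $\FractionNoArg \in [0,1]$. In particular this holds at $\FractionNoArg^*$.

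Second, we invoke Theorem \ref{thm:detectorfinitecosts}, which gives $\Fraction{\TypeNoArg_N}(\Penalty, \Threshold; \Quality, \CostFn) = \FractionNoArg^*$ when $\Combination{\TypeNoArg_N}_1(\FractionNoArg^*) \le \Threshold$.

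Third, we determine the final output. Because $\FractionNoArg^* \notin A^{\TypeNoArg_N}(\Penalty)$ (membership requires $\Combination{\TypeNoArg_N}_1(\FractionNoArg) > \Threshold$), the same theorem gives $\FinalContent{\TypeNoArg_N}(\Penalty, \Threshold; \Quality, \CostFn) = \Combination{\TypeNoArg_N}(\FractionNoArg^*)$. This holds both in the first coordinate and in coordinates $i \ge 2$.

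Fourth, we handle the baseline. Theorem \ref{thm:nodetector} gives $\Fraction{\TypeNoArg_N}(\emptyset; \Quality, \CostFn) = \FractionNoArg^*$ and $\FinalContent{\TypeNoArg_N}(\emptyset; \Quality, \CostFn) = \Combination{\TypeNoArg_N}(\FractionNoArg^*)$.

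Since the two final outputs coincide, their qualities are equal, which is the claim.

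There is essentially no obstacle here. The only point needing care is confirming that the first case of Theorem \ref{thm:detectorfinitecosts} applies: its condition $\Combination{\TypeNoArg_N}_1(\FractionNoArg^*) \le \Threshold$ must hold, and the first-coordinate formula must fall in the ``else'' branch. Both follow from the strict inequality $\Combination{\TypeNoArg_N}_1(\FractionNoArg^*) < \Threshold$.
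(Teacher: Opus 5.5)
Your proposal is correct and follows essentially the same route as the paper: the paper also chooses a type with $\HumanContent{\TypeNoArg_N}_1 < \LLMContent{\TypeNoArg_N}_1 < \Threshold$ and identical coordinates for $i \ge 2$, and then invokes Theorem \ref{thm:detectorfinitecosts} and Theorem \ref{thm:nodetector} to conclude that both final outputs equal $\Combination{\TypeNoArg_N}(\FractionNoArg^*)$. Two of your remarks are accurate additions that the paper leaves implicit: your check that $\FractionNoArg^* \notin A^{\TypeNoArg_N}(\Penalty)$, so that the first coordinate falls in the ``else'' branch, and your observation that the Corollary \ref{cor:monotoneusage} condition is never used.
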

\begin{proof}
Let $\TypeNoArg_N$ be such that $\HumanContent{\TypeNoArg_N}_1 <  \LLMContent{\TypeNoArg_N}_1 < \Threshold$, and let $\HumanContent{\TypeNoArg_N}_{i} = \LLMContent{\TypeNoArg_N}_{i}$ for all $i \ge 2$. By Theorem \ref{thm:detectorfinitecosts}, we know that $\Fraction{\TypeNoArg_N}(\Penalty, \Threshold; \Quality, \CostFn) = \FractionNoArg^*$ and $\FinalContent{\TypeNoArg_N}(\Penalty, \Threshold; \Quality, \CostFn) = \Combination{\TypeNoArg_N}(\FractionNoArg^*)$. By Theorem \ref{thm:nodetector}, we know that $\Fraction{\TypeNoArg_N}(\emptyset; \Quality, \CostFn) = \FractionNoArg^*$ and $\FinalContent{\TypeNoArg_N}(\emptyset; \Quality, \CostFn) = \Combination{\TypeNoArg_N}(\FractionNoArg^*)$. This implies that $\Quality(\FinalContent{\TypeNoArg_N}(\Penalty, \Threshold; \Quality, \CostFn)) = \Quality(\FinalContent{\TypeNoArg_N}(\emptyset; \Quality, \CostFn))$ as desired. 
    
\end{proof}

\subsection{Proofs from lemmas}
The desired results follow from these lemmas.
\begin{proof}[Proof of Theorem \ref{thm:qualitydecrease}]
  This follows from Lemma \ref{lemma:qualityfinitedecrease} and Lemma \ref{lemma:qualityinfinitedecrease}.  
\end{proof}
\begin{proof}[Proof of Theorem \ref{thm:qualitygeneral}]
This follows from Lemma \ref{lemma:qualityinfiniteincrease}, Lemma \ref{lemma:qualityinfiniteconstant}, Lemma \ref{lemma:qualityfiniteincrease}, and Lemma \ref{lemma:qualityfiniteconstant}.
\end{proof}

\section{Proofs for Section \ref{sec:inverseU}}\label{appendix:inverseU}

\subsection{Proof of Theorem \ref{thm:detectedattributeUshaped}}

We split into two lemmas.

\begin{lemma}
\label{lemma:Ushaped}
Fix $\Penalty > 0$, $\Threshold < \infty$, $\Quality$, and $\CostFn$.   For any type $\TypeNoArg \in \mathcal{T}$, it holds that:
\[ \HumanContent{\TypeNoArg}_1 \le \FinalContent{\TypeNoArg}_1(\emptyset; \Quality, \CostFn) \text{ and } \FinalContent{\TypeNoArg}_1(\emptyset; \Quality, \CostFn) \ge  \FinalContent{\TypeNoArg}_1(\Penalty, \Threshold; \Quality, \CostFn). \]
\end{lemma}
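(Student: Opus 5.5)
Both inequalities reduce to the characterizations in Appendix \ref{appendix:characerization}. For the first, Theorem \ref{thm:nodetector} gives $\FinalContent{\TypeNoArg}(\emptyset; \Quality, \CostFn) = \Combination{\TypeNoArg}(\FractionNoArg^*)$. Hence $\FinalContent{\TypeNoArg}_1(\emptyset; \Quality, \CostFn) = \FractionNoArg^* \LLMContent{\TypeNoArg}_1 + (1-\FractionNoArg^*)\HumanContent{\TypeNoArg}_1$. Since $\FractionNoArg^* \in [0,1]$ and $\LLMContent{\TypeNoArg}_1 > \HumanContent{\TypeNoArg}_1$ on $\mathcal{T}$, this is at least $\HumanContent{\TypeNoArg}_1$. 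The same monotonicity of $\FractionNoArg \mapsto \Combination{\TypeNoArg}_1(\FractionNoArg)$ will drive the second inequality.

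For the second inequality, write $\hat\FractionNoArg := \Fraction{\TypeNoArg}(\Penalty, \Threshold; \Quality, \CostFn)$. I split into three cases.

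\textbf{Case 1: $\Combination{\TypeNoArg}_1(\FractionNoArg^*) \le \Threshold$.} Theorem \ref{thm:infinitecostdetector} (infinite costs) and Theorem \ref{thm:detectorfinitecosts} (finite costs) both give $\hat\FractionNoArg = \FractionNoArg^*$. In the infinite case there is no post-processing, so the final output equals the baseline. In the finite case, $\FractionNoArg^* \notin A^{\TypeNoArg}(\Penalty)$, because $A^{\TypeNoArg}(\Penalty)$ requires $\Combination{\TypeNoArg}_1 > \Threshold$. So again $\FinalContent{\TypeNoArg}_1 = \Combination{\TypeNoArg}_1(\FractionNoArg^*)$, and equality holds.

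\textbf{Case 2: $\Combination{\TypeNoArg}_1(\FractionNoArg^*) > \Threshold$, infinite costs.} Theorem \ref{thm:infinitecostdetector} gives $\hat\FractionNoArg \in \{\FractionNoArg^*, \FractionNoArg^{\text{game}}\}$ with no post-processing. If $\hat\FractionNoArg = \FractionNoArg^{\text{game}}$, the final attribute is $\Threshold < \Combination{\TypeNoArg}_1(\FractionNoArg^*)$. Otherwise it equals the baseline.

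\textbf{Case 3: $\Combination{\TypeNoArg}_1(\FractionNoArg^*) > \Threshold$, finite costs.} Theorem \ref{thm:detectorfinitecosts} gives $\hat\FractionNoArg \in \{\FractionNoArg^*, \FractionNoArg^{\text{game}}\} \cup A^{\TypeNoArg}(\Penalty)$. If $\hat\FractionNoArg \in A^{\TypeNoArg}(\Penalty)$, the final attribute is $G(\cdot) \le \Threshold < \Combination{\TypeNoArg}_1(\FractionNoArg^*)$. If $\hat\FractionNoArg = \FractionNoArg^{\text{game}} \notin A^{\TypeNoArg}(\Penalty)$, the final attribute is exactly $\Threshold$. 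If $\hat\FractionNoArg = \FractionNoArg^* \notin A^{\TypeNoArg}(\Penalty)$, it equals the baseline. In every subcase the final attribute is at most $\Combination{\TypeNoArg}_1(\FractionNoArg^*)$.

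The subtle point is ensuring the case list is exhaustive. One concern is that when $\hat\FractionNoArg \in A^{\TypeNoArg}(\Penalty)$ one might fear $\hat\FractionNoArg > \FractionNoArg^*$ with no post-processing. But membership in $A^{\TypeNoArg}(\Penalty)$ forces post-processing to $G \le \Threshold$, so that fear does not arise. The other point is the domain of $G$: it is only defined for arguments $\ge \Threshold$, and members of $A^{\TypeNoArg}(\Penalty)$ have $\Combination{\TypeNoArg}_1 > \Threshold$, so this is fine. This bound is needed because, unlike Theorem \ref{thm:usagedecrease}, the usage comparison cannot be used here: $\hat\FractionNoArg$ may exceed $\FractionNoArg^*$.
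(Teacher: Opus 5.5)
Your proposal is correct and follows essentially the same route as the paper. The first inequality comes from Theorem \ref{thm:nodetector} and the monotonicity of $\Combination{\TypeNoArg}_1$ in $\FractionNoArg$; the second comes from casing on finite versus infinite costs and on whether $\Combination{\TypeNoArg}_1(\FractionNoArg^*) \le \Threshold$, using Theorems \ref{thm:infinitecostdetector} and \ref{thm:detectorfinitecosts} together with $\Combination{\TypeNoArg}_1(\FractionNoArg^{\text{game}}) = \Threshold$ and $G \le \Threshold$ (the only differences are organizational: you merge the two sub-threshold cases, and your $A^{\TypeNoArg}(\Penalty)$ subcase does not need the paper's side condition that the chosen allocation differs from $\FractionNoArg^*$).
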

\begin{proof}
First, we apply Theorem \ref{thm:nodetector} to see that $\FinalContent{\TypeNoArg}_1(\emptyset; \Quality, \CostFn) = \Combination{\TypeNoArg}_1(\FractionNoArg^*)$. Moreover, using the assumption on the type space that $\HumanContent{\TypeNoArg}_1 < \LLMContent{\TypeNoArg}_1$, we know that $\Combination{\TypeNoArg}_1(\FractionNoArg)$ is increasing in $\FractionNoArg$.  

We show that $\HumanContent{\TypeNoArg}_1 \le \FinalContent{\TypeNoArg}_1(\emptyset; \Quality, \CostFn)$. This follows from the fact that: 
\[\FinalContent{\TypeNoArg}_1(\emptyset; \Quality, \CostFn) = \Combination{\TypeNoArg}_1(\FractionNoArg^*) \ge \Combination{\TypeNoArg}_1(0) = \HumanContent{\TypeNoArg}_1\]
as desired.

Now, we show that $\FinalContent{\TypeNoArg}_1(\emptyset; \Quality, \CostFn) \ge  \FinalContent{\TypeNoArg}_1(\Penalty, \Threshold; \Quality, \CostFn)$. We split into two cases: infinite gaming costs and finite gaming costs. 

\paragraph{Case 1: Infinite gaming costs.} Using Theorem \ref{thm:infinitecostdetector}, we know that $\FinalContent{\TypeNoArg}_1(\Penalty, \Threshold, \Quality, \CostFn^{\infty}) = \Combination{\TypeNoArg}_1(\Fraction{\TypeNoArg}(\Penalty, \Threshold, \Quality, \CostFn^{\infty}))$. We split into subcases: $\Combination{\TypeNoArg}_1(\FractionNoArg^*) \le \Threshold$ and $\Combination{\TypeNoArg}_1(\FractionNoArg^*) > \Threshold$.

\textit{Case 1a: $\Combination{\TypeNoArg}_1(\FractionNoArg^*) \le \Threshold$.} Using Theorem \ref{thm:infinitecostdetector}, we know that $\Fraction{\TypeNoArg}(\Penalty, \Threshold, \Quality, \CostFn^{\infty}) = \FractionNoArg^*$, which means that:
\[\FinalContent{\TypeNoArg}_1(\emptyset; \Quality, \CostFn) = \Combination{\TypeNoArg}_1(\FractionNoArg^*) = \Combination{\TypeNoArg}_1(\Fraction{\TypeNoArg}(\Penalty, \Threshold, \Quality, \CostFn^{\infty})) = \FinalContent{\TypeNoArg}_1(\Penalty, \Threshold, \Quality, \CostFn^{\infty}) \]
as desired. 

\textit{Case 1b: $\Combination{\TypeNoArg}_1(\FractionNoArg^*) > \Threshold$.}
Using Theorem \ref{thm:infinitecostdetector}, we know that  $\Fraction{\TypeNoArg}(\Penalty, \Threshold, \Quality, \CostFn^{\infty}) \in \left\{ \FractionNoArg^*, \FractionNoArg^{\text{game}}\right\}$. If $\Fraction{\TypeNoArg}(\Penalty, \Threshold, \Quality, \CostFn^{\infty}) = \FractionNoArg^*$, this again means that
\[\FinalContent{\TypeNoArg}_1(\emptyset; \Quality, \CostFn) = \Combination{\TypeNoArg}_1(\FractionNoArg^*) = \Combination{\TypeNoArg}_1(\Fraction{\TypeNoArg}(\Penalty, \Threshold, \Quality, \CostFn^{\infty})) =  \FinalContent{\TypeNoArg}_1(\Penalty, \Threshold, \Quality, \CostFn^{\infty}). \]
If $\Fraction{\TypeNoArg}(\Penalty, \Threshold, \Quality, \CostFn^{\infty}) = \FractionNoArg^{\text{game}}$, then we know that  $\FractionNoArg^{\text{game}} < \infty$. Since $\FractionNoArg^{\text{game}} < \infty$, then we know that $\Combination{\TypeNoArg}_1(\FractionNoArg^{\text{game}}) = \Threshold$. This would mean that:
\[\FinalContent{\TypeNoArg}_1(\emptyset; \Quality, \CostFn) = \Combination{\TypeNoArg}_1(\FractionNoArg^*) > \Threshold = \Combination{\TypeNoArg}_1(\FractionNoArg^{\text{game}})  = \Combination{\TypeNoArg}_1(\Fraction{\TypeNoArg}(\Penalty, \Threshold, \Quality, \CostFn^{\infty})) = \FinalContent{\TypeNoArg}_1(\Penalty, \Threshold, \Quality, \CostFn^{\infty}). \]
 
\paragraph{Case 2: Finite gaming costs.} We split into subcases: $\Combination{\TypeNoArg}_1(\FractionNoArg^*) \le \Threshold$ and $\Combination{\TypeNoArg}_1(\FractionNoArg^*) > \Threshold$.

\textit{Case 2a: $\Combination{\TypeNoArg}_1(\FractionNoArg^*) \le \Threshold$.} Using Theorem \ref{thm:detectorfinitecosts}, we know that $\Fraction{\TypeNoArg}(\Penalty, \Threshold; \Quality, \CostFn) = \FractionNoArg^*$ and $\FinalContent{\TypeNoArg}_1(\Penalty, \Threshold; \Quality, \CostFn) = \Combination{\TypeNoArg}_1(\Fraction{\TypeNoArg}(\Penalty, \Threshold; \Quality, \CostFn))$, which means that:
\[\FinalContent{\TypeNoArg}_1(\emptyset; \Quality, \CostFn) = \Combination{\TypeNoArg}_1(\FractionNoArg^*) = \Combination{\TypeNoArg}_1(\Fraction{\TypeNoArg}(\Penalty, \Threshold; \Quality, \CostFn)) \ge \FinalContent{\TypeNoArg}_1(\Penalty, \Threshold; \Quality, \CostFn) \]
as desired. 

\textit{Case 2b: $\Combination{\TypeNoArg}_1(\FractionNoArg^*) > \Threshold$.}
Using Theorem \ref{thm:detectorfinitecosts}, we know that  $\Fraction{\TypeNoArg}(\Penalty, \Threshold; \Quality, \CostFn) \in \left\{ \FractionNoArg^*, \FractionNoArg^{\text{game}}\right\} \cup A^{\TypeNoArg}(\Penalty)$. If $\Fraction{\TypeNoArg}(\Penalty, \Threshold; \Quality, \CostFn) = \FractionNoArg^*$, this again means that
\[\FinalContent{\TypeNoArg}_1(\emptyset; \Quality, \CostFn) = \Combination{\TypeNoArg}_1(\FractionNoArg^*) = \Combination{\TypeNoArg}_1(\Fraction{\TypeNoArg}(\Penalty, \Threshold; \Quality, \CostFn)) \ge \FinalContent{\TypeNoArg}_1(\Penalty, \Threshold; \Quality, \CostFn). \]
If $\Fraction{\TypeNoArg}(\Penalty, \Threshold, \Quality) = \FractionNoArg^{\text{game}}$, then we know that  $\FractionNoArg^{\text{game}} < \infty$. Since $\FractionNoArg^{\text{game}} < \infty$, then we know that $\Combination{\TypeNoArg}_1(\FractionNoArg^{\text{game}}) = \Threshold$. This would mean that:
\[\FinalContent{\TypeNoArg}_1(\emptyset; \Quality, \CostFn) = \Combination{\TypeNoArg}_1(\FractionNoArg^*) > \Threshold = \Combination{\TypeNoArg}_1(\FractionNoArg^{\text{game}}) = \Combination{\TypeNoArg}_1(\Fraction{\TypeNoArg}(\Penalty, \Threshold; \Quality, \CostFn)) \ge \FinalContent{\TypeNoArg}_1(\Penalty, \Threshold; \Quality, \CostFn). \]
If $\Fraction{\TypeNoArg}(\Penalty, \Threshold; \Quality, \CostFn) \in A^{\TypeNoArg}(\Penalty)$ and $\Fraction{\TypeNoArg}(\Penalty, \Threshold; \Quality, \CostFn) \neq \FractionNoArg^*$, then we know by Theorem \ref{thm:detectorfinitecosts} that $\Combination{\TypeNoArg}_1(\FractionNoArg^*) > \Threshold$, which means that:
\[\FinalContent{\TypeNoArg}_1(\emptyset; \Quality, \CostFn) = \Combination{\TypeNoArg}_1(\FractionNoArg^*) > \Threshold \ge G(\Combination{\TypeNoArg}_1(\Fraction{\TypeNoArg}(\Penalty,\Threshold,  \Quality, \CostFn))) = \FinalContent{\TypeNoArg}_1(\Penalty, \Threshold; \Quality, \CostFn). \]
as desired. 
\end{proof}

\begin{lemma}
\label{lemma:strictineq}
Fix $\Penalty > 0$, $\Threshold < \infty$, $\Quality$, $\CostFn$.  There exists a type $\TypeNoArg \in \mathcal{T}$ such that 
\[ \HumanContent{\TypeNoArg}_1 < \FinalContent{\TypeNoArg}_1(\emptyset; \Quality, \CostFn) \text{ and } \FinalContent{\TypeNoArg}_1(\emptyset; \Quality, \CostFn) >  \FinalContent{\TypeNoArg}_1(\Penalty, \Threshold; \Quality, \CostFn). \]
\end{lemma}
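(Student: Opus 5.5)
The plan is to build an explicit type in which the LLM fully replaces human work without a detector, but the detector strictly lowers the detected attribute. Take $\HumanContent{\TypeNoArg}_i = \LLMContent{\TypeNoArg}_i$ for all $i \ge 2$, so only the first dimension matters. Set $\HumanContent{\TypeNoArg}_1 = \Threshold - 1$ and $\LLMContent{\TypeNoArg}_1 = \Threshold + \epsilon$ for a small $\epsilon > 0$ to be chosen. Then $\FractionNoArg^{\text{game}} = \frac{1}{1+\epsilon}$, which tends to $1$ as $\epsilon \to 0$. Choose the production cost
\[\ProductionCost{\TypeNoArg} = |\Quality'_1(\Threshold+\epsilon)|(1+\epsilon) + 1.\]
This keeps the derivative of $\Quality(\Combination{\TypeNoArg}(\FractionNoArg)) - \ProductionCost{\TypeNoArg}(1-\FractionNoArg)$ at $\FractionNoArg = 1$ strictly positive. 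By concavity this function is then strictly increasing on $[0,1]$, so $\FractionNoArg^* = 1$. Moreover $\ProductionCost{\TypeNoArg}$ stays bounded (for, say, $\epsilon \le 1$) as $\epsilon \to 0$, by continuity of $\Quality'_1$.

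\emph{First strict inequality.} By Theorem \ref{thm:nodetector}, $\FinalContent{\TypeNoArg}_1(\emptyset;\Quality,\CostFn) = \LLMContent{\TypeNoArg}_1 = \Threshold + \epsilon > \Threshold - 1 = \HumanContent{\TypeNoArg}_1$.

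\emph{Second strict inequality.} We need $\FinalContent{\TypeNoArg}_1(\Penalty,\Threshold;\Quality,\CostFn) < \Threshold + \epsilon$. We use the restricted-set characterizations: Theorem \ref{thm:infinitecostdetector} for infinite costs, and Theorem \ref{thm:detectorfinitecosts} for finite costs. Since $\Combination{\TypeNoArg}_1(\FractionNoArg^*) > \Threshold$, the detector allocation lies in $\{1, \FractionNoArg^{\text{game}}\}$, possibly together with $A^{\TypeNoArg}(\Penalty)$ in the finite case.
\begin{itemize}
\item If the allocation is $\FractionNoArg^{\text{game}}$, the final attribute is at most $\Combination{\TypeNoArg}_1(\FractionNoArg^{\text{game}}) = \Threshold$.
\item If the allocation lies in $A^{\TypeNoArg}(\Penalty)$, the final attribute is $G(\cdot) \le \Threshold$.
\item In both cases it is strictly below $\Threshold + \epsilon$.
\end{itemize}
The only bad case is allocation $1$ with $1 \notin A^{\TypeNoArg}(\Penalty)$ (or infinite costs). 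There the indirect utility is $\Quality(\LLMContent{\TypeNoArg}) - \Penalty$, and the final attribute would be $\Threshold + \epsilon$.

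\emph{Ruling out the bad case (main step).} Compare the indirect utility at $1$ with that at $\FractionNoArg^{\text{game}}$, which carries no penalty and no post-processing. Their difference is
\[\Penalty + \Quality_1(\Threshold) - \Quality_1(\Threshold+\epsilon) - \ProductionCost{\TypeNoArg}\,\tfrac{\epsilon}{1+\epsilon}.\]
This tends to $\Penalty > 0$ as $\epsilon \to 0$, because $\Quality_1$ is continuous and $\ProductionCost{\TypeNoArg}$ is bounded. So for small $\epsilon$, allocation $1$ without post-processing is strictly worse than $\FractionNoArg^{\text{game}}$ and cannot be optimal.

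The only care needed is that the characterization theorems apply verbatim. In the finite-cost case, if $1 \in A^{\TypeNoArg}(\Penalty)$, the output is post-processed to $G(\Threshold+\epsilon) \le \Threshold$, which is already fine. Hence in every case the detector strictly reduces the detected attribute, and both inequalities are strict.
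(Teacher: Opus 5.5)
Your proposal is correct and follows essentially the same argument as the paper. The shared ingredients are the type $\HumanContent{\TypeNoArg}_1 = \Threshold - 1$, $\LLMContent{\TypeNoArg}_1 = \Threshold + \epsilon$ with all other coordinates equal, a production cost large enough to force $\FractionNoArg^* = 1$, and the comparison showing that $(1, \LLMContent{\TypeNoArg})$ is strictly beaten by $\FractionNoArg^{\text{game}}$ once $\epsilon$ is small.

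The differences are only in bookkeeping. The paper fixes $\ProductionCost{\TypeNoArg}$ uniformly for $\epsilon \in (0, 1/2)$ before shrinking $\epsilon$, whereas you let it depend on $\epsilon$ while staying bounded. The paper also finishes with a direct case split on whether the optimal allocation equals $1$, rather than invoking the full restricted-set characterization from Theorems \ref{thm:infinitecostdetector} and \ref{thm:detectorfinitecosts}.
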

\begin{proof}
We construct $\TypeNoArg$ so that  $\HumanContent{\TypeNoArg}_i = \LLMContent{\TypeNoArg}_i$ for $i \neq 1$, so the two outputs only differ along the first coordinate. We take $\HumanContent{\TypeNoArg}_1 = \Threshold - 1$ and $\LLMContent{\TypeNoArg}_1 = \Threshold + \epsilon$ where $\epsilon > 0$.

First, we show that for sufficiently large $\ProductionCost{\TypeNoArg}$ it holds that $\FinalContent{\TypeNoArg}(\emptyset; \Quality, \CostFn) = \LLMContent{\TypeNoArg}$ for any $0 < \epsilon < 1/2$. Using Theorem \ref{thm:nodetector}, we know that $\FinalContent{\TypeNoArg}(\emptyset; \Quality, \CostFn) = \Combination{\TypeNoArg}(\FractionNoArg^*)$, so it suffices to show that $\FractionNoArg^* = 1$. Recall that $\FractionNoArg^*$ is the minimum optimizer of $\max_{\FractionNoArg \in [0,1]} (\Quality(\Combination{\TypeNoArg}(\FractionNoArg)) - \ProductionCost{\TypeNoArg}(1 - \FractionNoArg))$. Since the objective $\Quality(\Combination{\TypeNoArg}(\FractionNoArg)) - \ProductionCost{\TypeNoArg}(1 - \FractionNoArg)$ is concave, it suffices to show that the derivative as $\FractionNoArg \rightarrow 1$ is strictly positive. Note that this is equal to 
\[\Quality'_1(\LLMContent{\TypeNoArg}_1) \cdot  (\LLMContent{\TypeNoArg}_1 - \HumanContent{\TypeNoArg}_1) + \ProductionCost{\TypeNoArg} \ge \ProductionCost{\TypeNoArg} -\frac{3}{2} \cdot \sup_{0 < \epsilon < 1/2} \left|\Quality'_1(\Threshold + \epsilon) \right|.\] This is strictly positive for sufficiently large $\ProductionCost{\TypeNoArg}$, as desired. 

For $\ProductionCost{\TypeNoArg}$ satisfying the above, note that:
\[ \HumanContent{\TypeNoArg}_1 < \FinalContent{\TypeNoArg}_1(\emptyset; \Quality, \CostFn) = \LLMContent{\TypeNoArg}_1, \]
which proves the first inequality. 

Next, for any fixed value of $\ProductionCost{\TypeNoArg}$, we show that for sufficiently small $\epsilon > 0$ it holds that $(\Fraction{\TypeNoArg}(\Penalty, \Threshold; \Quality, \CostFn), \FinalContent{\TypeNoArg}(\Penalty, \Threshold; \Quality, \CostFn)) \neq (1, \LLMContent{\TypeNoArg})$. It suffices to show that $\Utility{\TypeNoArg}(\FractionNoArg^{\text{game}}, \Combination{\TypeNoArg}(\FractionNoArg^{\text{game}}), \Penalty, \Threshold, \Quality, \CostFn) > \Utility{\TypeNoArg}(1, \LLMContent{\TypeNoArg}, \Penalty, \Threshold, \Quality, \CostFn)$. For any $\epsilon > 0$, it holds that: 
\begin{align*}
\Utility{\TypeNoArg}(\FractionNoArg^{\text{game}}, \Combination{\TypeNoArg}(\FractionNoArg^{\text{game}})) - \Utility{\TypeNoArg}(1, \LLMContent{\TypeNoArg}, \Penalty, \Threshold, \Quality, \CostFn) &= \Quality(\Combination{\TypeNoArg}(\FractionNoArg^{\text{game}})) - \Quality(\LLMContent{\TypeNoArg})+ \Penalty - \ProductionCost{\TypeNoArg}(1 - \FractionNoArg^{\text{game}}) \\
&= \Quality_1(\Threshold) - \Quality_1(\Threshold + \epsilon)+ \Penalty - \ProductionCost{\TypeNoArg} \cdot  \frac{\epsilon}{1 + \epsilon} \\
&\ge_{(A)} \Penalty -\epsilon \cdot \Quality'_1(\Threshold) - \ProductionCost{\TypeNoArg} \cdot  \epsilon
\end{align*}
where (A) uses the concavity of $\Quality_1$. For sufficiently small $\epsilon > 0$, we see that this is strictly positive. 

Now, taking values of $\ProductionCost{\TypeNoArg}$ and $\epsilon$ that satisfy the above, we split into two cases: (1) $\Fraction{\TypeNoArg}(\Penalty, \Threshold; \Quality, \CostFn) \neq 1$ and (2) $\Fraction{\TypeNoArg}(\Penalty, \Threshold; \Quality, \CostFn) = 1$ and $\FinalContent{\TypeNoArg}(\Penalty, \Threshold; \Quality, \CostFn)) \neq  \LLMContent{\TypeNoArg})$. 

\paragraph{Case 1: $\Fraction{\TypeNoArg}(\Penalty, \Threshold; \Quality, \CostFn) \neq 1$.} In this case, we know that: \[\FinalContent{\TypeNoArg}_1(\Penalty, \Threshold; \Quality, \CostFn) \le \Combination{\TypeNoArg}_1(\Fraction{\TypeNoArg}(\Penalty, \Threshold; \Quality, \CostFn)) < \LLMContent{\TypeNoArg}_1 = \FinalContent{\TypeNoArg}_1(\emptyset; \Quality, \CostFn),\]
which proves the second inequality.

\paragraph{Case 2: $\Fraction{\TypeNoArg}(\Penalty, \Threshold; \Quality, \CostFn) = 1$ and $\FinalContent{\TypeNoArg}(\Penalty, \Threshold; \Quality, \CostFn) \neq  \LLMContent{\TypeNoArg}$.} By Theorems \ref{thm:infinitecostdetector} and \ref{thm:detectorfinitecosts}, we know that $\FinalContent{\TypeNoArg}_i(\Penalty, \Threshold; \Quality, \CostFn) = \Combination{\TypeNoArg}_i(\Fraction{\TypeNoArg}(\Penalty, \Threshold; \Quality, \CostFn))$ for all $i \ge 2$. Thus, the assumptions of this case mean that $\FinalContent{\TypeNoArg}_1(\Penalty, \Threshold; \Quality, \CostFn)) \neq  \LLMContent{\TypeNoArg}_1$, which by the definition of post-processing means that $\FinalContent{\TypeNoArg}_1(\Penalty, \Threshold; \Quality, \CostFn)) <  \LLMContent{\TypeNoArg}_1$. Putting this together, we obtain that:
\[\FinalContent{\TypeNoArg}_1(\Penalty, \Threshold; \Quality, \CostFn)) <  \LLMContent{\TypeNoArg}_1 = \FinalContent{\TypeNoArg}_1(\emptyset; \Quality, \CostFn),\]
which proves the second inequality.
\end{proof}

Theorem \ref{thm:detectedattributeUshaped} follows from these lemmas. 
\begin{proof}[Proof of Theorem \ref{thm:detectedattributeUshaped}]
The first part follows from Lemma \ref{lemma:Ushaped}, and the second part follows from Lemma \ref{lemma:strictineq}. 
\end{proof}

\section{Additional details, ablations, and full word lists for Section \ref{subsec:empirical}}\label{appendix:empirical}

We provide additional empirical details in Appendix \ref{appendix:details}, ablations in Appendix \ref{appendix:ablations}, and full word lists in Appendix \ref{appendix:wordlists}. 

\begin{figure}
    \centering
\begin{tabular}{|c|c|c|c|}
    \hline
    Window & Style (RTF) & Topic (RTF) & Style (All) \\    \hline
    2014-17 & $0.0 \pm 0.0$ & $5.6 \pm 1.0$ & $13.6 \pm 1.0$ \\
    2015-18 & $0.0 \pm 0.0$ & $4.2 \pm 0.7$ & $10.6 \pm 1.5$ \\
    2016-19 & $0.6 \pm 0.8$ & $7.2 \pm 0.4$ & $16.8 \pm 3.5$ \\
    2017-20 & $0.0 \pm 0.0$ & $4.0 \pm 0.6$ & $18.2 \pm 2.7$ \\
    2018-21 & $0.2 \pm 0.4$ & $3.8 \pm 1.7$ & $10.4 \pm 2.2$ \\
    2019-22 & $0.4 \pm 0.5$ & $6.2 \pm 1.6$ & $15.2 \pm 2.7$ \\
    2020-23 & $0.0 \pm 0.0$ & $5.4 \pm 1.2$ & $18.4 \pm 1.7$ \\
    2021-24 & $0.2 \pm 0.4$ & $10.2 \pm 1.2$ & $24.0 \pm 2.1$ \\
    2022-25 & $26.4 \pm 1.4$ & $9.2 \pm 1.6$ & $55.4 \pm 1.6$ \\
    \hline
\end{tabular}
\caption{Numerical values shown in Figure \ref{tab:empirical}.}
\label{tab:empiricaltable}
\end{figure}

\subsection{Details of empirical setup}\label{appendix:details}

\paragraph{Details of building vocabulary $V$.} We only consider non-stop words that are at least 3 letters long. For each trial, we then build a vocabulary $V$ consisting of the 10,000 words with highest frequency across the sampled dataset with the papers in each month from 1/1/2013 to 12/31/2025. 

\paragraph{Details of LLM judge configuration.} We use the latest version of GPT-5.4-mini as of May 4th, 2026. We set the temperature to be the default value. We take the judge prompt to be ``"""You are an expert linguist analyzing word frequency changes in academic computer science papers (arXiv CS abstracts).

A word's usage frequency changed significantly over a time period. Classify it as either:

- "topic": The frequency change is primarily because the word relates to a research topic, method, dataset, tool, technology, or domain-specific concept that became more or less popular (e.g., "transformer", "bert", "diffusion", "covid", "blockchain", "federated", "adversarial", "pruning").

- "style": The frequency change primarily reflects a change in writing style, rhetoric, word choice, or language patterns not tied to a specific research topic (e.g., "delve", "comprehensive", "notably", "leveraging", "crucial", "showcasing", "underscores", "innovative").

Respond with ONLY "topic" or "style". No other text."""''. 

\paragraph{Details of ``rise-then-fall'' classification.} Given a word $w$ and an interval indexed by $s$, we compute the maximum frequency of the word across any quarter in [10/1/$s$, 9/30/($s+3$)]. We require that the maximum frequency is at least $R = 1.5$ times the frequency in the first quarter, that the final frequency is at most $F = 0.95$ times the maximum frequency, and that the number of sign changes is at most $C = 4$. 

All experiments are run on 1 CPU. 

\subsection{Ablations}\label{appendix:ablations}

We show ablations where we adjust the parameters of the rise-then-fall classification: the rise threshold (Figure \ref{fig:rise}), the fall threshold (Figure \ref{fig:fall}), and  the max number of sign changes (Figure \ref{fig:signs}). We also show ablations where we take GPT-5-nano-2025-08-07 to be the judge (Figure \ref{fig:judge}). Our results readily generalize across these settings. 

\begin{figure}
    \centering
\includegraphics[scale=0.45]{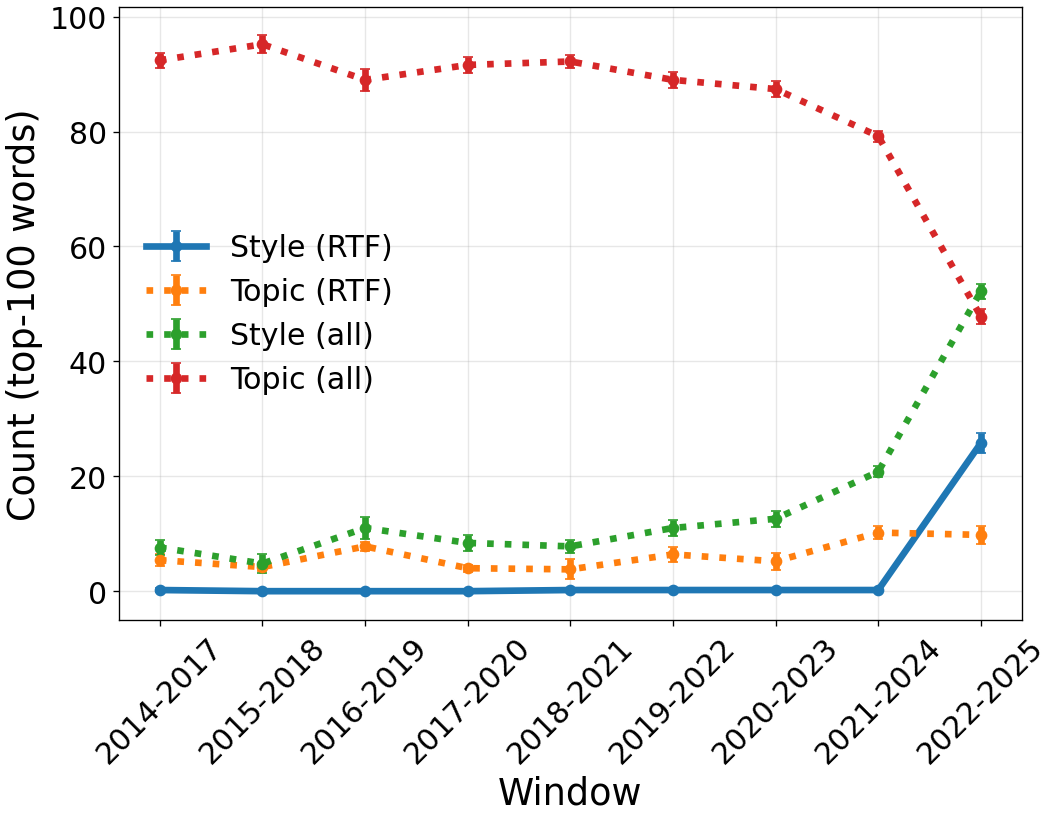}
    \caption{Variation of Figure \ref{tab:empirical} where the judge is GPT-5-nano-2025-08-07.}
    \label{fig:judge}
\end{figure}

\begin{figure}
\begin{subfigure}{0.5\textwidth}
\includegraphics[width=\textwidth]{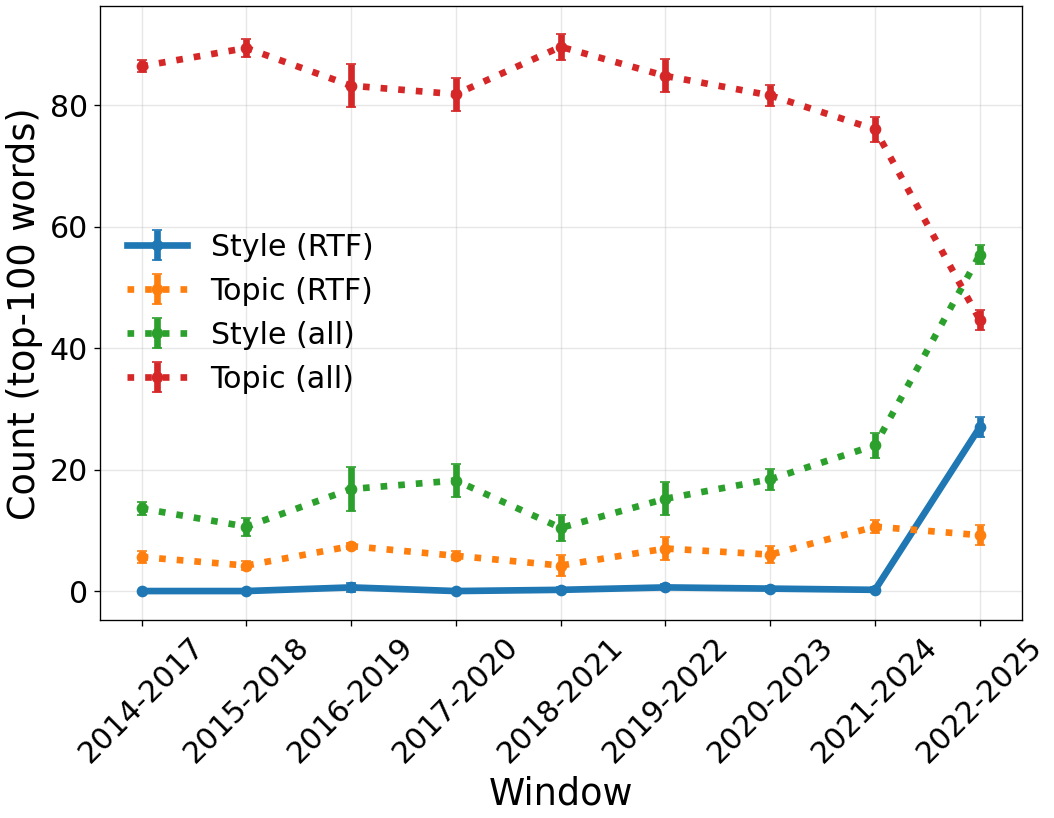}
\caption{$R = 1.1$}
\end{subfigure}
\begin{subfigure}{0.5\textwidth}
\includegraphics[width=\textwidth]{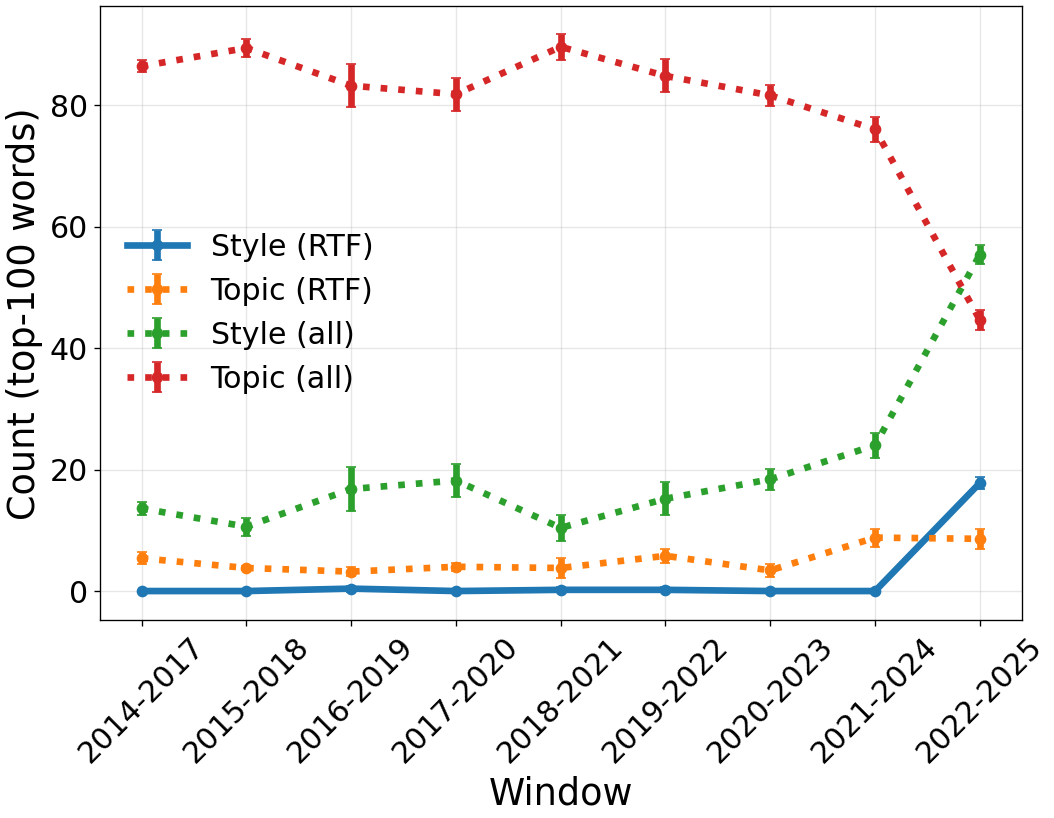}
\caption{$R = 3.0$}
\end{subfigure}
\caption{Variation of Figure \ref{tab:empirical} where rise-then-fall classification takes the rise threshold to 1.1 (left) and 3 (right), rather than 1.5. }
\label{fig:rise}
\end{figure}

\begin{figure}
\begin{subfigure}{0.5\textwidth}
\includegraphics[width=\textwidth]{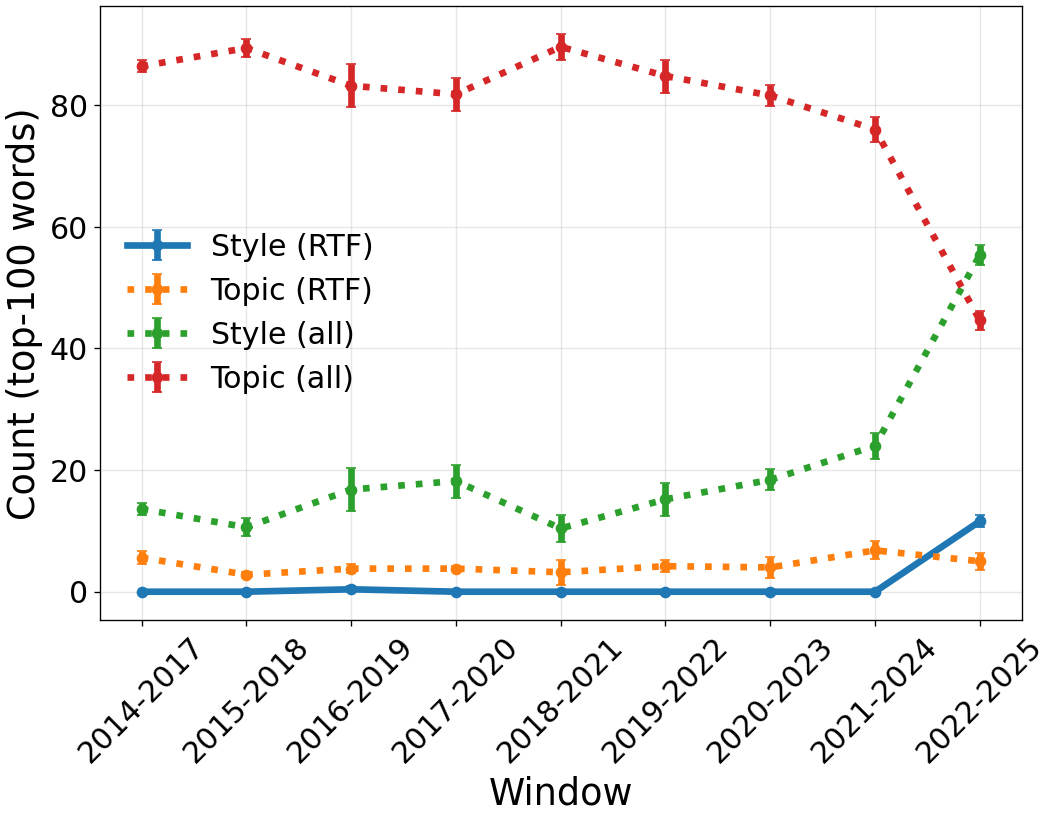}
\caption{$F = 0.8$}
\end{subfigure}
\begin{subfigure}{0.5 \textwidth}
\includegraphics[width=\textwidth]{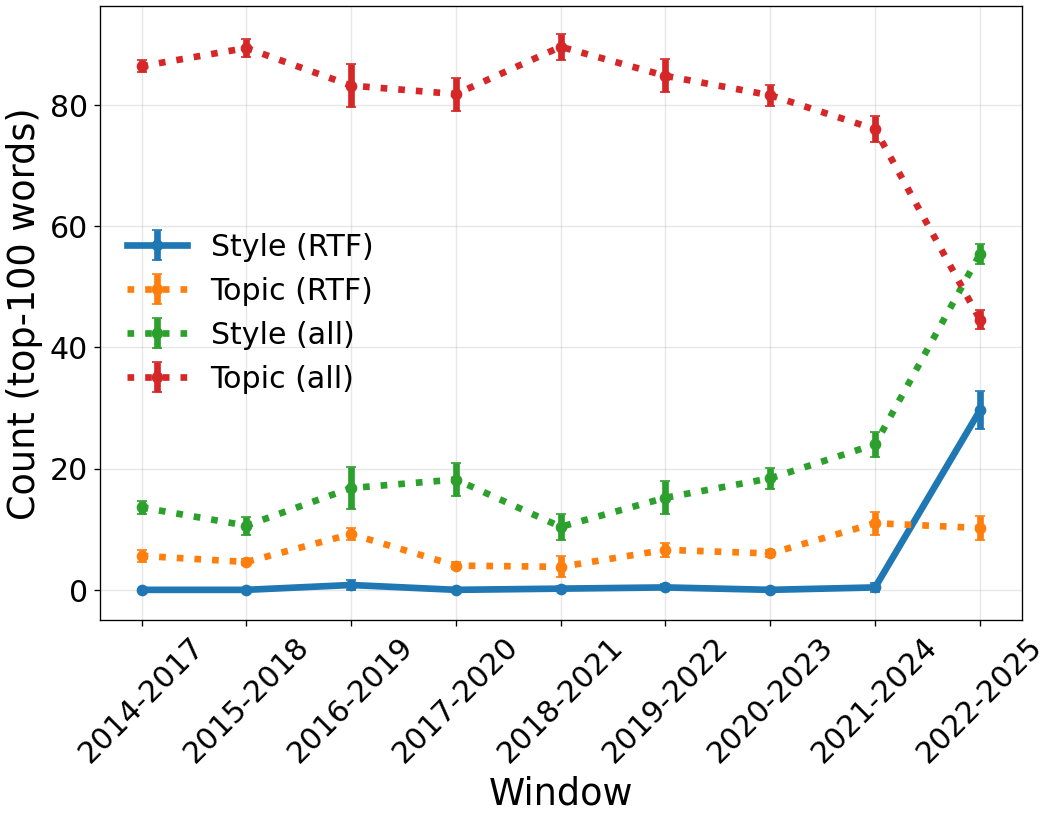}
\caption{$F = 0.98$}
\end{subfigure}
\caption{Variation of Figure \ref{tab:empirical} where rise-then-fall classification takes the fall threshold to 0.8 (left) and 0.98 (right), rather than 0.95. }
\label{fig:fall}
\end{figure}

\begin{figure}
\begin{subfigure}{0.5\textwidth}
\includegraphics[width=\textwidth]{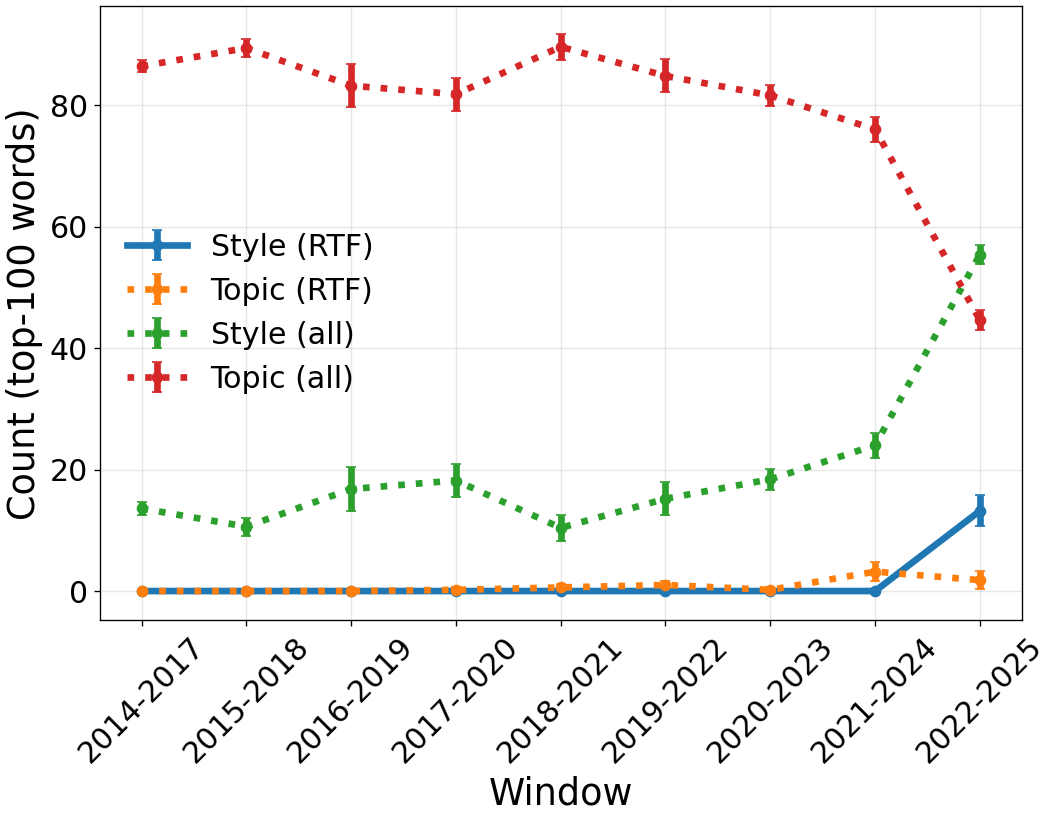}
\caption{$C = 2$}
\end{subfigure}
\begin{subfigure}{0.5 \textwidth}
\includegraphics[width=\textwidth]{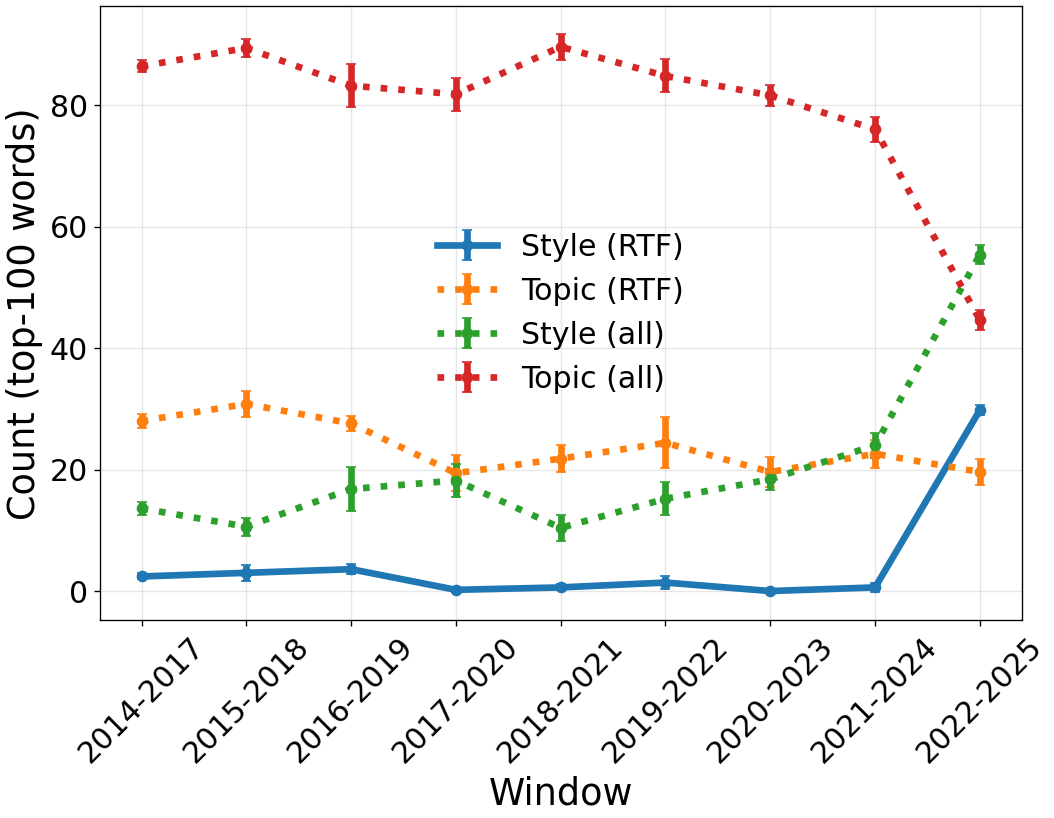}
\caption{$C = 6$}
\end{subfigure}
\caption{Variation of Figure \ref{tab:empirical} where rise-then-fall classification takes the number of sign changes $C$ is set to 2 (left) and 6 (right), rather than 4. }
\label{fig:signs}
\end{figure}

\newpage 

\subsection{Lists of words}\label{appendix:wordlists}

We show a list of rise-then-fall style words, rise-then-fall topic words, and non-rise-then-fall style words, taking one of the trials as an illustrative example (Table \ref{table:wordlist}).

\begin{table}
\small 
\begin{tabular}{|p{1.2cm}|p{4.0cm}|p{3.5cm}|p{6.2cm}|}
    \hline
    Window & RTF style & RTF topic & Non-RTF style \\
    \hline
    2014-2017 & \emph{(none)} & leaky, microservices, pandemic, relu, stdp & adequacy, attentive, benefiting, caption, crafting, essays, footprints, informativeness, learn, listeners, localizes, necessitating, trained \\
    \hline
    2015-2018 & \emph{(none)} & aoi, celeba, lstm, rnn, train & argumentative, cer, frozen, individualized, irl, isa, paraphrases, regressing, tasks, trained \\
    \hline
    2016-2019 & \emph{(none)} & autonomous, convolutional, deep, generative, loss, unet, urllc & abusive, audits, bells, bound, driving, duplicates, https, ideological, memorability, methods, models, slip, sota, supervisions, tasks, tee, tees, trained, whistles \\
    \hline
    2017-2020 & \emph{(none)} & bert, capsules, hpo, mask, ntk & achievable, asc, bound, bounds, capacity, experiments, explanations, however, https, learn, methods, models, number, optimal, problem, rate, schemes, tasks, trained, users \\
    \hline
    2018-2021 & explanations & coronavirus, counterfactual, distancing, pandemic, vln & contextualized, data, downstream, experiments, however, https, methods, models, problem, sota, tasks, vos \\
    \hline
    2019-2022 & \emph{(none)} & contrastive, coronavirus, federated, gnns, nerfs, pandemic, swin, vit & downstream, existing, explanations, extensive, https, methods, models, number, order, paper, pretext, problem, research, shifts, sota, tasks, tod \\
    \hline
    2020-2023 & \emph{(none)} & gnns, masked, metaverse, peft & downstream, existing, extensive, global, however, https, methods, models, number, order, paper, problem, scenarios, shifts, sota, tasks, tod, word, works \\
    \hline
    2021-2024 & \emph{(none)} & aigc, chatgpt, denoising, isac, llm, llms, masked, metaverse, rlhf & additionally, address, advancements, art, based, capabilities, comprehensive, diverse, downstream, effectiveness, enhance, enhancing, existing, extensive, however, https, issue, limitations, models, number, paper, performance, potential, problem, sota, tasks, various, word \\
    \hline
    2022-2025 & additionally, addressing, advancement, advancements, capabilities, challenges, delves, effectively, encompassing, enhance, enhances, enhancing, ensuring, facilitating, innovative, integrating, intricate, leveraging, notably, potential, significant, thereby, underscores, utilizing & aigc, clip, hallucinations, instruction, lmms, mamba, mistral, prompting, rag & across, address, comprehensive, crucial, diverse, enabling, findings, highlighting, insights, integration, introduce, introduces, large, models, offering, offers, one, particularly, pivotal, problem, proposed, remarkable, show, showcasing, struggle, tailored, underscore, underscoring, used, various, within \\
    \hline
\end{tabular}
\caption{Full list of rise-then-fall style words, rise-then-fall topic words, and non-rise-then-fall words within the top 100 words with most greatest change. The list is generated from one of the trials, following the empirical setup in Section \ref{subsec:empirical}.}
\label{table:wordlist}
\end{table}

\end{document}